\documentclass[twoside, 11pt]{article}
\usepackage{jmlr2e}
\usepackage{latexsym,amsmath,amssymb,epic,eepic,multirow}
\usepackage{multirow}
\usepackage{subfigure}

\setlength{\oddsidemargin}{0.8cm}
\setlength{\evensidemargin}{0.8cm}
\setlength{\topmargin}{-0.2cm}
\setlength{\textwidth}{15cm}
\setlength{\textheight}{22cm}

\RequirePackage[OT1]{fontenc}
\RequirePackage{amsmath,natbib}
\RequirePackage[colorlinks]{hyperref}
\RequirePackage{hypernat}
\usepackage{amsfonts}
\usepackage{amssymb}
\usepackage{amstext}
\usepackage{parskip}
\bibpunct{[}{]}{;}{a}{,}{,}

\bibliographystyle{ims}
\usepackage{times}
\usepackage{graphicx}
\usepackage{epsfig}

\newcommand{\R}{\ensuremath{\mathbb R}}

\newcommand{\Set}{\ensuremath{\mathcal S}}
\newcommand{\F}{\ensuremath{\mathcal F}}

\newcommand{\PDcone}{\ensuremath{{\mathcal{S}^p_{++}}}}
\newcommand{\SubE}{\ensuremath{{\mathcal{S}^p_{E}}}}
\newcommand{\prob}[1]{\ensuremath{{\mathbb P}\left(#1\right)}}

\newcommand{\expct}[1]{\ensuremath{{\mathbb E}#1}}
\newcommand{\expect}[2]{\ensuremath{\text{{\bf E}$_{#1}\left[#2\right]$}}}
\newcommand{\size}[1]{\ensuremath{\left|#1\right|}}

\newcommand{\argmin}{\operatorname{argmin}}
\newcommand{\ul}{\underline}
\newcommand{\e}{\epsilon}
\newcommand{\ve}{\varepsilon}

\newcommand{\half}{\ensuremath{\frac{1}{2}}}

\newcommand{\silent}[1]{}

\newcommand{\ip}[1]{\langle{\,#1\,}\rangle}

\newcommand{\C}{{\mathcal C}}
\newcommand{\event}{{\mathcal E}}

\newcommand{\M}{{\mathcal M}}
\newcommand{\T}{{\mathcal T}}
\newcommand{\TT}{{\mathbb{T}}}
\newcommand{\RE}{{\mathcal R}}
\newcommand{\W}{{\mathbb W}}
\newcommand{\X}{{\mathcal X}}

\newcommand{\var}[1]{\ensuremath{{\textsf{Var}} \left(#1\right)}}

\newcommand{\basepen}{\ensuremath{\lambda_{\sigma,a,p}}}

\newcommand{\init}{\text{\rm init}}
\newcommand{\drop}{{\mathcal D}}
\newcommand{\sel}{\ensuremath{I}}

\newcommand{\inv}[1]{\frac{1}{#1}}
\newcommand{\abs}[1]{\left\lvert#1\right\rvert}
\newcommand{\twonorm}[1]{\left\lVert#1\right\rVert_2}
\newcommand{\fnorm}[1]{\left\lVert#1\right\rVert_F}
\newcommand{\shtwonorm}[1]{\lVert#1\rVert_2}
\newcommand{\norm}[1]{\left\lVert#1\right\rVert}

\newcommand{\func}[1]{\ensuremath{\mathrm{#1}}}
\newcommand{\diag}{\func{diag}}
\newcommand{\offdiag}{\func{offd}}
\newcommand{\mvec}{\func{vec}}
\def\lin{\mathop{\text{lin}\kern.2ex}}
\def\dom{\mathop{\text{dom}\kern.2ex}}
\newcommand{\tr}{{\rm tr}}

\newcommand{\bias}{\text{\rm bias}}

\newcommand{\bens}{\begin{eqnarray*}}
\newcommand{\eens}{\end{eqnarray*}}
\newcommand{\ben}{\begin{eqnarray}}
\newcommand{\een}{\end{eqnarray}}

\newcommand{\beq}{\begin{equation}}
\newcommand{\eeq}{\end{equation}}

\newcommand{\Var}{{\rm Var}}

\def\supp{\mathop{\text{\rm supp}\kern.2ex}}
\def\conv{\mathop{\text{\rm conv}\kern.2ex}}
\def\argmin{\mathop{\text{arg\,min}\kern.2ex}}
\def\half{\frac{1}{2}}
\let\hat\widehat
\let\tilde\widetilde


%
{\end{eqnarray}\end{subequations}\hskip-4.0pt}%
\setlength{\parskip}{8pt}
\setlength{\parindent}{0pt}

\newcommand{\qed}{$\blacksquare$}
\newenvironment{proofof}[1]{\hspace*{20pt}{\it Proof}{ of #1}.\hskip10pt}{\qed\vskip5pt}
\newenvironment{proofof2}{}{\qed\vskip5pt}
%

\jmlrheading{Volume}{Year}{Page}{Date}{Date}{Shuheng Zhou, Philipp R\"utimann, Min Xu, and Peter B\"uhlmann}

\ShortHeadings{High-dimensional Covariance Estimation}{Zhou, R\"utimann, Xu, and B\"uhlmann}

\firstpageno{1}

\begin{document}

\title{High-dimensional Covariance Estimation 
Based On Gaussian Graphical Models}

\author{\name Shuheng Zhou \email shuhengz@umich.edu \\  
        \addr Department of Statistics \\ 
              University of Michigan\\
              Ann Arbor, MI 48109-1041, USA\\
        \name Philipp R\"utimann \email rutimann@stat.math.ethz.ch \\
        \addr Seminar for Statistics \\
        			ETH Z\"urich \\
        			8092 Z\"urich, Switzerland\\
        \name Min Xu \email minx@cs.cmu.edu \\
        \addr Machine Learning Department\\
        			Carnegie Mellon University\\
        			Pittsburgh, PA 15213-3815, USA\\
        \name Peter B\"uhlmann \email buhlmann@stat.math.ethz.ch \\
        \addr Seminar for Statistics \\
        			ETH Z\"urich \\
        			8092 Z\"urich, Switzerland}
        					          
\editor{}

\maketitle
\begin{abstract}		
\noindent
Undirected graphs are often used to describe
high dimensional distributions. Under sparsity
conditions, the graph can be estimated using $\ell_1$-penalization methods. 
We propose and study the following method. We combine a multiple
regression approach with ideas of thresholding and refitting: 
first we infer a sparse
undirected graphical model structure via thresholding of each 
among many $\ell_1$-norm
penalized regression functions; we then estimate the covariance matrix
and its inverse using the maximum likelihood estimator.
We show that under suitable conditions, this approach yields consistent
estimation in terms of graphical structure and fast convergence rates with
respect to the operator and Frobenius norm for the covariance matrix and
its inverse. 
We also derive an explicit bound for the Kullback Leibler divergence.
\end{abstract}

\begin{keywords} 
Graphical model selection, covariance estimation, Lasso, nodewise
regression, thresholding
\end{keywords}

\section{Introduction}
There have been a lot of recent activities for estimation of high-dimensional
covariance and inverse covariance matrices where the dimension $p$ of the
matrix may greatly exceed the sample size $n$. High-dimensional covariance
estimation can
be classified into two main categories, one which relies on a natural
ordering among the variables~\citep{WP03,BL04,HPL06,FB07,BL08,LRZ08} 
and one where no natural ordering is given and estimators are permutation 
invariant with respect to indexing the 
variables~\citep{YL07,FHT07,ABG08,BGA08,RBLZ08}.
We focus here on the latter class with permutation
invariant estimation and we aim for an estimator which is accurate for both
the covariance matrix $\Sigma$ and its inverse, the precision matrix
$\Sigma^{-1}$.
A popular approach for obtaining a permutation invariant estimator which is
sparse in the estimated precision matrix $\hat{\Sigma}^{-1}$ is given by the
$\ell_1$-norm regularized maximum-likelihood estimation, also known as the
GLasso~\citep{YL07,FHT07,BGA08}. The GLasso approach is
simple to use, at least when relying on publicly available software such as
the \texttt{glasso} package in \texttt{R}. Further improvements have been
reported when using some SCAD-type 
penalized maximum-likelihood estimator~\citep{LaFa09} or an 
adaptive GLasso procedure~\citep{FFW09},
which can be thought of as a two-stage procedure. 
It is well-known from linear regression that such
two- or multi-stage methods effectively address some bias problems which
arise from $\ell_1$-penalization~\citep{Zou06,CT07,Mei07,ZouLi08,BuehMeier08,Zhou09th,Zhou10}.

In this paper we develop a new method for estimating graphical 
structure and parameters for multivariate Gaussian distributions using
a multi-step procedure, which we call G{\bf e}lat{\bf o} 
(Graph {\bf e}stimation with Lasso and Thresh{\bf o}lding).
Based on an $\ell_1$-norm regularization and thresholding method
in a first stage, we infer a sparse undirected graphical model, 
i.e. an estimated Gaussian conditional independence graph, 
and we then perform unpenalized maximum likelihood 
estimation (MLE) for the covariance $\Sigma$ and its 
inverse $\Sigma^{-1}$ based on the estimated graph.
We make the following theoretical contributions:
(i) Our method allows us to select a graphical structure 
which is sparse. In some sense we select only the important edges 
even though there may be many non-zero edges in the graph.
(ii) Secondly, we evaluate the quality of the graph we have selected by
showing consistency and establishing a fast rate of convergence with
respect to the operator and 
Frobenius norm for the estimated inverse covariance matrix;
under sparsity constraints, 
the latter is of lower order than the corresponding results for the 
GLasso~\citep{RBLZ08} and for the SCAD-type estimator~\citep{LaFa09}.
(iii) We show predictive risk consistency and provide a rate of convergence
of the estimated covariance matrix.
(iv) Lastly, we show general results for the MLE, 
where only {\em approximate} graph structures are given as input.
Besides these theoretical advantages, 
we found empirically that our graph based method performs 
better in general, and sometimes substantially better than the GLasso,
while we never found it clearly worse. Moreover, we compare it with an
adaptation of the method Space \citep{PWZZ09}.
Finally, our algorithm is simple and is comparable to the GLasso
both in terms of computational time and implementation complexity.

There are a few key motivations and consequences for proposing 
such  an approach based on graphical modeling.
We will theoretically show that there are cases where our graph
based method can accurately estimate conditional independencies 
among variables, i.e. the zeroes of $\Sigma^{-1}$, 
in situations where GLasso fails.  The fact that GLasso easily fails to 
estimate the zeroes of $\Sigma^{-1}$ has been recognized by~\cite{Mei08}
and it has been discussed in more details in~\cite{RWRY08}.
Closer relations to existing work are primarily regarding our first stage
of estimating the structure of the graph. We follow the nodewise regression
approach from~\cite{MB06} but we make use of recent results
for variable selection in linear models assuming the much weaker restricted
eigenvalue condition~\citep{BRT09,Zhou10} instead of the restrictive
neighborhood stability condition~\citep{MB06} or the
equivalent irrepresentable condition~\citep{ZhaoYu06}. 
In some sense, the novelty of our theory extending beyond~\cite{Zhou10}
is the analysis for covariance and inverse covariance estimation 
and for risk consistency based on an estimated sparse graph as we 
mentioned above. 
Our regression and thresholding results build 
upon analysis of the thresholded Lasso estimator as studied in~\cite{Zhou10}.
Throughout our analysis, the sample complexity is one of the key 
focus point, which builds upon results in~\cite{Zhou10sub,RZ11}.
Once the zeros are found, a constrained maximum likelihood
estimator of the covariance can be computed, which was shown
in~\cite{ChDR07}; it was unclear what the properties of such a procedure 
would be. Our theory answers such questions.
As a two-stage method, our approach is also related to the adaptive 
Lasso~\citep{Zou06} which has been analyzed for high-dimensional 
scenarios in~\cite{HMZ08,ZGB09,GBZ10}. 
Another relation can be made to the method by~\cite{RuBu09}
for covariance and inverse covariance estimation based on a
directed acyclic graph. This relation has only methodological character:
the techniques and algorithms used in~\cite{RuBu09}
are very different and from a practical point of view, their approach 
has much higher degree of complexity in terms of computation and 
implementation, since estimation of an equivalence class of directed acyclic
graphs is difficult and cumbersome. There has also been work that focuses
on estimation of sparse directed Gaussian graphical model. \cite{V2010}
proposes a multiple regularized regression procedure for estimating a
precision matrix with sparse Cholesky factors, which correspond to a sparse
directed graph. He also computes non-asymptotic Kullback Leibler risk bound
of his procedure for a class of regularization functions. It is important
to note that directed graph estimation requires a fixed good ordering of
the variables a priori.


\noindent{\bf Notation.} 
We use the following notation.
Given a  graph $G =(V, E_0)$, where $V = \{1, \ldots, p\}$ is 
the set of vertices and $E_0$ is the set of undirected edges.
we use $s^i$ to denote the degree for node $i$, that is, 
the number of edges in $E_0$ connecting to node $i$. 
For an edge set $E$, we let $|E|$ denote its size. 
We use $\Theta_0 = \Sigma_0^{-1}$ and $\Sigma_0$ to 
refer to the true precision and 
covariance matrices respectively from now on.
We denote the number of non-zero elements of $\Theta$ by $\supp(\Theta)$.
For any 
matrix $W = (w_{ij})$, let $|W|$ denote the determinant of $W$, ${\rm tr}(W)$
the trace of $W$.
Let $\varphi_{\max}(W)$ and $\varphi_{\min}(W)$ be the
largest and smallest eigenvalues, respectively. 
We write
$\diag (W)$ for a diagonal matrix with the same diagonal as $W$
and $\offdiag(W) = W - \diag(W)$.
The matrix Frobenius norm is given by
$\norm{W}_F = \sqrt{\sum_i\sum_j w_{ij}^2}$. The operator norm
$\twonorm{W}^2$ is given by $\varphi_{\max}(WW^T)$.  We write $| \cdot |_1$
for the $\ell_1$ norm of a matrix vectorized, i.e., for a matrix
$|W|_1 = \norm{\mvec W}_1 = \sum_{i}\sum_j |w_{ij}|$, and sometimes
write $\norm{W}_0$ for the number of non-zero entries in the matrix.
For an index set $T$ and a matrix $W = [w_{ij}]$, write $W_T \equiv
(w_{ij} I( (i,j) \in T))$, where $I(\cdot)$ is the indicator function.

\section{The model and the method}

We assume a multivariate Gaussian model
\begin{eqnarray}
\label{eq::rand-des}
X = (X_1,\ldots ,X_p) \sim {\cal N}_p(0,\Sigma_0), \; \; \text{ where } \; \Sigma_{0,ii} =1.
\end{eqnarray}
The data is generated by
$X^{(1)},\ldots ,X^{(n)}\ \mbox{i.i.d.} \sim {\cal N}_p(0,\Sigma_0)$.
Requiring the mean vector and all variances being equal to zero and one
respectively is not a real restriction and in practice, we can
easily center and scale the data.
We denote the concentration matrix by $\Theta_0 = \Sigma_0^{-1}$.    

Since we will use a nodewise regression procedure, as described below in
Section \ref{subsec.nodewisereg}, we consider a regression formulation of
the model. Consider many regressions, where we regress one variable against
all others:
\begin{eqnarray}
\label{eq::regr}
& &X_i = \sum_{j \neq i} \beta_j^i X_j + V_i\ (i=1,\ldots ,p), \; \; \text{ where } \\
\label{eq::regressions}
& &V_i \sim {\cal N}(0, \sigma_{V_i}^2)\ \mbox{independent of}\ \{X_j; j \neq
i\}\ (i=1,\ldots ,p). 
\end{eqnarray}
There are explicit relations between the regression coefficients, error
variances and the concentration matrix $\Theta_0 = (\theta_{0,ij})$:
\begin{eqnarray}\label{regr-betatheta}
\beta_j^i = - \theta_{0,ij}/\theta_{0,ii},\ \Var(V_i) 
:= \sigma_{V_i}^2 = 1/\theta_{0, ii}\ (i,j=1,\ldots ,p).
\end{eqnarray}
Furthermore, it is well known that for Gaussian distributions, conditional
independence is encoded in $\Theta_0$, and due to (\ref{regr-betatheta}),
also in the regression coefficients: 
\begin{eqnarray}\nonumber 
& &X_i\ \mbox{is conditionally dependent of}\ X_j\ \mbox{given}\ \{X_k;\ k
\in \{1,\ldots ,p\} \setminus \{i,j\}\} \\
\label{eq::cond-beta}
& \Longleftrightarrow & \theta_{0,ij} \not= 0 
\; \; \Longleftrightarrow \; \beta_i^j \not= 0\ \mbox{and}\ \beta_j^i \not= 0.
\end{eqnarray}
For the second equivalence, we assume that
$\Var(V_i) = 1/\theta_{0,ii} >0$ and $\Var(V_j) = 1/\theta_{0,jj} > 0$.
Conditional (in-)dependencies can be conveniently encoded by an 
undirected graph, the conditional independence graph which we denote
by  $G= (V, E_0)$. The set of vertices is $V =\{1,\ldots ,p\}$ and 
the set of undirected edges $E_0 \subseteq V \times V$
is defined as follows:
\begin{eqnarray}
\nonumber
& &\mbox{there is an undirected edge between nodes $i$ and $j$}\\
\label{edge-set}
& \Longleftrightarrow & \theta_{0,ij} \not=
0  \; \; \Longleftrightarrow \; \; \beta_i^j \not= 0\ \mbox{and}\ \beta_j^i \not= 0.
\end{eqnarray}
Note that on the right hand side of the second equivalence, we could
replace the word "and" by "or".
For the second equivalence, we assume $\Var(V_i), \Var(V_j) >0$ following the
remark after~\eqref{eq::cond-beta}.

We now define the sparsity of the concentration matrix $\Theta_0$ or the
conditional independence graph. The definition is different than simply
counting the non-zero elements of $\Theta_0$, for which we have 
$\supp(\Theta_0) = p + 2|E_0|$. We consider instead the
number of elements which are sufficiently large. For each $i$, define
the number $s^{i}_{0,n}$ as 
the smallest integer such that the following holds:
\ben 
\label{eq::cond-regr1}
\sum_{j=1, j \not= i}^p \min\{ \theta_{0,ij}^2, \lambda^2
\theta_{0,ii} \} &  \leq & s^{i}_{0,n} \lambda^2 \theta_{0,ii},
\; \text{ where } \; \lambda = \sqrt{2 \log(p)/n},
\een
where  {\em essential sparsity} $s^{i}_{0,n}$ at row $i$ describes the number 
of  ``sufficiently large'' non-diagonal elements $\theta_{0,ij}$ relative to 
a given $(n, p)$ pair and $\theta_{0,ii}, i=1, \ldots, p$.
The value $S_{0,n}$ in (\ref{eq::cond-regr2}) is
summing {\em essential sparsity} across all rows of $\Theta_0$,
\begin{eqnarray}
\label{eq::cond-regr2}
S_{0,n} & := & \sum_{i=1}^p s^{i}_{0,n}.
\end{eqnarray}
Due to the expression of $\lambda$, 
the value of $S_{0,n}$ depends on $p$ and $n$. 
For example, if all non-zero non-diagonal elements $\theta_{0,ij}$ of 
the $i$th row are larger in absolute value than $\lambda \sqrt{\theta_{0,ii}}$, 
the value $s^{i}_{0,n}$ coincides with the node degree $s^i$.
However, if some (many) of the elements
$|\theta_{0,ij}|$ are non-zero but small, $s^{i}_{0,n}$ is (much)
smaller than its node degree $s^i$; As a consequence, if some (many) of 
$|\theta_{0,ij}|, \forall i, j, i \not= j$ are non-zero but small, 
the value of  $S_{0,n}$ is also (much) smaller than 
$2 |E_0|$, which is the ``classical'' sparsity for 
the matrix $(\Theta_0 - \diag(\Theta_0))$.
See Section~\ref{sec:append-pre} for more discussions.

\subsection{The estimation procedure}
\label{subsec.nodewisereg}
The estimation of $\Theta_0$ and $\Sigma_0 = \Theta_0^{-1}$ is pursued in two
stages. We first estimate the undirected graph with edge set $E_0$ as in
(\ref{edge-set}) and we then use the maximum likelihood estimator
based on the estimate $\hat{E}_n$, that is, the non-zero elements of
$\hat{\Theta}_n$ correspond to the estimated edges in $\hat{E}_n$. 
Inferring the edge set $E_0$ can be based on the following approach as
proposed and theoretically justified in~\cite{MB06}:
perform $p$ regressions using the Lasso to obtain $p$ vectors of 
regression coefficients $\hat{\beta}^1, \ldots ,\hat{\beta}^p$
where for each $i$, 
$\hat{\beta}^i = \{\hat{\beta}^i_j;\ j \in \{1,\ldots ,p\} \setminus i \}$; 
Then estimate the edge set by the ``OR'' rule, 
\begin{eqnarray}\label{eq::est-edge}
\mbox{estimate an edge between nodes $i$ and $j$} \Longleftrightarrow
\hat{\beta}_j^i \neq 0\ \mbox{or}\ \hat{\beta}_i^j \neq 0.
\end{eqnarray}
\noindent{\bf \large{Nodewise regressions for inferring the graph.}}
\label{sec:node-regres}
In the present work, we use the Lasso in combination with
thresholding~\citep{Zhou09th,Zhou10}. 
Consider the Lasso for each of the nodewise regressions
\begin{eqnarray}
\label{eq::lasso}
{\beta}_{\init}^i = \mbox{argmin}_{\beta^i} \sum_{r=1}^n (X_i^{(r)} - \sum_{j
    \neq i} \beta_j^i X_j^{(r)})^2 + \lambda_n \sum_{j \neq i}
  |\beta_j^i|\ \ \text{ for } i=1,\ldots ,p, 
\end{eqnarray}
where $\lambda_n > 0$ is the same regularization parameter for all 
regressions. Since the Lasso typically estimates too many components with
non-zero estimated regression coefficients, we use thresholding
to get rid of variables with small regression coefficients from solutions 
of \eqref{eq::lasso}:
\begin{eqnarray}
\label{eq::est-beta}
\hat{\beta}_j^i(\lambda_n, \tau) = {\beta}_{j,\init}^i(\lambda_n)
I(|{\beta}_{j,\init}^i(\lambda_n)| > \tau), 
\end{eqnarray}
where $\tau > 0$ is a thresholding parameter.
We obtain the corresponding estimated edge
set as defined by (\ref{eq::est-edge}) using the estimator in (\ref{eq::est-beta})
and we use the notation
\begin{eqnarray}
\label{eq::est-edgefinal}
\hat{E}_n(\lambda_n, \tau).
\end{eqnarray} 
We note that the estimator depends on two tuning parameters $\lambda_n$ 
and $\tau$.

The use of thresholding has clear benefits from a theoretical point of
view: the number of false positive selections may be much larger without
thresholding (when tuned for good prediction).
and a similar statement would hold when comparing the adaptive Lasso with the standard Lasso. 
We refer the interested reader to \cite{Zhou09th,Zhou10} and \cite{GBZ10}.  

\noindent{\bf \large{Maximum likelihood estimation based on graphs.}}
\label{subsec.mle} 
Given a conditional independence graph with edge set $E$, we estimate the
concentration matrix by maximum likelihood. Denote by $\hat{S}_n = n^{-1}
\sum_{r=1}^n X^{(r)} (X^{(r)})^T$ the sample covariance matrix (using that the
mean vector is zero) and by
\ben
\label{eq::Gamma-n}
\hat{\Gamma}_n =
 \diag(\hat{S}_n)^{-1/2} (\hat{S}_n) \diag(\hat{S}_n)^{-1/2}
\een
the sample correlation matrix. The estimator for the concentration matrix
in view of~\eqref{eq::rand-des} is:
\begin{eqnarray}
\label{eq::est-mle}
& &\hat{\Theta}_n(E) = \mbox{argmin}_{\Theta \in {\cal M}_{p,E}}
\left( \mathrm{tr}(\Theta \hat{\Gamma}_n) - \log|\Theta|\right), \text{ where } \nonumber\\
& &{\cal M}_{p,E} = \{\Theta \in \R^{p \times p};
\ \Theta \succ 0 \; \mbox{ and}\  \theta_{ij} = 0 \text{ for all }
(i,j) \not \in E, \; \text{ where } \; i \not=j\}
\end{eqnarray}
defines the constrained set for positive definite $\Theta$. If $n \ge q^*$ where $q^*$ is 
the maximal clique size of a minimal chordal cover of the graph with edge set $E$, 
the MLE exists and is unique, see, for example~\citet[Corollary 2.3]{Uhler11}. 
We note that our theory guarantees that $n \ge q^*$ holds
with high probability for $G = (V, E)$, where $E =  \hat{E}_n(\lambda_n,\tau))$, 
under Assumption (A1) to be introduced in the next section. 
The definition in (\ref{eq::est-mle})
is slightly different from the more usual estimator which uses the sample
covariance $\hat{S}_n$ rather than $\hat{\Gamma}_n$. Here, the sample
correlation matrix reflects the fact that we typically work with
standardized data where the variables have empirical variances equal to
one. The estimator in (\ref{eq::est-mle}) is constrained leading to
zero-values corresponding to ${E}^c =  \{(i,j): i, j = 1, \ldots, p, i \not= j, (i, j) \not\in {E} \}$.

If the edge set  $E$ is sparse having relatively 
few edges only, the estimator in (\ref{eq::est-mle}) 
is already sufficiently regularized by the constraints and
hence, no additional penalization is used at this stage. 
Our final estimator for the concentration matrix is the combination of
(\ref{eq::est-edgefinal}) and (\ref{eq::est-mle}):
\begin{eqnarray}\label{eq::est-final}
\hat{\Theta}_n = \hat{\Theta}_n(\hat{E}_n(\lambda_n,\tau)).
\end{eqnarray}

\noindent{\bf \large{Choosing the regularization parameters.}}
We propose to select the parameter $\lambda_n$ via cross-validation 
to minimize the squared test set error among all $p$ regressions:
\begin{eqnarray*}
\hat{\lambda}_n = \mbox{argmin}_{\lambda} \sum_{i=1}^p 
\left(\mbox{CV-score($\lambda$) of $i$th regression}\right),
\end{eqnarray*}
where CV-score($\lambda$) of $i$th regression is with respect to the squared
error prediction loss.
Sequentially proceeding, we then select $\tau$ by cross-validating the
multivariate Gaussian log-likelihood, from (\ref{eq::est-mle}). 
Regarding the type of cross-validation, we usually use
the 10-fold scheme. Due to the sequential nature of choosing the
regularization parameters, the number of candidate estimators is given by
the number of candidate values for $\lambda$ plus the number of candidate
value for $\tau$. In Section \ref{sec.numerical}, we describe the grids of
candidate values in more details.  
We note that for our theoretical results, we do not analyze the implications 
of our method using estimated $\hat{\lambda}_n$ and $\hat{\tau}$.

\section{Theoretical results}
\label{sec:theory}
In this section, we present in Theorem~\ref{thm:main} convergence rates 
for estimating the  precision and the covariance matrices with respect to the 
Frobenius norm; in addition, we show a risk consistency result for an
oracle risk to be defined in~\eqref{eq::future-risk}. 
Moreover, in Proposition~\ref{prop:bias}, we show that the model we select is 
sufficiently sparse while at the same time, the bias term we introduce via 
sparse approximation is sufficiently bounded.
These results illustrate the classical bias and variance tradeoff.
Our analysis is non-asymptotic in nature;  however, we first formulate our 
results from an asymptotic point of view for simplicity. 
To do so, we consider a triangular array of data generating random variables
\begin{eqnarray}
\label{data}
X^{(1)},\ldots ,X^{(n)}\ \mbox{i.i.d.} \sim {\cal N}_p(0,\Sigma_0),\ n
= 1,2,\ldots 
\end{eqnarray}
where $\Sigma_0=\Sigma_{0,n}$ and 
$p = p_n$ change with $n$. Let $\Theta_0: = \Sigma_0^{-1}$.
We make the following assumptions. 
\begin{enumerate}
\item[(A0)]
The size of the neighborhood for each node $i \in V$
is upper bounded by an integer $s < p$ and the sample size satisfies
for some constant $C$
$$n \geq C s \log (p/s).$$
\item[(A1)]
The dimension and number of sufficiently strong non-zero edges
$S_{0,n}$ as in~\eqref{eq::cond-regr2} satisfy:
dimension $p$ grows with $n$ following $p = o(e^{c n})$ for some constant
$0 < c <1$ and
\begin{eqnarray*}
S_{0,n} = o(n/\log\max(n,p))\ (n \to \infty).
\end{eqnarray*}
\item[(A2)]
The minimal and maximal 
eigenvalues of the true covariance matrix $\Sigma_0$ are bounded: for some constants 
$M_{\mathrm{upp}} \geq M_{\mathrm{low}} > 0$, we have
\begin{eqnarray*}
  \varphi_{\mathrm{min}}(\Sigma_0) \ge M_{\mathrm{low}} > 0 
\; \text{ and } \;
\varphi_{\mathrm{max}}(\Sigma_0) \leq  M_{\mathrm{upp}}  \leq \infty.
\end{eqnarray*}
Moreover, 
throughout our analysis, we assume the following.
There exists $v^2 > 0$ such that for all $i$, and $V_i$ as 
defined in~\eqref{eq::regressions}: \; $\Var(V_i) = 1/\theta_{0,ii} \geq v^2.$
\end{enumerate}
Before we proceed, we need some definitions.
Define for $\Theta \succ 0$
\begin{equation}
\label{eq::future-risk}
R(\Theta) = {\rm tr}(\Theta\Sigma_0) - \log |\Theta|,
\end{equation}
where minimizing~\eqref{eq::future-risk}
without constraints gives $\Theta_0$.
Given \eqref{eq::cond-regr2},~\eqref{eq::cond-regr1}, and $\Theta_0$, define
\ben
\label{eq::global-cons}
C^2_{\diag} := \min\{\max_{i=1,...p} \theta_{0,ii}^2,
\max_{i=1, \ldots, p} \left(s^{i}_{0,n}/{S}_{0,n}\right)\cdot
\fnorm{\diag(\Theta_0)}^2 \}.
\een
We now state the main results of this paper.
We defer the specification on various tuning parameters, namely, 
$\lambda_n, \tau$ to Section~\ref{sec:proof-main-outline},
where we also provide an outline for Theorem~\ref{thm:main}.
\begin{theorem}
\label{thm:main}
Consider data generating random variables as in (\ref{data}) 
and assume that (A0), (A1), and (A2) hold.
We assume $\Sigma_{0, ii} =1$ for all $i$.
Then, with probability at least $1 - d/p^{2}$, for some small constant $d >2$, 
we obtain under appropriately chosen $\lambda_n$ and $\tau$,
an edge set $\hat{E}_n$ as in (\ref{eq::est-edgefinal}), such that
\ben
\label{eq::thm-E-bounds}
& &|\hat{E}_n| \le 4 S_{0,n},\; \text{ where } \; 
|\hat{E}_n \setminus E_{0}| \le 2 S_{0,n};
\een
and for $\hat\Theta_n$ and $\hat\Sigma_n =  (\hat\Theta_n)^{-1}$ as 
defined in~\eqref{eq::est-final}
the following holds, 
\bens
\nonumber
\twonorm{\hat{\Theta}_n - \Theta_{0}} \leq 
\|\hat{\Theta}_n - \Theta_{0}\|_F & = & 
O_P\left(\sqrt{{S_{0,n} \log\max(n,p)}/{n}}\right), \\
\nonumber
\twonorm{\hat{\Sigma}_n - \Sigma_0} \leq
\|\hat{\Sigma}_n - \Sigma_0\|_F & = & 
O_P\left(\sqrt{{S_{0,n} \log\max(n,p)}/{n}}\right), \\
\label{eq::KL}
R(\hat \Theta_n) - R(\Theta_0) & =  & 
O_P\left( S_{0,n} \log \max(n,p)/{n} \right)
\eens
where the constants hidden in the $O_P()$ notation depend on $\tau$, $M_{\mathrm{low}}, M_{\mathrm{upp}}$, $C_{\diag}$ as in~\eqref{eq::global-cons}, and constants concerning sparse and restrictive eigenvalues of $\Sigma_0$ 
(cf. Section~\ref{sec:proof-main-outline} and~\ref{sec:append-p-regres}).
\end{theorem}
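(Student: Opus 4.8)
The plan is to decompose the problem into two essentially separable pieces: (i) a graph-recovery / sparse-selection result governing the first stage (Lasso plus thresholding across the $p$ nodewise regressions), and (ii) a constrained-MLE perturbation analysis that converts an approximately-correct edge set into the stated Frobenius and risk rates. The first piece is where I would lean most heavily on the cited thresholded-Lasso machinery~\citep{Zhou10, Zhou10sub, RZ11}: under the restricted-eigenvalue condition on $\Sigma_0$ and the sample-complexity assumption (A0), I expect each nodewise regression to select, after thresholding at level $\tau$, a support of size $O(s^i_{0,n})$ while capturing all ``sufficiently large'' coefficients, with the false-positive count per row also controlled by $s^i_{0,n}$. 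Summing over the $p$ rows and applying the ``OR'' rule~\eqref{eq::est-edge} should yield the cardinality bounds $|\hat E_n| \le 4 S_{0,n}$ and $|\hat E_n \setminus E_0| \le 2 S_{0,n}$ in~\eqref{eq::thm-E-bounds}; a union bound over the $p$ regressions, each failing with probability $O(1/p^3)$, delivers the global $1 - d/p^2$ guarantee, with the dimension growth $p = o(e^{cn})$ in (A1) ensuring the deviation inequalities for $\hat S_n$ and $\hat\Gamma_n$ hold uniformly.

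\textbf{For the second stage} I would first record the deterministic bound $\|\hat S_n - \Sigma_0\|_{\max} = O_P(\sqrt{\log\max(n,p)/n})$ on the entrywise error of the sample correlation matrix, which is the workhorse for everything downstream. Conditioning on the high-probability event from stage (i), the estimated graph $\hat E_n$ is sparse and, crucially, satisfies $n \ge q^*$ so the constrained MLE~\eqref{eq::est-mle} exists and is unique~\citep{Uhler11}. The error $\hat\Theta_n - \Theta_0$ then splits into a \emph{variance} term, coming from plugging $\hat\Gamma_n$ rather than $\Sigma_0$ into the MLE, and a \emph{bias} term, coming from having set to zero the $O(S_{0,n})$ genuinely nonzero but small entries of $\Theta_0$ that $\hat E_n$ misses. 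Proposition~\ref{prop:bias} (invoked as already established) is precisely what bounds this bias contribution. I would analyze the MLE via the first-order optimality (KKT) conditions for the restricted problem, showing that on the constraint set $\mathcal M_{p,\hat E_n}$ the gradient $\hat\Gamma_n - \hat\Theta_n^{-1}$ vanishes on the selected entries; a standard convexity / self-concordance argument for the log-determinant objective, together with the eigenvalue control from (A2), then converts the entrywise $\ell_\infty$ perturbation of size $\sqrt{\log\max(n,p)/n}$ over $O(S_{0,n})$ active coordinates into the Frobenius bound $\|\hat\Theta_n - \Theta_0\|_F = O_P(\sqrt{S_{0,n}\log\max(n,p)/n})$.

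\textbf{From there} the companion bound on $\hat\Sigma_n = \hat\Theta_n^{-1}$ follows by the resolvent-type identity $\hat\Sigma_n - \Sigma_0 = -\hat\Sigma_n(\hat\Theta_n - \Theta_0)\Sigma_0$, using $\|\hat\Sigma_n\|_2, \|\Sigma_0\|_2 = O(1)$ from (A2) and the fact (ensured with high probability) that $\varphi_{\min}(\hat\Theta_n)$ is bounded away from $0$. The risk gap $R(\hat\Theta_n) - R(\Theta_0)$ in~\eqref{eq::KL} I would obtain by a second-order Taylor expansion of $R$ around $\Theta_0$: the first-order term vanishes since $\nabla R(\Theta_0) = \Sigma_0 - \Theta_0^{-1} = 0$, and the Hessian is $\Theta^{-1} \otimes \Theta^{-1}$, so the gap is of order $\|\hat\Theta_n - \Theta_0\|_F^2 = O_P(S_{0,n}\log\max(n,p)/n)$, exactly the stated squared rate. \textbf{The main obstacle} I anticipate is the second-stage MLE perturbation bound: carefully controlling the bias from the missed small edges \emph{simultaneously} with the stochastic fluctuation, while keeping all eigenvalues of $\hat\Theta_n$ uniformly bounded along the analysis, is delicate because the constraint set $\mathcal M_{p,\hat E_n}$ is itself random. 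Handling this cleanly will require that the restricted-eigenvalue and sparse-eigenvalue constants of $\Sigma_0$ transfer to $\hat\Gamma_n$ on the selected support with high probability, which is where the sample-complexity condition (A1), $S_{0,n} = o(n/\log\max(n,p))$, is essential.
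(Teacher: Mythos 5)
Your proposal follows essentially the same route as the paper: stage one is the thresholded nodewise Lasso analysis (Theorem~\ref{thm:RE-oracle-all}, Lemma~\ref{lemma:threshold-RE}, Corollary~\ref{cor:drop-norm}) giving the cardinality bounds in~\eqref{eq::thm-E-bounds} and the bias bound $\fnorm{\tilde\Theta_0-\Theta_0}$; stage two is a Rothman-et-al.-style convexity argument for the constrained MLE, which is exactly your ``KKT plus convexity of the log-determinant, with the entrywise $\ell_\infty$ deviation of $\hat\Gamma_n$ charged only to the $O(S_{0,n})$ active off-diagonal coordinates'' step (the paper implements it by showing $\tilde G(\Delta)>0$ on the sphere $\fnorm{\Delta}=Mr_n$ inside the random constraint set, conditioned on the first-stage event). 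Your bound for $\hat\Sigma_n$ via the resolvent identity and eigenvalue control is also the paper's argument.

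The one place you genuinely diverge is the risk bound. You expand $R$ to second order directly at $\Theta_0$, using $\nabla R(\Theta_0)=0$ and the Hessian $\Theta^{-1}\otimes\Theta^{-1}$ to get $R(\hat\Theta_n)-R(\Theta_0)\lesssim \fnorm{\hat\Theta_n-\Theta_0}^2$; this is valid (the eigenvalues of $\Theta_0+v\hat\Delta$ stay bounded below on the segment because $\fnorm{\hat\Delta}=o_P(1)$) and is arguably more economical, since it recycles the already-proved Frobenius bound. The paper instead splits $R(\hat\Theta_n)-R(\Theta_0)$ through the intermediate point $\tilde\Theta_0$ (Lemmas~\ref{lemma:risk-I} and~\ref{lemma:risk-II}): the term $R(\tilde\Theta_0)-R(\Theta_0)$ is handled by a Taylor expansion of the bias, while $R(\hat\Theta_n)-R(\tilde\Theta_0)$ is bounded \emph{linearly}, via the empirical optimality $\hat R_n(\hat\Theta_n)\le\hat R_n(\tilde\Theta_0)$ and the trace bound $|\tr(\hat\Delta(\Sigma_0-\hat\Gamma_n))|\le \delta_n\sqrt{2|E|}\,\fnorm{\hat\Delta}$. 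Both give the rate $O_P(S_{0,n}\log\max(n,p)/n)$; the paper's version yields somewhat sharper constants (it avoids squaring the constant $M+1$ from the Frobenius bound), while yours requires no additional probabilistic input. Everything else in your outline matches the paper's proof.
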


We note that convergence rates for the estimated covariance matrix 
and for predictive risk depend on the rate in Frobenius norm of 
the estimated inverse covariance matrix.
The predictive risk can be interpreted as follows.
Let $X \sim \mathcal{N}(0, \Sigma_0)$ with $f_{\Sigma_0}$ denoting its density.
Let $f_{\hat\Sigma_n}$ be the density for $\mathcal{N}(0,\hat\Sigma_n)$ and
$D_\text{KL}(\Sigma_0 \| \hat\Sigma_n)$ denotes the 
Kullback Leibler (KL) divergence from $\mathcal{N}(0, \Sigma_0)$ 
to $\mathcal{N}(0,  \hat\Sigma_n)$.
Now, we have for $\Sigma, \hat\Sigma_n \succ 0$,
\ben 
\label{eq::kld}
R(\hat\Theta_n) - R(\Theta_0) :=  
2 \expect{0}{\log f_{\Sigma_0}(X) - \log f_{\hat\Sigma_n}(X)} :=
2 D_\text{KL}(\Sigma_0 \| \hat\Sigma_n) \geq 0.\een 
Actual conditions and non-asymptotic results that are involved in the
G{\bf e}lat{\bf o}  estimation appear in Sections~\ref{sec:append-p-regres},
~\ref{sec:append-proof-bias}, and~\ref{sec:append-frob-missing}
respectively.

\begin{remark}
Implicitly in (A1), we have specified a lower bound on 
the sample size to be $n = \Omega\left(S_{0,n} \log \max(n,p)\right)$.
For the interesting case of $p > n$, a sample size of
\ben
n = \Omega\left(\max(S_{0,n} \log p, s \log( p/s)) \right)
\een
is sufficient in order to achieve the rates in Theorem~\ref{thm:main}.
As to be shown in our analysis, the lower bound on $n$ is slightly
different for each Frobenius norm bound to hold from a non-asymptotic 
point of view (cf. Theorem~\ref{thm::frob-missing} 
and~\ref{thm::frob-missing-cov}).
\end{remark}
Theorem \ref{thm:main} can be interpreted as follows.
First, the cardinality of the estimated edge set
exceeds $S_{0,n}$  at most by a factor 4, where $S_{0,n}$ as 
in~\eqref{eq::cond-regr2} is the number of 
sufficiently strong edges in the model, while the number of false 
positives is bounded by $2 S_{0,n}$. 
Note that the factors $4$ and $2$ can be replaced by some other constants, 
while achieving the same bounds on the 
rates of convergence (cf. Section~\ref{sec:proof-main}).
We emphasize that we achieve these two goals by
 sparse model selection, where only important 
edges are selected even though there are many more non-zero edges 
in $E_0$, under conditions that are much weaker than (A2).
More precisely, (A2) can be replaced by conditions on sparse and 
restrictive eigenvalues (RE) of $\Sigma_0$. 
Moreover, the bounded neighborhood constraint (A0) is required 
only for regression analysis (cf. Theorem~\ref{thm:RE-oracle-all}) 
and for bounding the bias due to sparse approximation as in 
Proposition~\ref{prop:bias}.
This is shown in Sections~\ref{sec:append-p-regres}  and \ref{sec:append-proof-bias}.
Analysis follows from~\cite{Zhou09th,Zhou10} with earlier 
references to~\cite{CT07,MY09,BRT09} for estimating sparse regression
coefficients. 

We note that  the conditions that we use are indeed similar to those in~\cite{RBLZ08}, 
with (A1) being much more relaxed when $S_{0,n} \ll |E_0|$.
The convergence rate with respect to the Frobenius norm should be
compared to the rate
$O_P(\sqrt{{|E_{0}| \log\max(n,p)}/{n}})$ in case $\diag(\Sigma_{0})$ is known, 
which is the rate in~\cite{RBLZ08} for the GLasso and for SCAD~\citep{LaFa09}. 
In the scenario where $|E_{0}| \gg S_{0,n}$, 
i.e. there are many weak edges,  the rate in Theorem \ref{thm:main} 
is better than the one established for GLasso~\citep{RBLZ08} or for 
the SCAD-type estimator~\citep{LaFa09}; hence we require a smaller 
sample size in order to yield an accurate estimate of
$\Theta_0$.


\begin{remark}
\label{remark:general-Fnorms}
For the general case where $\Sigma_{0, ii}, i=1, \ldots, p$ are not assumed 
to be known,
we could achieve essentially the same rate as stated in
Theorem~\ref{thm:main} for $\shtwonorm{\hat{\Theta}_n - \Theta_{0}}$ and 
$\shtwonorm{\hat{\Sigma}_n - \Sigma_0}$ under $(A_0), (A_1)$ and $(A_2)$ 
following analysis in the present work (cf. Theorem~\ref{thm::frob-missing-omega})
and that in~\citet[Theorem 2]{RBLZ08}. 
Presenting full details for such results are beyond the scope of the
current paper.
We do provide the key technical lemma which is essential for showing such
bounds based on estimating the inverse of the correlation matrix in 
Theorem~\ref{thm::frob-missing-omega}; see also Remark~\ref{remark:general-opnorm}
which immediately follows.

In this case, for the Frobenius norm and the risk to converge to zero, 
a too large value of $p$ is not allowed.
Indeed, for the Frobenius norm and the risk to converge, 
$(A1)$ is to be replaced by:
$$(A3) \; \; \; \; p \asymp n^c \text{ for some constant } 0< c < 1 \text{ and }
\; p+ S_{0,n} = o(n/\log\max(n,p)) \; \text{ as } \; n \to \infty.$$
In this case, we have 
\bens
\nonumber
\|\hat{\Theta}_n - \Theta_{0}\|_F & = & 
O_P\left(\sqrt{{(p+ S_{0,n}) \log\max(n,p)}/{n}}\right), \\
\nonumber
\|\hat{\Sigma}_n - \Sigma_0\|_F & = & 
O_P\left(\sqrt{{(p+S_{0,n}) \log\max(n,p)}/{n}}\right), \\
\label{eq::KL}
R(\hat \Theta_n) - R(\Theta_0) & =  & 
O_P\left((p+ S_{0,n}) \log \max(n,p)/{n} \right).
\eens
Moreover, in the refitting stage, we could achieve these rates with the 
maximum likelihood estimator based on the sample covariance 
matrix $\hat{S}_n$ as defined in~\eqref{eq::est-mle-0}:
\begin{eqnarray}
\label{eq::est-mle-0}
& &\hat{\Theta}_n(E) = \mbox{argmin}_{\Theta \in {\cal M}_{p,E}}
\left( \mathrm{tr}(\Theta \hat{S}_n) - \log|\Theta|\right), \text{ where } \nonumber\\
& &{\cal M}_{p,E} = \{\Theta \in \R^{p \times p};
\ \Theta \succ 0 \; \mbox{ and}\  \theta_{ij} = 0 \text{ for all }
(i,j) \not \in E, \; \text{ where } \; i \not=j\}
\end{eqnarray}

A real high-dimensional scenario where $p \gg n$ is excluded in order to achieve
Frobenius norm consistency.
 This restriction comes from the nature of the 
Frobenius norm and when considering e.g. the operator norm, such
restrictions can indeed be relaxed as stated above.
\end{remark}
It is also of interest to understand the bias of the estimator 
caused by using the estimated edge set $\hat{E}_n$ instead of the true 
edge set $E_{0}$.
This is the content of Proposition~\ref{prop:bias}.
For a given $\hat{E}_n$, denote by 
$$\tilde{\Theta}_0 = \mathrm{diag}(\Theta_0) + (\Theta_0)_{\hat{E}_n} =
\mathrm{diag}(\Theta_0) + \Theta_{0, \hat{E}_n \cap E_0},$$
where the second equality holds since $\Theta_{0, E_0^c} = 0$.
Note that the quantity $\tilde{\Theta}_0$ is identical to $\Theta_0$ on 
$\hat{E}_n$ and on the diagonal, and it equals zero on 
$\hat{E}_n^c = \{(i,j): i, j = 1, \ldots, p, i \not= j, (i, j) \not\in \hat{E}_n \}$.
Hence, the quantity $\Theta_{0, \drop} := \tilde{\Theta}_0 - \Theta_0$
measures the bias caused by a potentially wrong edge set $\hat{E}_n$; 
note that $\Tilde{\Theta}_0 = \Theta_0$ if $\hat{E}_n = E_0$. 
\begin{proposition}
\label{prop:bias}
Consider data generating random variables as in expression (\ref{data}). 
Assume that (A0), (A1), and (A2) hold.
Then  we have for choices on $\lambda_n, \tau$ as in
Theorem~\ref{thm:main} and $\hat{E}_n$ in (\ref{eq::est-edgefinal}),
\bens
\fnorm{\Theta_{0, \drop}} := 
\|\tilde{\Theta}_0 - \Theta_0\|_F = O_P\left(\sqrt{S_{0,n} \log\max(n,p)/n}\right).
\eens
\end{proposition}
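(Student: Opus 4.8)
The plan is to begin with an exact entrywise description of the bias matrix $\Theta_{0,\drop}=\tilde\Theta_0-\Theta_0$, reducing its Frobenius norm to a sum of squared true precision entries over the \emph{missed} edges. By construction $\tilde\Theta_0$ coincides with $\Theta_0$ on the diagonal and on every pair in $\hat E_n$, and is zero off-diagonal outside $\hat E_n$. Hence $\Theta_{0,\drop}$ vanishes on the diagonal, on the false positives $\hat E_n\setminus E_0$ (where $\theta_{0,ij}=0$ in any case) and on the recovered edges $\hat E_n\cap E_0$, while on a missed edge $(i,j)\in E_0\setminus\hat E_n$ it equals $-\theta_{0,ij}$. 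Therefore
\begin{equation*}
\fnorm{\Theta_{0,\drop}}^2=\sum_{i=1}^{p}\ \sum_{\substack{j\neq i:\,(i,j)\in E_0\setminus\hat E_n}}\theta_{0,ij}^2
=\sum_{i=1}^{p}\theta_{0,ii}^2\sum_{\substack{j\neq i:\,(i,j)\in E_0\setminus\hat E_n}}(\beta^i_j)^2,
\end{equation*}
where the second equality uses the identity $\theta_{0,ij}=-\theta_{0,ii}\beta^i_j$ from \eqref{regr-betatheta}.

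Next I control each row by the nodewise regression analysis. A pair $(i,j)$ is missed precisely when the ``OR'' rule fails in both directions, so in particular $\hat\beta^i_j=0$; thus the missed indices in row $i$ are contained in $\{j:\beta^i_j\neq0,\ \hat\beta^i_j=0\}$, and for such $j$ we have $(\beta^i_j)^2=(\hat\beta^i_j-\beta^i_j)^2$. Consequently
\begin{equation*}
\sum_{\substack{j:\,(i,j)\in E_0\setminus\hat E_n}}(\beta^i_j)^2\ \le\ \twonorm{\hat\beta^i-\beta^i}^2 .
\end{equation*}
I would then invoke the thresholded-Lasso oracle bound for the $p$ nodewise regressions (Theorem~\ref{thm:RE-oracle-all}): under (A0) and the restricted-eigenvalue behaviour of the design matrices implied by (A2), and for the $\lambda_n,\tau$ fixed in Theorem~\ref{thm:main}, this yields, simultaneously over all $i$ on an event of probability at least $1-d/p^2$,
\begin{equation*}
\twonorm{\hat\beta^i-\beta^i}^2=O\!\left(s^{i}_{0,n}\,\lambda^2/\theta_{0,ii}\right),\qquad\lambda=\sqrt{2\log p/n}.
\end{equation*}
The scale $s^{i}_{0,n}\lambda^2/\theta_{0,ii}=s^{i}_{0,n}\lambda^2\sigma_{V_i}^2$ is exactly the essential-sparsity budget of row $i$: thresholding at level $\asymp\lambda\sigma_{V_i}$ discards only regression coefficients with $|\beta^i_j|\lesssim\lambda\sigma_{V_i}$, and \eqref{eq::cond-regr1} certifies that the squared mass of these small coefficients is at most $s^{i}_{0,n}\lambda^2/\theta_{0,ii}$.

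Combining the last three displays and summing over rows gives, on the same event,
\begin{equation*}
\fnorm{\Theta_{0,\drop}}^2\ \le\ \sum_{i=1}^{p}\theta_{0,ii}^2\,\twonorm{\hat\beta^i-\beta^i}^2
=O\!\Big(\lambda^2\sum_{i=1}^{p}\theta_{0,ii}\,s^{i}_{0,n}\Big)
=O\!\left(\lambda^2\,(\textstyle\max_i\theta_{0,ii})\,S_{0,n}\right),
\end{equation*}
using the definition \eqref{eq::cond-regr2} of $S_{0,n}$. Assumption (A2) bounds $\max_i\theta_{0,ii}\le\varphi_{\max}(\Theta_0)=1/\varphi_{\min}(\Sigma_0)\le1/M_{\mathrm{low}}$, so with $\lambda^2=2\log p/n$ the right-hand side is $O(S_{0,n}\log p/n)$; taking square roots yields the claimed $O_P(\sqrt{S_{0,n}\log\max(n,p)/n})$, the replacement of $\log p$ by $\log\max(n,p)$ reflecting the uniform deviation bound needed when $n>p$. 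The only genuinely nontrivial ingredient is the per-row $\ell_2$ guarantee above: one must show that the initial Lasso fits are uniformly accurate at scale $\lambda\sigma_{V_i}$, that thresholding at $\tau$ therefore removes only coefficients below the essential-sparsity scale, and that this holds simultaneously across all $p$ regressions via a union bound—this is precisely where (A0), (A1) and the RE consequences of (A2) enter, and it is supplied by the regression results carried over from \cite{Zhou10}.
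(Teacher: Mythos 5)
Your proposal is correct and takes essentially the same route as the paper's own proof (Corollary~\ref{cor:drop-norm} built on Lemma~\ref{lemma:threshold-RE} and Theorem~\ref{thm:RE-oracle-all}): reduce $\fnorm{\tilde\Theta_0-\Theta_0}^2$ to $\sum_i\theta_{0,ii}^2$ times the squared mass of the true regression coefficients on the dropped indices via $\theta_{0,ij}=-\theta_{0,ii}\beta^i_j$, then control each row by combining the Lasso oracle inequality with the threshold level on $T_0$ and the essential-sparsity tail bound on $T_0^c$. The only cosmetic differences are that you route through $\twonorm{\hat\beta^i-\beta^i}$ rather than bounding $\twonorm{\beta^i_{\drop^i}}$ directly, and your per-row scale $s^i_{0,n}\lambda^2/\theta_{0,ii}$ is slightly optimistic because $\lambda_n$ and $\tau$ are not adapted to $\sigma_{V_i}$ (the paper's bound is $s^i_{0,n}\lambda^2 d_0^2\bigl(1+(D_0+D_4)^2\bigr)$, without the $1/\theta_{0,ii}$ factor), but since $1\le\theta_{0,ii}\le 1/M_{\mathrm{low}}$ under (A2) this is absorbed into the $O_P$ constants and does not affect the stated rate.
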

We note that we achieve essentially the same rate for
$\|(\tilde{\Theta}_0)^{-1} - \Sigma_0\|_F$; see Remark~\ref{eq::Sigma-drop}.
We give an account on how results in Proposition~\ref{prop:bias} are 
obtained in Section~\ref{sec:proof-main-outline}, with its non-asymptotic 
statement appearing in  Corollary~\ref{cor:drop-norm}.

\subsection{Discussions and connections to previous work}
It is worth mentioning that consistency in terms of operator and
Frobenius norms does not depend too strongly on the 
property to recover the true underlying edge set $E_0$ in the refitting 
stage. Regarding the latter, suppose we obtain with high probability the
screening property 
\begin{eqnarray}
\label{eq::screen}
E_0 \subseteq E,
\end{eqnarray}
when assuming that all non-zero regression coefficients $|\beta^i_j|$
are sufficiently large ($E$ might be an estimate and hence random).
Although we do not intend to make precise 
the exact conditions and choices of tuning parameters in regression and 
thresholding in order to achieve~\eqref{eq::screen}, 
we state Theorem~\ref{thm::frob},
in case~\eqref{eq::screen}  holds with the following condition:
the number of false positives is bounded as
$\size{{E} \setminus E_0} = O(S).$
\begin{theorem}
\label{thm::frob}
Consider data generating random variables as in expression (\ref{data}) 
and assume that (A1) and (A2) hold, where we replace
$S_{0,n}$ with $S := \size{E_0} = \sum_{i=1}^p s^i$. We assume $\Sigma_{0,
  ii} = 1$ for all $i$. 
Suppose on some event $\event$, such that $\prob{\event} \geq 1- d/p^2$ 
for a small constant $d$, we obtain an edge set ${E}$ such that 
$E_0 \subseteq E$ and $|{E} \setminus E_0| = O(S)$.
Let $\hat\Theta_n(E)$ be the minimizer as defined 
in~\eqref{eq::est-mle}. Then, we have
$\|\hat \Theta_n(E) -\Theta_0\|_F = O_P\left(\sqrt{S\log\max (n,p)/{n}}\right)$.
\end{theorem}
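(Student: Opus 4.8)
The plan is to treat $\hat\Theta_n(E)$ as the minimizer of the convex objective $Q(\Theta) = \tr(\Theta\hat\Gamma_n) - \log|\Theta|$ over the constrained cone $\mathcal{M}_{p,E}$, and to localize this minimizer by a standard convexity argument in the style of the constrained-MLE analysis. The first observation is that since $E_0 \subseteq E$ on $\event$, the truth $\Theta_0$ lies in $\mathcal{M}_{p,E}$ (it vanishes off $E_0 \subseteq E$), so the feasible set contains $\Theta_0$ and the difference $\Delta := \hat\Theta_n(E) - \Theta_0$ is itself supported on $E$ off the diagonal, with a free diagonal. I would define $G(\Delta) = Q(\Theta_0 + \Delta) - Q(\Theta_0)$ on the set of symmetric $\Delta$ with off-diagonal support in $E$, and show that for a large enough constant $M$, $G(\Delta) > 0$ uniformly on the sphere $\fnorm{\Delta} = M r_n$ with $r_n = \sqrt{S\log\max(n,p)/n}$. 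Since $G$ is convex with $G(0)=0$ and $-\log\det$ is coercive on the boundary of the PD cone, positivity on the sphere forces the minimizer $\hat\Delta$ into the ball $\fnorm{\Delta} \le M r_n$, which is exactly the claimed rate.

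Next I would expand $G$ via the integral form of the second-order Taylor remainder for $\log\det$:
\[
G(\Delta) = \tr\big(\Delta(\hat\Gamma_n - \Sigma_0)\big) + \int_0^1 (1-v)\,\tr\Big((\Theta_0 + v\Delta)^{-1}\Delta(\Theta_0+v\Delta)^{-1}\Delta\Big)\,dv,
\]
splitting $G$ into a linear ``noise'' term and a nonnegative quadratic term. The key structural point is that $\hat\Gamma_n$ is a correlation matrix with unit diagonal and, by assumption, $\Sigma_{0,ii} = 1$, so the diagonal of $\hat\Gamma_n - \Sigma_0$ vanishes. Hence the free diagonal of $\Delta$ contributes nothing to the linear term, which collapses to a sum over the $O(S)$ off-diagonal entries indexed by $E$. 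This is precisely what keeps the rate at $\sqrt{S\log\max(n,p)/n}$ rather than $\sqrt{(S+p)\log\max(n,p)/n}$; compare the discussion in Remark~\ref{remark:general-Fnorms} for the unknown-diagonal case.

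I would then bound the two terms separately. For the linear term, Cauchy--Schwarz gives $|\tr(\Delta(\hat\Gamma_n - \Sigma_0))| \le \fnorm{\Delta}\sqrt{2|E|}\,\max_{i\ne j}|(\hat\Gamma_n-\Sigma_0)_{ij}|$, and on a high-probability event the maximal off-diagonal correlation deviation is $O(\sqrt{\log\max(n,p)/n})$; with $|E| = O(S)$ this yields a bound $K\,\fnorm{\Delta}\,r_n$. For the quadratic term, on the sphere $\fnorm{\Delta} = Mr_n$ we have $\twonorm{\Delta}\le Mr_n \to 0$ under (A1), so by (A2) the eigenvalues of $\Theta_0 + v\Delta$ stay bounded away from $0$ and $\infty$ uniformly in $v \in [0,1]$; writing the integrand as $\fnorm{(\Theta_0+v\Delta)^{-1/2}\Delta(\Theta_0+v\Delta)^{-1/2}}^2 \ge \varphi_{\min}((\Theta_0+v\Delta)^{-1})^2\fnorm{\Delta}^2$ and integrating gives a lower bound $c\,\fnorm{\Delta}^2$ with $c$ depending only on $M_{\mathrm{low}}, M_{\mathrm{upp}}$. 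Combining, $G(\Delta) \ge c M^2 r_n^2 - K M r_n^2$ on the sphere, which is positive once $M > K/c$, completing the localization.

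The main obstacle is the concentration bound $\max_{i\ne j}|(\hat\Gamma_n - \Sigma_0)_{ij}| = O(\sqrt{\log\max(n,p)/n})$ on an event of probability at least $1 - d/p^2$. Because $\hat\Gamma_n$ involves the random normalization $\diag(\hat S_n)^{-1/2}$, one cannot read this off directly from the concentration of the entries of $\hat S_n$; I would control the entrywise deviation of $\hat S_n$ and the deviation of each empirical variance from $1$ (both $O(\sqrt{\log\max(n,p)/n})$ by Gaussian sub-exponential tail bounds with a union bound over the $O(p^2)$ pairs), and then propagate these through the normalization by a short perturbation argument, using $\Sigma_{0,ii}=1$ and (A2) to keep denominators bounded away from zero. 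A secondary technical point is that the convexity-plus-feasibility step must be carried out \emph{within} the constrained subspace, but since $\Delta$ ranges over that subspace throughout and the Taylor expansion remains valid there, the argument goes through verbatim.
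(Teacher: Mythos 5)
Your proposal is correct and follows essentially the same route the paper takes: the paper omits a standalone proof of this theorem, referring instead to the proof of Theorem~\ref{thm::frob-missing}, and your argument is exactly that argument specialized to the no-bias case $E_0 \subseteq E$ (so $\tilde\Theta_0 = \Theta_0$) — the same Taylor expansion with integral remainder, the same lower bound $\tilde A \gtrsim \fnorm{\Delta}^2$ via the bounded spectrum, the same reduction of the linear term to the $O(S)$ off-diagonal entries using the unit diagonals of $\hat\Gamma_n$ and $\Sigma_0$, and the same entrywise correlation concentration (the paper's event $\X_0$ propagated through the normalization as in~\eqref{eq::gamma-devi-bound}). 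No gaps.
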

It is clear that this bound corresponds to exactly that of~\cite{RBLZ08}
for the GLasso estimation under appropriate choice of the penalty parameter
for a general $\Sigma \succ 0$ with $\Sigma_{ii} =1$ for all $i$ 
(cf. Remark~\ref{remark:general-Fnorms}).
We omit the proof as it is more or less a modified version of 
Theorem~\ref{thm::frob-missing}, which proves the stronger bounds 
as stated in Theorem~\ref{thm:main}.
We note that the maximum node-degree bound  in (A0) is not needed for 
Theorem~\ref{thm::frob}.

We now make some connections to previous work.
First, we note that to obtain with high probability the exact edge recovery,
${E} = E_0$, we need again sufficiently large non-zero edge weights and 
some restricted eigenvalue (RE) conditions on the covariance matrix as 
defined in Section~\ref{sec:append-pre} even for the multi-stage procedure.
An earlier example is shown in~\cite{ZGB09}, where the second stage 
estimator $\hat \beta$ corresponding to~\eqref{eq::est-beta}
is obtained with nodewise regressions using adaptive Lasso~\citep{Zou06} rather 
than thresholding as in the present work in order to recover the edge set 
${E}_0$ with high probability under an assumption which is stronger than (A0).
Clearly, given an  accurate $\hat{E}_n$, under (A1) and (A2) one can 
then apply Theorem~\ref{thm::frob} to accurately estimate 
$\hat{\Theta}_n$.
On the other hand, it is known that GLasso necessarily needs more 
restrictive conditions on $\Sigma_0$ than the nodewise regression approach 
with the Lasso, as discussed in~\cite{Mei08} and~\cite{RWRY08} in
order to achieve exact edge recovery.

Furthermore, we believe it is straightforward to show that Gelato works under the
RE conditions on $\Sigma_0$ and with a smaller sample size than the analogue without the
thresholding operation in order to achieve 
{\em nearly exact recovery} of the support in the sense that 
$E_0 \subseteq \hat{E}_n$ and $\max_{i} |\hat{E}_{n,i} \setminus E_{0,i}|$ 
is small, that is, the number of extra estimated 
edges at each node $i$ is  bounded by a small constant.
This is shown essentially in~\citet[Theorem1.1]{Zhou09th} for a 
single regression. Given such properties of $\hat{E}_n$, we can again apply 
Theorem~\ref{thm::frob} to obtain $\hat{\Theta}_n$ under (A1) and (A2).
Therefore, Gelato requires relatively weak assumptions on $\Sigma_0$ in
order to achieve the best sparsity and bias tradeoff as illustrated in 
Theorem~\ref{thm:main} and Proposition~\ref{prop:bias} 
when many signals are weak, and Theorem~\ref{thm::frob} when all signals in $E_0$ are strong.

\subsection{An outline for Theorem~\ref{thm:main}}
\label{sec:proof-main-outline}
Let $s_0 = \max_{i=1, \ldots, p} s^i_{0, n}$.
We note that although sparse eigenvalues 
$\rho_{\max}(s), \rho_{\max}(3s_0)$ and restricted eigenvalue for 
$\Sigma_0$ (cf. Section~\ref{sec:append-pre})
are parameters that are unknown, we only need them to appear in 
the lower bounds for $d_0$, $D_4$, and hence also that for 
$\lambda_n$ and $t_0$ that appear below.
We simplify our notation in this section to keep it consistent with
our theoretical non-asymptotic analysis to appear
toward the end of this paper.

\noindent{\bf Regression.}
We choose for some $c_0 \geq 4 \sqrt{2}$, $0< \theta < 1$, and 
$\lambda =\sqrt{\log (p) /n}$,
$$\lambda_n = d_0 \lambda, \; \text{ where }\; 
d_0 \geq c_0 (1+\theta)^2 \sqrt{\rho_{\max}(s) \rho_{\max}(3s_0)}.$$
Let $\beta^i_{\init}, i = 1, \ldots, p$ be the optimal solutions 
to~\eqref{eq::lasso} with $\lambda_n$ as chosen above.
We first prove an oracle result on nodewise regressions in Theorem~\ref{thm:RE-oracle-all}.

\noindent{\bf Thresholding.}
We choose for some constants $D_1, D_4$ to be defined in 
Theorem~\ref{thm:RE-oracle-all},
$$t_0 = f_0 \lambda :=  D_4 d_0 \lambda \; \; \; \text{ where } D_4 \geq D_1$$
where $D_1$ depends on restrictive eigenvalue of $\Sigma_0$;
Apply~\eqref{eq::est-beta} with $\tau = t_0$ and 
$\beta^i_{\init}, i = 1, \ldots, p$ for thresholding
our initial regression coefficients.
Let
\bens
\label{eq::drop-i-set}
\drop^{i} = \{j: j \not= i, \; \abs{\beta^{i}_{j, \init}} < t_0 = f_0 \lambda\},
\eens
where bounds on $\drop^i, i=1, \ldots, p$ are given in 
Lemma~\ref{lemma:threshold-RE}. In view of \eqref{eq::est-edge},
we let
\ben
\label{eq::all-drop}
\drop =\{(i,j): i\not=j: (i,j) \in \drop^i \cap \drop^j\}.
\een

\noindent{\bf Selecting edge set $E$.}
Recall for a pair $(i, j)$  we take the {\it OR rule} as in~\eqref{eq::est-edge} 
to decide if it is to be included  in the edge set $E$: for $\drop$ as defined
in~\eqref{eq::all-drop}, define
\ben
\label{eq::non-drop}
E := \{(i,j): i, j = 1, \ldots, p, i \not= j, (i, j) \not\in \drop\}.
\een
to be the subset of pairs of  non-identical vertices of $G$ which
do not appear in $\drop$; 
Let 
\ben
\label{eq::new-posi} 
\tilde{\Theta}_0 = \diag(\Theta_0) + \Theta_{0, E_0 \cap E}
\een
for $E$ as in~\eqref{eq::non-drop},
which is identical to $\Theta_{0}$ on all diagonal entries and 
entries indexed by ${E_0 \cap E}$, with the rest being set to zero.
As shown in the proof of Corollary \ref{cor:drop-norm}, by thresholding, 
we have identified a {\it sparse subset} of edges $E$
of size at most  $4S_{0,n}$, such that
the corresponding bias 
$\fnorm{\Theta_{0,\drop}} :=   \|\tilde{\Theta}_0  -\Theta_0\|_F$ is 
relatively small, i.e., as bounded in Proposition~\ref{prop:bias}.

\noindent{\bf Refitting.}
In view of Proposition~\ref{prop:bias}, we aim to recover $\tilde{\Theta}_0$ 
given a sparse subset $E$; toward this goal, we use~\eqref{eq::est-mle} 
to obtain the final estimator $\hat{\Theta}_n$ and $\hat{\Sigma}_n =
(\hat{\Theta}_n)^{-1}$.
We give a more detailed account of this procedure in 
Section~\ref{sec:append-frob-missing}, with a focus on 
elaborating the bias and variance tradeoff.
We show the rate of convergence in Frobenius norm 
for the estimated $\hat{\Theta}_n$ and $\hat{\Sigma}_n$ in 
Theorem~\ref{thm::frob-missing-omega}, \ref{thm::frob-missing}
and~\ref{thm::frob-missing-cov}, 
and the bound for Kullback Leibler  divergence in 
Theorem~\ref{thm::frob-missing-risk} respectively.

\subsection{Discussion on covariance estimation based on maximum likelihood}
\label{sec:dis-cov}
The maximum likelihood estimate minimizes over all $\Theta \succ 0$,
\ben
\label{eq::sample-risk}
\hat{R}_n(\Theta) =
 {\rm tr}(\Theta \hat{S}_n) - \log|\Theta|
\een
where $\hat{S}_n$ is the sample covariance 
matrix. Minimizing $\hat{R}_n(\Theta)$ without constraints gives
$\hat\Sigma_n = \hat{S}_n$. 
We now would like to minimize~\eqref{eq::sample-risk}
under the constraints that some pre-defined subset 
$\drop$ of edges are set to zero. 
Then the follow relationships hold regarding  
$\hat{\Theta}_n(E)$ defined in~\eqref{eq::est-mle-0} and its inverse 
$\hat{\Sigma}_n$, and $\hat{S}_n$: for $E$ 
as defined in~\eqref{eq::non-drop},
\bens
\hat{\Theta}_{n,ij} & = & 0, \; \forall (i, j) \in \drop \; \text{ and } \\
\hat{\Sigma}_{n,ij} & = & \hat{S}_{n, ij}, \; \forall (i, j) \in E \cup \{(i,i), i = 1, \ldots, p\}.
\eens
Hence the entries in the covariance matrix $\hat{\Sigma}_n$ 
for the chosen set of edges in $E$ and the diagonal entries are set to 
their corresponding values in $\hat{S}_n$.
Indeed, we can derive the  above relationships via the
Lagrange form, where we add Lagrange constants $\gamma_{jk}$
for edges in $\drop$,
\ben
\label{eq::lang}
\ell_C(\Theta) = \log |\Theta| - {\rm tr}(\hat{S}_n \Theta) - \sum_{(j,k) \in \drop}
\gamma_{jk} \theta_{jk}.
\een
Now the gradient equation of~\eqref{eq::lang} is:
$$\Theta^{-1} - \hat{S}_n - \Gamma = 0,$$
where $\Gamma$ is a matrix of Lagrange parameters
such that $\gamma_{jk} \not= 0$ for all $(j, k) \in \drop$ and 
$\gamma_{jk} = 0$ otherwise.

Similarly, the follow relationships hold regarding  
$\hat{\Theta}_n(E)$ defined in~\eqref{eq::est-mle} in case $\Sigma_{0,ii} =1$
for all $i$,
where $\hat{S}_n$ is replaced with $\hat{\Gamma}_n$, and
its inverse $\hat{\Sigma}_n$, and $\hat{\Gamma}_n$: for $E$ 
as defined in~\eqref{eq::non-drop},
\bens
\hat{\Theta}_{n,ij} & = & 0, \; \forall (i, j) \in \drop \; \text{ and } \\
\hat{\Sigma}_{n,ij} & = & \hat{\Gamma}_{n, ij} = \hat{S}_{n, ij}/\hat{\sigma}_i \hat{\sigma}_j, 
\; \forall (i, j) \in E \; \text{ and }  \\
 \hat\Sigma_{n,ii} & = & 1, \; \forall i = 1, \ldots, p.
\eens
Finally,  we state Theorem~\ref{thm::frob-missing-omega},
which yields a general bound on estimating the inverse of the correlation matrix, 
when $\Sigma_{0,11}, \ldots, \Sigma_{0,pp}$ take
arbitrary unknown values in $\R^{+} = (0, \infty)$.
The corresponding estimator is based on estimating the inverse of 
the correlation matrix, which we denote by $\Omega_0$.
We use the following notations.
Let $\Psi_0 = (\rho_{0, ij})$ be the true correlation matrix and let $\Omega_0 = \Psi_0^{-1}$.
Let $W = \diag(\Sigma_0)^{1/2}$. Let us denote the diagonal entries of $W$ with
$\sigma_{1}, \ldots, \sigma_{p}$ where $\sigma_{i} := \Sigma_{0,ii}^{1/2}$ for all $i$.
Then the following holds:
\bens
\Sigma_0 & = & W \Psi_0 W \; \; \text{ and } 
\Theta_0 \; = \; W^{-1} \Omega_0 W^{-1} 
\eens
Given sample covariance matrix $\hat{S}_n$, we construct sample correlation matrix
$\hat{\Gamma}_n$ as follows. Let  $\hat{W} = \diag(\hat{S}_n)^{1/2}$ and
\ben
\label{eq::Gamma-appen}
\hat{\Gamma}_n =
 \hat{W}^{-1} (\hat{S}_n) \hat{W}^{-1},\; \text{ where } \; 
\hat{\Gamma}_{n,ij} = \frac{\hat{S}_{n,ij}}{\hat{\sigma}_i \hat{\sigma}_j} = 
\frac{\ip{X_i, X_j}}{\twonorm{X_i}\twonorm{X_j}}
\een
where $\hat{\sigma}_i^2 := \hat{S}_{n, ii}$.
Thus $\hat{\Gamma}_n$ is a matrix with diagonal entries being all $1$s
and non-diagonal entries being the sample correlation coefficients, which we 
denote by $\hat{\rho}_{ij}$.

The maximum likelihood estimate for $\Omega_0 = \Psi_0^{-1}$ minimizes over all $\Omega \succ 0$,
\ben
\label{eq::sample-risk-omega}
\hat{R}_n(\Omega) = {\rm tr}(\Omega \hat{\Gamma}_n) - \log|\Omega|
\een
To facilitate technical discussions, we need to introduce some more notation.
Let  $\PDcone$ denote the set of $p \times p$ symmetric positive definite 
matrices: 
$$\PDcone = \{\Theta \in \R^{p \times p} | \Theta \succ 0\}.$$ 
Let us define a subspace $\SubE$ corresponding to an edge set 
$E \subset \{(i,j): i, j = 1, \ldots, p, i \not=j \}$:
\ben
\label{eq::magic-set}
\SubE :=  \{\Theta \in \R^{p \times p} \;  | \; \theta_{ij} = 0 \; \forall \;  i \not= j
\; \text{ s.t.} \; (i, j) \not\in E \} \; 
\text{ and denote } \; \Set_n & = &  \PDcone \cap \SubE.
\een
Minimizing $\hat{R}_n(\Theta)$ without constraints gives
$\hat\Psi_n = \hat{\Gamma}_n$.  Subject to the constraints that 
$\Omega \in \Set_n$ as defined in~\eqref{eq::magic-set}, we write
the maximum likelihood estimate for $\Omega_0$:
\beq
\label{eq::refit-omega}
\hat{\Omega}_n(E)  := 
\arg \min_{\Omega\in \Set_n}  \hat{R}_n(\Omega) = 
\arg \min_{\Omega \in \PDcone \cap \SubE} 
\big\{ {\rm tr}(\Omega \hat{\Gamma}_n) - \log |\Omega| \big\}
\eeq
which yields the following relationships regarding  
$\hat{\Omega}_n(E)$, its inverse $\hat{\Psi}_n = (\hat{\Omega}_n(E))^{-1}$, 
and $\hat{\Gamma}_n$.  For $E$ as defined in~\eqref{eq::non-drop},
\bens
\hat{\Omega}_{n,ij} & = & 0, \; \forall (i, j) \in \drop \\
\hat{\Psi}_{n,ij} & = & \hat{\Gamma}_{n, ij} := \hat{\rho}_{ij} \;\;\; \; \forall (i, j) \in E \; \\
 \text{ and } \;\;
 \hat\Psi_{n,ii} & = & 1 \; \; \;  \forall i = 1, \ldots, p.
\eens
Given $\hat{\Omega}_n(E)$ and its inverse $\hat{\Psi}_n = (\hat{\Omega}_n(E))^{-1}$, we obtain
\bens
\hat\Sigma_n =  \hat{W} \hat\Psi_n \hat{W} \; \; \text{ and } \;\;
\hat\Theta_n = \hat{W}^{-1} \hat\Omega_n \hat{W}^{-1}
\eens
and therefore the following holds:
for $E$ as defined in~\eqref{eq::non-drop},
\bens
\hat{\Theta}_{n,ij} & = & 0, \; \forall (i, j) \in \drop \\
\hat{\Sigma}_{n,ij} & = & \hat{\sigma}_i \hat{\sigma}_j \hat{\Psi}_{n,ij}  = 
\hat{\sigma}_i \hat{\sigma}_j \hat{\Gamma}_{n,ij}  =  \hat{S}_{n, ij} \;\;\; \; \forall (i, j) \in E \; \\
 \text{ and } \;\;
 \hat\Psi_{n,ii} & = & \hat{\sigma}_i^2 = \hat{S}_{n,ii} \; \; \;  \forall i = 1, \ldots, p.
\eens
The proof of Theorem~\ref{thm::frob-missing-omega} appears 
in Section \ref{sec:append-MLE-refit-general}.
\begin{theorem}
\label{thm::frob-missing-omega}
Consider data generating random variables as in expression (\ref{data}) 
and assume that $(A1)$ and $(A2)$ hold.
Let $\sigma_{\max}^2 := \max_{i} \Sigma_{0, ii} < \infty$
and $\sigma_{\min}^2 := \min_{i} \Sigma_{0, ii} >0$.
Let $\event$ be some event such that $\prob{\event} \geq 1- d/p^2$ for 
a small constant $d$. Let $S_{0,n}$ be as defined in~\eqref{eq::cond-regr2}.
Suppose on event $\event$:
\begin{enumerate}
\item
We obtain an edge set $E$ 
such that its size $|E| = \lin(S_{0,n})$ is a linear function in $S_{0,n}$.
\item
And for $\tilde{\Theta}_0$ as in \eqref{eq::new-posi} and
for some constant $C_{\bias}$ to be specified in~\eqref{eq::bias-local}, we have
\beq
\label{eq::bias-thm-omega}
\fnorm{\Theta_{0,\drop}}  :=  \fnorm{\tilde{\Theta}_0  -\Theta_0}
\leq C_{\bias} \sqrt{2 S_{0,n}  \log (p) /n}.
\eeq
\end{enumerate}
Let $\hat\Omega_n(E)$  be as defined in~\eqref{eq::refit-omega}
Suppose the sample size satisfies for $C_3 \geq 4 \sqrt{5/3}$,
\ben
\label{eq::sample-frob-omega}
n > \frac{144 \sigma_{\max}^4} {M_{\mathrm{low}}^2}
\left(4C_3 +\frac{13 M_{\mathrm{upp}} }{12 \sigma_{\min}^2} \right)^2
\max\left\{2 |E|\log \max(n,p), \;
 C^2_{\bias} 2 S_{0,n}  \log p \right\}.
\een
Then with probability $\geq 1- (d+1)/p^2$, we have for
$M = (9 \sigma_{\max}^4/(2 \underline{k}^2)) \cdot \left(4C_3 +
13 M_{\mathrm{upp}}/(12 \sigma_{\min}^2) \right)$
\begin{equation}
\label{eq::frob-con-corr}
\fnorm{\hat \Omega_n(E) -\Omega_0} \leq 
(M + 1) \max\left\{\sqrt{{2|E|\log\max(n,p)}/{n}}, \; 
C_{\bias} \sqrt{{2 S_{0,n}  \log (p)}/{n}} \right\}.
\end{equation}
\end{theorem}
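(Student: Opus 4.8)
The plan is to isolate a \textbf{variance} part $\hat\Delta := \hat\Omega_n(E) - \tilde\Omega_0$ and control it by a convexity (basic inequality) argument, then add the \textbf{bias} $\tilde\Omega_0 - \Omega_0$, where $\tilde\Omega_0 := \diag(\Omega_0) + \Omega_{0,E_0\cap E} = W\tilde\Theta_0 W$ is the inverse-correlation analogue of $\tilde\Theta_0$ in~\eqref{eq::new-posi}. First I would check $\tilde\Omega_0 \in \Set_n$: it lies in $\SubE$ by construction and is positive definite once~\eqref{eq::sample-frob-omega} forces $\fnorm{\tilde\Omega_0 - \Omega_0} < \varphi_{\min}(\Omega_0)$. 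Since $\hat\Omega_n(E)$ minimizes $\hat R_n$ over $\Set_n$, the basic inequality gives $G(\hat\Delta) \le 0$ for the convex function $G(\Delta) := \hat R_n(\tilde\Omega_0 + \Delta) - \hat R_n(\tilde\Omega_0)$ on symmetric $\Delta \in \SubE$, with $G(0)=0$. Setting $r := \max\{\sqrt{2|E|\log\max(n,p)/n},\, C_{\bias}\sqrt{2 S_{0,n}\log(p)/n}\}$, it then suffices to show $G(\Delta) > 0$ on the sphere $\fnorm{\Delta} = Mr$: if so, then $\fnorm{\hat\Delta} < Mr$, since otherwise rescaling $\hat\Delta$ onto the sphere and invoking convexity would contradict positivity there. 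Because $\hat\Delta$ is supported on $E$ and the diagonal while $\tilde\Omega_0 - \Omega_0$ is supported on $E_0\setminus E \subseteq E^c$, we have $\hat\Omega_n(E) - \Omega_0 = \hat\Delta + (\tilde\Omega_0 - \Omega_0)$ with the two pieces Frobenius-orthogonal; the triangle inequality together with the transferred bias bound (via $\tilde\Theta_0 - \Theta_0 = W^{-1}(\tilde\Omega_0 - \Omega_0)W^{-1}$ and~\eqref{eq::bias-thm-omega}), which gives $\fnorm{\tilde\Omega_0 - \Omega_0}\le r$, then yields $\fnorm{\hat\Omega_n(E) - \Omega_0} \le (M+1)r$, i.e.~\eqref{eq::frob-con-corr}.

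For the sphere estimate I would Taylor-expand the log-determinant around $\tilde\Omega_0$. Writing $\tilde\Psi_0 := \tilde\Omega_0^{-1}$, one has $G(\Delta) = \tr\big(\Delta(\hat\Gamma_n - \tilde\Psi_0)\big) + \tilde R(\Delta)$, where $\tilde R(\Delta) = \mvec(\Delta)^\top \big[\int_0^1 (1-v)(\tilde\Omega_0 + v\Delta)^{-1}\otimes(\tilde\Omega_0 + v\Delta)^{-1}\,dv\big]\,\mvec(\Delta) \ge 0$ is the exact second-order remainder. I then split the linear term through the true correlation matrix $\Psi_0 = \Omega_0^{-1}$ as $\hat\Gamma_n - \tilde\Psi_0 = (\hat\Gamma_n - \Psi_0) + (\Psi_0 - \tilde\Psi_0)$, separating statistical noise from the bias-induced drift of the gradient.

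Next I would bound the three pieces on the sphere. For the quadratic remainder, since $\varphi_{\max}(\tilde\Omega_0 + v\Delta) \le \varphi_{\max}(\tilde\Omega_0) + \fnorm{\Delta}$ stays bounded when $\fnorm{\Delta} = Mr$ is small (guaranteed by~\eqref{eq::sample-frob-omega}), the Kronecker factor has smallest eigenvalue bounded below and $\tilde R(\Delta) \ge \underline{k}\,\fnorm{\Delta}^2$, with $\underline k$ the spectral constant appearing in $M$ (traced through $\Psi_0 = W^{-1}\Sigma_0 W^{-1}$ to $M_{\mathrm{low}}, M_{\mathrm{upp}}, \sigma_{\min}, \sigma_{\max}$). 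For the noise term, $\Delta \in \SubE$ and both $\hat\Gamma_n, \Psi_0$ have unit diagonal, so the diagonal of $\hat\Gamma_n - \Psi_0$ vanishes and Cauchy--Schwarz gives $|\tr(\Delta(\hat\Gamma_n - \Psi_0))| \le \fnorm{\Delta}\,\fnorm{(\hat\Gamma_n - \Psi_0)_E}$; an entrywise concentration bound $\max_{(i,j)\in E}|\hat\rho_{ij} - \rho_{0,ij}| = O(\sqrt{\log\max(n,p)/n})$ (holding off an extra event of probability $\le 1/p^2$, the constant $C_3$ and the $\sigma_{\max}^4/M_{\mathrm{low}}^2$, $\sigma_{\min}^2$ factors arising from the covariance-to-correlation conversion) then gives $\fnorm{(\hat\Gamma_n - \Psi_0)_E} \lesssim \sqrt{2|E|\log\max(n,p)/n}$. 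For the bias-drift term I would use the resolvent identity $\Psi_0 - \tilde\Psi_0 = \Psi_0(\tilde\Omega_0 - \Omega_0)\tilde\Psi_0$, so that bounded eigenvalues and hypothesis~\eqref{eq::bias-thm-omega} yield $|\tr(\Delta(\Psi_0 - \tilde\Psi_0))| \le \fnorm{\Delta}\,\fnorm{\Psi_0 - \tilde\Psi_0} \lesssim \fnorm{\Delta}\,C_{\bias}\sqrt{2 S_{0,n}\log(p)/n}$. Adding the three, on $\fnorm{\Delta}=Mr$ one obtains $G(\Delta) \ge \fnorm{\Delta}\big(\underline k \fnorm{\Delta} - C r\big)$ for a combined constant $C$; the stated choice of $M$ makes $\underline k M r - C r > 0$, so $G > 0$ on the sphere.

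The hardest part will be the uniform lower bound on $\tilde R(\Delta)$ coupled to the constant bookkeeping that propagates $M_{\mathrm{low}}, M_{\mathrm{upp}}, \sigma_{\min}, \sigma_{\max}, \underline k$ and the concentration constant $C_3$ into the precise threshold~\eqref{eq::sample-frob-omega} and into $M$: one must verify that~\eqref{eq::sample-frob-omega} forces $Mr$ to be small enough that the eigenvalues of $\tilde\Omega_0 + v\Delta$, and hence the curvature constant $\underline k$, do not degrade, which is exactly what closes the self-referential choice of $M$. The entrywise concentration of the sample correlations is the other essential input, but it is of standard sub-exponential type and can be invoked from the earlier entrywise covariance bounds.
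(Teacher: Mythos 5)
Your proposal follows essentially the same route as the paper's proof: translate to $\tilde\Omega_0=W\tilde\Theta_0W$, use the basic inequality plus convexity to reduce to showing $\tilde G>0$ on the Frobenius sphere of radius $Mr_n$, Taylor-expand the log-determinant with integral remainder, and split the linear term into the noise piece $\hat\Gamma_n-\Psi_0$ (controlled by entrywise correlation concentration on $\X_0$) and the bias-drift piece $\tilde\Psi_0-\Psi_0$ (controlled through $\fnorm{\tilde\Sigma_0-\Sigma_0}$ and the $\sigma_{\min}^{-2}$ rescaling), exactly as in Section~\ref{sec:append-MLE-refit-general}. The only novelty is your observation that $\hat\Delta$ and $\tilde\Omega_0-\Omega_0$ have disjoint supports and are hence Frobenius-orthogonal, which would sharpen the final triangle inequality to $\sqrt{M^2+\sigma_{\max}^4}\,r_n$ but is otherwise cosmetic.
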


\begin{remark}
\label{remark:general-opnorm}
We note that the constants in Theorem~\ref{thm::frob-missing-omega} are 
by no means the best possible. 
From~\eqref{eq::frob-con-corr}, we can derive bounds on
$\shtwonorm{\hat \Theta_n(E) -\Theta_0}$ and 
 $\shtwonorm{\hat \Sigma_n(E) -\Sigma_0}$ to be in the same order as
 in~\eqref{eq::frob-con-corr} following the analysis in~\citet[Theorem
 2]{RBLZ08}.
The corresponding bounds on the Frobenius norms on covariance estimation 
would be in the order of 
$O_P\left(\sqrt{\frac{p+S_0}{n}}\right)$ as stated in 
Remark~\ref{remark:general-Fnorms}.
\end{remark}

\section{Numerical results}\label{sec.numerical}

We consider the empirical performance for simulated and
real data. We compare our estimation method with the GLasso, the Space
method and a simplified Gelato estimator without thresholding for inferring
the conditional independence graph. The comparison with the latter should
yield some evidence about the role of thresholding in Gelato.  
The GLasso is defined as:
\begin{equation*}
\label{logliki}
\hat{\Theta}_{\mathrm{GLasso}} = \underset{\Theta\ \succ 0}{\operatorname{argmin}}
(\mbox{tr}(\hat{\Gamma}_n \Theta) - \log |\Theta| + \rho  \sum_{i<j}|\theta_{ij}|)
\end{equation*}
where $\hat{\Gamma}_n$ is the empirical
correlation matrix and the minimization is over positive definite 
matrices. Sparse partial correlation estimation (Space) is an approach for
selecting non-zero partial correlations in the high-dimensional
framework. The method assumes an overall sparsity of the partial
correlation matrix and employs sparse regression techniques for model
fitting. For details see
\cite{PWZZ09}. We use Space with weights all equal to one, which refers to
the method type \texttt{space} in \cite{PWZZ09}. For the Space method,
estimation of $\Theta_0$ is done via maximum likelihood as in
(\ref{eq::est-mle}) based on the edge set $\hat{E}^{(Space)}_n$ from the
estimated sparse partial correlation matrix. For computation of the three
different methods, we used the R-packages \texttt{glmnet}~\citep{FHTglmnet10},
\texttt{glasso}~\citep{FHT07} and \texttt{space}~\citep{PWZZ09}.  

\subsection{Simulation study} \label{simu}

In our simulation study, we look at three different models.

\begin{itemize}

\item An AR(1)-Block model. 
In this model the covariance matrix is block-diagonal with equal-sized
AR(1)-blocks of the form $\Sigma_{Block}=\{0.9^{|i-j|}\}_{i,j}$. 
\item The random concentration matrix model considered
  in \cite{RBLZ08}. In this model, the concentration matrix is
  $\Theta=B+\delta I$ where each off-diagonal entry in $B$ is
  generated independently and equal to 0 or 0.5 with probability $1- \pi$
  or $\pi$, respectively. 
  All diagonal
  entries of $B$ are zero, and $\delta$ is chosen such that the condition
  number of $\Theta$ is $p$. 
  
\item The exponential decay model considered in \cite{FFW09}. 
In this model we consider a case where no element of the concentration
matrix is exactly zero. The elements of $\Theta_0$ are
given by $\theta_{0,ij}=\exp(-2|i-j|)$ equals essentially zero when
the difference $|i-j|$ is large.
\end{itemize}

We compare the three estimators for each model with $p=300$ and
$n=40,80,320$. For each model we sample data $X^{(1)},\dots ,X^{(n)}$ i.i.d. $\sim
\mathcal{N}(0,\Sigma_0)$. We use two different performance
measures. The Frobenius norm of the estimation error
$\|\hat{\Sigma}_n-\Sigma_0\|_F$ and $\|\hat{\Theta}_n-\Theta_0\|_F$, and
the Kullback Leibler divergence between 
${\cal N}(0,\Sigma_0)$ and ${\cal N}(0,\hat{\Sigma}_n)$ as defined in
\eqref{eq::kld}. 

For the three estimation methods we have
various tuning parameters, namely $\lambda$, $\tau$ (for Gelato), $\rho$
(for GLasso) and $\eta$ (for Space). We
denote the regularization parameter of the Space technique by $\eta$ in contrary
to \cite{PWZZ09}, in order to distinguish it from the other parameters. Due
to the computational complexity we specify the two parameters of our Gelato 
method sequentially. 
That is, we derive the optimal value of the penalty parameter $\lambda$ by
10-fold cross-validation with respect to the test set squared error for all the
nodewise regressions. 
After fixing $\lambda=\lambda_{CV}$ we obtain the optimal threshold $\tau$ 
again by 10-fold cross-validation but with respect to
the negative Gaussian log-likelihood
($\mbox{tr}(\hat{\Theta}\hat{S}^{out})- \log |\hat{\Theta}|$, where
$\hat{S}^{out}$ is the empirical covariance of the hold-out data). We could
use individual tuning parameters for each of the regressions. However, this
turned out to be sub-optimal in some simulation scenarios (and never really
better than using a single tuning parameter $\lambda$, at least in the
scenarios we considered). For the penalty parameter $\rho$ of the GLasso
estimator and the parameter $\eta$ of the Space method we also use a
10-fold cross-validation with respect to the negative Gaussian log-likelihood.
The grids of candidate values are given as follows:
\begin{eqnarray*}
&&\lambda_k= A_k\sqrt{\frac{\log{p}}{n}} \quad k=1, \dots ,10 \quad
\mbox{with}\quad \tau_k= 0.75 \cdot B_k\sqrt{\frac{\log{p}}{n}} \\
&&\rho_k= C_k\sqrt{\frac{\log{p}}{n}} \quad k=1, \dots ,10\\
&& \eta_r=1.56\sqrt{n}\Phi^{-1}\left(1-\frac{D_r}{2p^2}\right) \quad r=1, \dots ,7
\end{eqnarray*}
where $A_k, B_k, C_k \in \{0.01,0.05,0.1,0.3,0.5,1,2,4,8,16\}$ and $D_r \in
\{0.01,0.05,0.075,0.1,0.2,$\\ $0.5,1\}$.  
The two different performance measures are evaluated for the estimators
based on the sample $X^{(1)},\dots ,X^{(n)}$ with optimal CV-estimated tuning
parameters $\lambda$, $\tau$, $\rho$ and $\eta$ for each model from above. All
results are based on 50 independent simulation runs.

\subsubsection{The AR(1)-block model}

We consider two different covariance matrices. The first one is a simple
auto-regressive process of order one with trivial block size equal to
$p=300$, denoted by $\Sigma^{(1)}_{0}$. This is also known as a Toeplitz
matrix. That is, we have 
$\Sigma_{0;i,j}^{(1)} = 0.9^{|i-j|} \; \forall \; i, j \in  \{1, . . . ,p\}$.
 The second matrix
$\Sigma_{0}^{(2)}$ is a block-diagonal matrix with AR(1) blocks of equal
block size $30 \times 30$, 
and hence the block-diagonal of $\Sigma_{0}^{(2)}$ equals
$\Sigma_{Block;i,j} = 0.9^{|i-j|}$, $i,j\in \{1,\dots ,30\}$.
The simulation results for the AR(1)-block models are shown in Figure
\ref{causal1} and \ref{causal2}.

\begin{figure}
        \centerline{ 
          \subfigure[
          $n=40$]{\includegraphics[trim=0cm 0cm 0cm 1cm ,clip=TRUE ,scale=0.3]{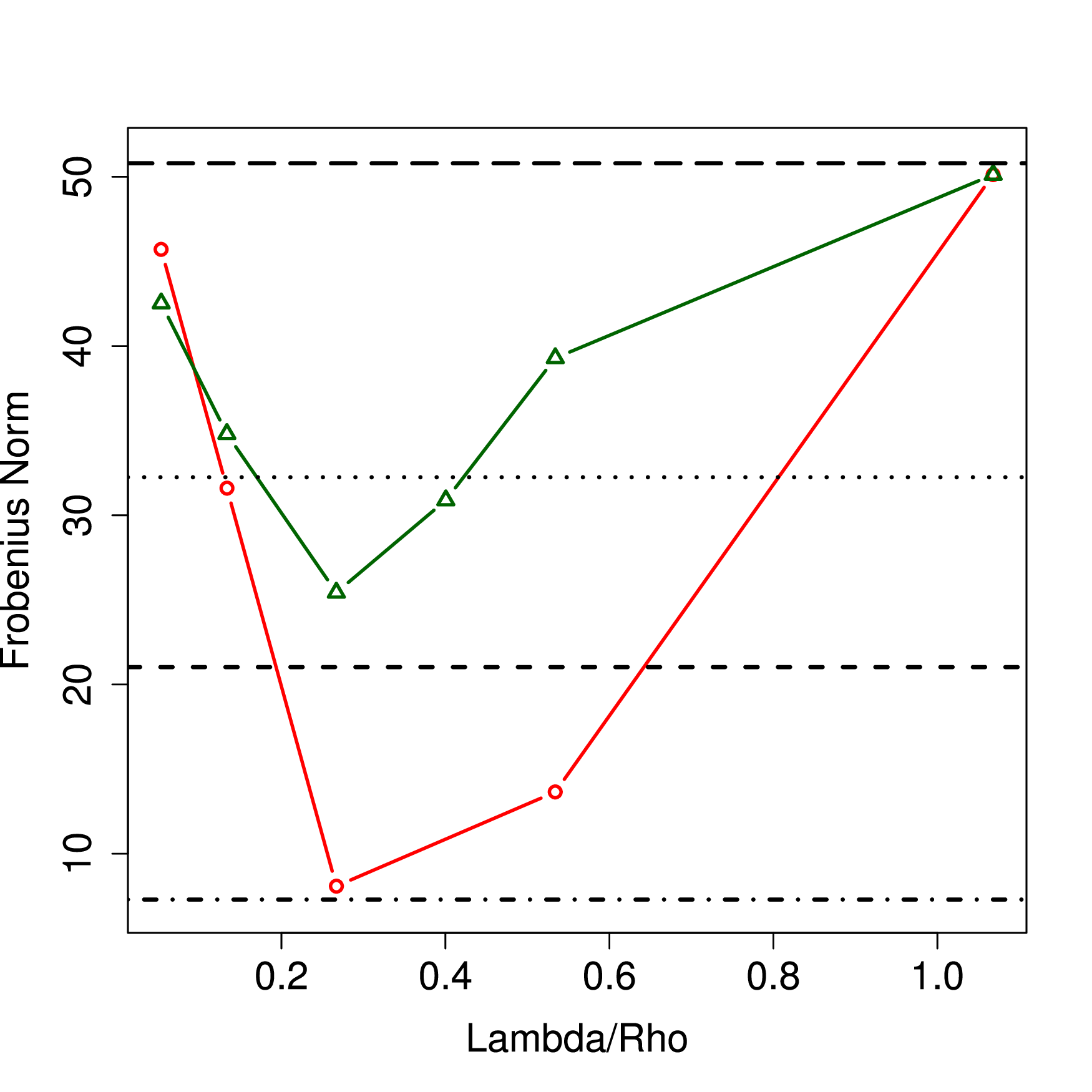}}
          \subfigure[$n=80$]{\includegraphics[trim=0cm 0cm 0cm 1cm ,clip=TRUE ,scale=0.3]{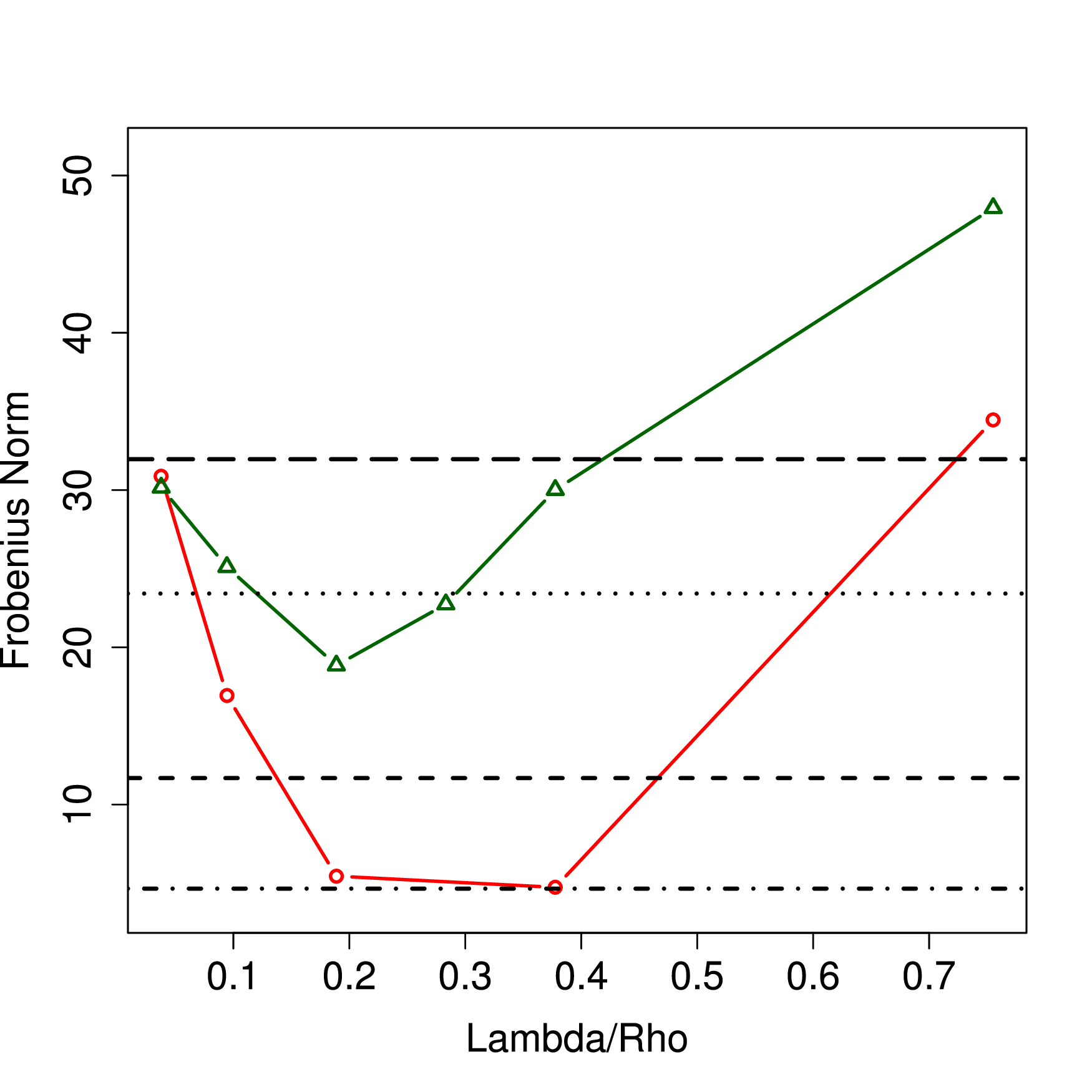}}
          \subfigure[$n=320$]{\includegraphics[trim=0cm 0cm 0cm 1cm ,clip=TRUE ,scale=0.3]{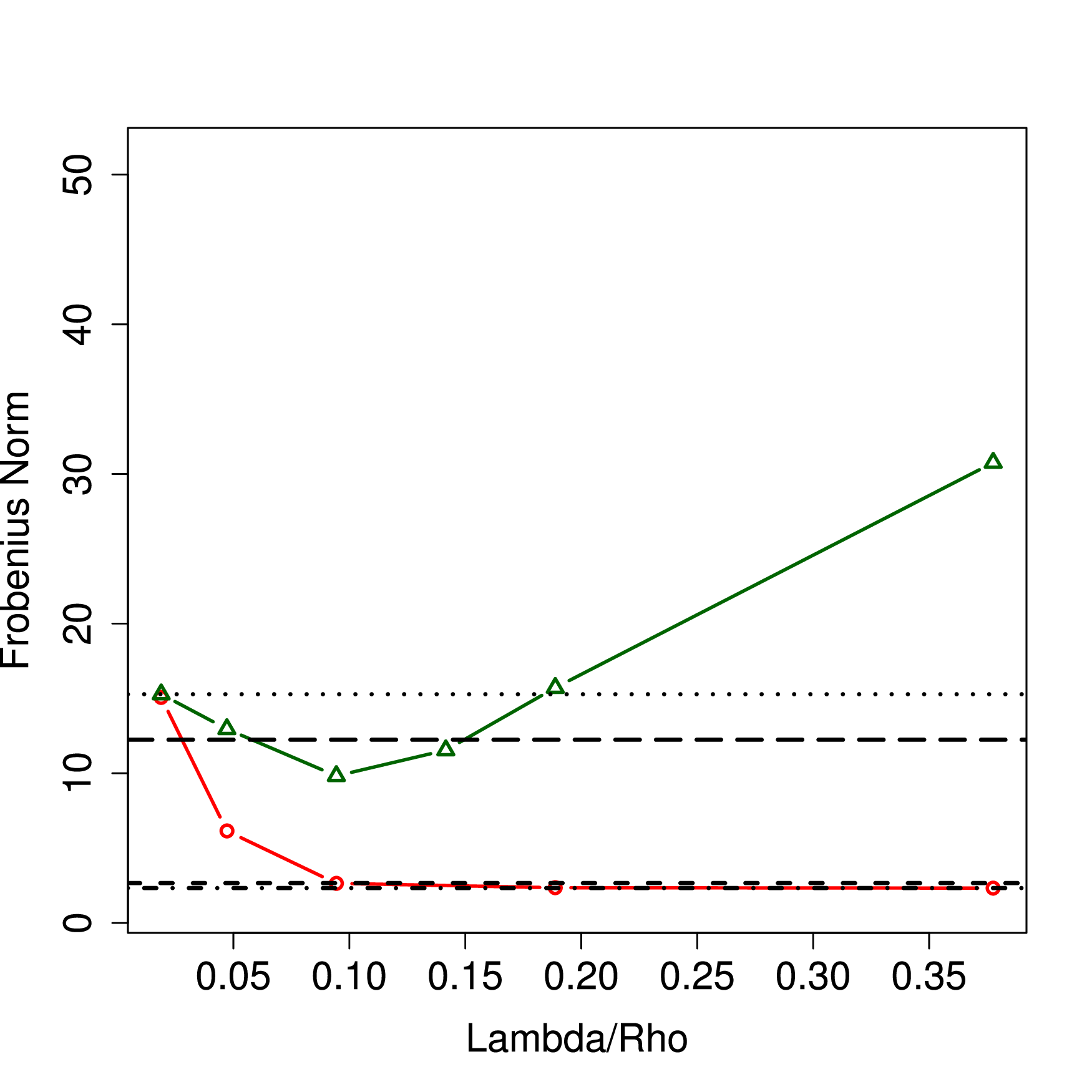}}
}

        \centerline{ 
          \subfigure[$n=40$]{\includegraphics[trim=0cm 0cm 0cm 1cm ,clip=TRUE ,scale=0.3]{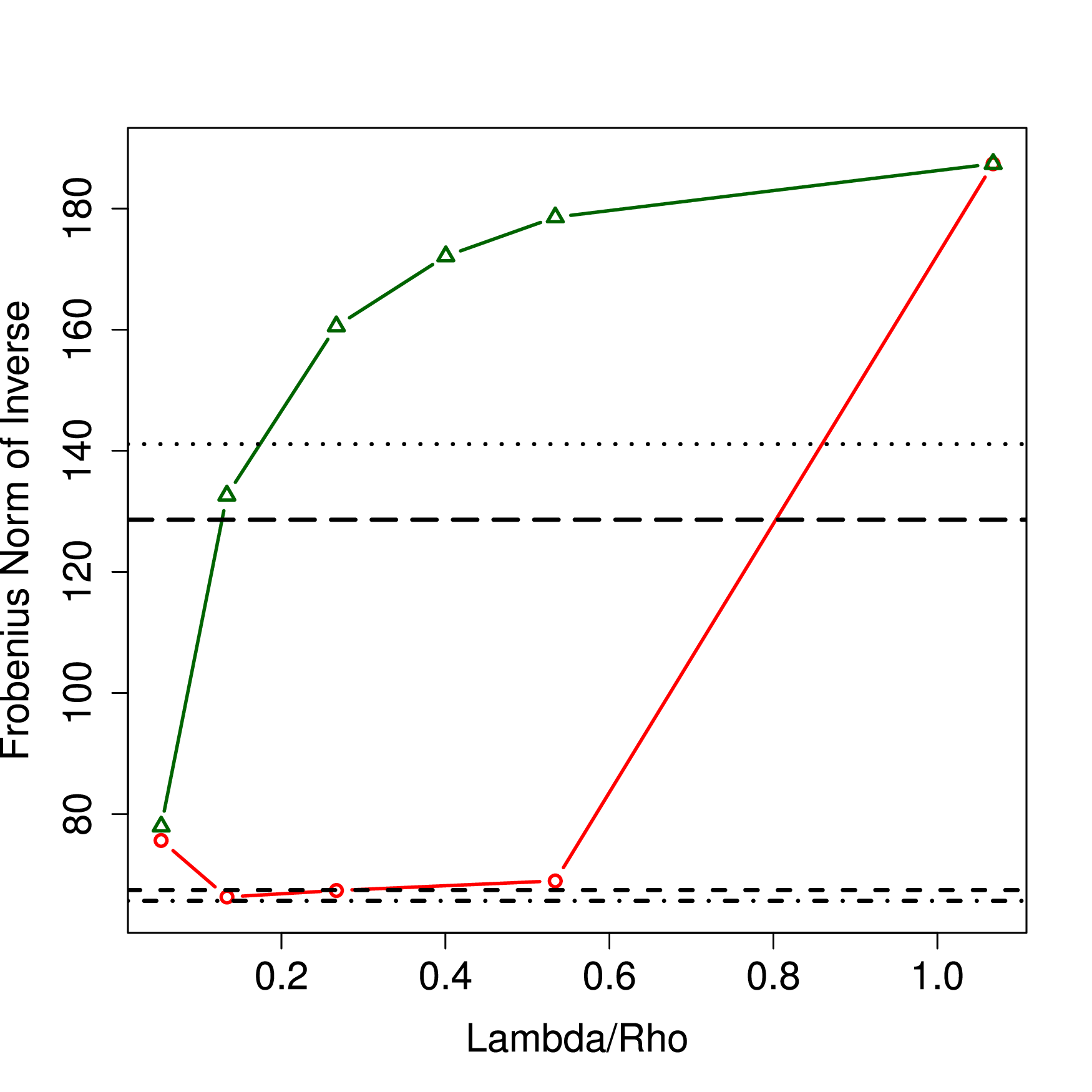}}
          \subfigure[$n=80$]{\includegraphics[trim=0cm 0cm 0cm 1cm ,clip=TRUE ,scale=0.3]{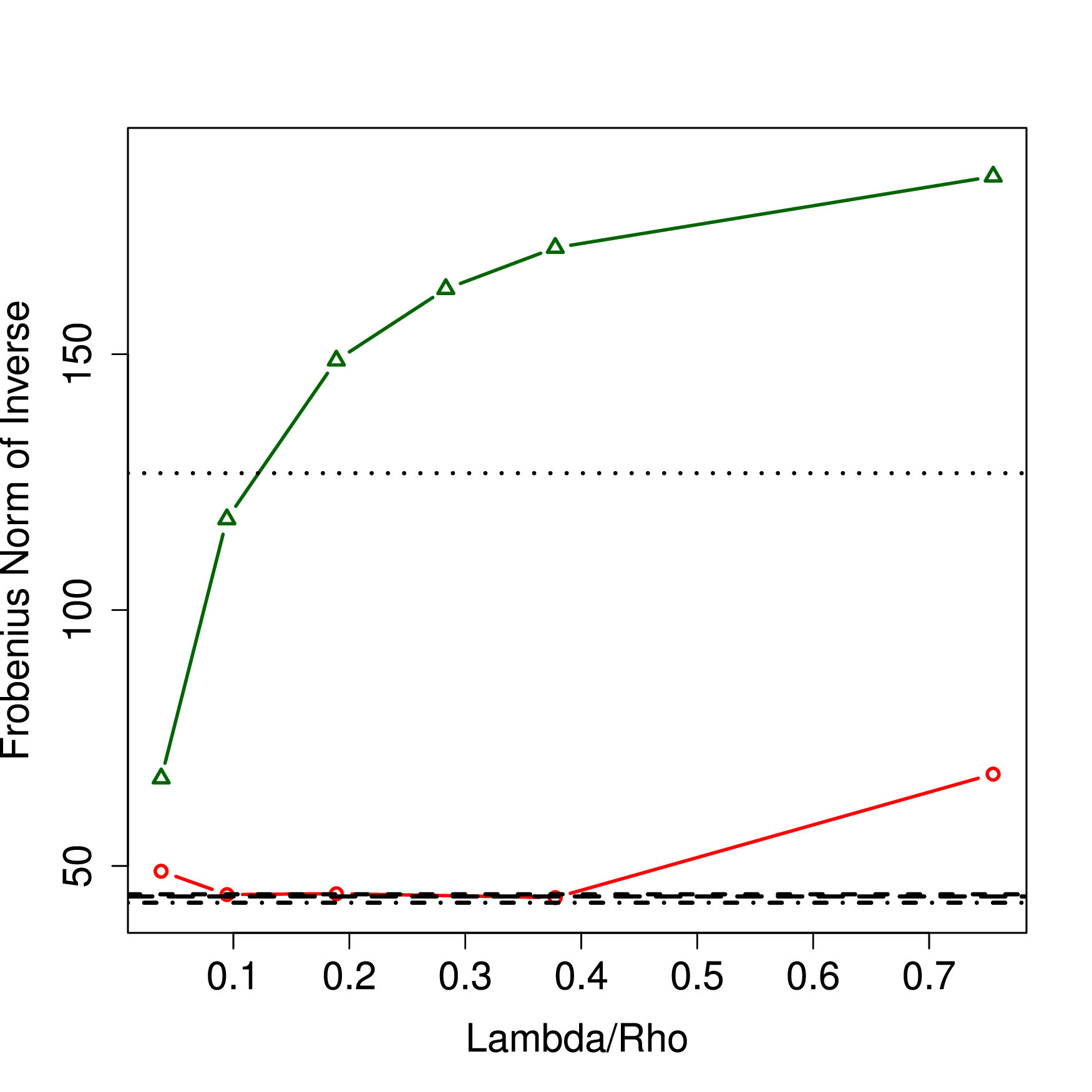}}
          \subfigure[$n=320$]{\includegraphics[trim=0cm 0cm 0cm 1cm ,clip=TRUE ,scale=0.3]{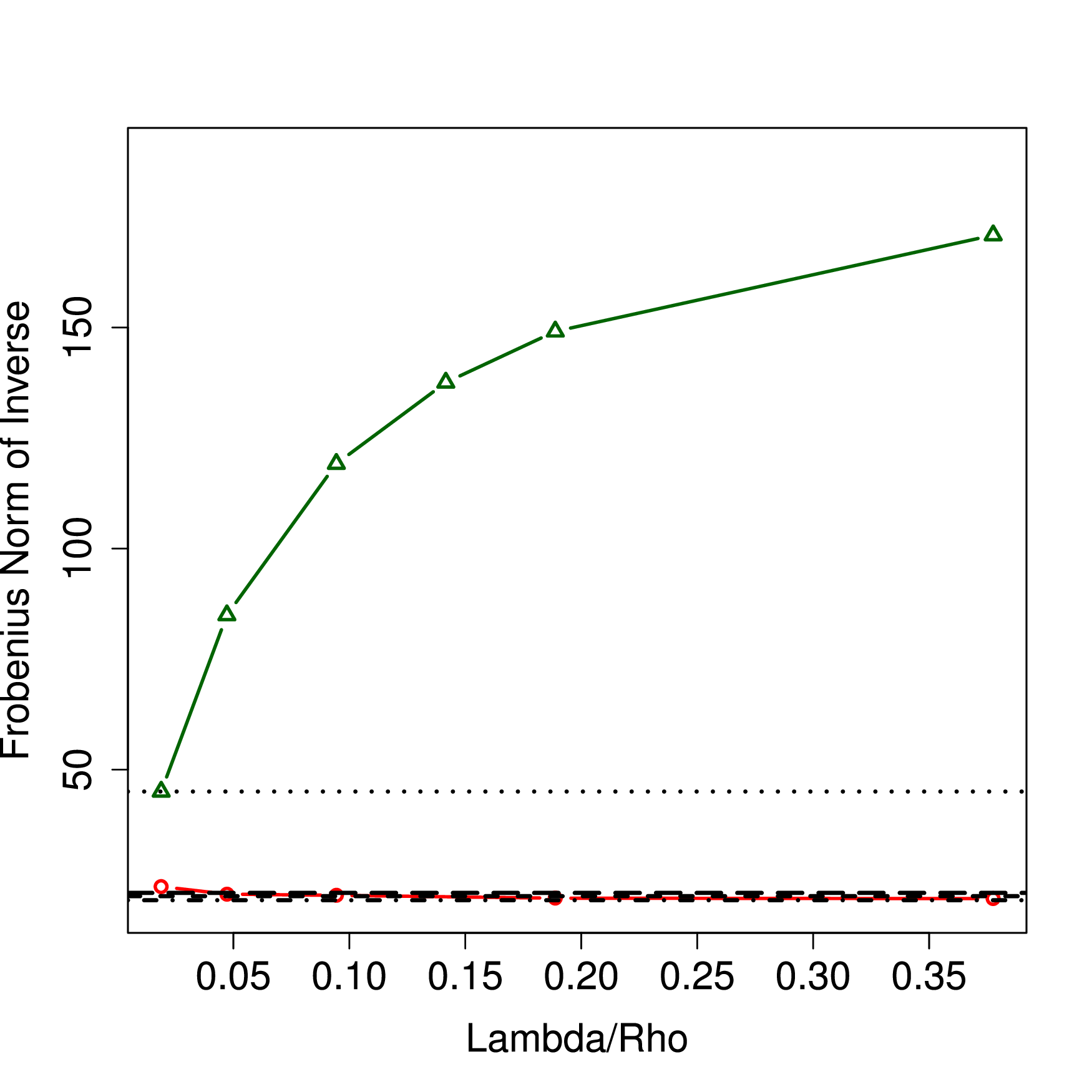}}
}

         \centerline{ 
          \subfigure[$n=40$]{\includegraphics[trim=0cm 0cm 0cm 1cm ,clip=TRUE ,scale=0.3]{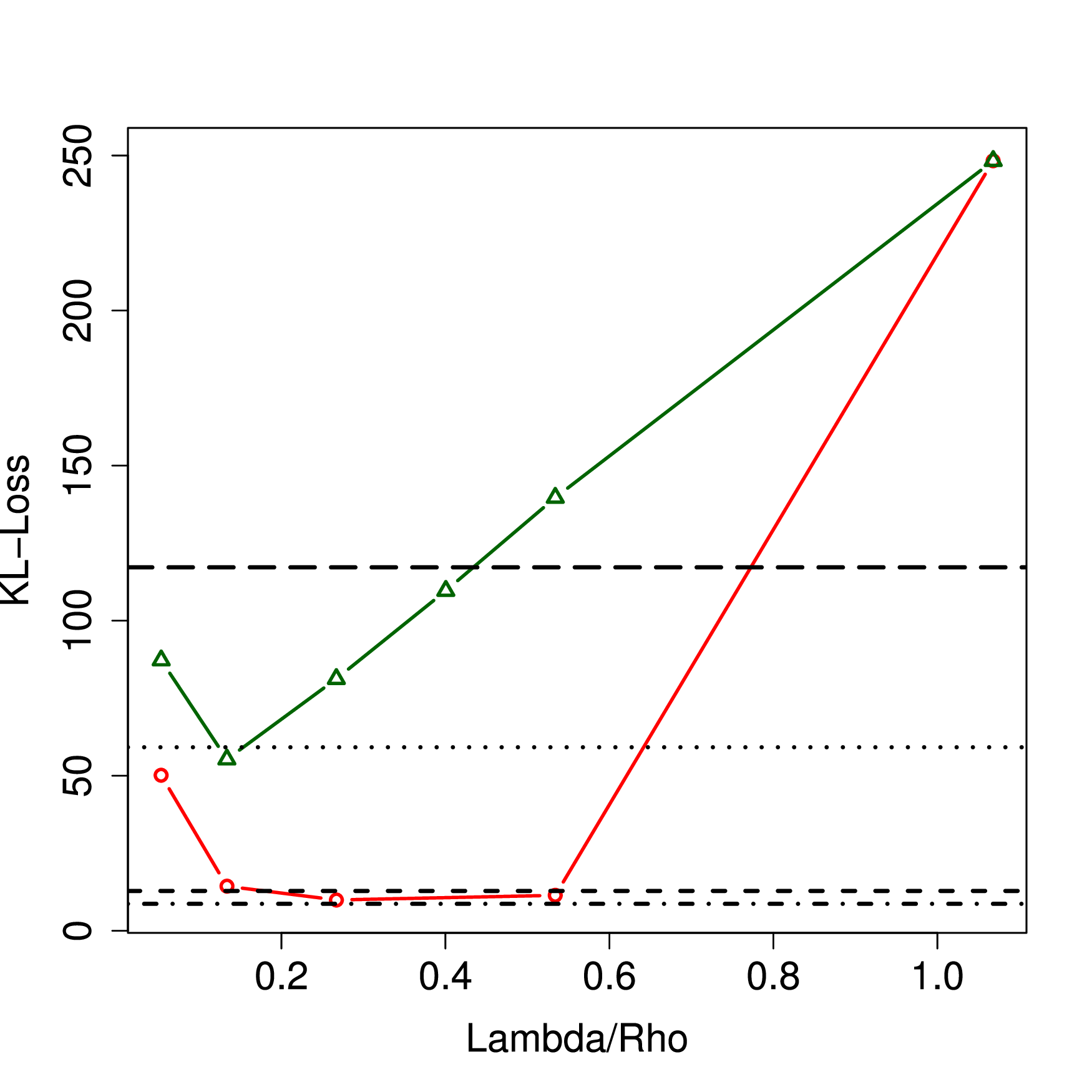}}
          \subfigure[$n=80$]{\includegraphics[trim=0cm 0cm 0cm 1cm ,clip=TRUE ,scale=0.3]{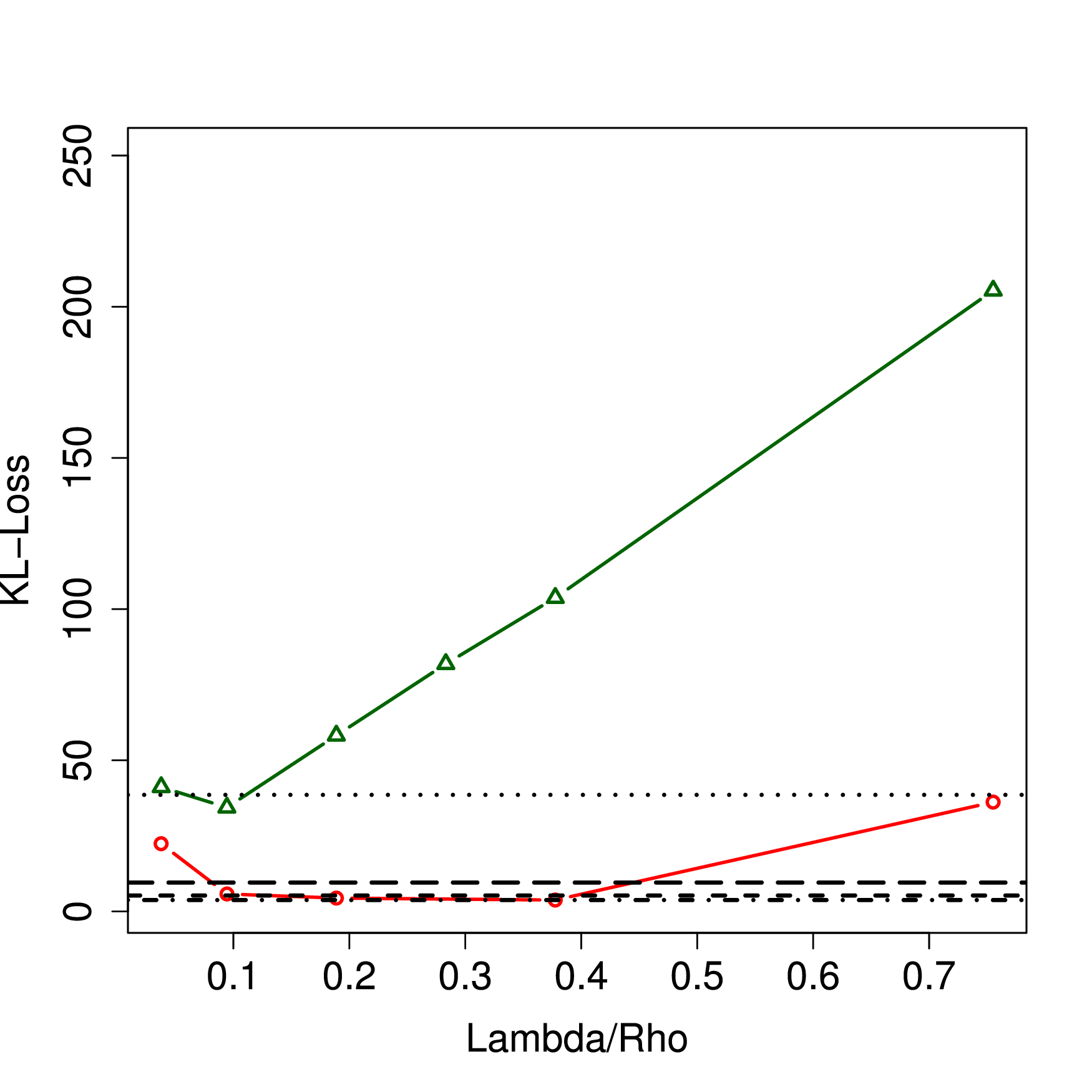}}
          \subfigure[$n=320$]{\includegraphics[trim=0cm 0cm 0cm 1cm ,clip=TRUE ,scale=0.3]{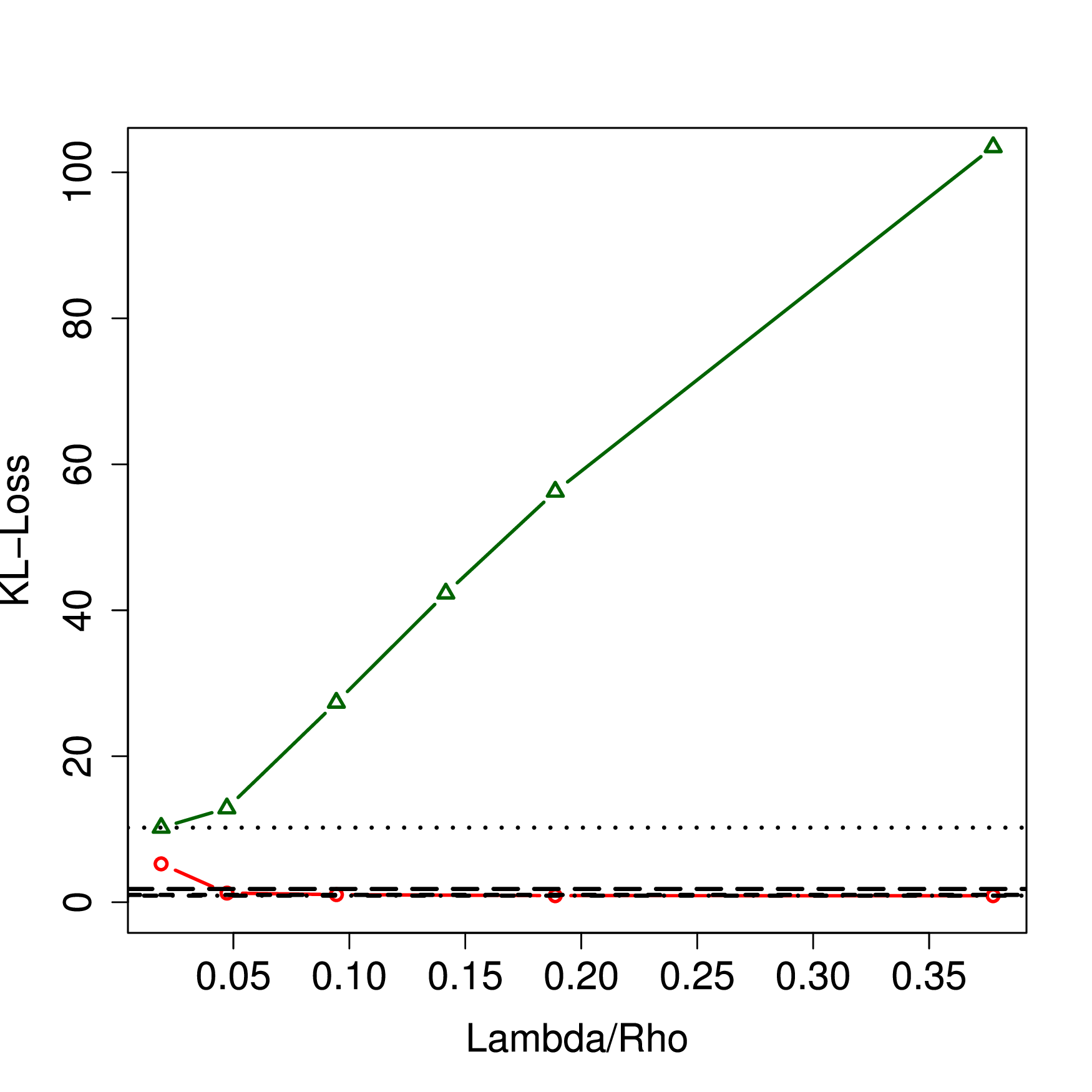}}
}
        \caption{Plots for model $\Sigma^{(1)}_{0}$. The triangles (green)
          stand for the 
          GLasso and the circles (red) for our Gelato method with a
          reasonable value of $\tau$. The horizontal lines
          show the performances of the three techniques for cross-validated
          tuning parameters $\lambda$, $\tau$, $\rho$ and $\eta$. The
          dashed line stands for our Gelato method, the dotted one for the
          GLasso and the dash-dotted line for the Space technique. The
          additional dashed line with the longer dashes stands for the
          Gelato without thresholding. Lambda/Rho stands for $\lambda$ or
          $\rho$, respectively.} 
        \label{causal1}
\end{figure}

\begin{figure}
        \centerline{ 
          \subfigure[$\Sigma^{(2)}_{AR}$ with $n=40$]{\includegraphics[trim=0cm 0cm 0cm 1cm ,clip=TRUE ,scale=0.3]{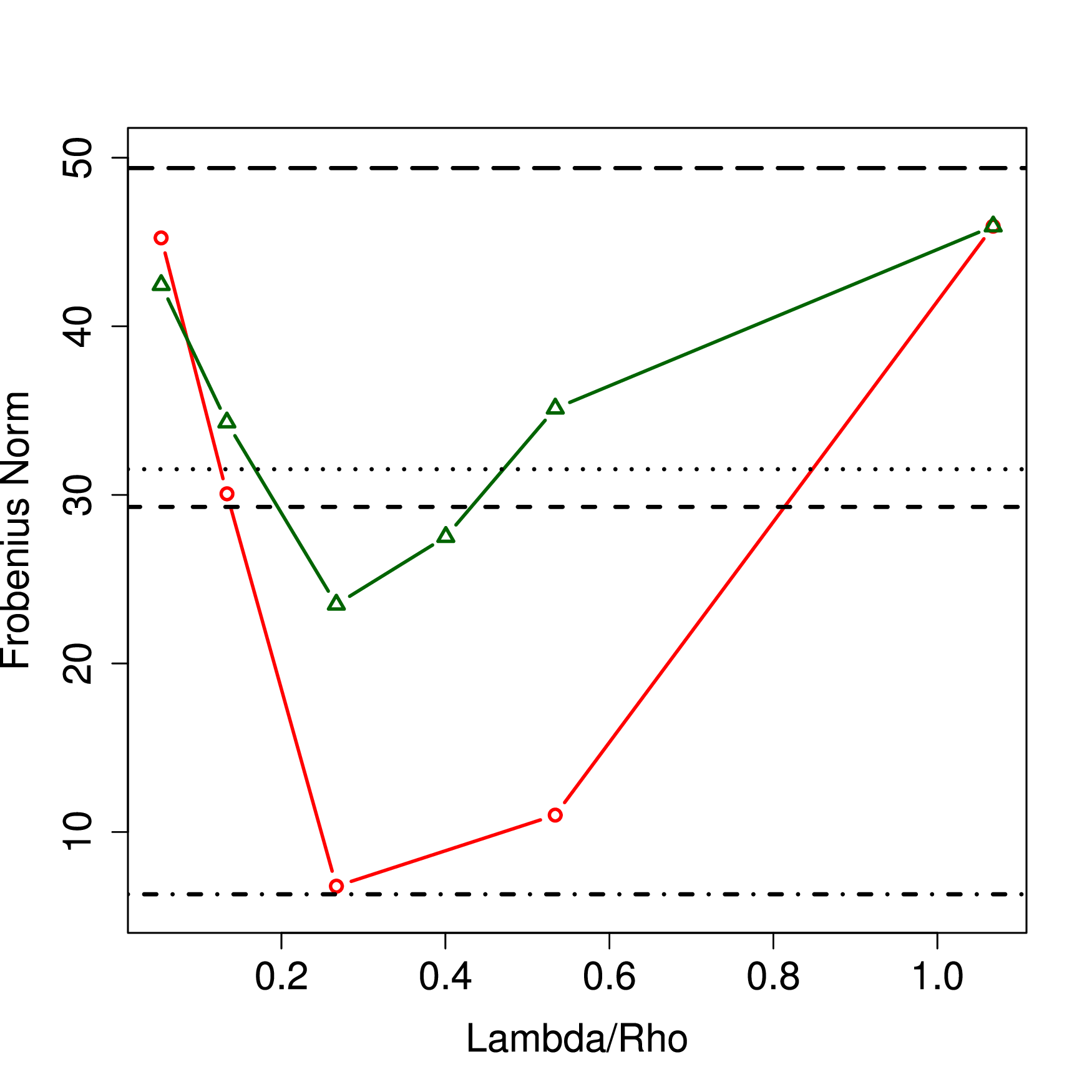}}
          \subfigure[$\Sigma^{(2)}_{AR}$ with $n=80$]{\includegraphics[trim=0cm 0cm 0cm 1cm ,clip=TRUE ,scale=0.3]{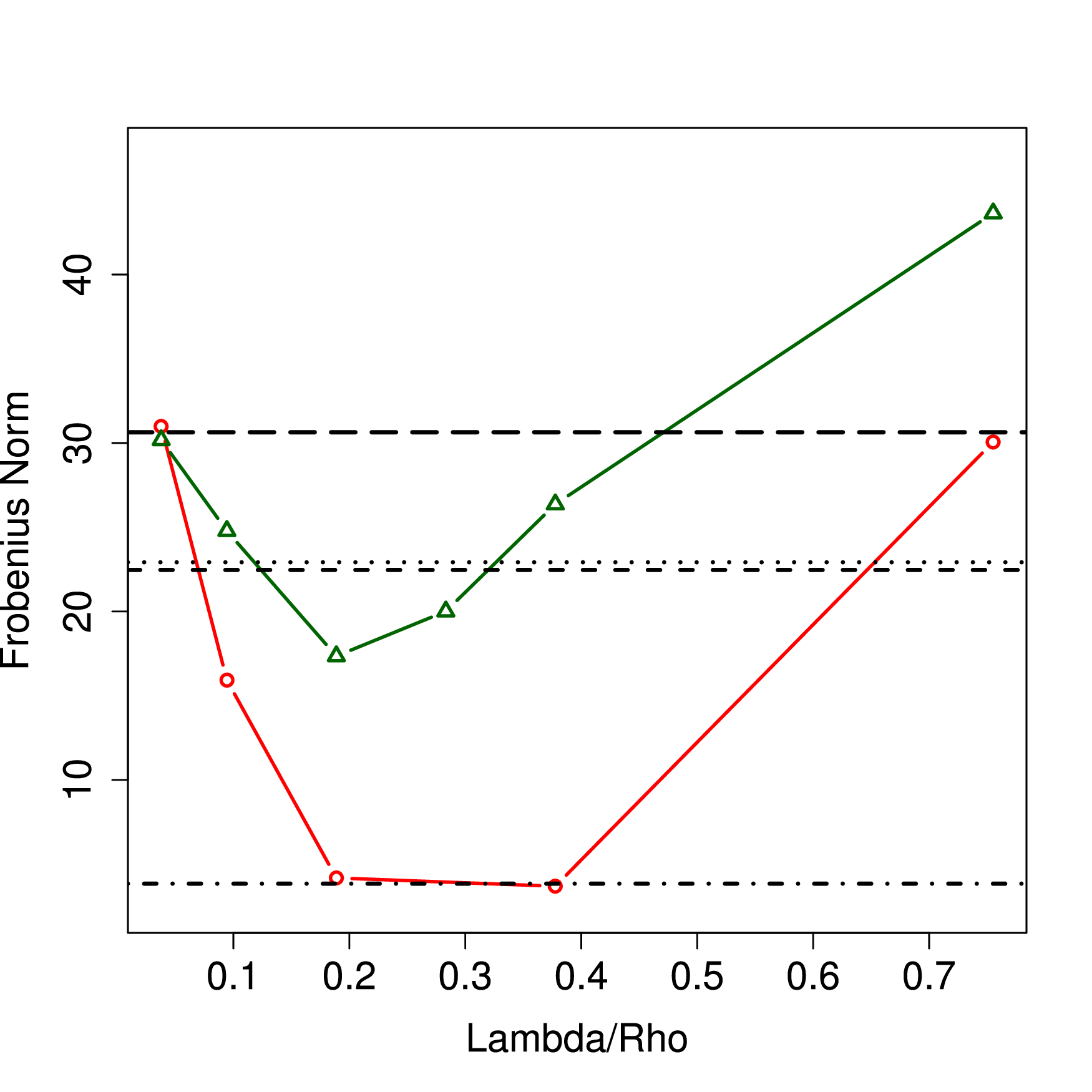}}
          \subfigure[$\Sigma^{(2)}_{AR}$ with $n=320$]{\includegraphics[trim=0cm 0cm 0cm 1cm ,clip=TRUE ,scale=0.3]{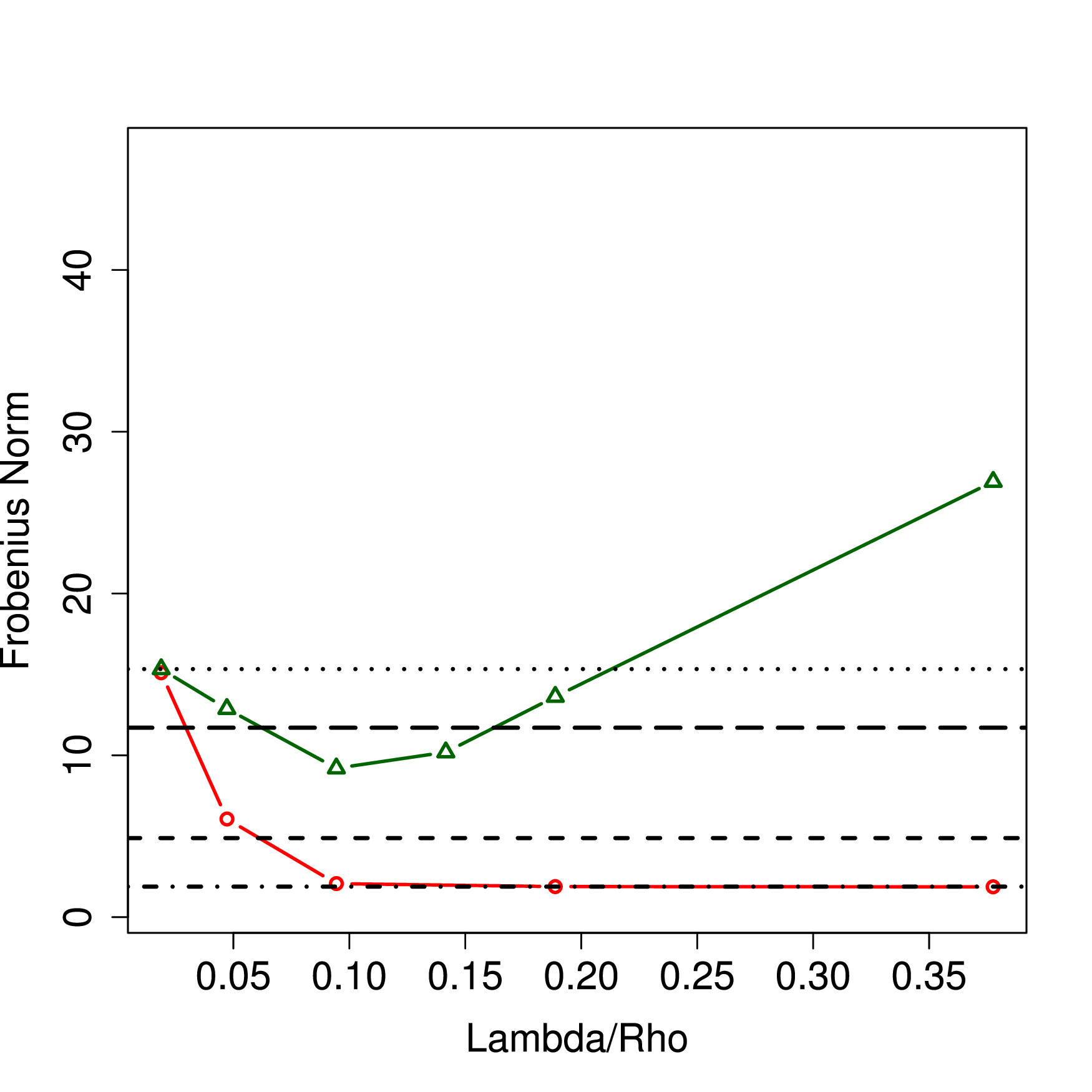}}
}

        \centerline{ 
          \subfigure[$\Sigma^{(2)}_{AR}$ with $n=40$]{\includegraphics[trim=0cm 0cm 0cm 1cm ,clip=TRUE ,scale=0.3]{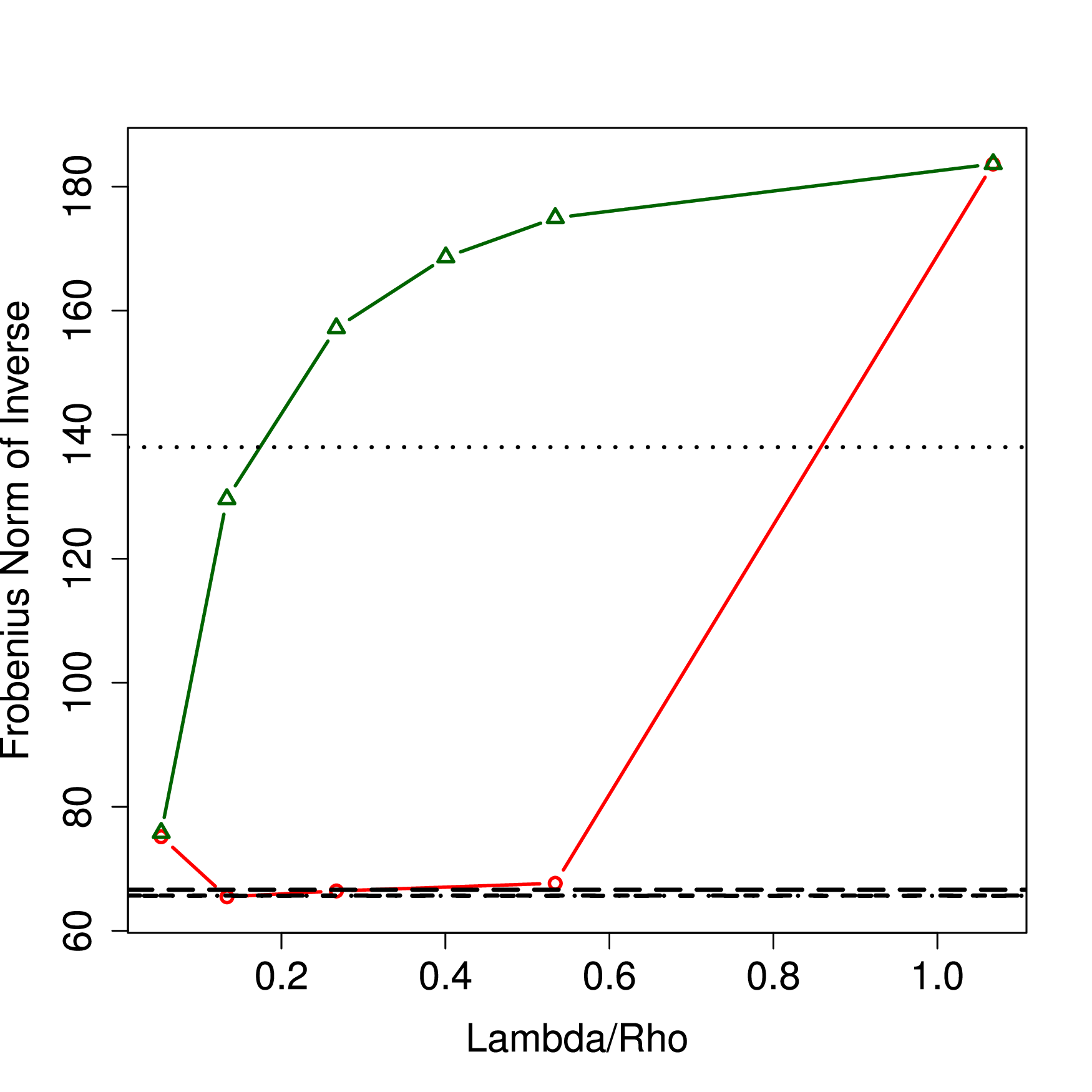}}
          \subfigure[$\Sigma^{(2)}_{AR}$ with $n=80$]{\includegraphics[trim=0cm 0cm 0cm 1cm ,clip=TRUE ,scale=0.3]{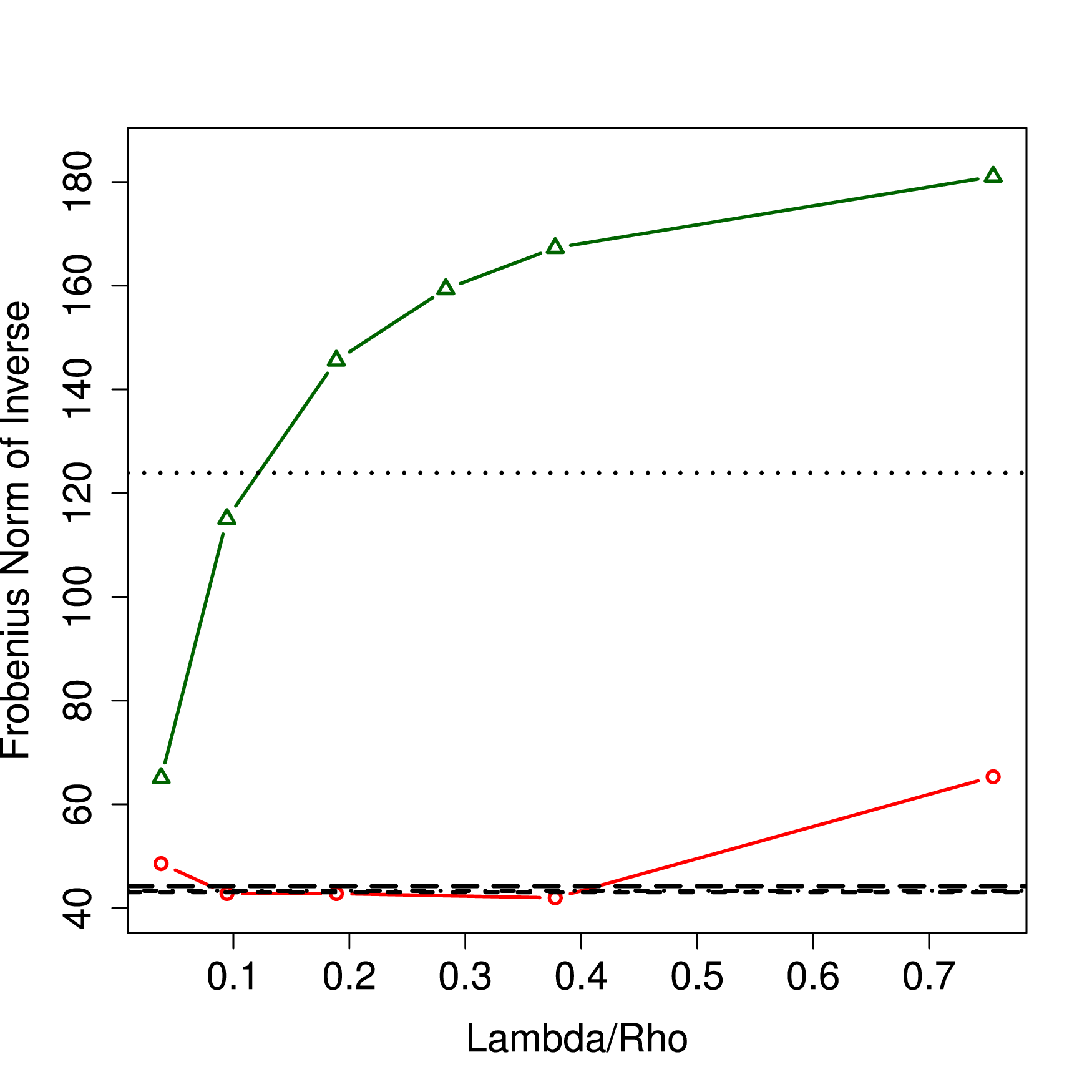}}
          \subfigure[$\Sigma^{(2)}_{AR}$ with $n=320$]{\includegraphics[trim=0cm 0cm 0cm 1cm ,clip=TRUE ,scale=0.3]{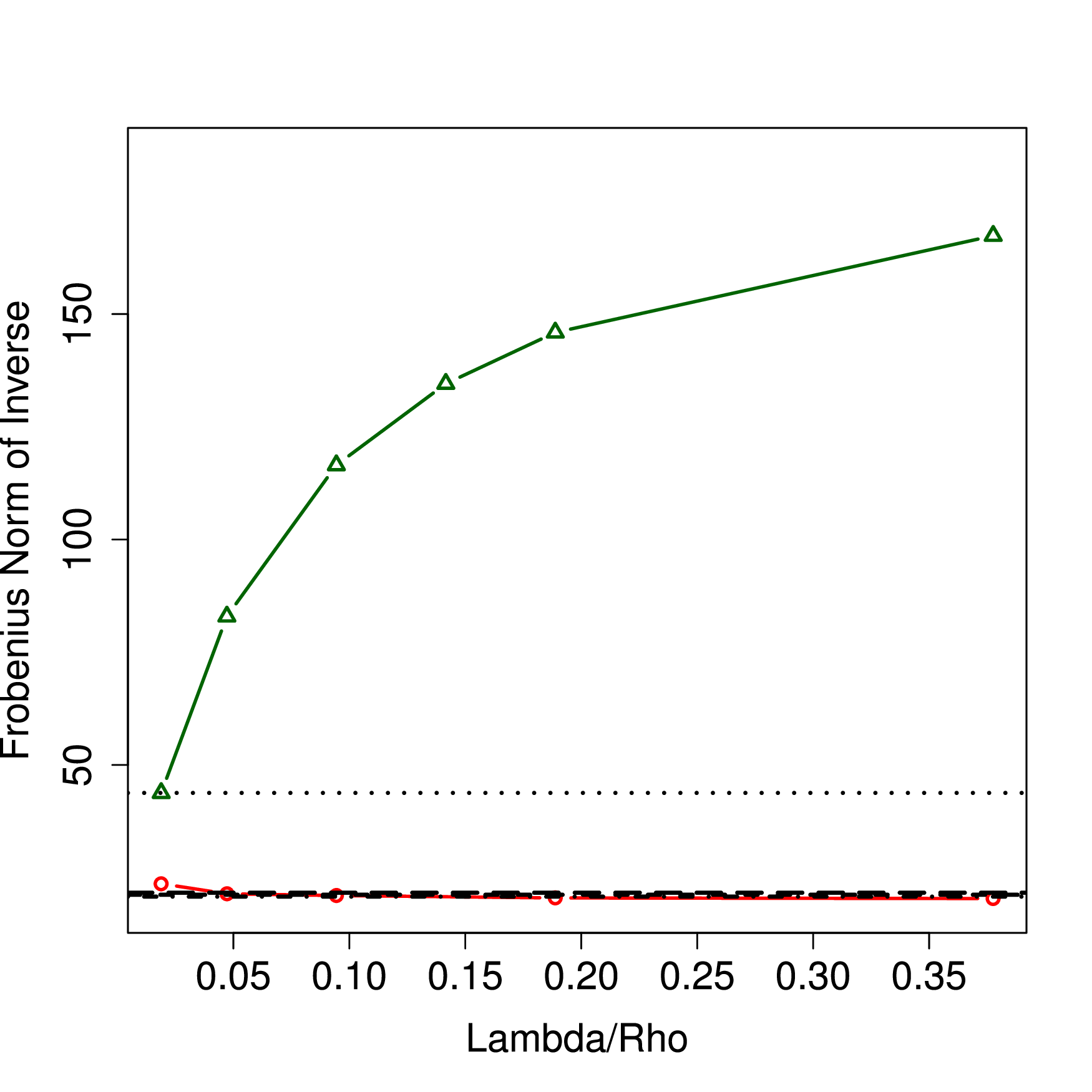}}
}

         \centerline{ 
          \subfigure[$\Sigma^{(2)}_{AR}$ with $n=40$]{\includegraphics[trim=0cm 0cm 0cm 1cm ,clip=TRUE ,scale=0.3]{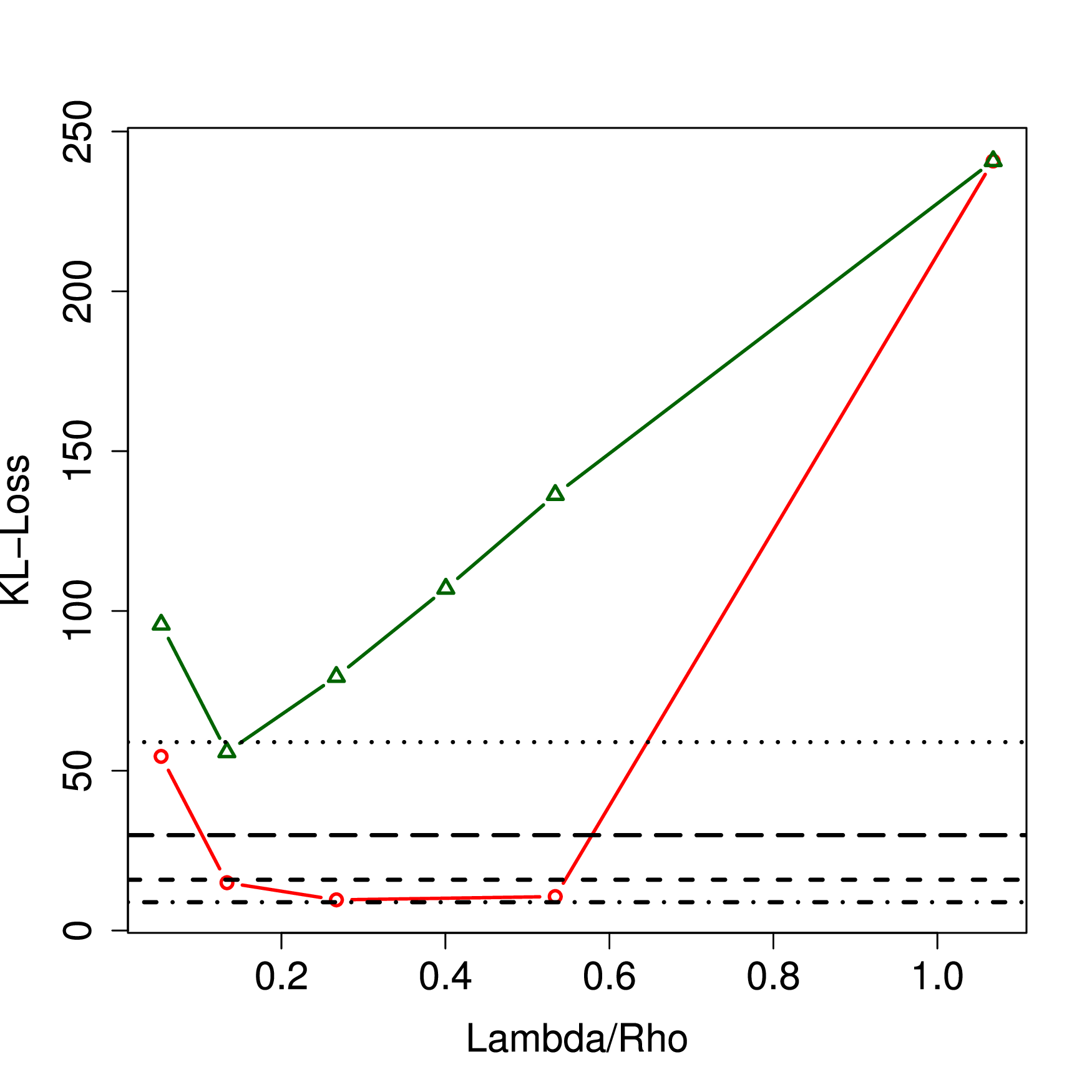}}
          \subfigure[$\Sigma^{(2)}_{AR}$ with $n=80$]{\includegraphics[trim=0cm 0cm 0cm 1cm ,clip=TRUE ,scale=0.3]{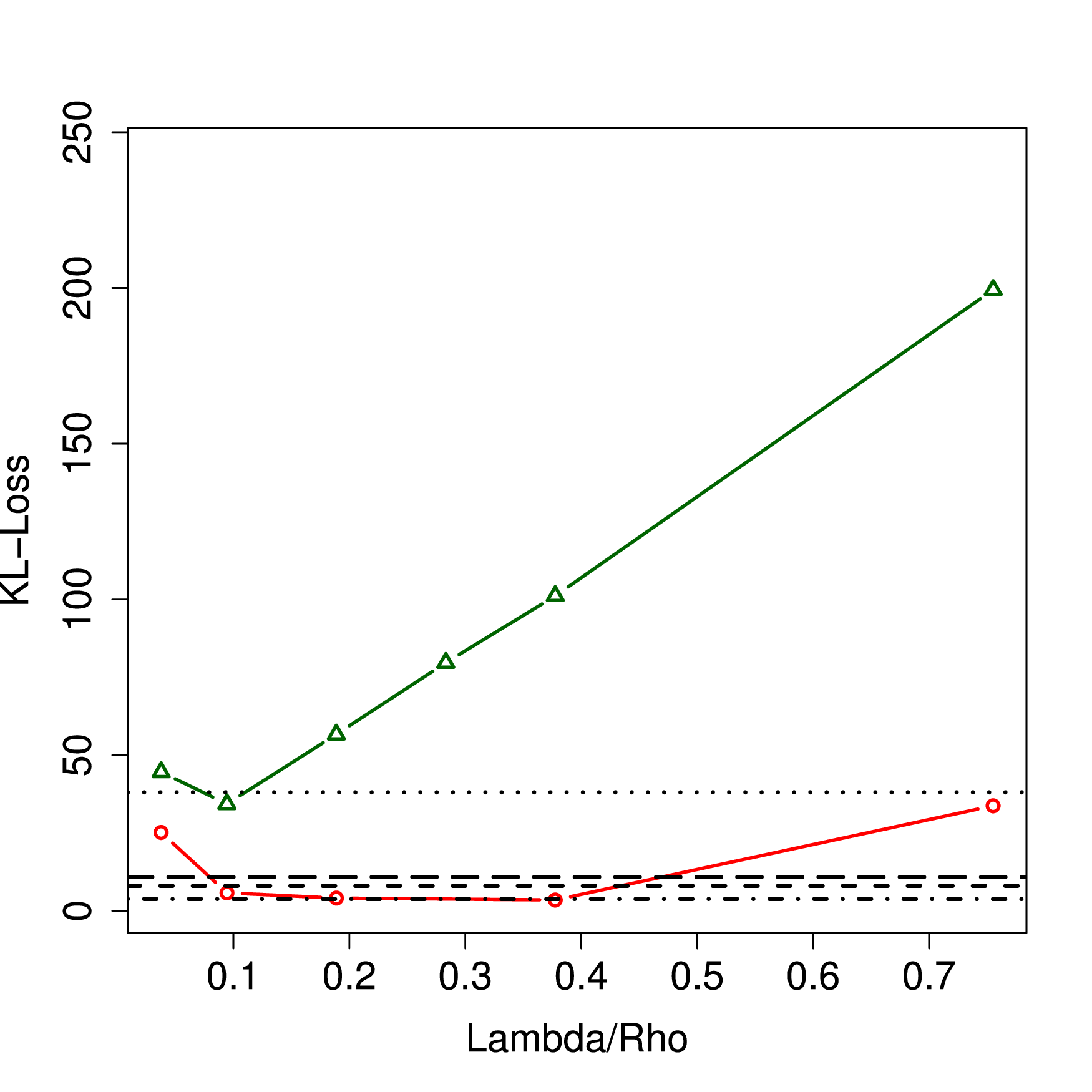}}
          \subfigure[$\Sigma^{(2)}_{AR}$ with $n=320$]{\includegraphics[trim=0cm 0cm 0cm 1cm ,clip=TRUE ,scale=0.3]{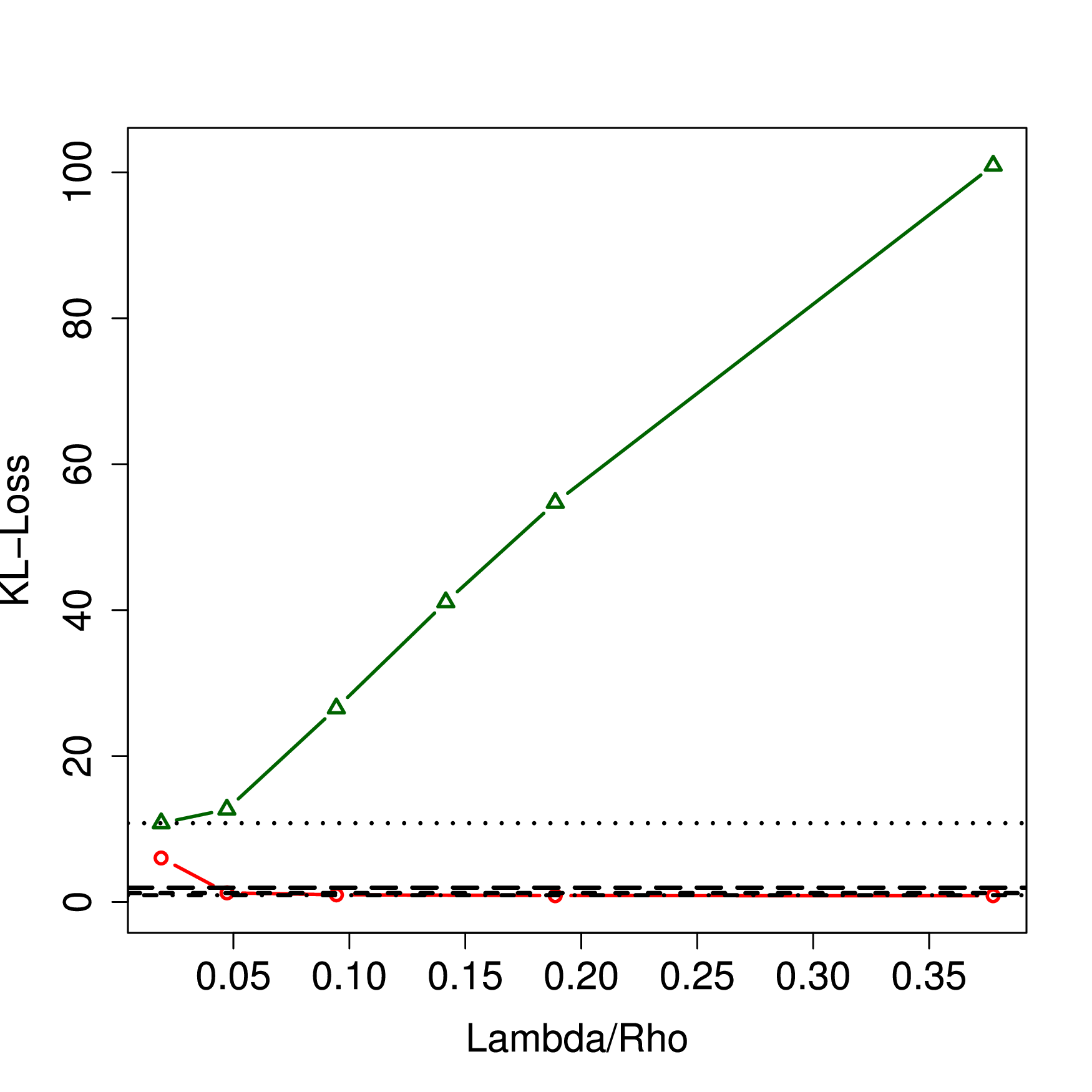}}
}
        \caption{Plots for model $\Sigma^{(2)}_{0}$. The triangles (green)
          stand for the 
          GLasso and the circles (red) for our Gelato method with a
          reasonable value of $\tau$. The horizontal lines
          show the performances of the three techniques for cross-validated
          tuning parameters $\lambda$, $\tau$, $\rho$ and $\eta$. The
          dashed line stands for our Gelato method, the dotted one for the
          GLasso and the dash-dotted line for the Space technique. The
          additional dashed line with the longer dashes stands for the
          Gelato without thresholding. Lambda/Rho stands for $\lambda$ or
          $\rho$, respectively.}
        \label{causal2}
\end{figure}

The figures show a substantial performance gain of our method
compared to the GLasso in both considered covariance models. This result
speaks for our method, especially because AR(1)-block models are very
simple. The Space method performs about as well as Gelato, except for the
Frobenius norm of $\hat{\Sigma}_n-\Sigma_0$. There we see an performance
advantage of the Space method compared to Gelato. We also exploit
the clear advantage of thresholding in Gelato for a small sample size.

\subsubsection{The random precision matrix model}

\begin{figure}
        \centerline{ 
          \subfigure[$n=40$]{\includegraphics[trim=0cm 0cm 0cm 1cm ,clip=TRUE ,scale=0.3]{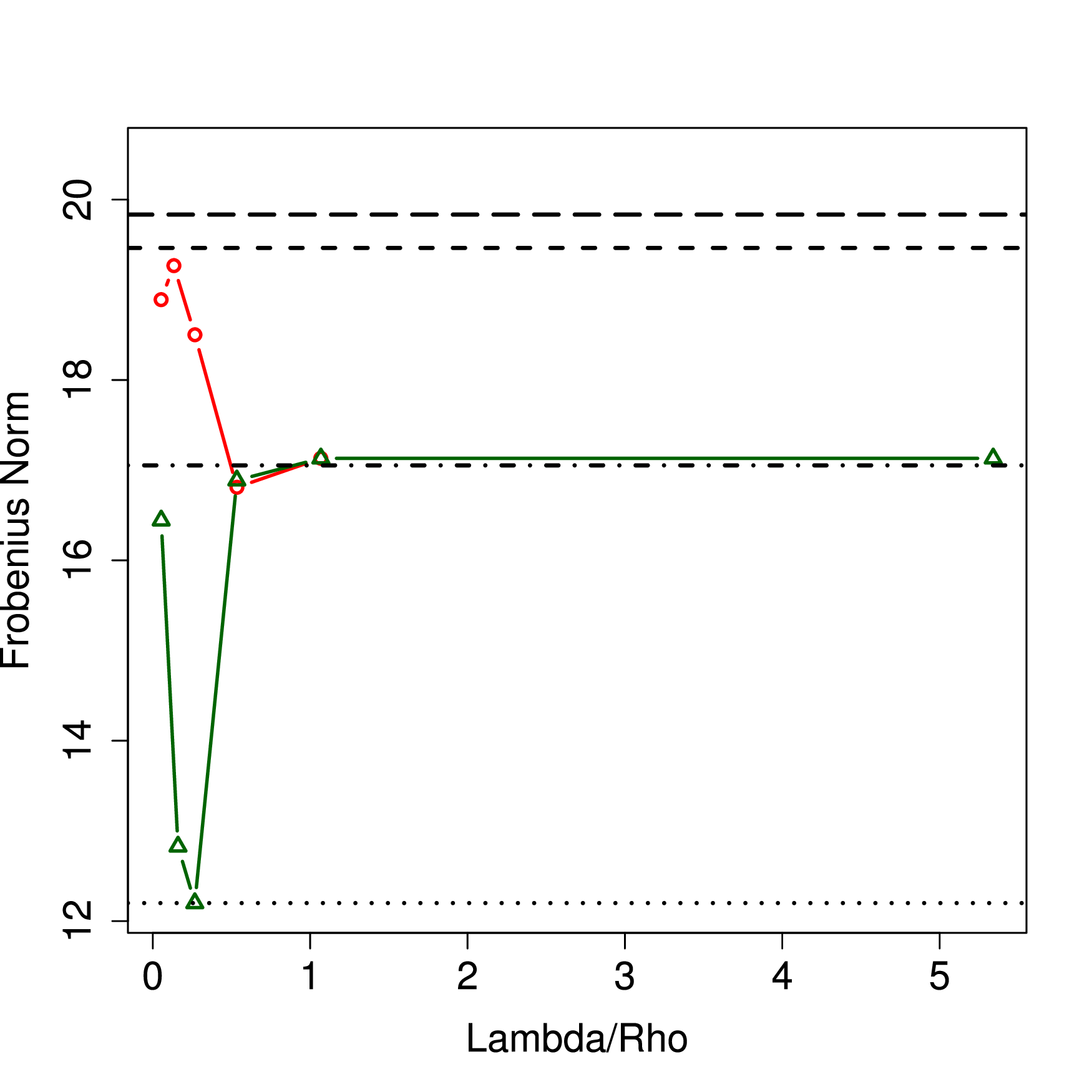}}
          \subfigure[$n=80$]{\includegraphics[trim=0cm 0cm 0cm 1cm ,clip=TRUE ,scale=0.3]{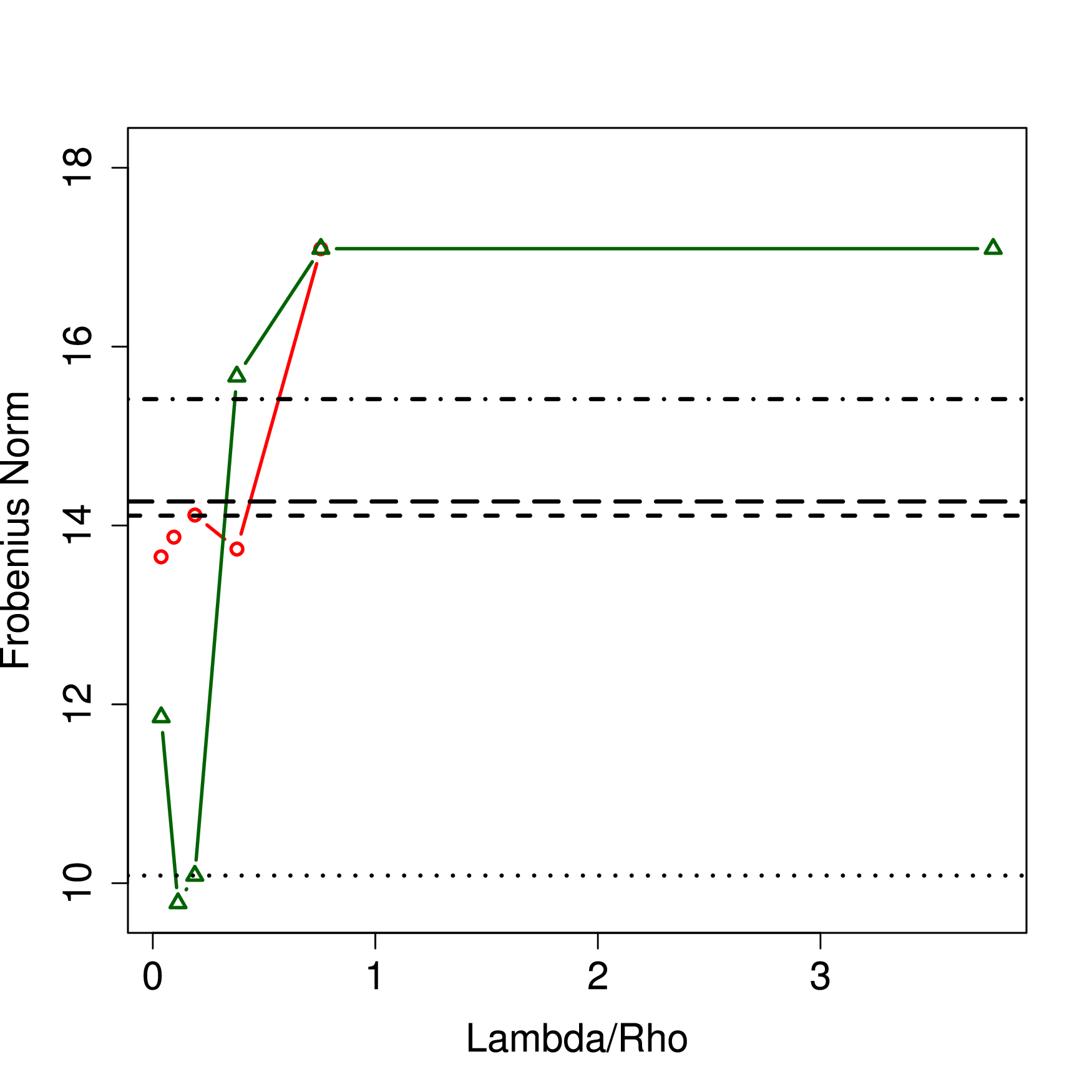}}
          \subfigure[$n=320$]{\includegraphics[trim=0cm 0cm 0cm 1cm ,clip=TRUE ,scale=0.3]{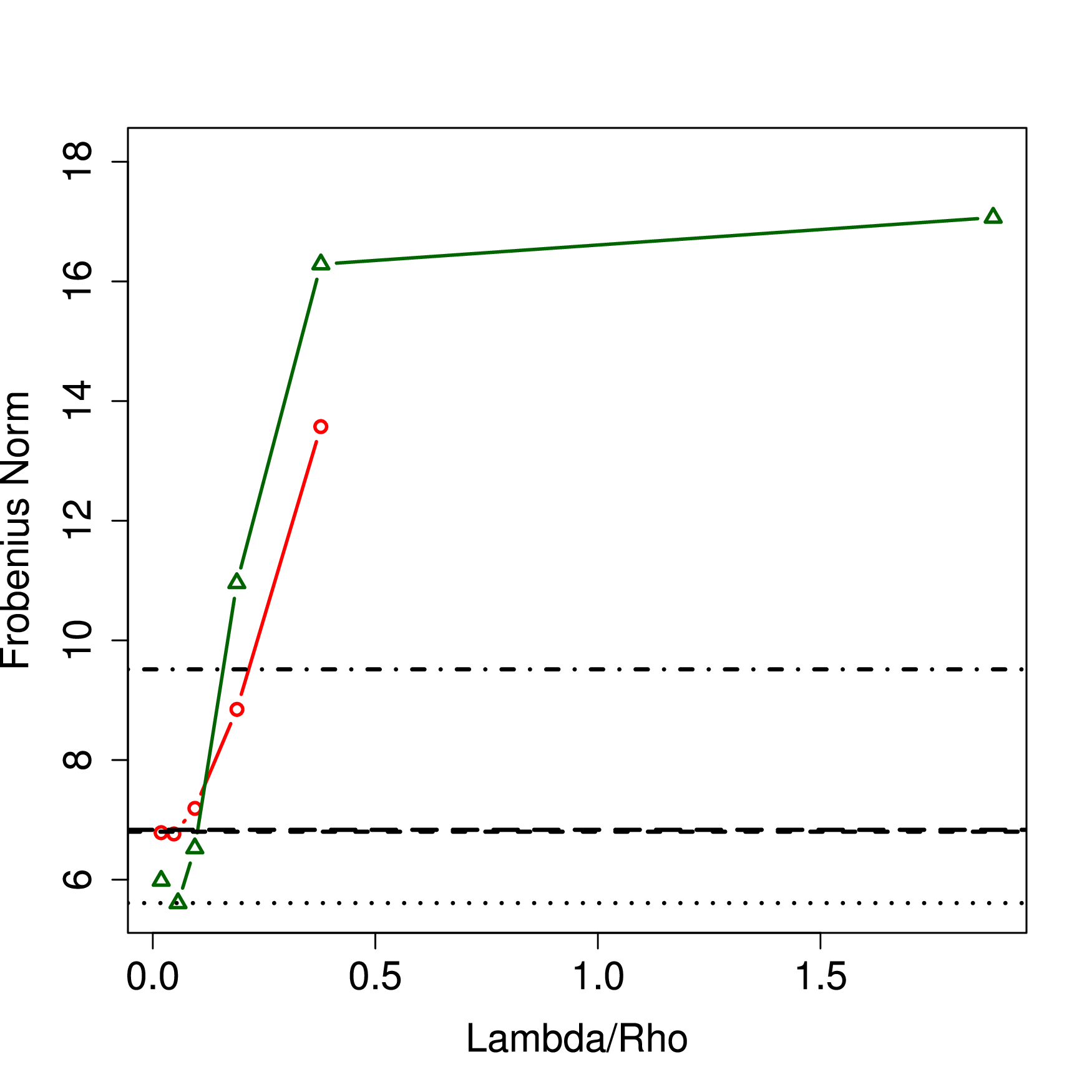}}
}

        \centerline{ 
          \subfigure[$n=40$]{\includegraphics[trim=0cm 0cm 0cm 1cm ,clip=TRUE ,scale=0.3]{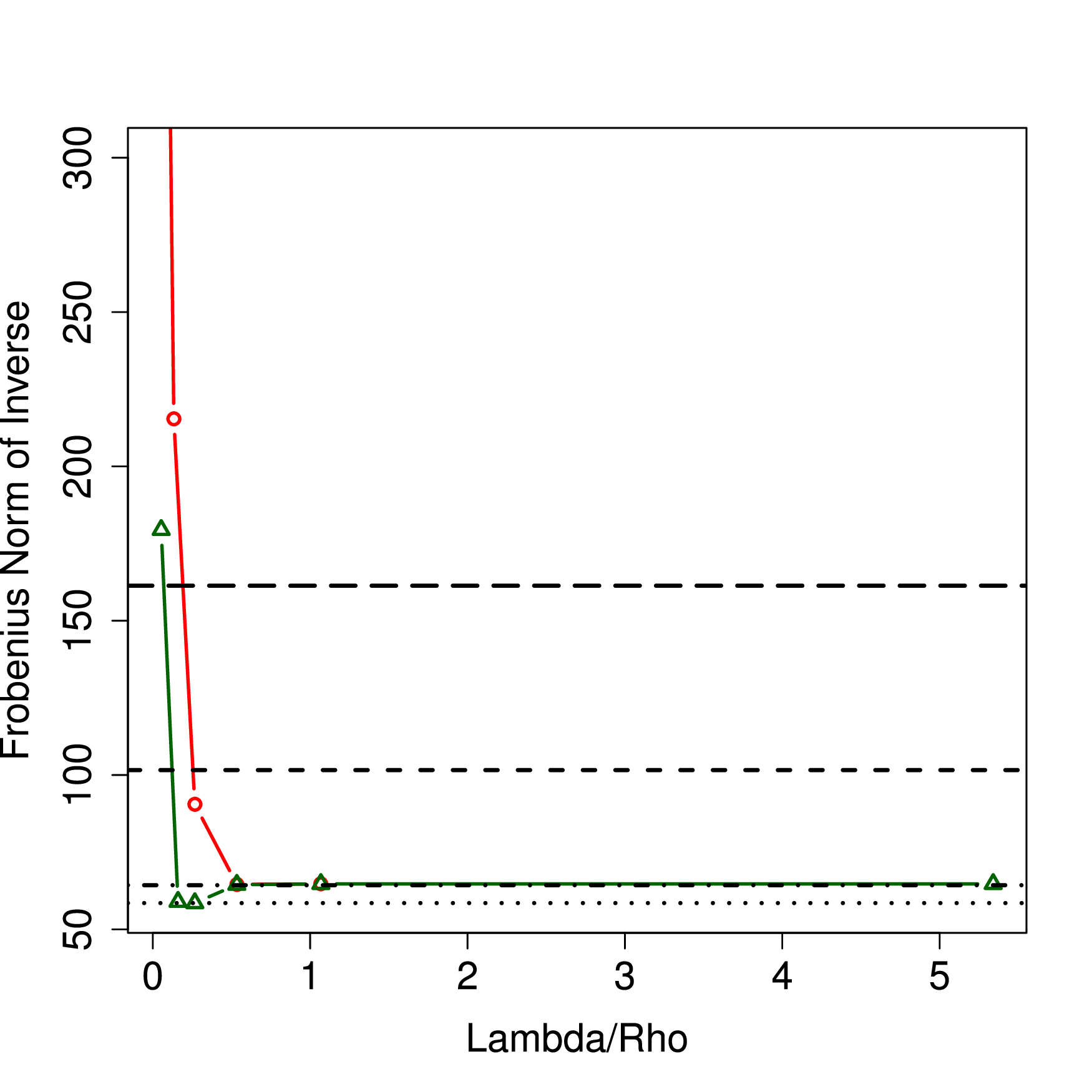}}
          \subfigure[$n=80$]{\includegraphics[trim=0cm 0cm 0cm 1cm ,clip=TRUE ,scale=0.3]{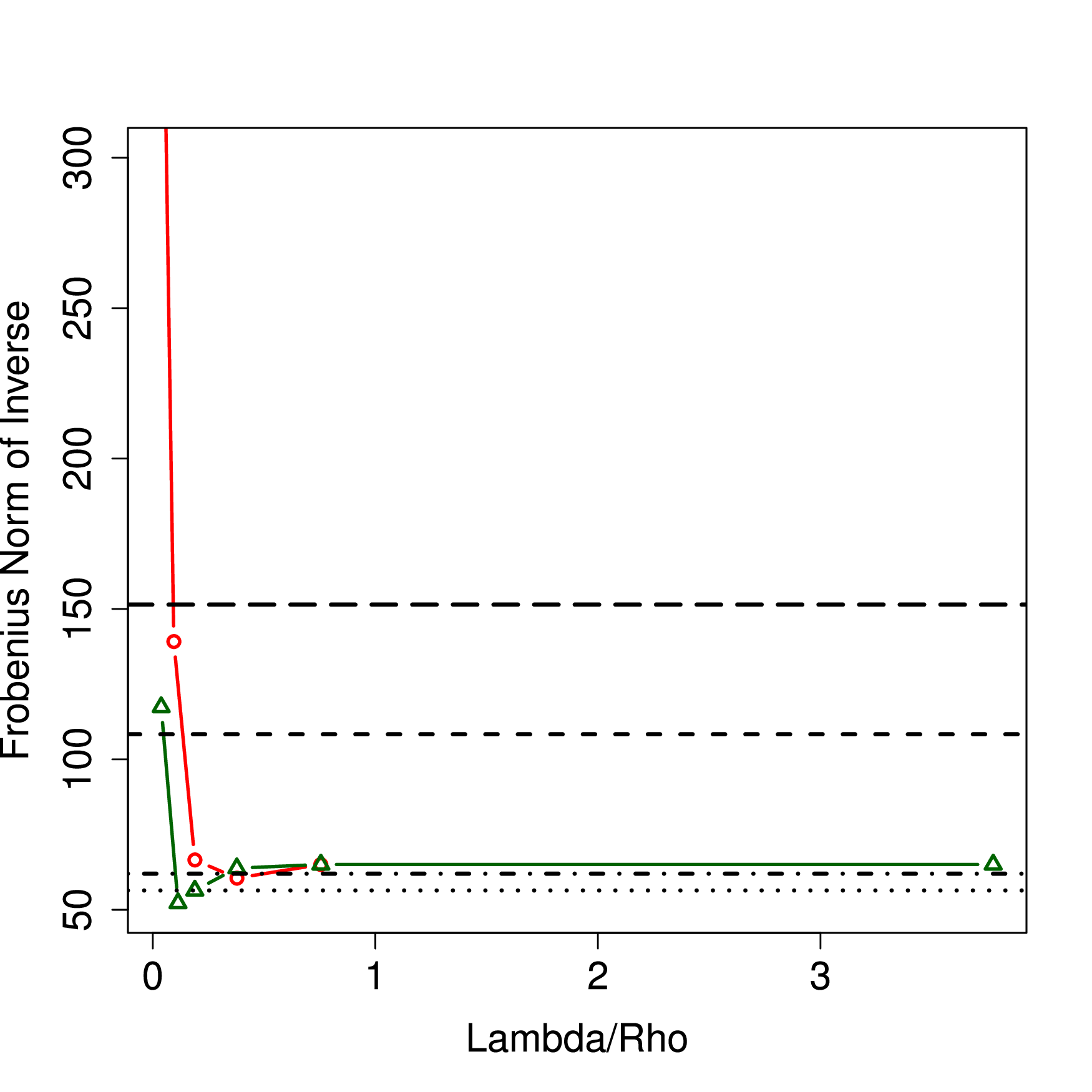}}
          \subfigure[$n=320$]{\includegraphics[trim=0cm 0cm 0cm 1cm ,clip=TRUE ,scale=0.3]{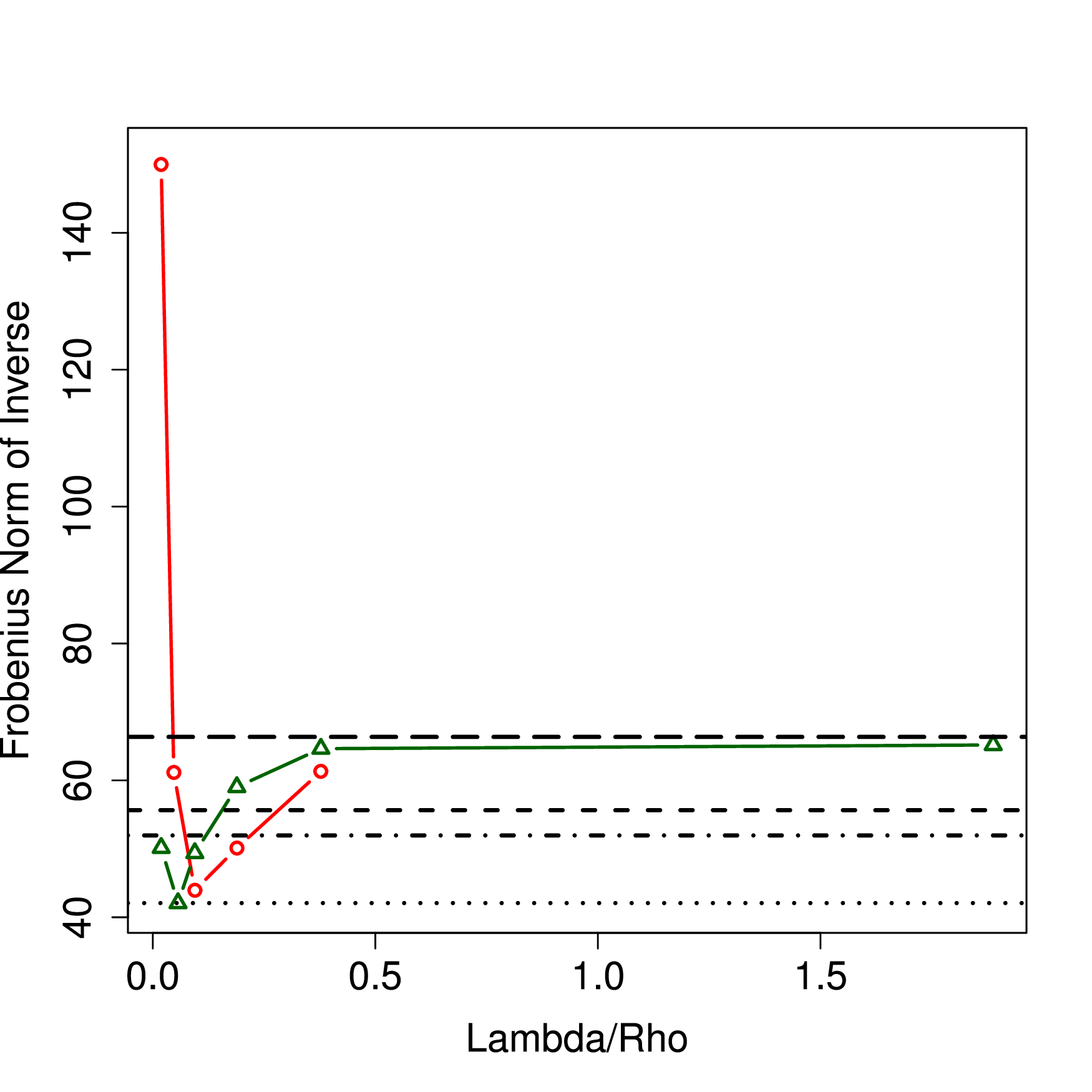}}
}

         \centerline{ 
          \subfigure[$n=40$]{\includegraphics[trim=0cm 0cm 0cm 1cm ,clip=TRUE ,scale=0.3]{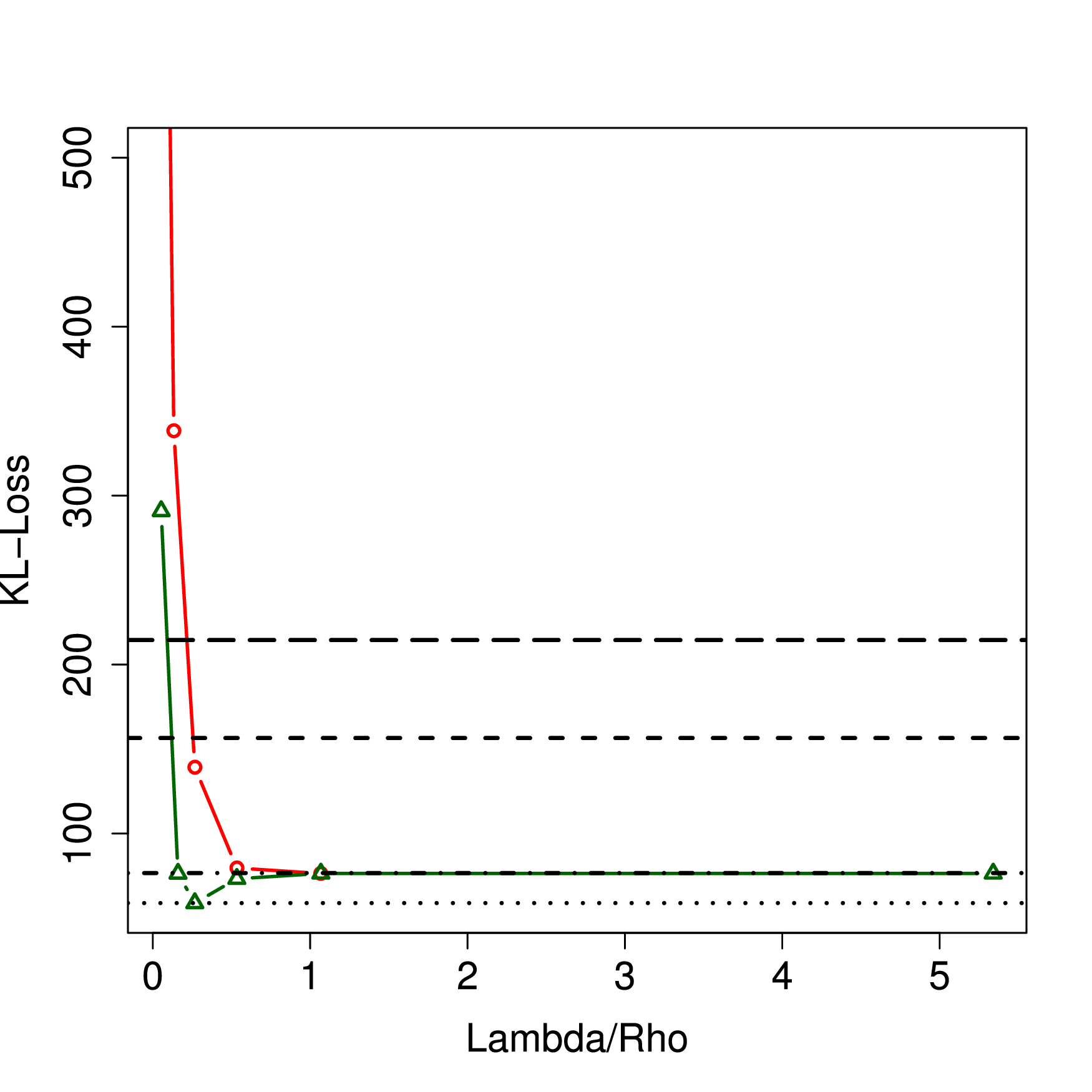}}
          \subfigure[$n=80$]{\includegraphics[trim=0cm 0cm 0cm 1cm ,clip=TRUE ,scale=0.3]{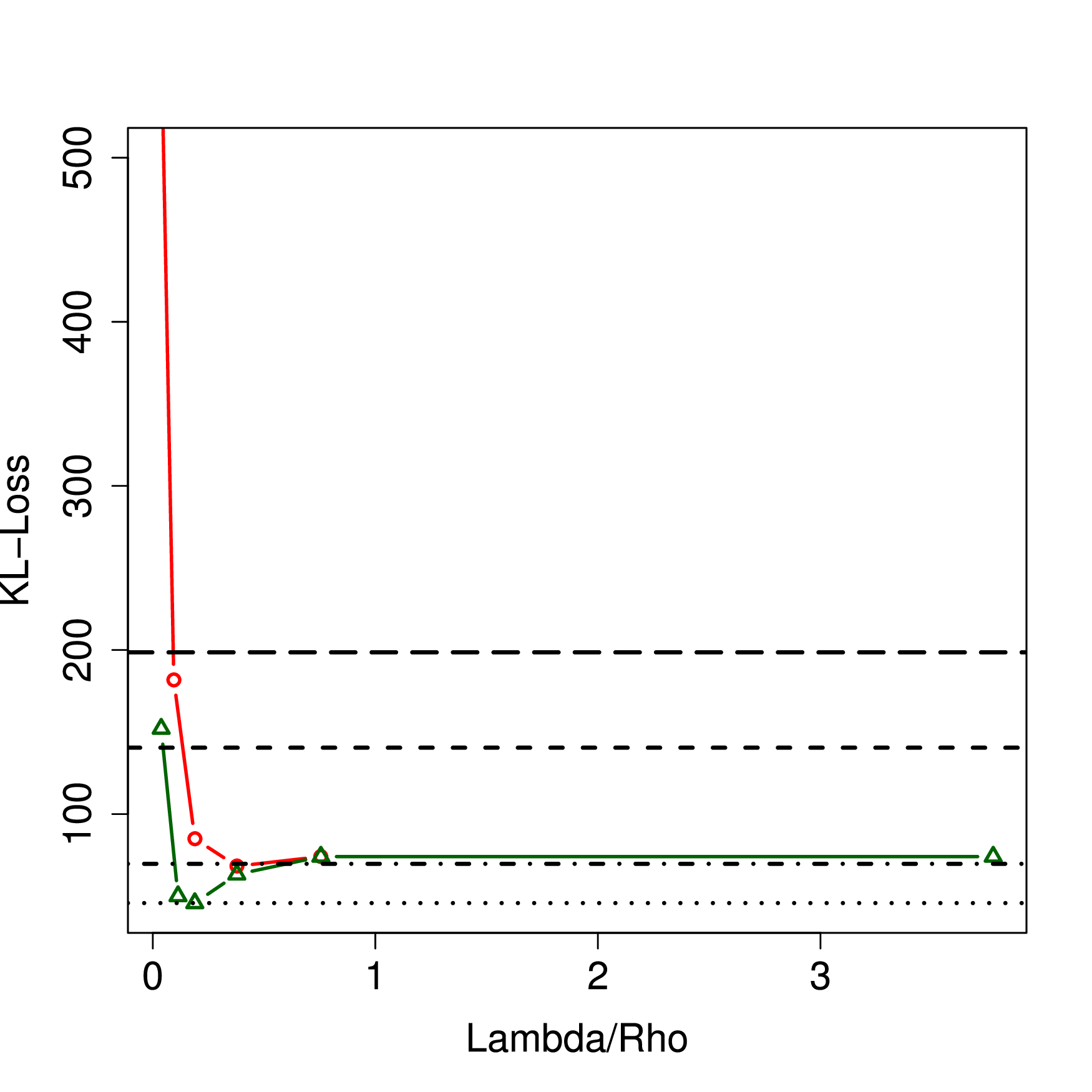}}
          \subfigure[$n=320$]{\includegraphics[trim=0cm 0cm 0cm 1cm ,clip=TRUE ,scale=0.3]{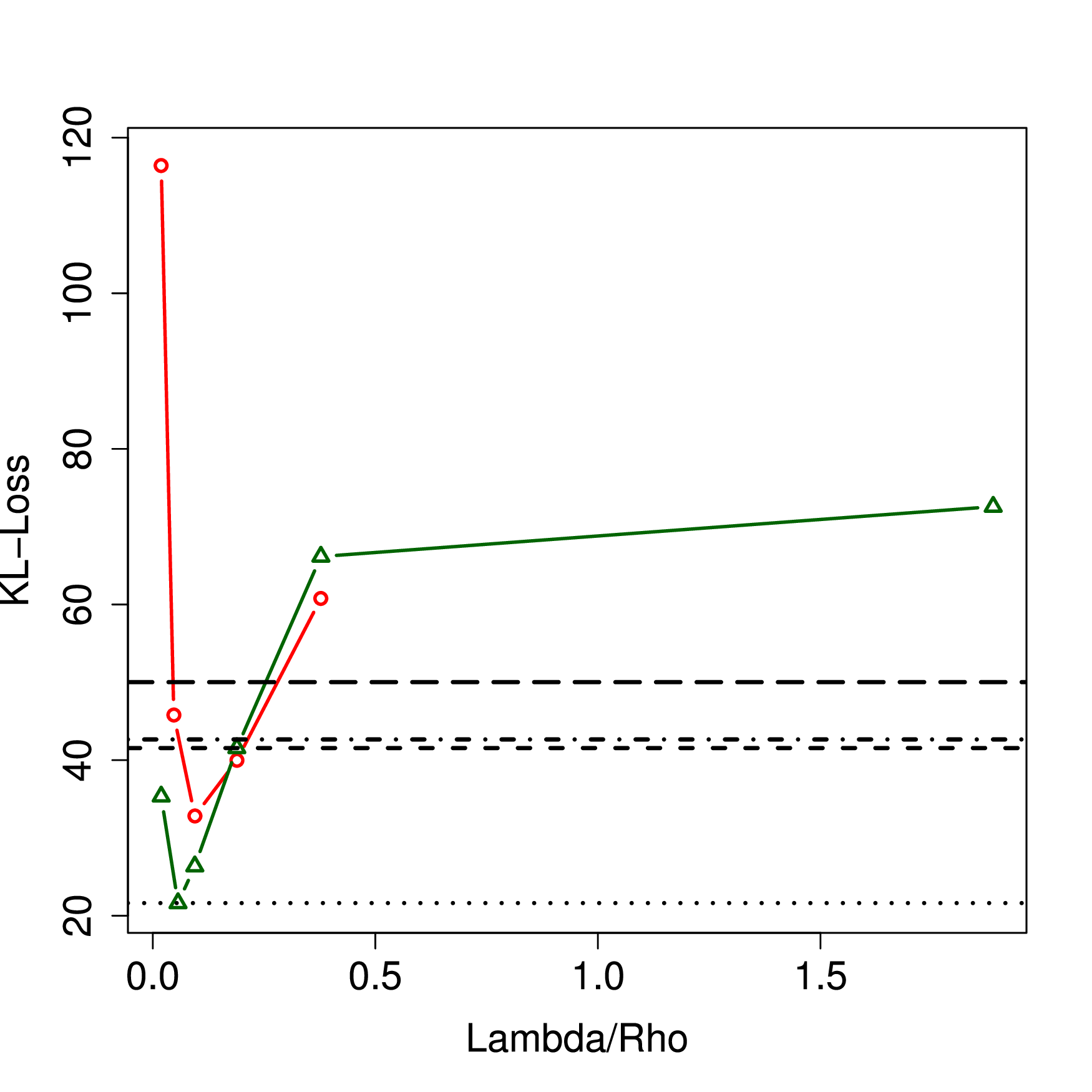}}
}
        \caption{Plots for model $\Theta^{(3)}_0$. The triangles (green)
          stand for the 
          GLasso and the circles (red) for our Gelato method with a
          reasonable value of $\tau$. The horizontal lines
          show the performances of the three techniques for cross-validated
          tuning parameters $\lambda$, $\tau$, $\rho$ and $\eta$. The
          dashed line stands for our Gelato method, the dotted one for the
          GLasso and the dash-dotted line for the Space technique. The
          additional dashed line with the longer dashes stands for the
          Gelato without thresholding. Lambda/Rho stands for $\lambda$ or
          $\rho$, respectively.}
        \label{Random1}
\end{figure}

\begin{figure}
        \centerline{ 
          \subfigure[$n=40$]{\includegraphics[trim=0cm 0cm 0cm 1cm ,clip=TRUE ,scale=0.3]{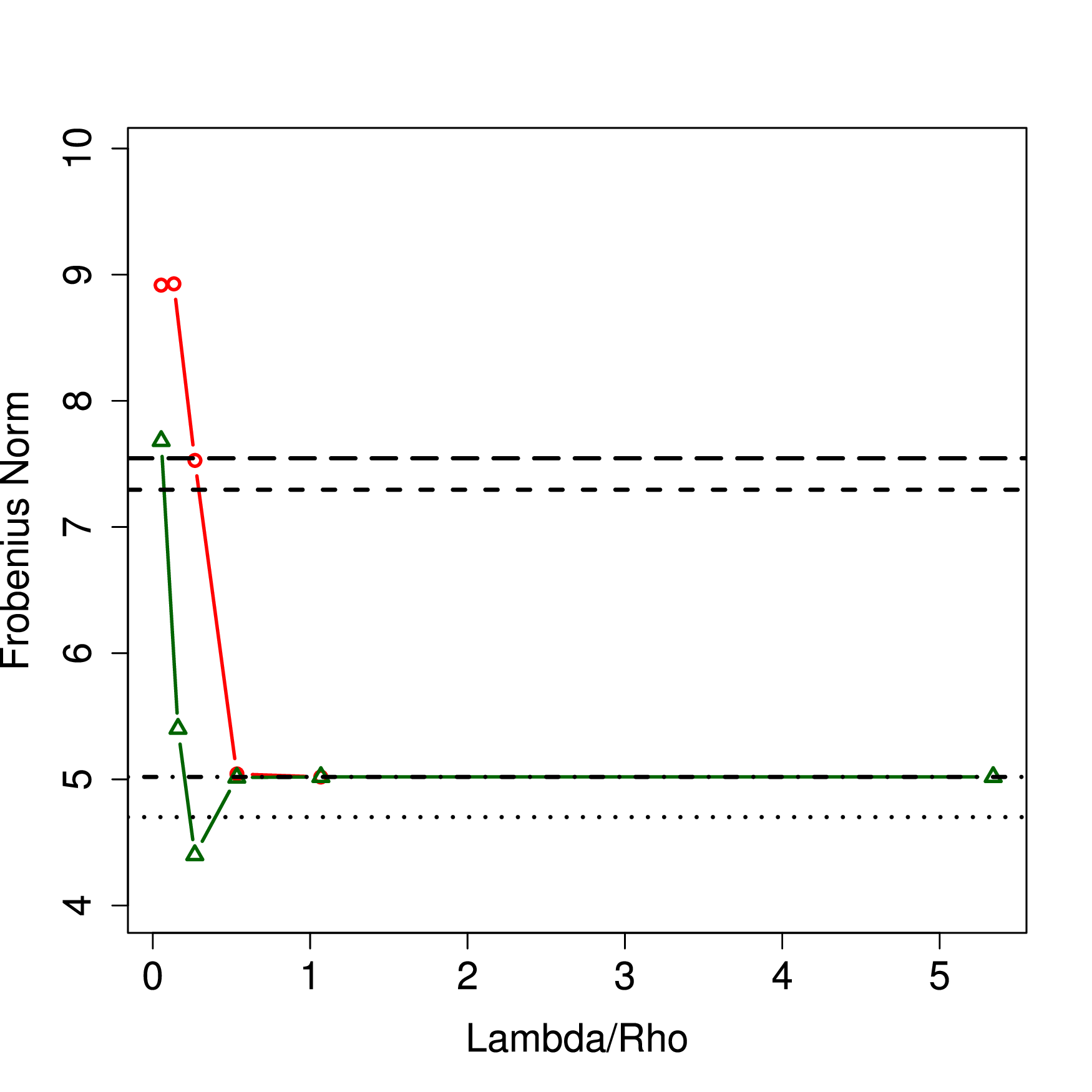}}
          \subfigure[$n=80$]{\includegraphics[trim=0cm 0cm 0cm 1cm ,clip=TRUE ,scale=0.3]{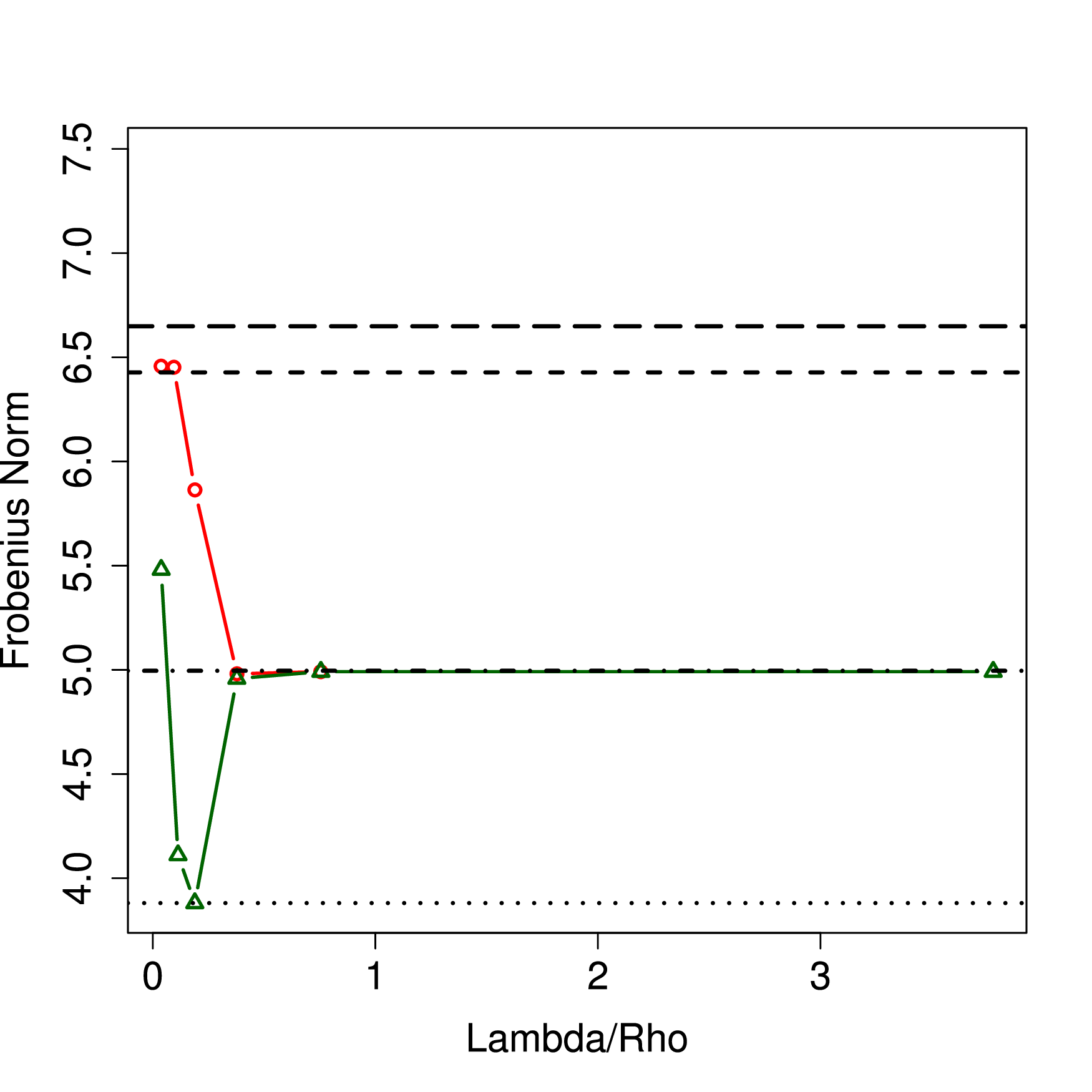}}
          \subfigure[$n=320$]{\includegraphics[trim=0cm 0cm 0cm 1cm ,clip=TRUE ,scale=0.3]{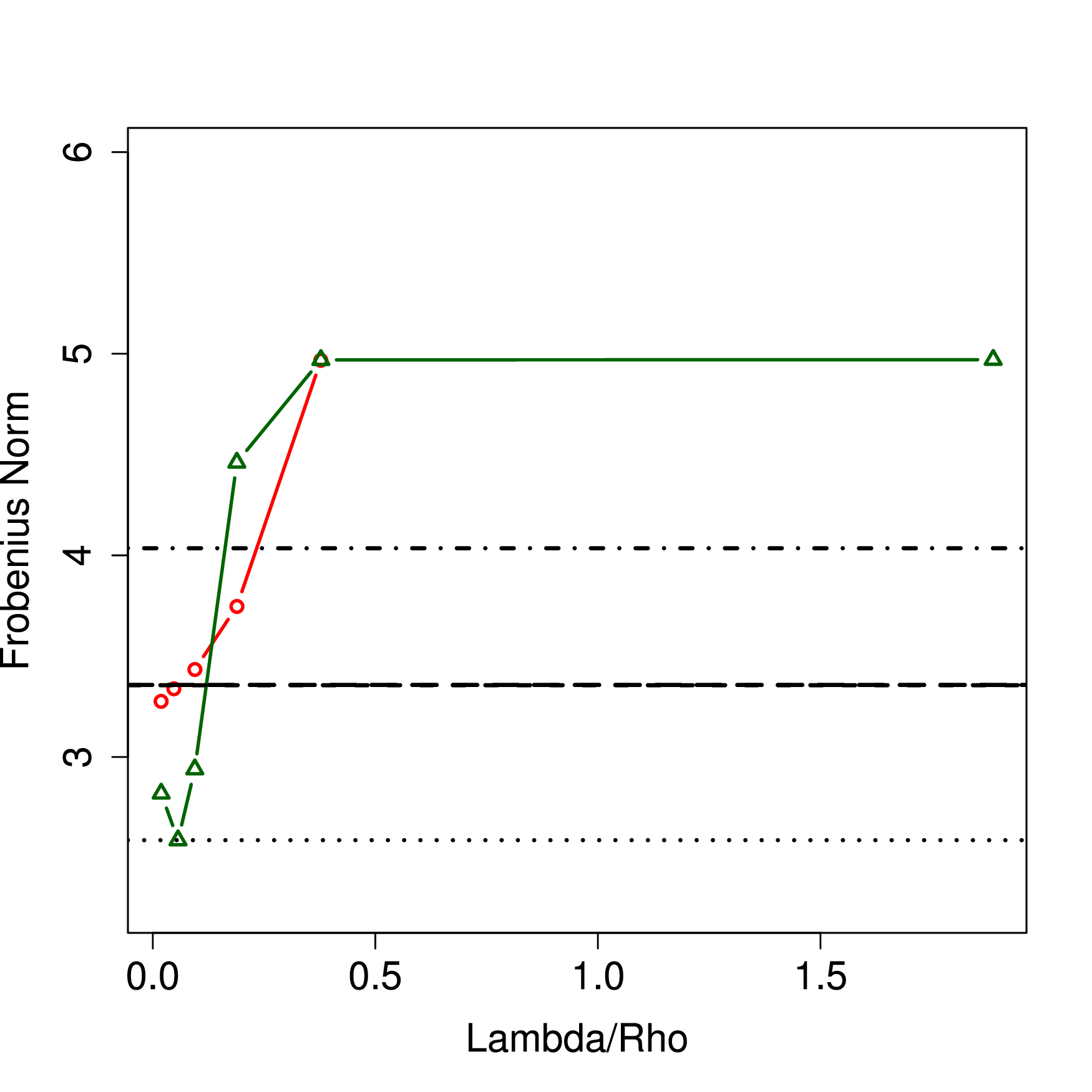}}
}

        \centerline{ 
          \subfigure[$n=40$]{\includegraphics[trim=0cm 0cm 0cm 1cm ,clip=TRUE ,scale=0.3]{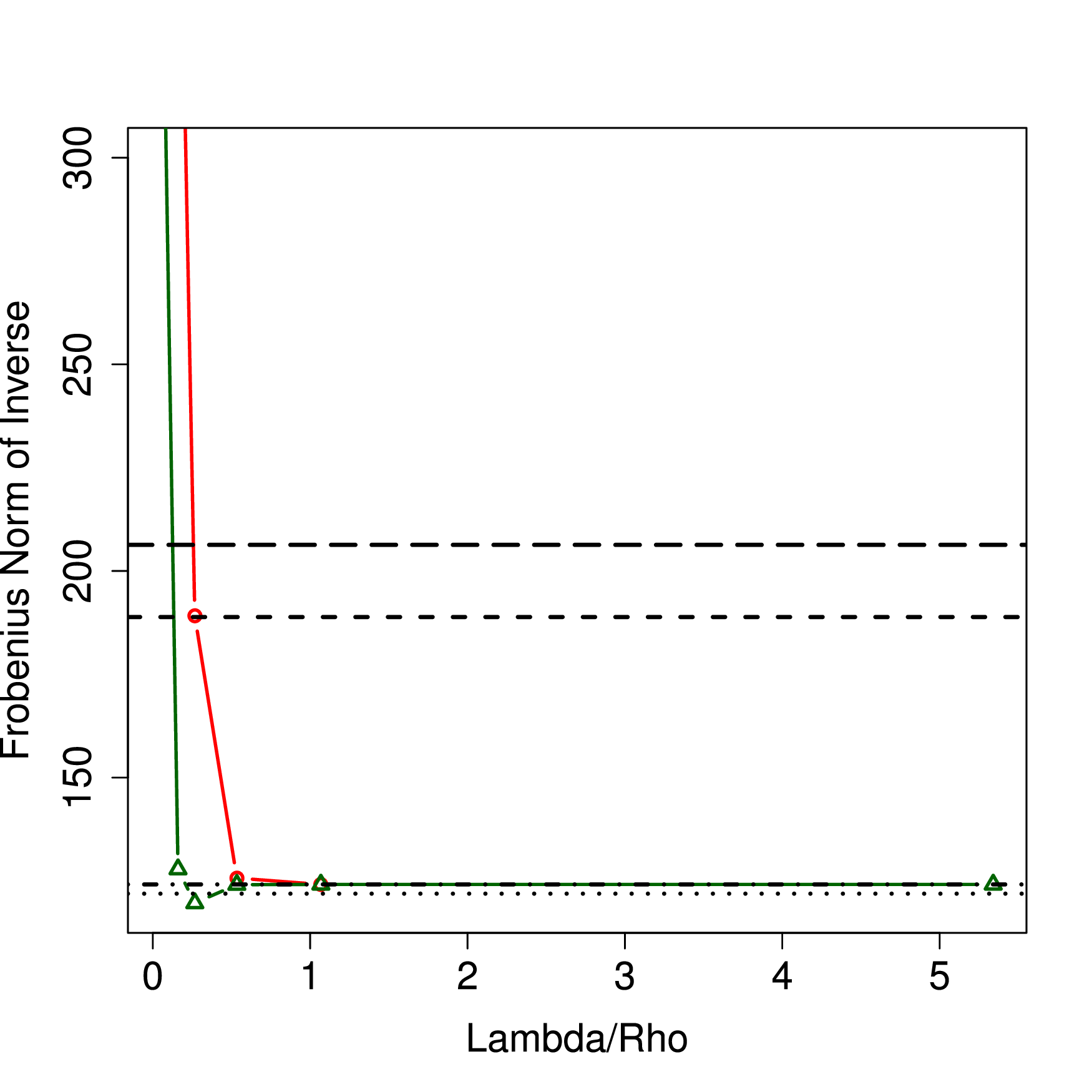}}
          \subfigure[$n=80$]{\includegraphics[trim=0cm 0cm 0cm 1cm ,clip=TRUE ,scale=0.3]{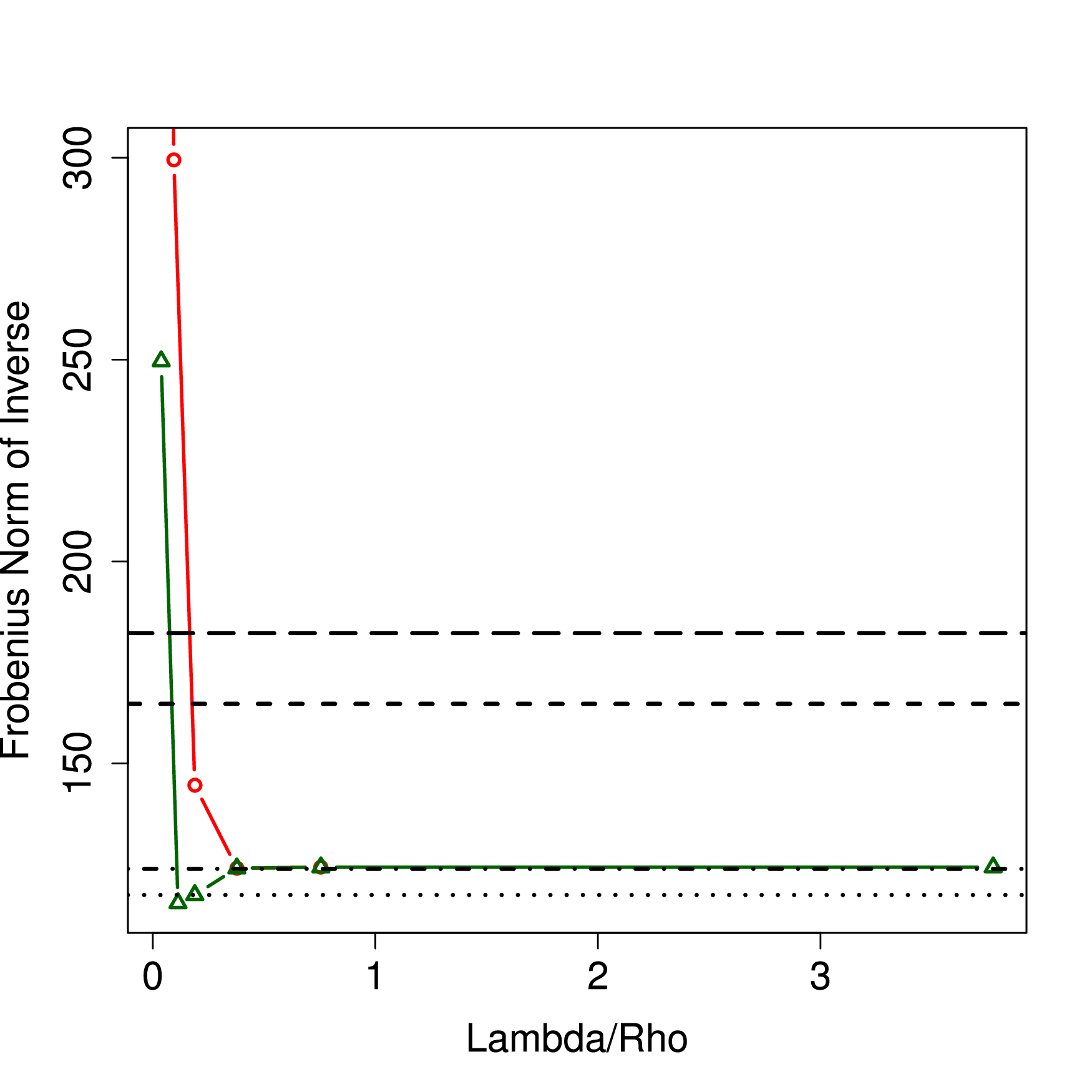}}
          \subfigure[$n=320$]{\includegraphics[trim=0cm 0cm 0cm 1cm ,clip=TRUE ,scale=0.3]{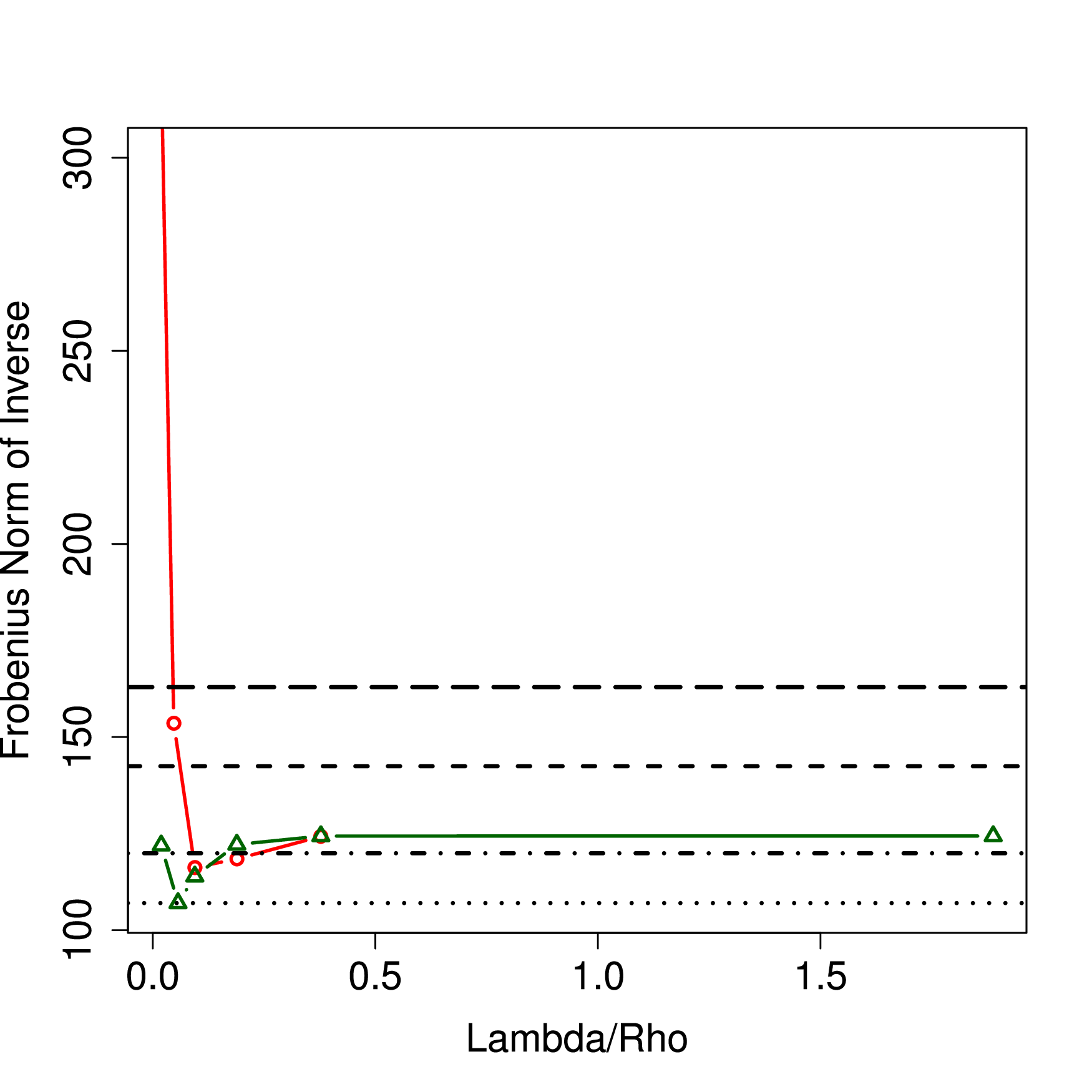}}
}

         \centerline{ 
          \subfigure[$n=40$]{\includegraphics[trim=0cm 0cm 0cm 1cm ,clip=TRUE ,scale=0.3]{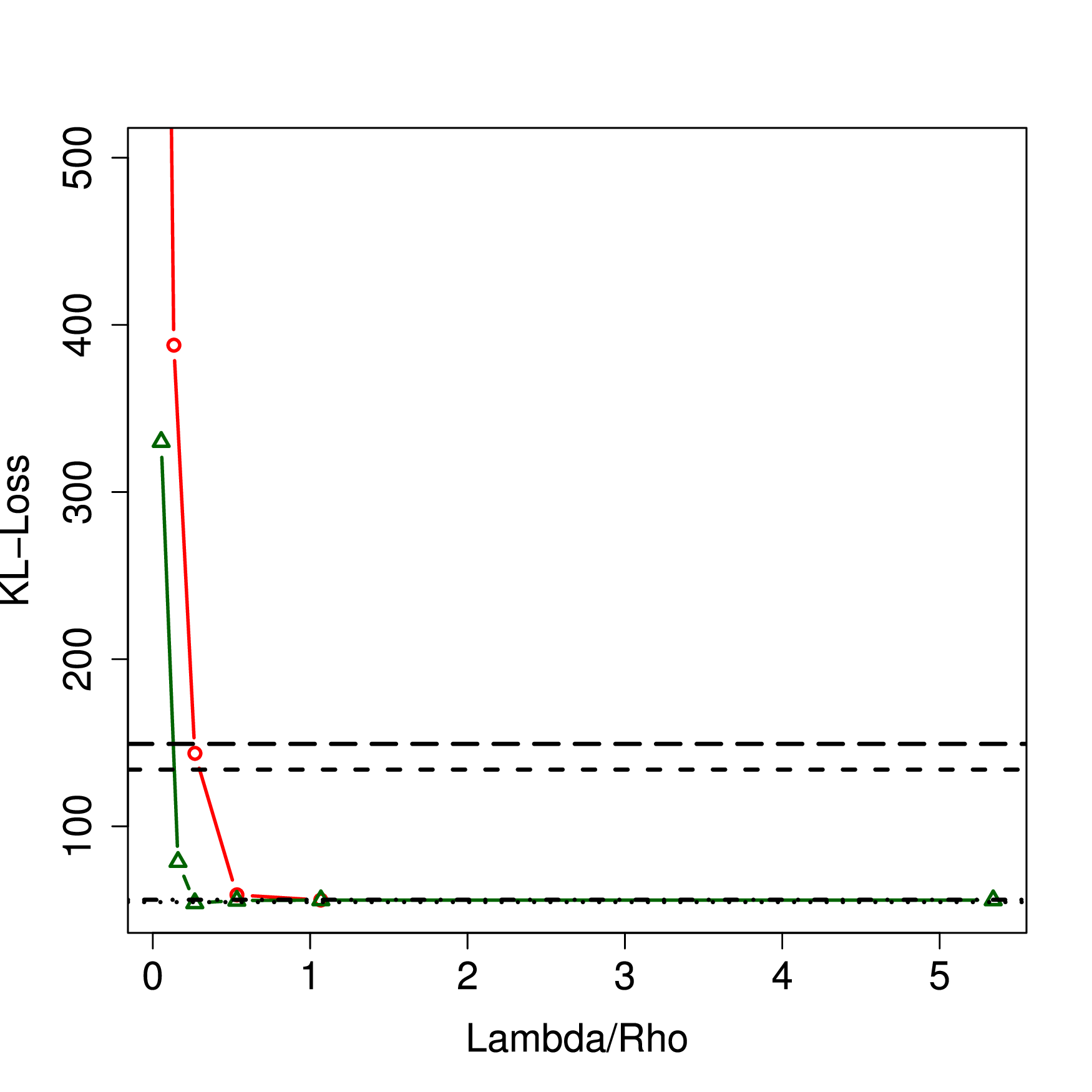}}
          \subfigure[$n=80$]{\includegraphics[trim=0cm 0cm 0cm 1cm ,clip=TRUE ,scale=0.3]{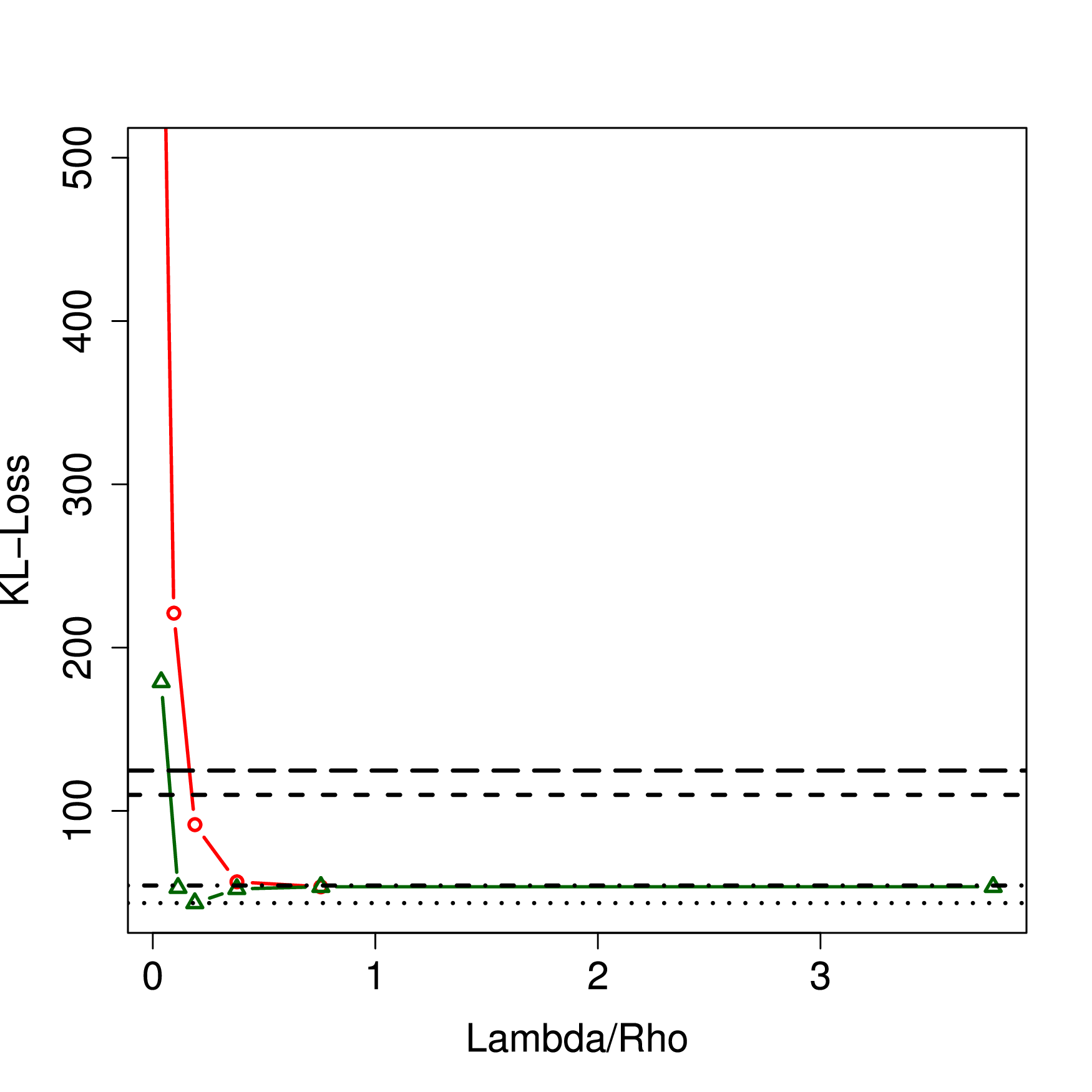}}
          \subfigure[$n=320$]{\includegraphics[trim=0cm 0cm 0cm 1cm ,clip=TRUE ,scale=0.3]{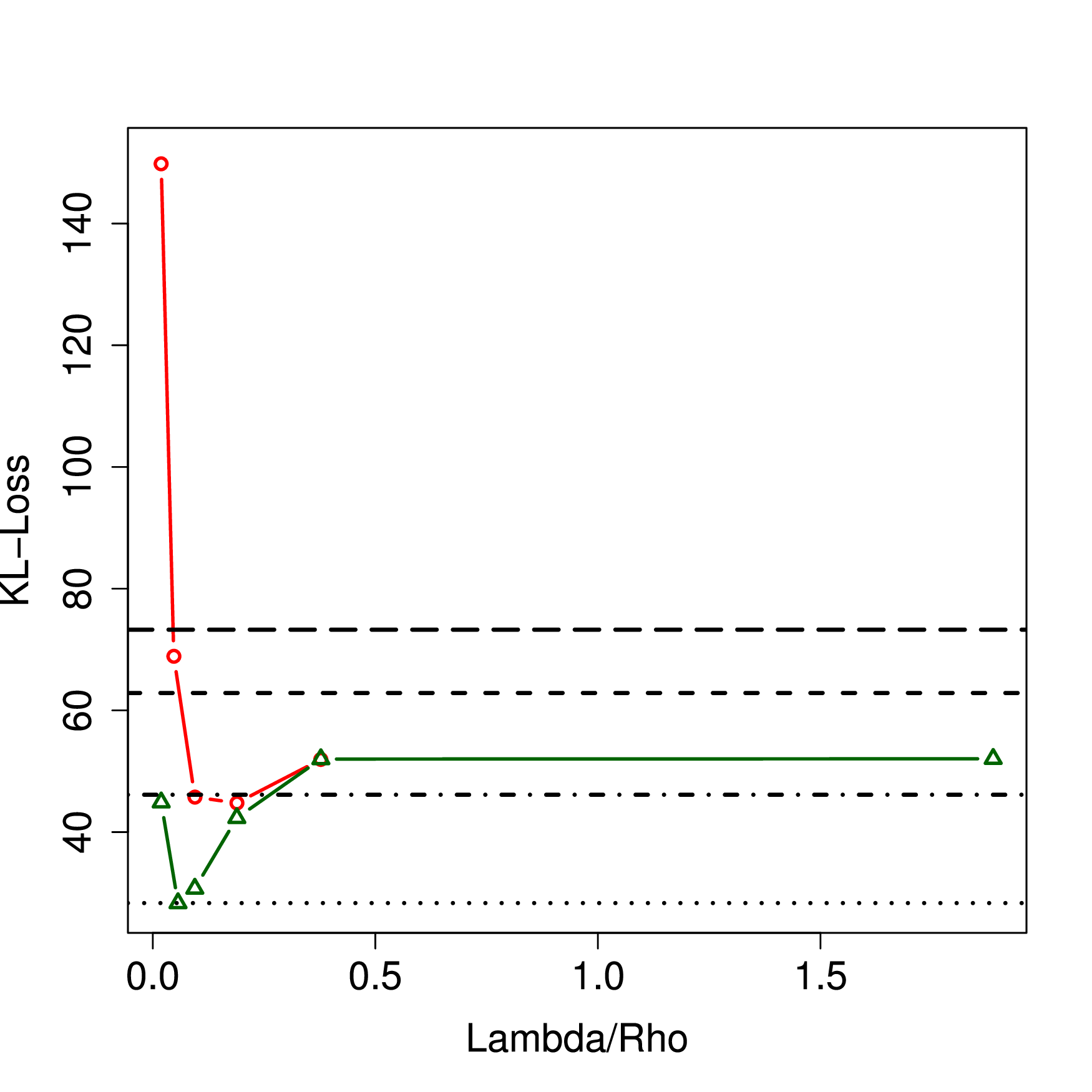}}
}
        \caption{Plots for model $\Theta^{(4)}_0$. The triangles (green)
          stand for the 
          GLasso and the circles (red) for our Gelato method with a
          reasonable value of $\tau$. The horizontal lines
          show the performances of the three techniques for cross-validated
          tuning parameters $\lambda$, $\tau$, $\rho$ and $\eta$. The
          dashed line stands for our Gelato method, the dotted one for the
          GLasso and the dash-dotted line for the Space technique. The
          additional dashed line with the longer dashes stands for the
          Gelato without thresholding. Lambda/Rho stands for $\lambda$ or
          $\rho$, respectively.}
        \label{Random2}
\end{figure} 
For this model we also consider two different matrices, which differ in
sparsity. For the sparser matrix $\Theta^{(3)}_0$ we set the probability
$\pi$ to $0.1$. That is, we have an off diagonal entry in
$\Theta^{(3)}$ of 0.5 with probability $\pi=0.1$ and an entry of 0
with probability $0.9$. In the case of the second matrix $\Theta^{(4)}_0$ we
set $\pi$ to $0.5$ which provides us with a denser concentration matrix. The
simulation results for the two performance measures are given in Figure
\ref{Random1} and \ref{Random2}.

From Figures \ref{Random1} and \ref{Random2} we see that
GLasso performs better than Gelato with respect to
$\|\hat{\Theta}_n-\Theta_0\|_F$ and the Kullback Leibler divergence in both
the sparse and the dense simulation setting. If we consider
$\|\hat{\Sigma}_n-\Sigma_0\|_F$, Gelato seems to keep up with GLasso to some
degree. For the Space method we have a similar situation to the one with
GLasso. The Space method outperforms Gelato for
$\|\hat{\Theta}_n-\Theta_0\|_F$ and $D_\text{KL}(\Sigma_0 \|
\hat{\Sigma}_n)$ but for $\|\hat{\Sigma}_n-\Sigma_0\|_F$, Gelato somewhat
keeps up with Space.

\subsubsection{The exponential decay model}

In this simulation setting we only have one version of the concentration
matrix $\Theta^{(5)}_0$. The entries of $\Theta^{(5)}_0$ are generated by
$\theta^{(5)}_{0,ij}=\exp(-2|i-j|)$. Thus, $\Sigma_0$ is a banded and sparse matrix.

Figure \ref{EXP}  shows the results of the simulation.
We find that all three
methods show equal performances in both the Frobenius norm and the 
Kullback Leibler divergence. This is interesting because even with a sparse
approximation of $\Theta_0$ (with GLasso or Gelato), we obtain competitive
performance for (inverse) covariance estimation.
\begin{figure}
        \centerline{ 
          \subfigure[$n=40$]{\includegraphics[trim=0cm 0cm 0cm 1cm ,clip=TRUE ,scale=0.3]{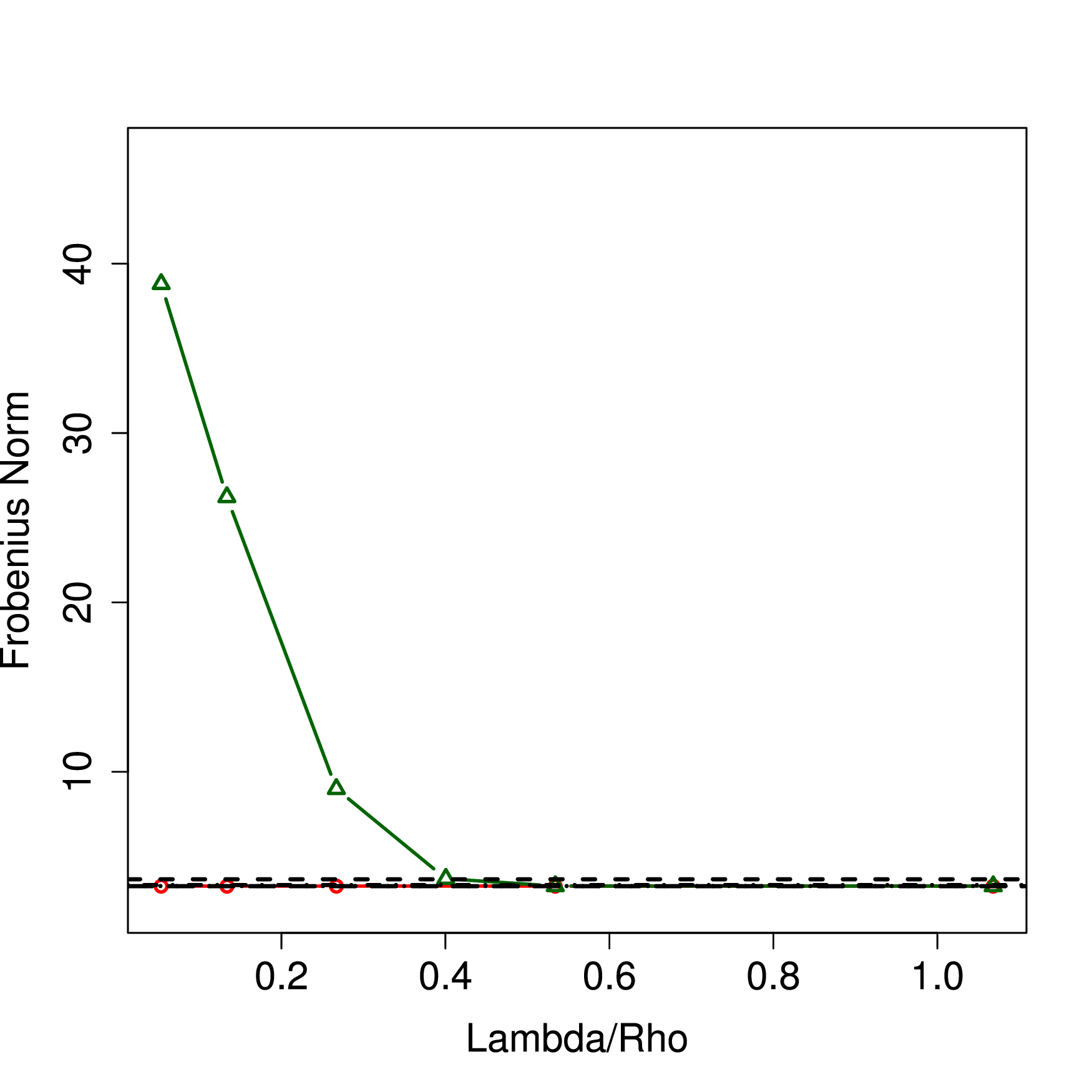}}
          \subfigure[$n=80$]{\includegraphics[trim=0cm 0cm 0cm 1cm ,clip=TRUE ,scale=0.3]{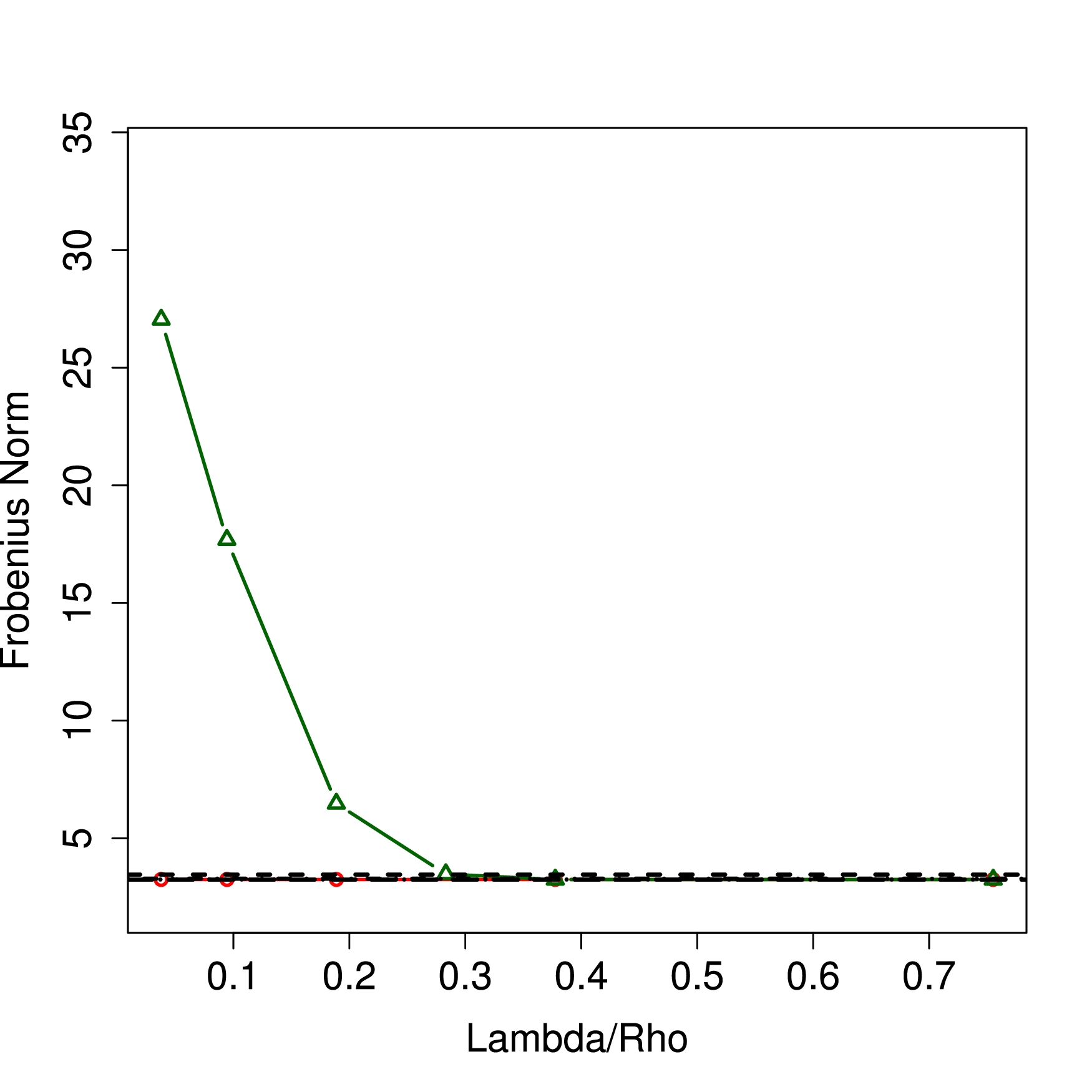}}
          \subfigure[$n=320$]{\includegraphics[trim=0cm 0cm 0cm 1cm ,clip=TRUE ,scale=0.3]{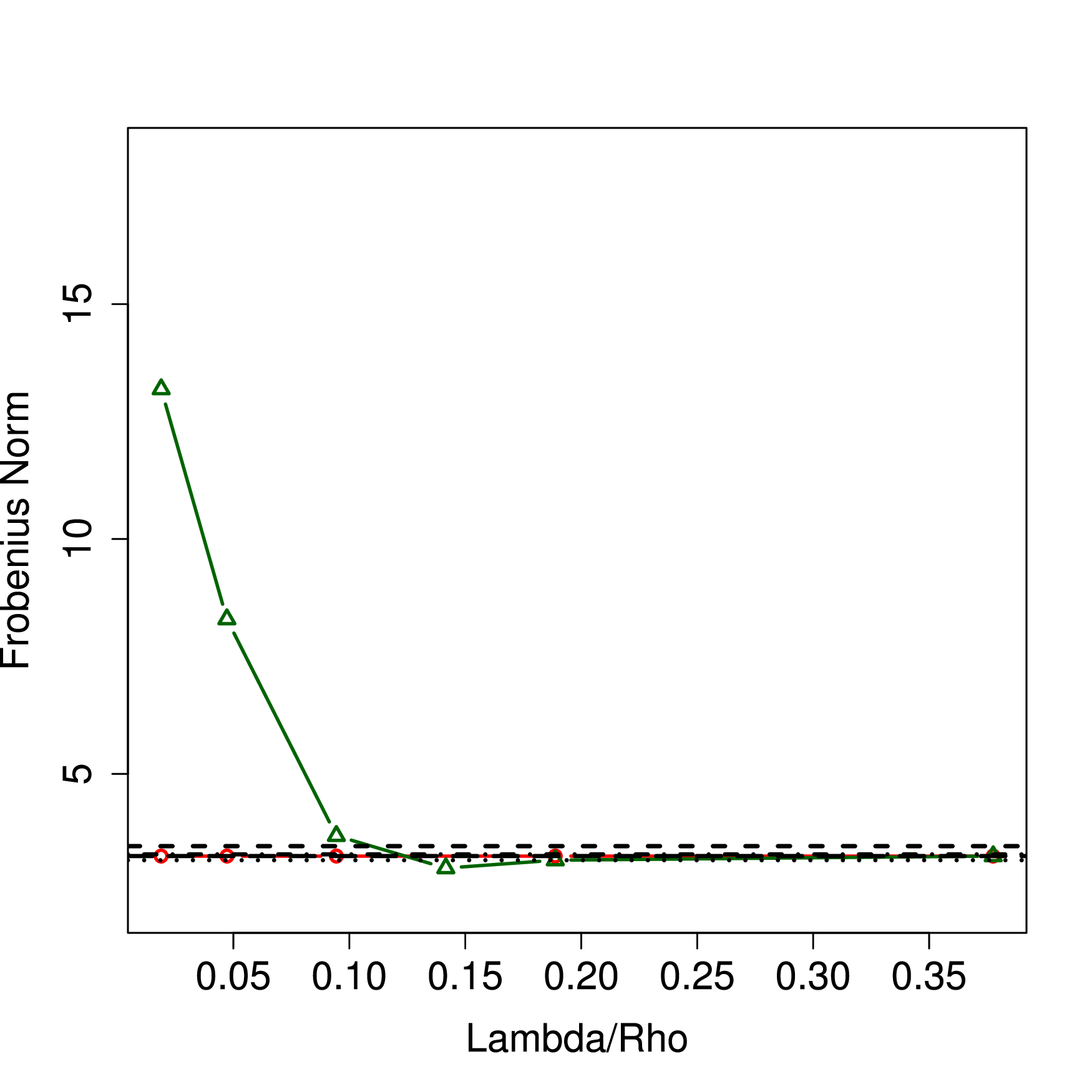}}
}

        \centerline{ 
          \subfigure[$n=40$]{\includegraphics[trim=0cm 0cm 0cm 1cm ,clip=TRUE ,scale=0.3]{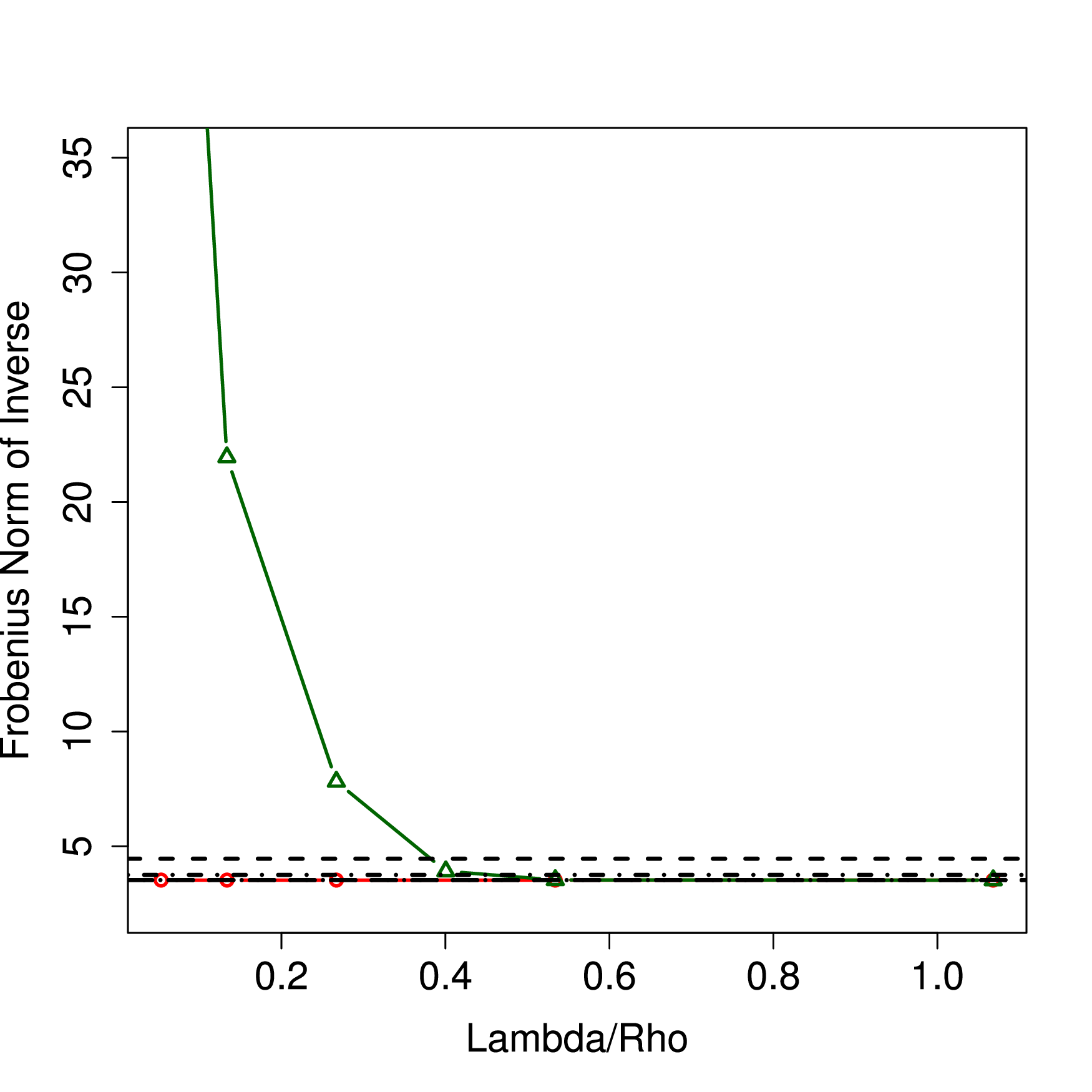}}
          \subfigure[$n=80$]{\includegraphics[trim=0cm 0cm 0cm 1cm ,clip=TRUE ,scale=0.3]{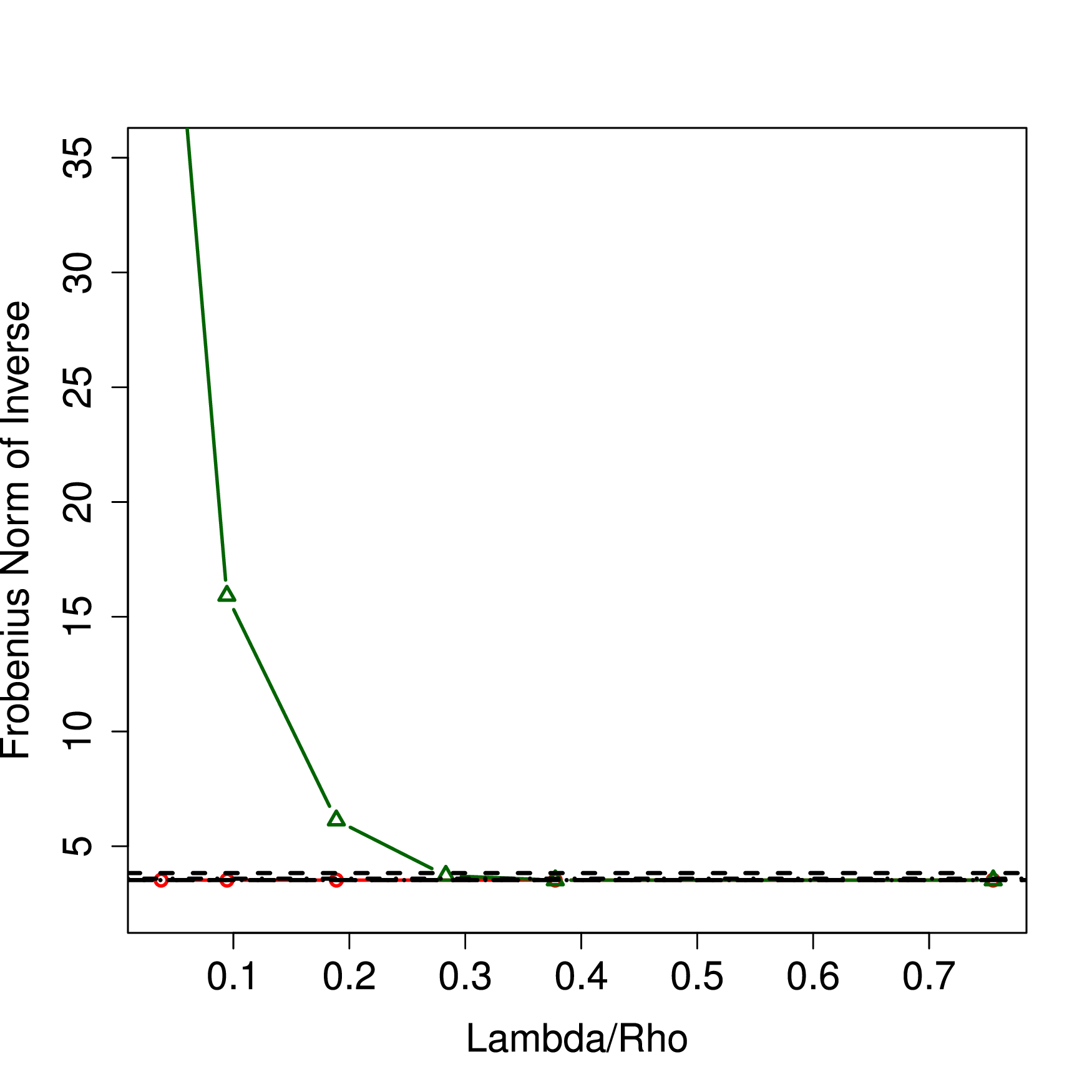}}
          \subfigure[$n=320$]{\includegraphics[trim=0cm 0cm 0cm 1cm ,clip=TRUE ,scale=0.3]{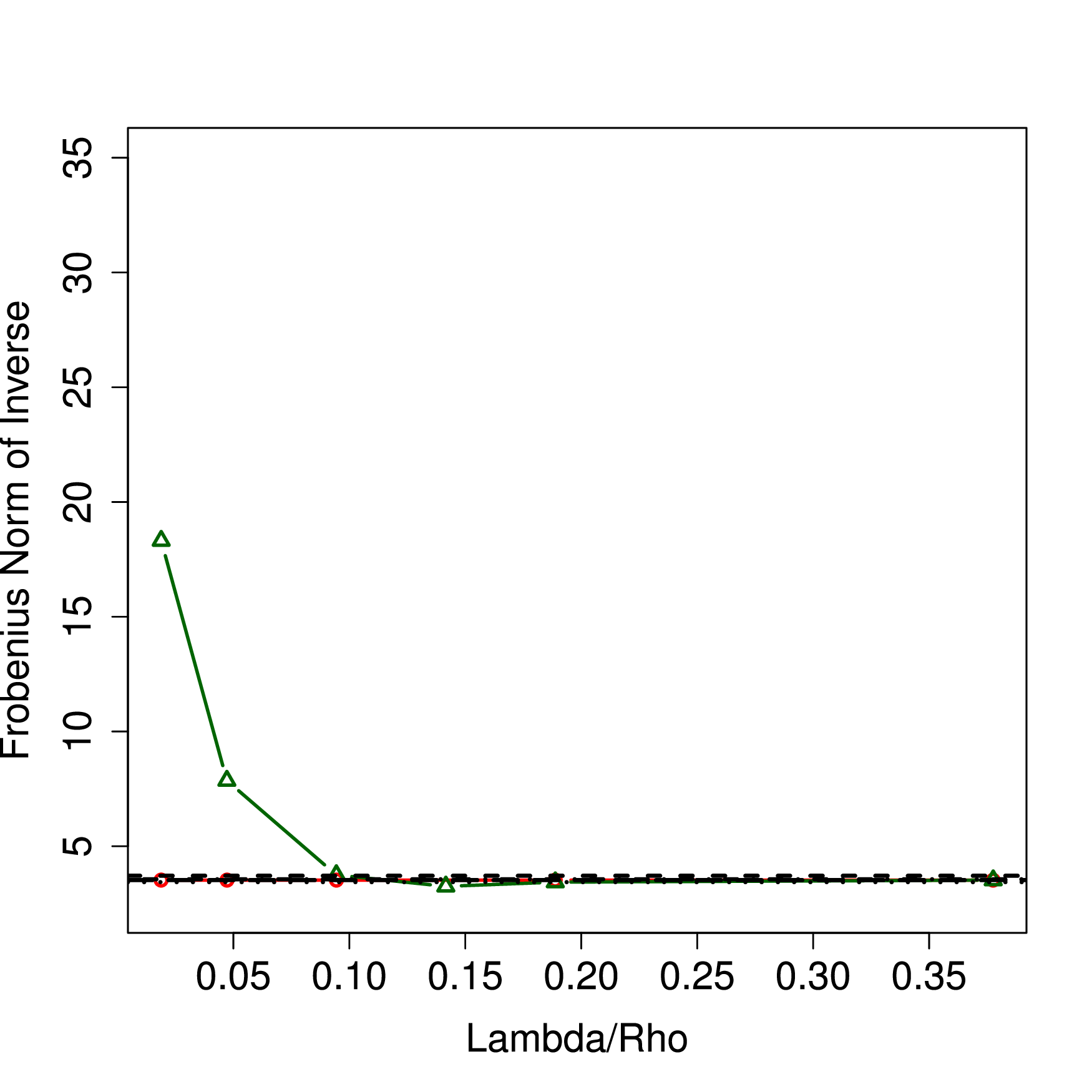}}
}

         \centerline{ 
          \subfigure[$n=40$]{\includegraphics[trim=0cm 0cm 0cm 1cm ,clip=TRUE ,scale=0.3]{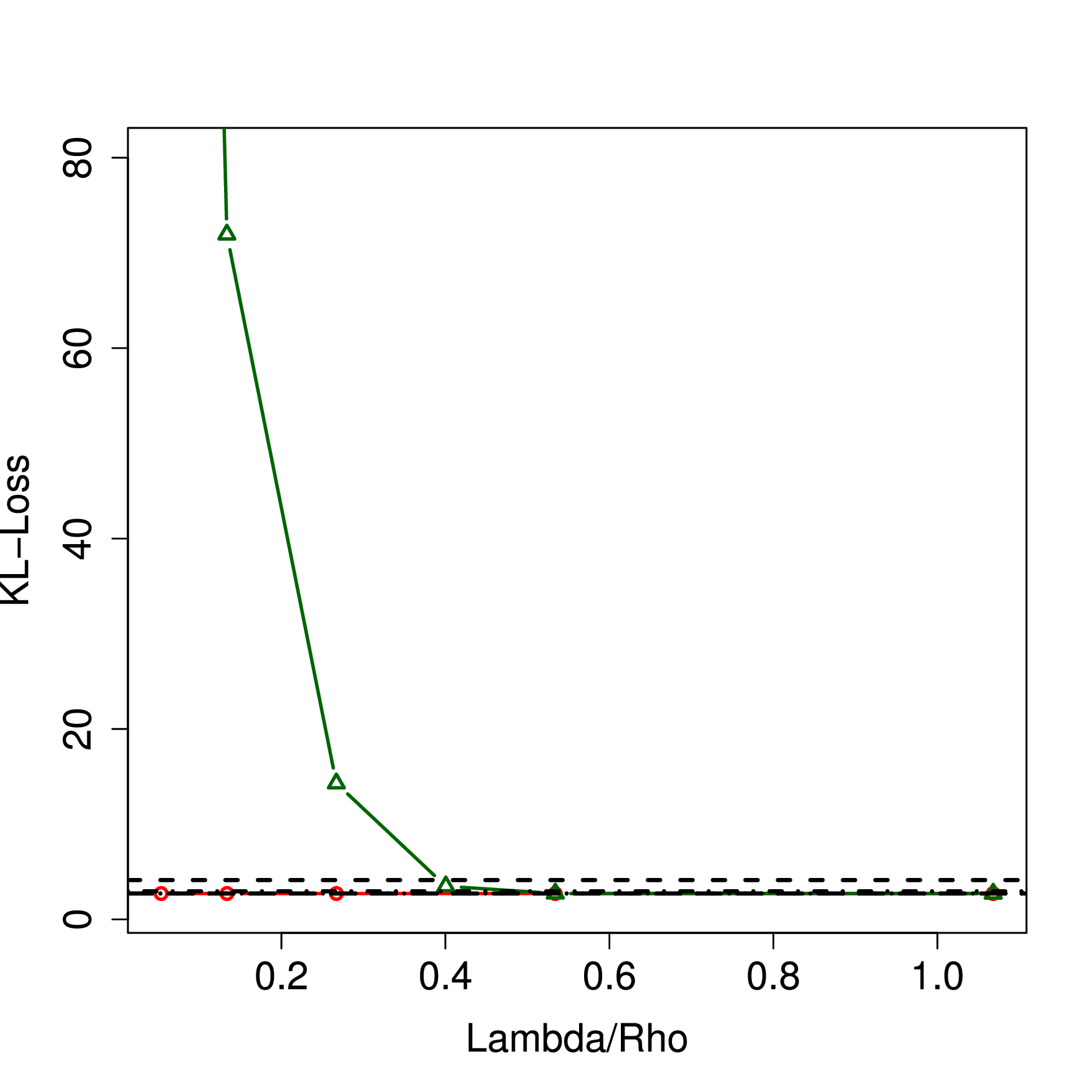}}
          \subfigure[$n=80$]{\includegraphics[trim=0cm 0cm 0cm 1cm ,clip=TRUE ,scale=0.3]{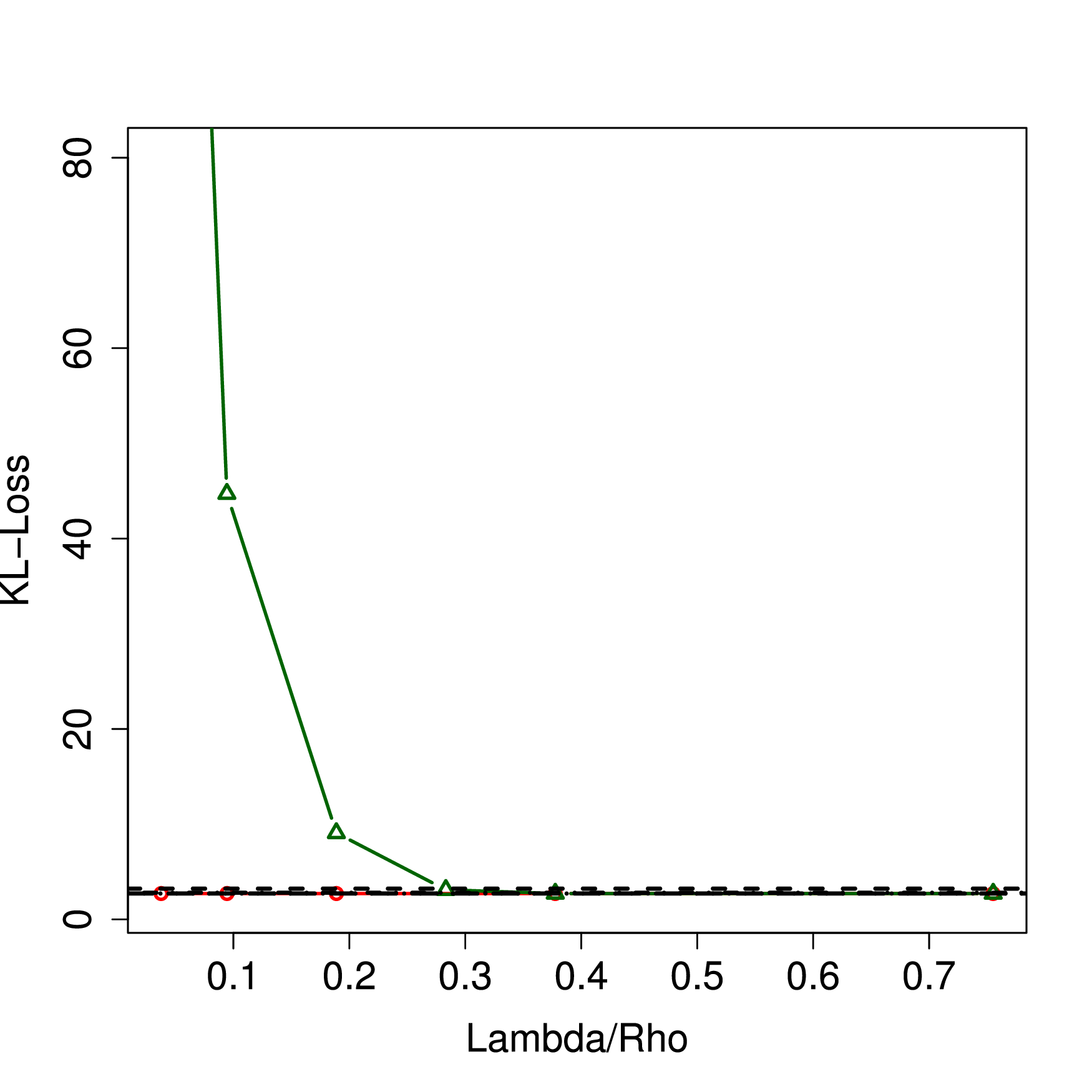}}
          \subfigure[$n=320$]{\includegraphics[trim=0cm 0cm 0cm 1cm ,clip=TRUE ,scale=0.3]{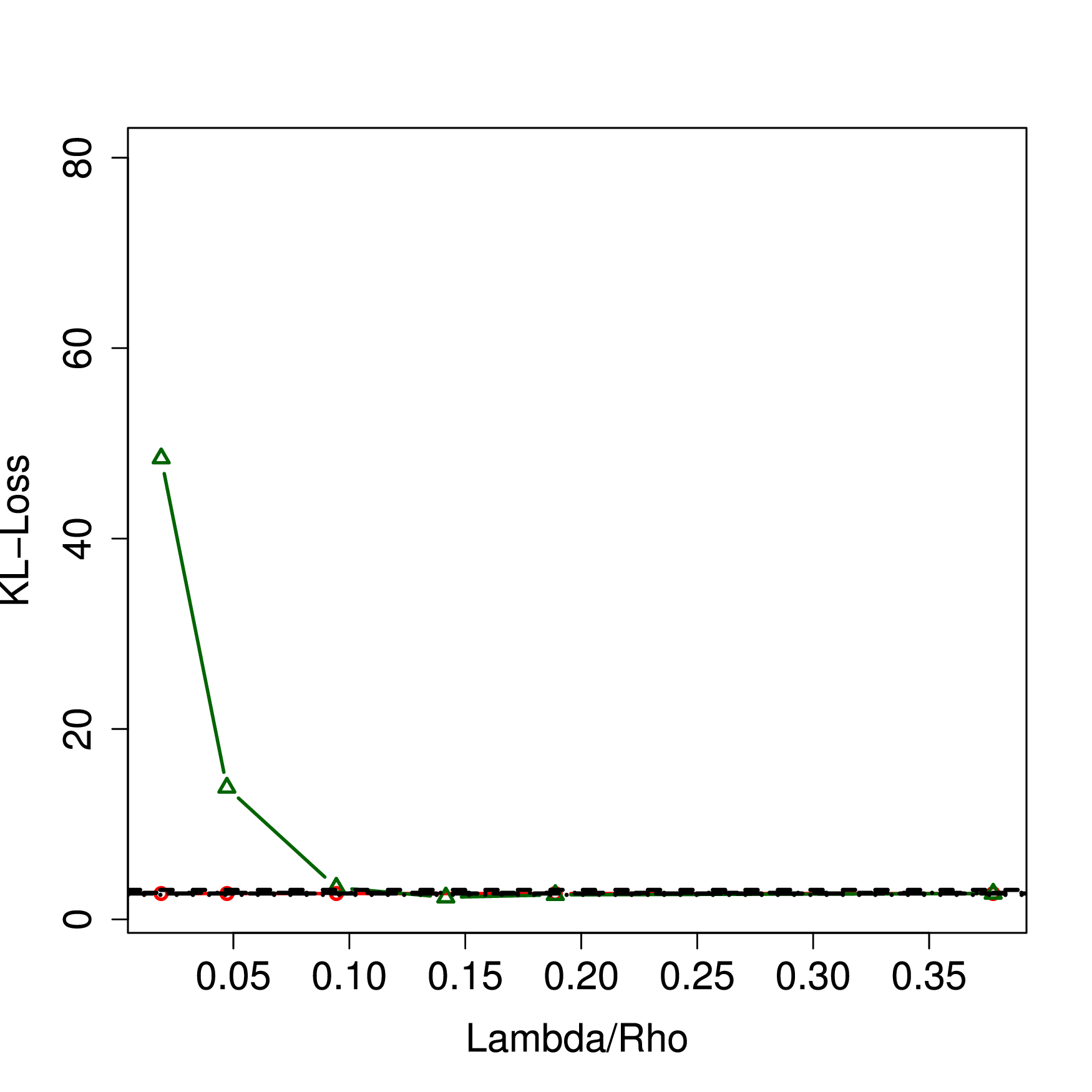}}
}
        \caption{Plots for model $\Theta^{(5)}_0$. The triangles (green)
          stand for the 
          GLasso and the circles (red) for our Gelato method with a
          reasonable value of $\tau$. The horizontal lines
          show the performances of the three techniques for cross-validated
          tuning parameters $\lambda$, $\tau$, $\rho$ and $\eta$. The
          dashed line stands for our Gelato method, the dotted one for the
          GLasso and the dash-dotted line for the Space technique. The
          additional dashed line with the longer dashes stands for the
          Gelato without thresholding. Lambda/Rho stands for $\lambda$ or
          $\rho$, respectively.}
        \label{EXP}
\end{figure}

\subsubsection{Summary}

Overall we can say that the performance of the methods depend on the
model. For the models $\Sigma^{(1)}_{0}$ and $\Sigma^{(2)}_{0}$ the Gelato
method performs best. In case of the models $\Theta^{(3)}_{0}$ and
$\Theta^{(4)}_{0}$, Gelato gets outperformed by GLasso and the Space
method and for the model $\Theta^{(5)}_{0}$ none of the three methods has a
clear advantage. 
In Figures \ref{causal1} to \ref{Random2}, we see the advantage of Gelato with 
thresholding over the one without thresholding,
in particular, for the simulation settings
$\Sigma^{(1)}_{0}$, $\Sigma^{(2)}_{0}$ and $\Theta^{(3)}_{0}$. 
Thus thresholding is a useful feature of Gelato. 
 
\subsection{Application to real data}

\subsubsection{Isoprenoid gene pathway in Arabidobsis thaliana}

In this example we compare the two estimators on the isoprenoid
biosynthesis pathway data given in \cite{AWPB04}. 
Isoprenoids 
play various roles in plant and animal physiological processes and
as intermediates in the biological synthesis of other important molecules.
In plants they
serve numerous biochemical functions in processes such as photosynthesis,
regulation of growth and development.\\  
The data set  consists of $p=39$ isoprenoid genes for which we have $n=118$
gene expression patterns under various experimental conditions. In order to
compare the two techniques we compute the negative log-likelihood via
10-fold cross-validation for different values of $\lambda$, $\tau$ and

$\rho$. 
\begin{figure}
        \centerline{ 
          \subfigure[isoprenoid data]{\includegraphics[scale=0.4]{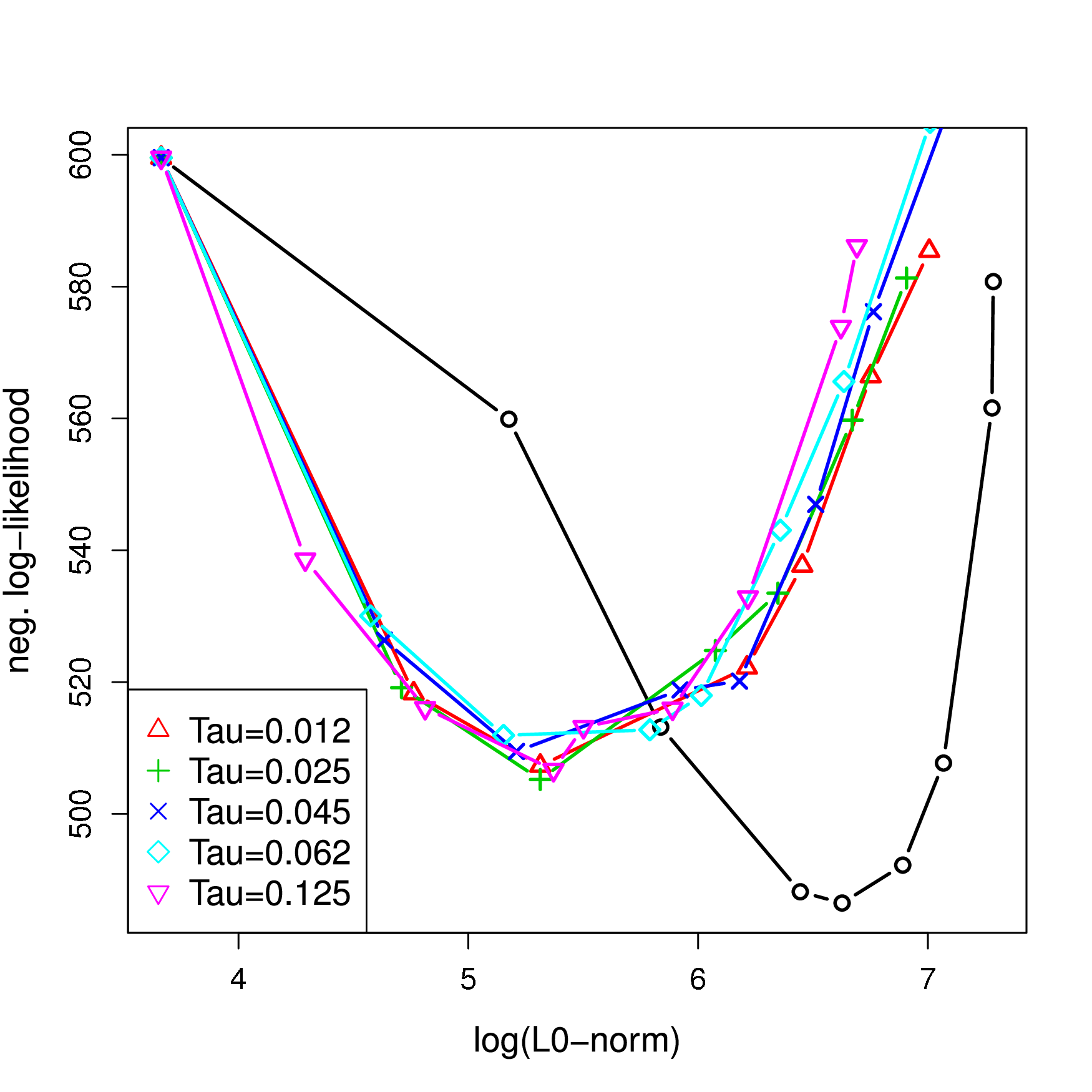}} 
          \subfigure[breast cancer data]{\includegraphics[scale=0.4]{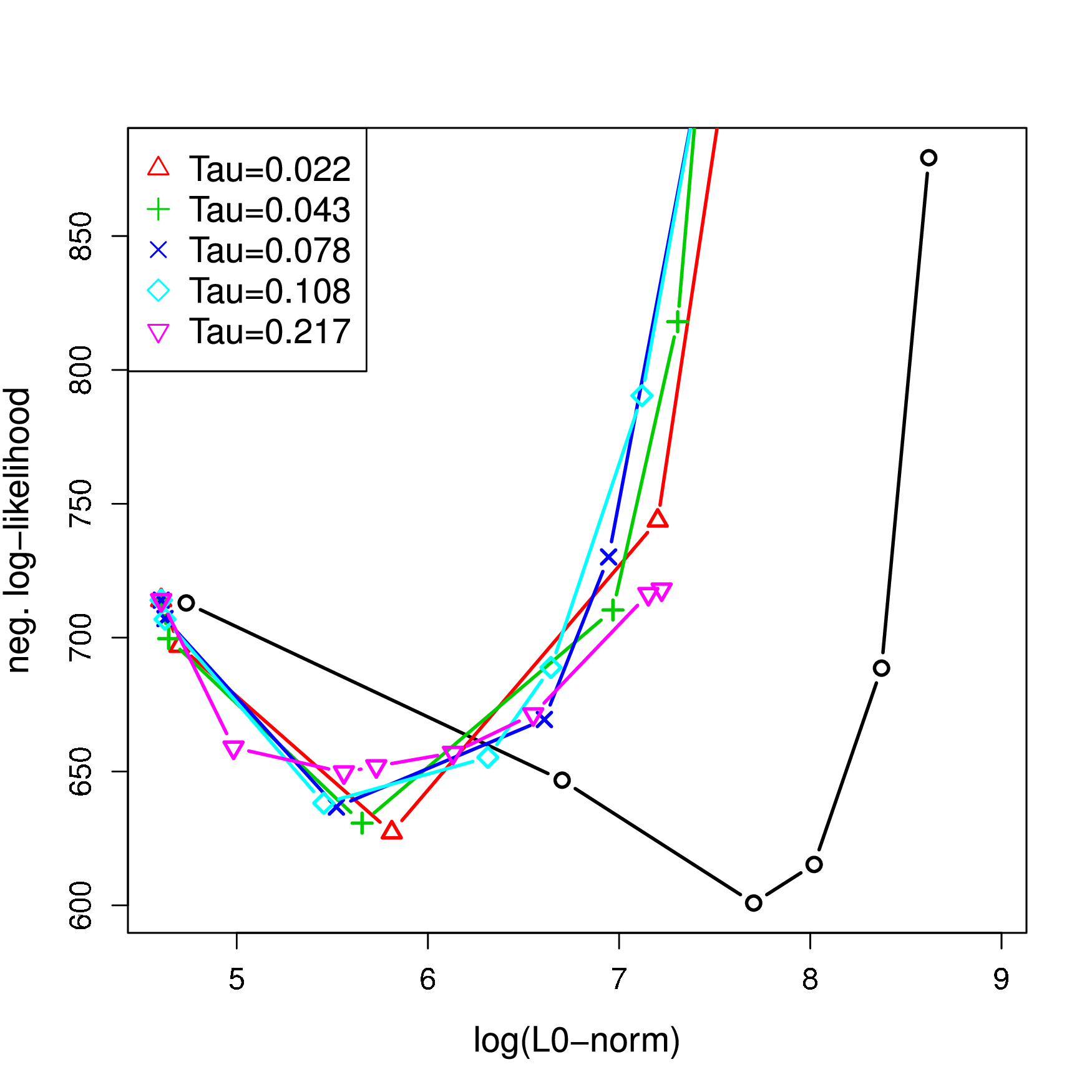}}}
        \caption{
Plots for the isoprenoid data from arabidopsis thaliana (a) and the human
breast cancer data (b). 10-fold cross-validation of negative log-likelihood against the logarithm
of the average number of non-zero entries of the estimated concentration
matrix $\hat{\Theta}_n$. The circles stand for the GLasso and the Gelato is
displayed for various values of $\tau$.
}
\label{willebreast}
\end{figure}
In Figure \ref{willebreast} we plot the cross-validated
negative log-likelihood against the logarithm of the average number of 
non-zero entries (logarithm of the $\ell_0$-norm) of the estimated 
concentration matrix
$\hat{\Theta}_n$. The logarithm of the $\ell_0$-norm reflects the sparsity of
the matrix $\hat{\Theta}_n$ and therefore the figures show the performance of the
estimators for different levels of sparsity. The plots do not allow for a clear 
conclusion. The GLasso performs slightly
better when allowing for a rather dense fit. On the other hand, when
requiring a sparse fit, the Gelato performs better. 

\subsubsection{Clinical status of human breast cancer}

As a second example, we compare the two methods on the breast cancer
dataset from \cite{WEST01}. The tumor samples were selected from the Duke Breast
Cancer SPORE tissue bank. The data consists of $p=7129$ genes with $n=49$
breast tumor samples. For the analysis we use the 100 variables with the
largest sample variance. As before, we compute the negative log-likelihood
via 10-fold cross-validation. Figure \ref{willebreast} shows the results. 
In this real data example the interpretation of the plots is similar as for
the arabidopsis dataset. For dense fits, GLasso is better while Gelato has
an advantage when requiring a sparse fit.

\section{Conclusions}
We propose and analyze the Gelato estimator. Its advantage is that it
automatically yields a positive definite covariance matrix $\hat{\Sigma}_n$,
it enjoys fast convergence rate with respect to the operator and
Frobenius norm of $\hat{\Sigma}_n-\Sigma_0$ and $\hat{\Theta}_n-\Theta_0$. For
estimation of $\Theta_0$, Gelato has in some settings a better rate of
convergence than the GLasso or SCAD type estimators.
From a theoretical point of view, 
our method is clearly aimed for bounding the operator and Frobenius 
norm of the inverse covariance matrix. 
We also derive bounds on the convergence rate for the estimated 
covariance matrix and on the Kullback Leibler divergence.
From a non-asymptotic point of view, our method has a clear advantage 
when the sample size is small relative to the sparsity $S = |E_0|$: 
for a given sample size $n$, we bound the variance in our re-estimation 
stage by excluding edges of $E_0$ with small weights from the 
selected edge set $\hat{E}_n$ while ensuring that we  do not introduce 
too much bias.
Our Gelato method also addresses the bias problem inherent in the GLasso
estimator since we no longer shrink the entries in the covariance matrix
corresponding to the selected edge set $\hat{E}_n$ in
the maximum likelihood estimate, as shown in Section 3.3.

Our experimental results show that Gelato performs better than GLasso or
the Space method for AR-models while the situation is reversed for some
random precision matrix models; in case of an exponential decay model for
the precision matrix, all methods exhibit the same performance. For Gelato,
we demonstrate that thresholding is a valuable feature. We also show
experimentally how one can use cross-validation for choosing the tuning
parameters in regression and thresholding. Deriving theoretical results on
cross-validation is not within the scope of this paper.

\section{Acknowledgments}

Min Xu's research was supported by NSF grant CCF-0625879 and
AFOSR contract FA9550-09-1-0373.
Shuheng Zhou thanks Bin Yu warmly for hosting her visit at UC Berkeley 
while she was conducting this research in Spring 2010.
SZ's research was supported in part by the Swiss National Science 
Foundation (SNF) Grant 20PA21-120050/1.
We thank delightful conversations with Larry Wasserman,
which inspired the name of G{\bf e}lat{\bf o}, and Liza Levina.

\begin{appendix}

\section{Theoretical analysis and proofs}
\label{sec:append-pre}
In this section, we specify some preliminary definitions. 
First, note that when we discuss estimating the parameters 
$\Sigma_0$ and $\Theta_0 = \Sigma_0^{-1}$, we always assume that
\ben
\label{eq::non-singular}
& & \varphi_{\max}(\Sigma_0) := 1/\varphi_{\min}(\Theta_0) \leq 1/{\ul{c}} < \infty \;
\text{ and } 1/{\varphi_{\max}(\Theta_0)} 
=  \varphi_{\min}(\Sigma_0) \geq \ul{k} >0,
 \\
\label{eq::non-singular-2}
& & 
\text{ where we assume} \; \;
\ul{k}, \ul{c}  \leq  1 \; \; \; \text{ so that } \ul{c} \leq 1 \leq 1/{\ul{k}}.
\een
It is clear that these conditions are exactly that of (A2)
in Section~\ref{sec:theory} with
\bens
\label{eq::corres}
M_{\mathrm{upp}} := 1/{\ul{c}} \; \;\text{ and } \; \; M_{\mathrm{low}} := \ul{k},
\eens
where it is clear that for $\Sigma_{0,ii} = 1, i = 1, \ldots, p$, we have 
the sum of $p$ eigenvalues of $\Sigma_0$,
$\sum_{i=1}^p \varphi_{i}(\Sigma_0) = {\rm tr}(\Sigma_0) = p$.
Hence it will make sense to assume that~\eqref{eq::non-singular-2} holds, 
since otherwise,~\eqref{eq::non-singular} implies 
that $\varphi_{\min}(\Sigma_0) = \varphi_{\max}(\Sigma_0) = 1$ which is 
unnecessarily restrictive. 

We now define parameters relating to the key notion of {\em essential sparsity} 
$s_0$ as explored in~\cite{CT07,Zhou09th,Zhou10} for regression.
Denote the number of non-zero non-diagonal entries in each row of 
$\Theta_0$ by $s^i$. Let $s = \max_{i = 1, \ldots, p} s^i$
denote the highest node degree in $G = (V, E_0)$.
Consider nodewise regressions as in~\eqref{eq::regr}, where 
we are given vectors of parameters $\{\beta^i_j, j = 1, \ldots, p, j \not= i\}$
for $i=1, \ldots, p$.
With respect to the degree of node $i$ for each $i$,
we  define $s^i_0 \leq s^i \leq s$ as the smallest integer such that 
\ben
\label{eq::define-s0}
\sum_{j=1, j\not=i}^p \min((\beta^i_j)^2, \lambda^2 \Var(V_i)) 
\leq s^i_0 \lambda^2 \Var(V_i), \; \text{where } \lambda = \sqrt{2\log p/n},
\een
where  $s^i_0$ denotes $s^{i}_{0,n}$ as defined
in~\eqref{eq::cond-regr1}.
\begin{definition}
\textnormal{\bf (Bounded degree parameters.)}
\label{as::residual-var}
The size of the node degree $s^i$ for each node $i$
is upper bounded by an integer $s < p$. 
For $s_0^i$ as in \eqref{eq::define-s0}, define
\ben
\label{eq::omni-s0}
s_0 & := & \max_{i=1, \ldots, p} s^i_0 \leq s \text{ and } \; 
 \; S_{0,n} \; := \; \sum_{i=1, \ldots, p} s^i_0
\een 
where $S_{0,n}$ is exactly the same as
in~\eqref{eq::cond-regr2}, although we now drop subscript $n$ from
$s^{i}_{0,n}$ in order to simplify our notation.
\end{definition}
We now define the following parameters related to $\Sigma_0$.
For an integer $m \leq p$, we define the smallest and largest
 {\bf m-sparse eigenvalues} of $\Sigma_0$ as follows:
\bens
\label{eq::eigen-Sigma}
\sqrt{\rho_{\min}(m)} & := & \min_{t \not=0; m-\text{sparse}} \; \; \frac{\twonorm{\Sigma_0^{1/2} t}}{\twonorm{t}}, \; \;
\sqrt{\rho_{\max}(m)} \; := \; \max_{t \not=0;m-\text{sparse}} 
\; \; \frac{\twonorm{\Sigma_0^{1/2} t}}{\twonorm{t}}.
\eens
\begin{definition}
\textnormal{\bf (Restricted eigenvalue condition $RE(s_0, k_0, \Sigma_0)$).}
\label{def:memory}
For some integer $1\leq s_0 < p$ and a positive number $k_0$, 
the following condition holds for all $\upsilon \not=0$,
\beq
\label{eq::admissible-random}
\inv{K(s_0, k_0, \Sigma_0)} := \min_{\stackrel{J \subseteq \{1, \ldots,
    p\},}{|J| \leq s_0}} 
\min_{\norm{\upsilon_{J^c}}_1 \leq k_0 \norm{\upsilon_{J}}_1} 
\; \;  \frac{\norm{\Sigma_0^{1/2} \upsilon}_2}{\norm{\upsilon_{J}}_2} > 0,
\eeq
where $\upsilon_{J}$ represents the subvector of $\upsilon \in \R^p$ 
confined to a subset $J$ of $\{1, \ldots, p\}$.
\end{definition}
When $s_0$ and $k_0$ become smaller, this condition
is easier to satisfy.
When we only aim to estimate the graphical structure  $E_0$ itself,
the global conditions~\eqref{eq::non-singular} need not hold in general.
Hence up till Section~\ref{sec:append-frob-missing}, we only need to 
assume that $\Sigma_0$ satisfies~\eqref{eq::admissible-random} for $s_0$ as 
in~\eqref{eq::define-s0}, and the sparse eigenvalue $\rho_{\min}(s) >0$.
In order of estimate the covariance matrix $\Sigma_0$, we do assume 
that~\eqref{eq::non-singular} holds, which guarantees that the $RE$ 
condition always holds on $\Sigma_0$, and $\rho_{\max}(m),
\rho_{\min}(m)$ are  upper and lower bounded by some constants for all $m \leq p$.
We continue to adopt parameters such as $K$, $\rho_{\max}(s)$,
and $\rho_{\max}(3s_0)$ for the purpose of defining constants that are 
reasonable tight under condition~\eqref{eq::non-singular}. 
In general, one can think of
$$\rho_{\max}(\max(3s_0,s)) \ll 1/\ul{c} < \infty \; \; \text{and} \; \; 
K^2(s_0, k_0, \Sigma_0) \ll 1/{\ul{k}} < \infty,$$
for $\ul{c}, \ul{k}$ as in~\eqref{eq::non-singular} when $s_0$ is small.

Roughly speaking, for two variables $X_i, X_j$ as in~\eqref{eq::rand-des} 
such that their corresponding entry in $\Theta_0 =(\theta_{0, ij})$
satisfies: $\theta_{0,ij} < \lambda \sqrt{\theta_{0,ii}}$,
where $\lambda = \sqrt{2 \log (p) /n}$, we can not  guarantee that 
$(i, j) \in \hat{E}_n$ when we aim to keep 
$\asymp s_0^i$ edges for node $i, i=1, \ldots, p$.
For a given $\Theta_0$, as the sample size $n$ increases, 
we are able to select edges with smaller coefficient $\theta_{0,ij}$.
In fact it holds that
\ben
\label{eq::tail-s0}
|\theta_{0,ij}| < \lambda \sqrt{\theta_{0,ii}} \text{ which is equivalent to }  
|\beta^i_{j}| < \lambda \sigma_{V_i}, \; \text{ for all } \;
j \geq s^i_0 + 1 + \mathbb{I}_{i \leq s^i_0 + 1},
\een
where $\mathbb{I}_{\{\cdot\}}$ is the indicator function,
if we order the regression coefficients as follows:
\bens
|\beta^i_1| \geq |\beta^i_2| ...\geq |\beta^i_{i-1}| \geq |\beta^i_{i+1}|.... \geq |\beta^i_{p}|,
\eens
in view of~\eqref{eq::regr},
which is the same as if we order for row $i$ of $\Theta_0$,
\ben
\label{eq::beta-order}
|\theta_{0, i1} | \geq |\theta_{0, i,2}| ...\geq |\theta_{0, i, i-1}| \geq |\theta_{0, i,i+1}|.... \geq |\theta_{0, i,p}|.
\een
This has been show in~\citep{CT07}; See also~\cite{Zhou10}.
\silent{
For more detail, we have  by definition of $s^i_0$, the fact 
$0 \leq s^i_0 \leq s^i$, 
we have for $s < p$, and $V := V_i$,
\ben
\nonumber
(s^i_0 -1) \lambda^2 \theta_{0,ii}  & \leq &
\sum_{j=1, \not=i}^p \min\left(\theta_{0,ij}^2, \lambda^2 \theta_{0,ii} \right) 
\; \leq \; s^i_0 \lambda^2 \theta_{0,ii} \text{ where }  \\
\label{eq::s0-lower-bound}
 (s^i_0 + 1) \min\left(\theta^2_{0,i,t_i} \lambda^2 \theta_{0,ii} \right) & \leq &
\sum_{j=1 \not=i}^{t_i} \min\left(\theta^2_{0,ij}, \lambda^2 \theta_{0,ii} \right)
\; \leq \;  s^i_0 \lambda^2 \theta_{0,ii}
\een
where $t_i = s^i_0 + 1 + \mathbb{I}_{i \leq s^i_0 + 1}$.
In the last equation, it is understood that we sum over the first $s^i_0 +1$ 
items according to the ordering in~\eqref{eq::beta-order}.
Now ~\eqref{eq::s0-lower-bound} implies that
\bens
\min(\theta^2_{0,i, {s^i_0 +1}} \sigma^2_V, \lambda^2) < \lambda^2 \; 
\text{ which implies that } \; \; 
|\theta_{0, i, {s^i_0 +1}}| < \lambda \sqrt{\theta_{0,ii}}
\eens
where $\theta_{0,i, {s^i_0 +1}}$ is understood to be $\theta_{0,i, {s^i_0 + 2}}$ 
if $i \leq s^i_0 + 1$. 
Hence by~\eqref{eq::beta-order},~\eqref{eq::tail-s0} holds.}
\silent{
XXXX need to change to $_0$ everywhere.
\ben
\label{eq::tail-s0}
|\theta_{ij} | < \lambda \sqrt{\theta_{ii}} \; \; 
\text{ and } \; |\beta^i_{j} | < \lambda \sigma_{V_i} \; \; 
\text{ for all } \;
j  \geq s^i_0 + 1 + \mathbb{I}_{i \leq s^i_0 + 1}.
\een
}

\subsection{Concentration bounds for the random design}
For  the random design $X$ generated by~(\ref{data}), let $\Sigma_{0,ii} =1$ for all $i$.
In preparation for showing the oracle results of 
Lasso in Theorem~\ref{thm:RE-oracle}, we first state  some
concentration bounds on $X$. Define for some $0< \theta < 1$
\begin{eqnarray}
\label{eq::good-random-design-diag}
\F(\theta) := \left\{X: \forall j = 1, \ldots, p,\;
1 - \theta \leq {\twonorm{X_j}}/{\sqrt{n} } \leq 1 + \theta \right\},
\end{eqnarray}
where $X_1, \ldots, X_p$ are the column vectors of the $n \times p$ design
matrix $X$. 
When all columns of 
$X$ have an Euclidean norm close to $\sqrt{n}$ as in \eqref{eq::good-random-design-diag} ,
it makes sense to discuss the RE condition in the form of~\eqref{eq::admissible}
as formulated in ~\citep{BRT09}. For the integer $1\leq s_0 < p$ as 
defined in~\eqref{eq::define-s0} and a positive number $k_0$,  
$RE(s_0, k_0, X)$ requires that the following holds for all 
$\upsilon \not = 0$,
\beq
\label{eq::admissible}
\inv{K(s_0, k_0, X)} \stackrel{\triangle}{=}
\min_{\stackrel{J \subset \{1, \ldots, p\},}{|J| \leq s_0}}
\min_{\norm{\upsilon_{J^c}}_1 \leq k_0 \norm{\upsilon_{J}}_1}
\; \;  \frac{\norm{X \upsilon}_2}{\sqrt{n}\norm{\upsilon_{J}}_2} > 0.
\eeq
The parameter $k_0 > 0$ is understood to be the 
same quantity throughout our discussion.
The following event $\RE$ provides an upper bound on 
$K(s_0, k_0, X)$ for a given $k_0 >0$
when $\Sigma_0$ satisfies $RE(s_0, k_0, \Sigma_0)$ condition:
\begin{eqnarray}
\label{eq::good-random-design-RE}
\RE(\theta) := \left\{X: RE(s_0, k_0, X) \; \text{ holds with } \; 
0 < K(s_0, k_0, X) \leq \frac{K(s_0, k_0, \Sigma_0)}{1-\theta} \right\}.
\end{eqnarray}
For some integer $m \leq p$, we define
the smallest and largest $m$-sparse eigenvalues of $X$ to be
\ben
\label{eq::eigen-admissible-s}
\Lambda_{\min}(m)  & := &
\min_{\upsilon \not= 0; m-\text{sparse}} \; \;
{\twonorm{X \upsilon}^2}/{(n \twonorm{\upsilon}^2)} \; \; \text{ and }  \\
\label{eq::eigen-max}
\Lambda_{\max}(m)  & := &
\max_{\upsilon \not=0; m-\text{sparse}} \;
{\twonorm{X \upsilon}^2}/{(n \twonorm{\upsilon}^2)},
\een
upon which we define the following event: 
\begin{eqnarray}
\label{eq::cond-M}
\M(\theta) := \left\{X:~\eqref{eq::phi-max-bound} \text{ holds } 
\forall m \leq \max(s, (k_0 + 1)s_0) \right\}, \text{  for which } \\
\label{eq::phi-max-bound}
0 < (1-\theta) \sqrt{\rho_{\min}(m)} \leq \sqrt{\Lambda_{\min}(m)} \leq
\sqrt{\Lambda_{\max}(m)} \leq (1+\theta) \sqrt{\rho_{\max}(m)}.
\end{eqnarray}
Formally, we consider the set of random designs that 
satisfy all events as defined, for some $0< \theta < 1$.
Theorem~\ref{thm:corn} shows concentration results that we 
need for the present work, which follows from Theorem~1.6
in~\cite{Zhou10sub} and Theorem~3.2 in~\cite{RZ11}.
\begin{theorem}
\label{thm:corn}
Let $0< \theta < 1$. Let $\rho_{\min}(s) >0$, where 
$s < p$ is the maximum node-degree in $G$. 
Suppose $RE(s_0, 4, \Sigma_0)$ holds for $s_0$ as in~\eqref{eq::omni-s0},
where $\Sigma_{0,ii} = 1$ for $i=1,\ldots, p$.
Let $f(s_0) = \min\left(4 s_0 \rho_{\max}(s_0) \log (5ep/s_0), s_0 \log p \right)$.
Let $c, \alpha, c' >0$ be some  absolute constants.
Then, for a random design $X$ as generated by~(\ref{data}), we have
\ben
\label{eq::X-prob}
\prob{\X} := \prob{\RE(\theta) \cap \F(\theta) \cap \M(\theta)}
 \geq 1- 3\exp(-c \theta^2 n/\alpha^4)
\een
as long as the sample size satisfies
\ben
\label{eq::sample-size}
n > \max\left\{ 
 \frac{9c' \alpha^4}{\theta^2} 
\max\left(36  K^2(s_0, 4, \Sigma_0) f(s_0), \log p \right),
\frac{80 s \alpha^4}{\theta^2} \log  \left(\frac{5ep}{s\theta} \right) \right\}.
\een
\end{theorem}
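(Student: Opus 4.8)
The plan is to verify the three events $\F(\theta)$, $\M(\theta)$ and $\RE(\theta)$ one at a time and then take a union bound, importing the quantitative concentration estimates of \cite{Zhou10sub,RZ11}. Throughout, the rows of $X$ are i.i.d. $\mathcal{N}_p(0,\Sigma_0)$, so every linear functional of a row is subgaussian with an absolute constant $\alpha$; this is the source of the $\alpha^4$ appearing in the tail and in the sample-size bound, and I would fix $\alpha$ at the outset.

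First I would handle the column-norm event $\F(\theta)$ directly. Since $\Sigma_{0,ii}=1$, the coordinates of column $X_j$ are i.i.d.\ standard normal, so $\twonorm{X_j}^2 \sim \chi^2_n$; the Laurent--Massart tail bound gives $\prob{\,|\twonorm{X_j}^2/n - 1| > \theta\,} \leq 2\exp(-c''\theta^2 n)$ for $0<\theta<1$, and a union bound over the $p$ columns yields failure probability $\leq 2p\exp(-c''\theta^2 n)$. This is absorbed into the stated bound once $n \gtrsim \log p / \theta^2$, which is exactly the $\log p$ branch of the $\max$ in \eqref{eq::sample-size}.

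Next, for the sparse-eigenvalue event $\M(\theta)$ I would use a covering argument over supports. For each fixed $S$ with $|S| = m \leq \max(s,(k_0+1)s_0)$, the submatrix $X_S/\sqrt{n}$ is Gaussian with population covariance $(\Sigma_0)_{S,S}$, whose extreme eigenvalues lie between $\rho_{\min}(m)$ and $\rho_{\max}(m)$; a Davidson--Szarek type estimate places its singular values within a $(1\pm\theta)$ factor of $\sqrt{\rho_{\min}(m)}$ and $\sqrt{\rho_{\max}(m)}$ except on an event of probability $\exp(-c\theta^2 n/\alpha^4)$, and a union bound over the $\binom{p}{m} \leq (ep/m)^m$ supports is controlled once $n \gtrsim m\log(ep/m)/\theta^2$. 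Taking $m$ up to $\max(s,(k_0+1)s_0)$ reproduces the $\tfrac{80 s\alpha^4}{\theta^2}\log(5ep/(s\theta))$ requirement and establishes \eqref{eq::phi-max-bound}; such sparse-eigenvalue concentration for anisotropic Gaussian designs is exactly what \cite{Zhou10sub,RZ11} supply.

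The main obstacle is the restricted-eigenvalue transfer $\RE(\theta)$, because the cone $\{\upsilon : \norm{\upsilon_{J^c}}_1 \leq k_0\norm{\upsilon_J}_1\}$ is not a bounded set of sparse vectors and no direct net is available. Here I would invoke the transfer results of \cite{Zhou10sub,RZ11} (Theorem~1.6 and Theorem~3.2 respectively): the essential mechanism is that on this cone every vector is \emph{approximately} $s_0$-sparse, so that $\twonorm{X\upsilon}^2/n$ can be compared to $\twonorm{\Sigma_0^{1/2}\upsilon}^2$ uniformly over the cone using only the block sparse-eigenvalue control already secured in $\M(\theta)$. Conditioned on $\M(\theta)$, this deterministic reduction yields $K(s_0,k_0,X) \leq K(s_0,k_0,\Sigma_0)/(1-\theta)$ provided $n \gtrsim K^2(s_0,4,\Sigma_0)\, f(s_0)/\theta^2$, which is the $36 K^2(s_0,4,\Sigma_0) f(s_0)$ branch of \eqref{eq::sample-size}. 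Finally, intersecting $\F(\theta)$, $\M(\theta)$ and $\RE(\theta)$ and summing their failure probabilities produces the factor $3$ and the bound $\prob{\X} \geq 1 - 3\exp(-c\theta^2 n/\alpha^4)$, with \eqref{eq::sample-size} being precisely the maximum of the three individual sample-size demands.
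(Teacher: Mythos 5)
Your proposal is correct and follows essentially the same route as the paper: the paper gives no self-contained argument but derives Theorem~\ref{thm:corn} directly from Theorem~1.6 of \cite{Zhou10sub} and Theorem~3.2 of \cite{RZ11}, which is exactly where you place the weight for the restricted-eigenvalue transfer and the sparse-eigenvalue control, with the $\chi^2$ bound for $\F(\theta)$ and the union bound over the three events being the same routine glue. The correspondence between the three failure probabilities and the factor $3$, and between the three sample-size demands and the $\max$ in \eqref{eq::sample-size}, matches the paper's intent (including its remark that the second term in \eqref{eq::sample-size} is needed only for the order-$s$ sparse eigenvalues).
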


\begin{remark}
We note that the constraint $s< p/2$, which has appeared in~\citet[Theorem 1.6]{Zhou10sub} 
is unnecessary.
Under a stronger $RE$ condition on $\Sigma_0$,
a tighter bound on the sample size $n$, which is independent of
$\rho_{\max}(s_0)$,  is given in~\cite{RZ11} in order to guarantee $\RE(\theta)$.
We do not pursue this optimization here as we assume that 
$\rho_{\max}(s_0)$ is a bounded constant throughout this paper.
We emphasize that we only need the first term in
~\eqref{eq::sample-size} in order to obtain $\F(\theta)$ and $\RE(\theta)$;
The second term is used to bound sparse eigenvalues of order $s$.
\end{remark}

\subsection{Definitions of other various events}
Under (A1) as in Section~\ref{sec:theory},
excluding event $\X^c$ as bounded in Theorem~\ref{thm:corn}
and events $\C_a, \X_0$ to be defined in this subsection, 
we can then proceed to treat $X \in \X \cap \C_a$ as a 
deterministic design in regression and thresholding, for which 
$\RE(\theta) \cap \M(\theta) \cap \F(\theta)$ holds with $\C_a$, 
We then make use of event $\X_0$ in the MLE refitting stage for 
bounding the Frobenius norm. We now define two types of correlations
events  $\C_a$ and $\X_0$.

{ \bf Correlation bounds on $X_j$ and $V_i$.}
In this section, we first bound the maximum correlation between 
pairs of random vectors $(V_i, X_j)$, for all $i, j$ where $i \not= j$,
each of which corresponds to a pair of variables $(V_i, X_j)$ as defined 
in~\eqref{eq::regr} and~\eqref{eq::regressions}. Here we use 
$X_j$ and $V_i$, for all $i, j$, 
to denote both random vectors and their corresponding variables.

Let us define $\sigma_{V_j}  := \sqrt{\Var(V_j)} \geq v > 0$ as a shorthand.
Let $V'_j := V_j/\sigma_{V_j}, j = 1, \ldots, p$ be a standard normal random
variable.
Let us now define for all $j, k \not=j$,
$$Z_{jk} = \inv{n} \ip{V'_j, X_k} = \inv{n} \sum_{i=1}^n v'_{j,i} x_{k,i},$$ 
where for all $i = 1, \ldots, n$
$v'_{j,i}, x_{k,i}, \forall j, k \not=j$ are independent standard
normal random variables. For some $a \geq 6$, let event
\begin{eqnarray}
\label{eq::good-random-design}
\C_a := \left\{\max_{j, k} |Z_{jk}| <  \sqrt{1 + a} \sqrt{(2\log p)/n} \text{ where } a \geq 6 \right\}.
\end{eqnarray} 

{\noindent {\bf Bounds on pairwise correlations in columns of $X$.}}
Let $\Sigma_0 := (\sigma_{0,ij})$, where we denote 
$\sigma_{0,ii} := \sigma_i^2$. Denote by $\Delta = {X^T X}/{n} - \Sigma_0$.
Consider for some constant $C_3> 4 \sqrt{5/3}$,
\begin{eqnarray}
\label{eq::good-random-design-orig}
\X_0 :=  
\left\{\max_{j, k} |\Delta_{jk}| < C_3 \sigma_i \sigma_j \sqrt{{\log \max\{p,n\}}/{n}} < 1/2 \right\}.
\end{eqnarray} 
We first state Lemma~\ref{lemma:correlation}, which is used for bounding
a type of correlation events across all regressions; see proof of 
Theorem~\ref{thm:RE-oracle-all}. It is also clear that 
event $\C_a$ is equivalent to the event to be defined in~\eqref{eq::cross-cor}.
 Lemma~\ref{lemma:correlation} also justifies the choice of 
$\lambda_n$ in nodewise regressions (cf. Theorem~\ref{thm:RE-oracle-all}).
We then bound event $\X_0$ in Lemma~\ref{lemma:gaussian-covars}.
Both proofs appear in Section~\ref{sec:append-proof-cov}. 
\begin{lemma}
\label{lemma:correlation}
Suppose that $p < e^{n/4 C_2^2}$. Then with probability
at least $1- 1/p^2$, we have
\ben
\label{eq::cross-cor}
\forall j \not= k, \; \; 
\abs{\inv{n} \ip{V_j, X_k}} \leq  \sigma_{V_j} \sqrt{1 + a} \sqrt{(2\log p)/n}
\een
where $\sigma_{V_j} = \sqrt{\Var(V_j)}$ and $a \geq 6$.
Hence $\prob{\C_a} \geq 1 - 1/p^2.$
\end{lemma}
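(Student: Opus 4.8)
The plan is to reduce the claim to a concentration inequality for a sum of independent products of standard Gaussians, and then to apply a sub-exponential Chernoff bound followed by a union bound over pairs.

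First I would record the independence structure that comes from the model. By~\eqref{eq::regressions}, $V_j$ is independent of $\{X_k : k \neq j\}$, and since the pair is jointly Gaussian this means that the standardized error $V'_j = V_j/\sigma_{V_j} \sim \mathcal{N}(0,I_n)$ is independent of the column $X_k$ for every $k \neq j$; moreover the normalization $\Sigma_{0,kk}=1$ forces each coordinate $x_{k,i}\sim\mathcal{N}(0,1)$. Hence, for fixed $j\neq k$, the quantity $n Z_{jk} = \ip{V'_j, X_k} = \sum_{i=1}^n v'_{j,i} x_{k,i}$ is a sum of $n$ i.i.d. products $Y_i := v'_{j,i}x_{k,i}$ of two independent standard normals, so $\E Y_i = 0$. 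Since~\eqref{eq::cross-cor} is exactly the event in~\eqref{eq::good-random-design} after multiplying by $\sigma_{V_j}$, it suffices to prove that $\max_{j\neq k}|Z_{jk}| < \sqrt{1+a}\,\sqrt{(2\log p)/n}$ on an event of probability at least $1-1/p^2$.

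Next I would establish the single-pair tail bound through the moment generating function of $Y_i$. Conditioning on $v'_{j,i}$ gives $\E e^{\lambda Y_i} = (1-\lambda^2)^{-1/2}$ for $|\lambda|<1$, so $\log\E e^{\lambda Y_i} \le C_2^2\lambda^2$ for all $|\lambda|$ below a fixed constant; that is, $Y_i$ is sub-exponential with the same constant $C_2$ appearing in the hypothesis. A Chernoff argument then yields, for $t>0$, $\prob{|Z_{jk}| > t} \le 2\exp(-\lambda n t + C_2^2 n\lambda^2)$, and optimizing over the admissible range gives $\lambda = t/(2C_2^2)$ together with the sub-Gaussian-type estimate $\prob{|Z_{jk}| > t} \le 2\exp(-n t^2/(4C_2^2))$, valid provided this minimizer lies in the region where the quadratic bound on the log-MGF holds.

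It then remains to choose $t = \sqrt{1+a}\,\sqrt{(2\log p)/n}$ and to union bound. With this choice $n t^2 = 2(1+a)\log p$, so the single-pair probability is at most $2\exp(-(1+a)\log p/(2C_2^2)) = 2\,p^{-(1+a)/(2C_2^2)}$; summing over the fewer than $p^2$ ordered pairs $(j,k)$ with $j\neq k$ bounds the total failure probability by $2\,p^{\,2-(1+a)/(2C_2^2)}$, which is at most $1/p^2$ once $a\geq 6$, since then the exponent of $p$ is sufficiently negative for the constant $C_2$ obtained above. The main obstacle, and the only genuinely delicate point, is verifying that the optimizing value $\lambda = t/(2C_2^2)$ actually lies in the range where $\log\E e^{\lambda Y_i}\le C_2^2\lambda^2$ is valid: because $t \asymp \sqrt{(\log p)/n}$, this is exactly what the hypothesis $p < e^{n/(4C_2^2)}$ guarantees, keeping $t$ (and hence $\lambda$) below the sub-exponential crossover so that the sub-Gaussian tail, rather than the heavier exponential tail, governs the deviation. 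This establishes~\eqref{eq::cross-cor} and therefore $\prob{\C_a} \ge 1 - 1/p^2$.
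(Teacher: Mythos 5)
Your overall strategy is the one the paper follows: reduce $\inv{n}\ip{V_j,X_k}$ to $\sigma_{V_j}Z_{jk}$ with $Z_{jk}$ an average of i.i.d.\ products of two independent standard normals, obtain a Bernstein-type tail for each pair, union bound over the fewer than $p^2$ pairs, and use the hypothesis relating $p$ and $n$ to stay in the Gaussian-tail regime. The only structural difference is that the paper imports the per-pair tail ready-made as Lemma~\ref{lemma:devi-non-diag} (Lemma 38 of \cite{ZLW08}), namely $\prob{|Z_{jk}|>\tau}\le\exp(-3n\tau^2/10)$ valid for $0\le\tau\le 1/2$ when the factors are uncorrelated, applies it at the threshold $\tau=C_2\sqrt{\log p/n}$ with $2\sqrt{10/3}<C_2\le\sqrt{2(1+a)}$, and uses $p<e^{n/4C_2^2}$ solely to guarantee $\tau<1/2$.

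The gap in your write-up is that you make $C_2$ do two incompatible jobs. To close the union bound you need the per-pair probability $2p^{-(1+a)/(2C_2^2)}$ to beat $p^{-4}$, which with $a=6$ forces $C_2^2\le 7/8$ (up to the factor $2$). But $\log\E e^{\lambda Y_i}=-\tfrac12\log(1-\lambda^2)$, and the quadratic bound $-\tfrac12\log(1-\lambda^2)\le \tfrac78\lambda^2$ holds only for $\lambda^2\lesssim 0.7$; meanwhile your Chernoff optimizer is $\lambda^*=t/(2C_2^2)=4t/7$ with $t=\sqrt{2(1+a)\log p/n}$, and the hypothesis $p<e^{n/(4C_2^2)}=e^{2n/7}$ only gives $t<2$, hence $\lambda^*<8/7$, which need not lie in the valid interval. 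In that boundary regime the tail is genuinely sub-exponential rather than sub-Gaussian, the claimed estimate $\exp\bigl(-nt^2/(4C_2^2)\bigr)$ fails, and the union bound does not deliver $1/p^2$ for all $p<e^{n/(4C_2^2)}$. Conversely, if $C_2$ is the paper's constant (so $C_2^2>40/3$), the hypothesis is strong enough to keep everything in range, but then the assertion that $Y_i$ is sub-exponential ``with the same constant $C_2$'' is false and the resulting exponent $(1+a)/(2C_2^2)<1$ is useless. The repair is to decouple the two constants as the paper does: fix an explicitly valid variance proxy (or simply cite Lemma~\ref{lemma:devi-non-diag}), evaluate the tail at the smaller threshold $C_2\sqrt{\log p/n}\le\sqrt{2(1+a)}\sqrt{\log p/n}$ with $C_2$ chosen so that $3C_2^2/10>4$, and let the hypothesis serve only to ensure that this threshold stays below the crossover $1/2$.
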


\begin{lemma}
\label{lemma:gaussian-covars} 
For a random design $X$ as in (\ref{eq::rand-des}) with 
$\Sigma_{0,jj} = 1, \forall j \in \{1, \ldots, p\}$, 
and for $p < e^{n/4 C_3^2}$, where $C_3 > 4 \sqrt{5/3}$,
we have 
\begin{eqnarray*}
\prob{\X_0} \ge 1 - 1/\max \{n,p\}^2.
\end{eqnarray*}
\end{lemma}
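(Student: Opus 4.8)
The plan is to control $\max_{j,k}|\Delta_{jk}|$ by a union bound over the at most $p^2$ entries, reducing the problem to a single-pair deviation estimate, and to obtain that estimate by polarization combined with the Laurent--Massart concentration inequalities for chi-square variables. Since $\Sigma_{0,jj}=1$ for all $j$, we have $\sigma_j=1$, so the event $\X_0$ simply requires $\max_{j,k}|\Delta_{jk}| < C_3\sqrt{\log\max\{n,p\}/n}$ together with the deterministic bound $<1/2$. Fixing a pair $(j,k)$ with $j\neq k$ and writing $\rho = \sigma_{0,jk}$, the entry $\Delta_{jk} = \frac{1}{n}\sum_{i=1}^n (x_{j,i}x_{k,i}-\rho)$ is a centered average of i.i.d.\ sub-exponential terms, and it suffices to show $\prob{|\Delta_{jk}| > t} \le 4\exp(-cnt^2)$ for $t$ in the relevant small range, with $c$ sharp enough to beat $\log p^2$.

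The single-pair bound I would obtain as follows. Writing
$$x_{j,i} x_{k,i} = \frac{1}{4}\left[(x_{j,i}+x_{k,i})^2 - (x_{j,i}-x_{k,i})^2\right]$$
and setting $a = \Var(X_j + X_k) = 2(1+\rho)$ and $b = \Var(X_j - X_k) = 2(1-\rho)$, the normalized sums $U := \frac{1}{a}\sum_i (x_{j,i}+x_{k,i})^2$ and $W := \frac{1}{b}\sum_i (x_{j,i}-x_{k,i})^2$ are each distributed as $\chi^2_n$. A short computation then gives
$$\Delta_{jk} = \frac{1}{4n}\left[a\,(U - n) - b\,(W - n)\right],$$
so a tail bound on $\Delta_{jk}$ follows from tail bounds on the two centered chi-square variables $U-n$ and $W-n$, using that $a+b = 4$ and $a,b\le 4$.

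I would then invoke the Laurent--Massart inequalities $\prob{U - n \ge 2\sqrt{nx} + 2x} \le e^{-x}$ and $\prob{U - n \le -2\sqrt{nx}} \le e^{-x}$ (and identically for $W$). In the regime $t = O(1)$ the linear term $2x$ is dominated by the $2\sqrt{nx}$ term, which yields the sub-Gaussian tail $\prob{|\Delta_{jk}| > t} \le 4\exp(-cnt^2)$ with an explicit $c$. Choosing $t = C_3\sqrt{\log\max\{n,p\}/n}$ and taking a union bound over the $\le p^2$ pairs gives a failure probability bounded by $4 p^2 \exp(-c C_3^2 \log\max\{n,p\})$; the condition $C_3 > 4\sqrt{5/3}$ is exactly what makes $cC_3^2$ large enough that this is at most $1/\max\{n,p\}^2$. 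Finally, the hypothesis $p < e^{n/4C_3^2}$ (together with $n$ large enough that $\log n < n/(4C_3^2)$) guarantees $C_3\sqrt{\log\max\{n,p\}/n} < 1/2$, so the deterministic part of $\X_0$ holds.

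The main obstacle will be constant-tracking: I must verify that the $2\sqrt{nx}$ sub-Gaussian branch of Laurent--Massart governs the range of $t$ that arises, and carry the factors $a,b\le 4$ and the variance of $X_j X_k$ through cleanly so that the final constant lands at $C_3 > 4\sqrt{5/3}$. This bookkeeping is what distinguishes a usable explicit bound from a merely asymptotic one; the probabilistic content is routine, but the numerical constants must be handled with care to match the statement.
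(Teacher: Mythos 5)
Your overall strategy is viable and is essentially a self-contained re-derivation of the ingredient the paper simply imports: the paper handles the off-diagonal entries by citing a ready-made deviation inequality for products of correlated Gaussians (Lemma 38 of ZLW08), which gives $\prob{|\Delta_{jk}|>\tau}\le\exp(-3n\tau^2/(10(1+\sigma_{0,jk}^2)))\le\exp(-3n\tau^2/20)$, handles the $p$ diagonal entries separately with a $\chi^2_n$ deviation bound with constant $3/16$, and then takes a union bound over $p+(p^2-p)/2\le p^2$ entries; the threshold $C_3>4\sqrt{5/3}$ is calibrated exactly so that $3C_3^2/20>4$, yielding $\max\{n,p\}^{-3C_3^2/20+2}<\max\{n,p\}^{-2}$ with no leftover prefactor. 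Your polarization identity, the variances $a=2(1+\rho)$, $b=2(1-\rho)$, the representation $\Delta_{jk}=\frac{1}{4n}[a(U-n)-b(W-n)]$, and the reduction via $a+b=4$ to the event $\{|U-n|\le nt\}\cap\{|W-n|\le nt\}$ are all correct (and you would still need to treat the diagonal entries $\Delta_{jj}$, where your decomposition degenerates since $b=0$, by a direct $\chi^2_n$ tail --- a trivial addition).

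The genuine gap is the constant, which is precisely where the content of this lemma lies, and your claim that $C_3>4\sqrt{5/3}$ is ``exactly'' what your argument needs is false under the natural reading of your plan. If you absorb the linear term via $2\sqrt{nx}+2x\le 4\sqrt{nx}$ (valid for $x\le n$), you get $\prob{U-n\ge nt}\le e^{-nt^2/16}$, hence $\prob{|\Delta_{jk}|>t}\le 4e^{-nt^2/16}$, and the union bound requires roughly $C_3^2/16>4$, i.e.\ $C_3>8$, whereas $4\sqrt{5/3}\approx 5.16$. To rescue the stated constant you must exploit that $t=C_3\sqrt{\log\max\{n,p\}/n}\le 1/2$ is small: writing $x=cnt^2$, the requirement $2\sqrt{c}+2ct\le 1$ lets $c$ approach $3-2\sqrt{2}\approx 0.1716$, and since $4/(3-2\sqrt{2})=12+8\sqrt{2}\approx 23.3<80/3=(4\sqrt{5/3})^2$, the exponent then does clear $4$ --- but only barely, and you are still left with the multiplicative prefactor $4$ (two tails of two independent $\chi^2_n$ variables), which must be absorbed using the residual slack in the exponent and therefore silently requires $\max\{n,p\}$ to exceed a modest absolute constant. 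So either carry out this sharper bookkeeping explicitly (including the prefactor) or accept a larger admissible constant than the one in the statement; as written, the proof does not establish the lemma at the stated threshold.
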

We note that the upper bounds on $p$ in
Lemma~\ref{lemma:correlation} and~\ref{lemma:gaussian-covars} clearly 
hold given (A1).
For the rest of the paper, we prove Theorem~\ref{thm:RE-oracle-all}
in Section~\ref{sec:append-p-regres} for nodewise regressions.
We proceed to derive bounds on selecting an edge set $E$ in 
Section~\ref{sec:append-proof-bias}.
We then derive various bounds on the maximum likelihood estimator 
given $E$ in Theorem~\ref{thm::frob-missing}-~\ref{thm::frob-missing-risk} 
in Section~\ref{sec:append-frob-missing},
where we also prove Theorem~\ref{thm:main}.
Next, we prove  Lemma~\ref{lemma:correlation} 
and~\ref{lemma:gaussian-covars} in Section~\ref{sec:append-proof-cov}.

\subsection{Proof of Lemma~\ref{lemma:correlation} and~\ref{lemma:gaussian-covars}}
\label{sec:append-proof-cov}
We first state the following large inequality bound on
products of correlated normal random variables.
\begin{lemma}{\textnormal{\citet[Lemma 38]{ZLW08}}}
\label{lemma:devi-non-diag}
Given a set of identical independent random variables 
$Y_1, \ldots, Y_n \sim Y$,
where $Y = x_1 x_2$, with $x_{1}, x_{2} \sim N(0, 1)$ and 
$\sigma_{12} = \rho_{12}$ with $\rho_{12} \leq 1$ being their correlation 
coefficient. Let us now define $Q = \inv{n} \sum_{i=1}^n Y_i
=: \inv{n} \ip{X_1, X_2} = \inv{n} \sum_{i=1}^n x_{1,i} x_{2,i}$.
Let $\Psi_{12} = (1 + \sigma^2_{12})/2$. For $0 \leq \tau \leq \Psi_{12}$, 
\begin{eqnarray}
\label{eq::boxcar-non-diag-I}
\prob{|Q - \expct{Q}| > \tau} \leq 
\exp\left\{-\frac{3 n \tau^2}{10(1 + \sigma_{12}^2)} \right\} 
\end{eqnarray}
\end{lemma}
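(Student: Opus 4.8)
The plan is to reduce the bound to a moment generating function (MGF) computation for a single summand and then apply a Chernoff argument to the i.i.d.\ sum $nQ = \sum_{i=1}^n Y_i$. Since $\expct{Q} = \sigma_{12}$ and the $Y_i$ are i.i.d., for any admissible $t$ we have $\prob{Q - \sigma_{12} > \tau} \le \exp\left(-n\left[t\tau - \psi(t)\right]\right)$, where $\psi(t) := \log \expct{e^{t(Y - \sigma_{12})}}$ is the centered cumulant generating function of a single product $Y = x_1 x_2$. Everything then hinges on a good quadratic upper bound for $\psi$.

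First I would obtain $\psi$ in closed form. Diagonalizing the indefinite quadratic form via $x_1 x_2 = \tfrac14\left[(x_1+x_2)^2 - (x_1 - x_2)^2\right]$ and observing that $U := x_1 + x_2 \sim N(0, 2(1+\sigma_{12}))$ and $W := x_1 - x_2 \sim N(0, 2(1-\sigma_{12}))$ are uncorrelated, hence independent, the MGF factorizes into two Gaussian-square MGFs, giving
\[
\expct{e^{tY}} = \left[\,(1 - (1+\sigma_{12})t)(1 + (1-\sigma_{12})t)\,\right]^{-1/2} = \left[\,1 - 2\sigma_{12}t - (1-\sigma_{12}^2)t^2\,\right]^{-1/2},
\]
valid whenever both linear factors are positive, which for the upper tail ($t>0$) holds on $t < 1/(1+\sigma_{12})$ and in particular on $t < 1/2$ uniformly in $\sigma_{12} \in [-1,1]$. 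A direct expansion confirms $\expct{Y} = \sigma_{12}$ and $\Var(Y) = 1 + \sigma_{12}^2$; note this is exactly $2\Psi_{12}$, which is why the threshold $\tau \le \Psi_{12}$ is the natural ``Gaussian regime'' cutoff.

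Next I would bound the centered log-MGF. Writing $\psi(t) = -\sigma_{12}t - \tfrac12 \log\left(1 - 2\sigma_{12}t - (1-\sigma_{12}^2)t^2\right)$, the constant and linear terms cancel and the quadratic coefficient equals $\tfrac12(1+\sigma_{12}^2)$; controlling the tail of the logarithmic series by the geometric decay coming from keeping $t$ bounded away from the singularity, one shows the sub-Gaussian-type estimate $\psi(t) \le \tfrac{5}{6}(1+\sigma_{12}^2)\,t^2$ for $|t| \le 3/10$. Substituting this into the Chernoff bound and optimizing yields $t^\ast = \tfrac{3\tau}{5(1+\sigma_{12}^2)}$ with value $t^\ast\tau - \tfrac{5}{6}(1+\sigma_{12}^2)(t^\ast)^2 = \tfrac{3\tau^2}{10(1+\sigma_{12}^2)}$; the restriction $\tau \le \Psi_{12} = \tfrac12(1+\sigma_{12}^2)$ guarantees $t^\ast \le 3/10$, so the optimizer lies exactly in the range where the MGF bound is valid. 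This produces the claimed rate for the upper tail, and the lower tail follows verbatim upon replacing $t$ by $-t$ (the constraint becomes $t > -1/2$), so that combining the two tails gives the stated two-sided bound.

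The main obstacle I anticipate is establishing the quadratic MGF bound $\psi(t) \le \tfrac56(1+\sigma_{12}^2)t^2$ \emph{uniformly} over $\sigma_{12} \in [-1,1]$ on the whole interval $|t| \le 3/10$. The difficulty concentrates at the degenerate endpoints $\sigma_{12} \to \pm 1$: there $1 - \sigma_{12}^2 \to 0$, the product $Y$ collapses to a shift of a single $\chi^2_1$ variable, the distribution is at its heaviest-tailed, and the singularity of the MGF sits closest to the origin (at $t = 1/2$). The careful step is verifying that $3/10$ stays safely inside the radius of convergence for all $\sigma_{12}$ and that the remainder of the logarithmic series is dominated by the quadratic term with the stated constant throughout; once this single analytic estimate is in place, the rest is bookkeeping in the Chernoff optimization.
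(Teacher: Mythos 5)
The paper does not actually prove this lemma: it is imported verbatim from \citet[Lemma 38]{ZLW08}, so there is no in-paper argument to compare against. Judged on its own terms, your outline is the natural one (diagonalize $x_1x_2=\tfrac14[(x_1+x_2)^2-(x_1-x_2)^2]$, note the two pieces are independent, compute the exact MGF, then run Chernoff), and your closed form $\expct{e^{tY}}=[1-2\sigma_{12}t-(1-\sigma_{12}^2)t^2]^{-1/2}$, the identity $\Var(Y)=1+\sigma_{12}^2=2\Psi_{12}$, and the optimization $t^\ast=3\tau/(5(1+\sigma_{12}^2))\le 3/10$ are all correct. However, the single analytic estimate that everything rests on --- $\psi(t)\le\tfrac56(1+\sigma_{12}^2)t^2$ for all $|t|\le 3/10$ uniformly in $\sigma_{12}\in[-1,1]$ --- is false exactly at the point you flagged as the anticipated difficulty. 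Take $\sigma_{12}=1$, so $\psi(t)=-t-\tfrac12\log(1-2t)$, and $t=3/10$: then
\[
\psi\bigl(\tfrac{3}{10}\bigr)=-\tfrac{3}{10}-\tfrac12\log\tfrac{2}{5}\approx 0.1581,
\qquad
\tfrac56(1+\sigma_{12}^2)t^2=\tfrac53\cdot\tfrac{9}{100}=0.15,
\]
so the claimed quadratic bound fails (numerically it holds only up to roughly $|t|\le 0.28$ when $\sigma_{12}=\pm1$). Since $t^\ast$ equals $3/10$ precisely when $\tau=\Psi_{12}$, the failure occurs exactly in the regime the lemma must cover, and your Chernoff exponent there comes out as $0.3-0.1581\approx 0.1419<0.15$. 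The conclusion is still salvageable --- the true Legendre transform $\sup_t[t\tau-\psi(t)]$ at $\sigma_{12}=1$, $\tau=1$ is attained at $t=1/4$ with value $\tfrac12+\tfrac12\log\tfrac12\approx 0.1534>0.15$ --- but establishing that requires either verifying the Legendre transform directly against $3\tau^2/(10(1+\sigma_{12}^2))$ over the whole range of $(\sigma_{12},\tau)$, or treating the near-degenerate $\sigma_{12}$ separately (e.g.\ via the $\chi^2$ decomposition and standard $\chi^2_n$ tail bounds), rather than routing everything through the single quadratic surrogate with constant $5/6$ on $|t|\le 3/10$.

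A second, smaller issue: combining the two one-sided Chernoff bounds by a union bound yields $2\exp\{-3n\tau^2/(10(1+\sigma_{12}^2))\}$, whereas the statement has no factor of $2$. Either you need each tail to satisfy half the bound, or you need to argue (as above, via the exact rate function) that each one-sided exponent strictly exceeds the claimed one by enough to absorb the factor of $2$; as written, "combining the two tails" does not deliver the constant-free two-sided inequality. Neither gap invalidates the strategy, but both sit precisely at the boundary cases $\sigma_{12}\to\pm1$, $\tau\to\Psi_{12}$ that the lemma is required to handle.
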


\begin{proofof}{\textnormal{Lemma~\ref{lemma:correlation}}}
It is clear that event~\eqref{eq::cross-cor} is the same as event $\C_a$.
Clearly we have at most $p(p-1)$ unique entries 
$Z_{jk}, \forall j \not= k$.
By the union bound and by taking
$\tau =  C_2 \sqrt{\frac{\log p}{n}}$ 
in~\eqref{eq::boxcar-non-diag-I} with $\sigma_{jk} = 0, \forall j, k$,
where $\sqrt{2(1+a)} \geq C_2 > 2 \sqrt{10/3}$ for $a \geq 6$.
\begin{eqnarray*}
\label{eq::Z-bound}
1- \prob{\C_a} 
& = & \prob{\max_{jk} |Z_{jk}| \geq \sqrt{2 (1+a)} \sqrt{\frac{\log p}{n}}} \\
& \leq & \prob{\max_{jk} |Z_{jk}| \geq C_2 \sqrt{\frac{\log p}{n}}} 
 \leq 
(p^2 - p) \exp\left(- \frac{3 C_2^2 \log p}{10}\right) \\
& \leq &
p^2 \exp\left(- \frac{3 C_2^2 \log p}{10}\right)
= p^{- \frac{3C_2^2}{10} + 2}  <  \frac{1}{p^{2}}
\end{eqnarray*}
where we apply Lemma~\ref{lemma:devi-non-diag} with
$\rho_{jk} = 0, \forall j, k=1, \ldots, p, j \not=k$ and use the fact that 
$\expct{Z_{jk}} = 0$.
Note that $p < e^{n/4 C_2^2}$ guarantees that $C_2 \sqrt{\frac{\log p}{n}} < 1/2$.
\end{proofof}
In order to bound the probability
of event $\X_0$, we first state the following large inequality bound for the 
non-diagonal entries of $\Sigma_0$, which follows immediately
from Lemma~\ref{lemma:devi-non-diag} by plugging in 
$\sigma_{i}^2 = \sigma_{0,ii}  = 1, \forall i=1, \ldots, p$ and using the 
fact that 
$|\sigma_{0,jk}| = |\rho_{jk} \sigma_{j} \sigma_k| \leq 1, \forall j \not= k$,
where $\rho_{jk}$ is the correlation coefficient between variables
$X_j$ and $X_k$.
Let $\Psi_{jk} = (1 + \sigma^2_{0,jk})/2$. Then
\begin{eqnarray}
\label{eq::boxcar-non-diag}
\prob{|\Delta_{jk}| > \tau} \leq 
\exp\left\{-\frac{3 n \tau^2}{10(1 + \sigma_{0,jk}^2)} \right\} 
\leq
\exp\left\{-\frac{3 n \tau^2}{20} \right\} \; \text{ for }  \; 0 \leq \tau \leq \Psi_{jk}.
\end{eqnarray}
We now also state a large deviation bound for the $\chi^2_n$ 
distribution~\citep{Joh01}:
\begin{eqnarray}
\label{eq::chi-dev}
\prob{\frac{\chi^2_n}{n} - 1 > \tau} & \leq & 
\exp\left(\frac{-3 n \tau^2}{16}\right),\; \text{for} \;0 \leq \tau \leq \half.
\end{eqnarray}
Lemma~\ref{lemma:gaussian-covars} follows 
from~\eqref{eq::boxcar-non-diag} and \eqref{eq::chi-dev} immediately.

\begin{proofof}{\textnormal{Lemma~\ref{lemma:gaussian-covars}}}
Now it is clear that we have $p(p-1)/2$ unique non-diagonal
entries $\sigma_{0,jk}, \forall j \not= k$ and $p$ diagonal entries. 
By the union bound and by taking
$\tau =  C_3 \sqrt{\frac{\log \max\{p,n\}}{n}}$ in~\eqref{eq::chi-dev}
and~\eqref{eq::boxcar-non-diag} with $\sigma_{0,jk} \leq 1$, 
we have
\begin{eqnarray*}
\label{eq::Delta-bound}
\prob{(\X_0)^c} 
& = & 
\prob{\max_{jk} |\Delta_{jk}| \geq C_3 \sqrt{\frac{\log \max\{p,n\}}{n}}} \\
& \leq & 
p \exp\left(- \frac{3 C_3^2 \log \max\{p,n\}}{16}\right) + 
\frac{p^2 - p}{2} \exp\left(- \frac{3 C_3^2 \log \max\{p,n\}}{20}\right) \\
& \leq &
p^2 \exp\left(- \frac{3 C_3^2 \log \max\{p,n\}}{20}\right)
= \left(\max\{p,n\}\right)^{- \frac{3C_3^2}{20} + 2}  <  \frac{1}{(\max\{p,n\})^{2}}
\end{eqnarray*}
for $C_3 > 4 \sqrt{5/3}$,
where for the diagonal entries we use~\eqref{eq::chi-dev}, and
for the non-diagonal entries, we use~\eqref{eq::boxcar-non-diag}.
Finally, $p < e^{n/4 C_3^2}$ guarantees that 
$C_3 \sqrt{\frac{\log \max\{p,n\}}{n}} < 1/2$.
\end{proofof}

\section{Bounds for nodewise regressions}
\label{sec:append-p-regres}

In Theorem~\ref{thm:RE-oracle-all} and Lemma~\ref{lemma:threshold-RE}
we let $s^i_0$ be as in~\eqref{eq::define-s0} and $T^i_0$ denote locations 
of the $s^i_0$ largest coefficients of $\beta^i$ in absolute values.
For the vector $h^i$ to be defined in Theorem~\ref{thm:RE-oracle-all},
we  let $T^i_1$ denote the $s_0^i$ largest positions of $h^i$  in absolute 
values outside of $T_0^i$; Let $T^i_{01} := T^i_0 \cup T^i_1$. 
We suppress the superscript in $T^i_0, T^i_1$, and $T_{01}^i$
throughout this section for clarity.
\begin{theorem}\textnormal{\bf (Oracle inequalities of the nodewise regressions)}
\label{thm:RE-oracle-all}
Let $0< \theta < 1$. Let $\rho_{\min}(s) >0$, where 
$s < p$ is the maximum node-degree in $G$. 
Suppose $RE(s_0, 4, \Sigma_0)$ holds 
for $s_0 \leq s$ as in~\eqref{eq::omni-s0}, 
where $\Sigma_{0,ii} = 1$ forall $i$.
Suppose $\rho_{\max}(\max(s, 3s_0)) < \infty$.
The data is generated by
$X^{(1)},\ldots ,X^{(n)}\ \mbox{i.i.d.} \sim {\cal N}_p(0,\Sigma_0)$,
where the sample size $n$ satisfies~\eqref{eq::sample-size}.

Consider the nodewise regressions in~\eqref{eq::lasso}, 
where for each $i$, we regress $X_i$ onto the other
variables $\{X_k;\ k \neq i\}$ following~\eqref{eq::regr},
where $V_i \sim N(0, \Var(V_i))$ is independent of $X_j, \forall j \not= i$ 
as in~\eqref{eq::regressions}.

Let $\beta^i_{\init}$ be an optimal solution to~\eqref{eq::lasso}
for each $i$.
Let $\lambda_{n} = d_0 \lambda = d^i_0 \lambda \sigma_{V_i}$
where $d_0$ is chosen such that $d_0 \geq 2 (1+\theta) \sqrt{1+a}$ 
holds for some $a \geq 6$.
Let $h^i = \beta^i_{\init} - \beta^i_{T_0}$.
Then simultaneously for all $i$, on $\C_a \cap \X$, where 
$\X := \RE(\theta) \cap \F(\theta) \cap \M(\theta)$, we have
\ben
\nonumber
\label{eq::two-p-bound}
\twonorm{\beta^i_{\init} - \beta^i}
& \leq &  
\lambda \sqrt{ s^i_0} d_0 \sqrt{2D_0^2 + 2D_1^2 + 2}, \text{ where }
\\
\label{eq::extraone}
\twonorm{h_{T_{01}}} & \leq & D_0 d_0 \lambda \sqrt{s^i_0} \; \; \; \text{ and
}  \; \; \;
\norm{h^i_{T_0^{c}}}_1 = 
\norm{\beta^i_{\init, T_0^{c}}}_1
\; \leq \; D_1 d_0 \lambda s^i_0
\een
where $D_0, D_1$ are defined  in~\eqref{eq::D0-define} 
and~\eqref{eq::D1-define} respectively.
\end{theorem}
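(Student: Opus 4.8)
The plan is to run the restricted-eigenvalue analysis of the Lasso (in the spirit of~\cite{BRT09,Zhou10}) against the \emph{truncated} target $\beta^i_{T_0}$ rather than the full coefficient vector $\beta^i$, so that the tail $\beta^i_{T_0^c}$ of ``small'' coefficients is carried along as an extra bias term. Since the events $\C_a$ (Lemma~\ref{lemma:correlation}) and $\X = \RE(\theta)\cap\F(\theta)\cap\M(\theta)$ (Theorem~\ref{thm:corn}) are defined to hold simultaneously for all $p$ regressions, I would first restrict to $\C_a\cap\X$ and then argue deterministically for a single fixed $i$, suppressing the superscript. Write $\tilde X$ for the design with columns $\{X_k:k\neq i\}$, so that $X_i=\tilde X\beta + V$ with $V=V_i$, and set $h=\beta_{\init}-\beta_{T_0}$ as in the statement; note $\beta_{\init}-\beta = h-\beta_{T_0^c}$.

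First I would write the basic inequality: comparing the Lasso objective at $\beta_{\init}$ with its value at $\beta_{T_0}$ and expanding the quadratic yields
\[
\tfrac1n\|\tilde X h\|_2^2 \;\le\; \tfrac2n\langle V,\tilde X h\rangle + \tfrac2n\langle \tilde X\beta_{T_0^c},\tilde X h\rangle + \lambda_n\big(\|\beta_{T_0}\|_1-\|\beta_{\init}\|_1\big).
\]
On $\C_a$ one has $\max_k \tfrac1n|\langle V,X_k\rangle|\le \sigma_{V_i}\sqrt{1+a}\,\lambda \le \sqrt{1+a}\,\lambda$, where the last step uses $\sigma_{V_i}^2=1/\theta_{0,ii}\le 1$ (since $\Sigma_{0,ii}=1$ forces $\theta_{0,ii}\ge 1$); together with the column normalization of $\F(\theta)$, the choice $d_0\ge 2(1+\theta)\sqrt{1+a}$ makes $\lambda_n$ dominate a constant multiple of the empirical noise $\tfrac2n\|\tilde X^{T}V\|_\infty$. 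Absorbing the noise into $\lambda_n\|h\|_1$ and using $\|\beta_{T_0}\|_1-\|\beta_{\init}\|_1\le\|h_{T_0}\|_1-\|h_{T_0^c}\|_1$ then produces a cone inequality which, after accounting for the cross term, places $h$ in the cone $\|h_{T_0^c}\|_1\le 4\|h_{T_0}\|_1$ (up to a lower-order tail); this is exactly why the hypothesis requires $RE(s_0,4,\Sigma_0)$ rather than the usual $k_0=3$.

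Next I would feed this into the restricted eigenvalue bound. The tail quantities $\|\beta_{T_0^c}\|_2$ and $\tfrac1n\|\tilde X\beta_{T_0^c}\|_2^2$ are controlled by the definition of essential sparsity~\eqref{eq::define-s0}, which gives $\|\beta_{T_0^c}\|_2^2=\sum_{j\in T_0^c}\min((\beta_j)^2,\lambda^2\Var(V_i))\le s_0\lambda^2\sigma_{V_i}^2$, combined with the sparse-eigenvalue event $\M(\theta)$ applied to the Cand\`es--Tao block decomposition of the tail into successive blocks of size $s_0$ (whence the appearance of $\rho_{\max}(3s_0)$ and $\rho_{\max}(s)$). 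Invoking $RE(s_0,4,X)$ on $\RE(\theta)$, where $K(s_0,4,X)\le K(s_0,4,\Sigma_0)/(1-\theta)$, gives the lower bound $\tfrac1n\|\tilde X h\|_2^2\ge \|h_{T_0}\|_2^2/K^2$. Matching this against the upper bound from the basic inequality (itself $\lesssim \lambda_n\|h_{T_0}\|_1+\text{tail}\lesssim \lambda_n\sqrt{s_0}\|h_{T_0}\|_2+\text{tail}$, using $\|h_{T_0}\|_1\le\sqrt{s_0}\|h_{T_0}\|_2$) yields a quadratic inequality in $\|h_{T_{01}}\|_2$ whose solution is $\|h_{T_{01}}\|_2\le D_0 d_0\lambda\sqrt{s_0}$; substituting back into the cone inequality gives the $\ell_1$ bound $\|h_{T_0^c}\|_1=\|\beta_{\init,T_0^c}\|_1\le D_1 d_0\lambda s_0$.

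Finally, for the full $\ell_2$ error I would pass from $h_{T_{01}}$ to $h$ and then to $\beta_{\init}-\beta$. The standard sorting argument bounds the discarded mass by $\|h_{(T_{01})^c}\|_2\le \|h_{T_0^c}\|_1/\sqrt{s_0}\le D_1 d_0\lambda\sqrt{s_0}$, so that $\|h\|_2^2=\|h_{T_{01}}\|_2^2+\|h_{(T_{01})^c}\|_2^2\le (D_0^2+D_1^2)(d_0\lambda\sqrt{s_0})^2$; combining with $\beta_{\init}-\beta=h-\beta_{T_0^c}$ via $\|a-b\|_2^2\le 2\|a\|_2^2+2\|b\|_2^2$ and the tail bound $\|\beta_{T_0^c}\|_2\le\sqrt{s_0}\lambda\sigma_{V_i}\le d_0\lambda\sqrt{s_0}$ (using $\sigma_{V_i}\le 1\le d_0$) delivers the stated $d_0\lambda\sqrt{s_0}\sqrt{2D_0^2+2D_1^2+2}$. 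The main obstacle I anticipate is the careful bookkeeping of the tail $\beta_{T_0^c}$: it must be threaded consistently through the cone inequality (to justify $k_0=4$), through the cross term (where it is non-sparse and so needs the block decomposition against $\rho_{\max}$ on $\M(\theta)$), and through the final assembly, with every tail estimate ultimately reduced to the single defining inequality~\eqref{eq::define-s0} so that all bounds scale as $\sqrt{s_0}\,\lambda$ uniformly in $i$.
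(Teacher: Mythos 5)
Your overall route is the paper's own: the proof given there is nothing more than the observation that on $\C_a$ the noise event $\T_a$ of \eqref{eq::low-noise} holds for every one of the $p$ regressions (because $\sigma_{V_i} = \theta_{0,ii}^{-1/2} \le 1$ when $\Sigma_{0,ii}=1$, so that $\lambda_n = d_0\lambda \ge 2(1+\theta)\basepen$ with $\sigma := \sigma_{V_i}$), after which the single-regression oracle inequality of Theorem~\ref{thm:RE-oracle} is invoked once per node with $Y:=X_i$ and design $X_{\cdot\setminus i}$. Everything you re-derive inline --- the basic inequality against the truncated target $\beta_{T_0}$, the RE lower bound via $K(s_0,4,X)\le K(s_0,4,\Sigma_0)/(1-\theta)$ on $\RE(\theta)$, the block decomposition controlling $h_{T_{01}^c}$ by $\norm{h_{T_0^c}}_1/\sqrt{s_0}$, and the final assembly $\twonorm{\beta_{\init}-\beta}^2\le 2\twonorm{h}^2+2\twonorm{\beta_{T_0^c}}^2$ --- is exactly the content of that theorem's proof, so the substance matches.

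The one step that would not survive as written is the claim that the basic inequality ``places $h$ in the cone $\norm{h_{T_0^c}}_1\le 4\norm{h_{T_0}}_1$ up to a lower-order tail.'' The truncation bias $\twonorm{X\beta - X\beta_{T_0}}^2/n$ is of order $\rho_{\max}(s-s_0)\,\lambda^2\sigma^2 s_0$, which is the \emph{same} order as $\lambda_n\norm{h_{T_0}}_1$, not lower order; when it dominates, the cone condition simply fails and the restricted eigenvalue condition cannot be applied to $h$ at all. The paper's proof of Theorem~\ref{thm:RE-oracle} therefore splits into two cases according to whether $\twonorm{X\beta-X\beta_{T_0}}^2/n \ge \lambda_n\norm{h_{T_0}}_1$: in the first case all bounds are extracted directly from the domination of the approximation error (no cone, no RE, only the sparse eigenvalue bound of Lemma~\ref{lemma:T0-pre-loss}), and only in the second case does one obtain $\norm{h_{T_0^c}}_1\le 4\norm{h_{T_0}}_1$ and run the RE argument. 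These two branches are precisely why $D_0$ and $D_1$ in \eqref{eq::D0-define}--\eqref{eq::D1-define} are defined as maxima of two expressions; a single-track cone argument would reproduce only the second operand of each maximum and would leave the dominant-bias regime uncovered. You flag the tail bookkeeping as the anticipated obstacle, and this case split is exactly the repair it requires.
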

Suppose we choose for some constant $c_0 \geq 4 \sqrt{2}$ and $a_0 = 7$,
$$d_0 = c_0 (1+ \theta)^2 \sqrt{\rho_{\max}(s) \rho_{\max}(3s_0)},$$
where we assume that $\rho_{\max}(\max(s, 3s_0)) < \infty$ is reasonably bounded. Then
\bens
\label{eq::exam1}
D_0\;  \leq \; \frac{5 K^2(s_0, 4, \Sigma_0)}{(1-\theta)^2} \; \text{ and } 
D_1 \; \leq \;  \frac{49 K^2(s_0, 4, \Sigma_0) }{16 (1- \theta)^2}.
\eens
The choice of $d_0$ will be justified in Section~\ref{sec:app-lasso-bounds},
where we also the upper bound on $D_0, D_1$ as above.

\begin{proof}
Consider each regression function in~\eqref{eq::lasso} with
$X_{\cdot \setminus i}$ being the design matrix and $X_i$ the response vector,
where  $X_{\cdot \setminus i}$ denotes columns of $X$ excluding $X_i$.
It is clear that for $\lambda_{n} = d_0 \lambda$,
we have for $i=1, \ldots, p$  and $a \geq 6$,
$$\lambda_n = (d_0/\sigma_{V_i}) \sigma_{V_i} \lambda := 
d_0^i \sigma_{V_i} \lambda \geq d_0 \lambda \sigma_{V_i} \geq 
 2 (1+\theta) \lambda \sqrt{1+a} \sigma_{V_i} = 2 (1+ \theta) \basepen$$ 
such that~\eqref{eq::single-penalty} holds given that 
$\sigma_{V_i} \leq 1, \forall i$, 
where it is understood that $\sigma :=  \sigma_{V_i}$.

It is also clear that on $\C_a \cap \X$, event $\T_a \cap \X$ holds
for each regression when we invoke Theorem~\ref{thm:RE-oracle},
with $Y := X_i$ and $X := X_{\cdot \setminus i}$, for $i=1, \ldots,
p$.  By definition $d_0^i \sigma_{V_i} = d_0$.
We can then invoke bounds for each individual regression 
as in Theorem~\ref{thm:RE-oracle} to conclude.
\end{proof}

\section{Bounds on thresholding}
\label{sec:append-proof-bias}
In this section, we first show Lemma~\ref{lemma:threshold-RE},
following conditions in Theorem~\ref{thm:RE-oracle-all}.
\begin{lemma}
\label{lemma:threshold-RE}
Suppose $RE(s_0, 4, \Sigma_0)$ holds for $s_0$ be as in~\eqref{eq::omni-s0} 
and $\rho_{\min}(s) >0$, where $s < p$ is the maximum node-degree in $G$.
Suppose $\rho_{\max}(\max(s, 3s_0)) < \infty$.
Let  $S^i = \{j:  j \not=i, \; \beta_j^i \not=  0\}$.
Let  $c_0 \geq 4 \sqrt{2}$ be some absolute constant.
Suppose $n$ satisfies~\eqref{eq::sample-size}.
Let $\beta^i_{\init}$ be an optimal solution to~\eqref{eq::lasso} with
$$\lambda_{n} = d_0 \lambda \; \text{ where } \; d_0 = c_0 (1+ \theta)^2 \sqrt{\rho_{\max}(s) \rho_{\max}(3s_0)};$$
Suppose for each regression, we apply the same thresholding rule to obtain a subset 
$\sel^i$  as follows,
$$\sel^{i} = \{j: j \not= i, \; \abs{\beta^{i}_{j, \init}} \geq t_0 = f_0 \lambda\},
\; \text{ and } \; \drop^i := \{1, \ldots, i-1, i+1, \ldots, p\} \setminus \sel^i$$
where $f_0 := D_4 d_0$ for some constant $D_4$ to be specified.
Then we have on event $\C_a \cap \X$,
\ben
\label{eq::ideal-size}
|\sel^{i}| & \leq &  s^{i}_0 (1 + {D_1}/{D_4})
  \text{ and } \; \; |\sel^{i} \cup S^i|   \leq  s^i + ({D_1}/{D_4}) s^{i}_0 
\; \text{ and } \\
\label{eq::drop-norm-single}
\twonorm{\beta^i_{\drop}}
& \leq & d_0 \lambda \sqrt{s^i_0} \sqrt{1+  (D_0 + D_4)^2} 
\een 
where $\drop$ is understood to be $\drop^i$ and $D_0, D_1$ are the
same constants as in Theorem~\ref{thm:RE-oracle-all}.
\end{lemma}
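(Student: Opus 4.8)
The plan is to work throughout on the event $\C_a \cap \X$, where the conclusions of Theorem~\ref{thm:RE-oracle-all} hold simultaneously for every regression, and to fix $i$, suppressing the superscripts in $T_0, T_1, T_{01}$ exactly as in that theorem; here $\beta^i_{\drop}$ denotes the \emph{true} coefficient vector $\beta^i$ restricted to the dropped index set $\drop^i$. First I would record the elementary fact that $\theta_{0,ii} \geq 1/\Sigma_{0,ii} = 1$ for every $i$ (a consequence of positive definiteness of $\Sigma_0$ together with $\Sigma_{0,ii}=1$), so that $\sigma_{V_i} = 1/\sqrt{\theta_{0,ii}} \leq 1$. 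Combined with the ordering~\eqref{eq::beta-order} and the tail characterization~\eqref{eq::tail-s0}, every coordinate $j \in T_0^c$ then satisfies $(\beta^i_j)^2 < \lambda^2 \sigma_{V_i}^2$, so that $\min((\beta^i_j)^2, \lambda^2\sigma_{V_i}^2) = (\beta^i_j)^2$ there; summing and invoking the definition~\eqref{eq::define-s0} of $s^i_0$ yields the key preliminary bound
\[
\twonorm{\beta^i_{T_0^c}}^2 = \sum_{j \in T_0^c}(\beta^i_j)^2 \leq \sum_{j \neq i}\min\left((\beta^i_j)^2, \lambda^2 \sigma_{V_i}^2\right) \leq s^i_0 \lambda^2 \sigma_{V_i}^2 \leq s^i_0 \lambda^2 .
\]

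For the two cardinality bounds in~\eqref{eq::ideal-size} I would split $\sel^i$ according to $T_0$ and $T_0^c$. The part in $T_0$ contributes at most $|T_0| = s^i_0$. For the part in $T_0^c$, every surviving index has $\abs{\beta^i_{j,\init}} \geq t_0 = D_4 d_0 \lambda$, so a Markov-type argument using the oracle bound $\norm{\beta^i_{\init, T_0^c}}_1 = \norm{h^i_{T_0^c}}_1 \leq D_1 d_0 \lambda s^i_0$ from Theorem~\ref{thm:RE-oracle-all} gives $|\sel^i \cap T_0^c| \leq (D_1/D_4) s^i_0$; adding the two pieces yields $|\sel^i| \leq s^i_0(1 + D_1/D_4)$. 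For the second inequality, since $s^i_0 \leq s^i = |S^i|$ forces $T_0 \subseteq S^i$, any index of $\sel^i \setminus S^i$ lies in $T_0^c$, whence $|\sel^i \setminus S^i| \leq |\sel^i \cap T_0^c| \leq (D_1/D_4)s^i_0$ and $|\sel^i \cup S^i| \leq |S^i| + |\sel^i\setminus S^i| \leq s^i + (D_1/D_4)s^i_0$.

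For the $\ell_2$ bound~\eqref{eq::drop-norm-single} I would decompose $\drop^i = (\drop^i \cap T_0) \cup (\drop^i \cap T_0^c)$ and bound $\twonorm{\beta^i_{\drop}}^2$ as the sum of the two squared norms. On $\drop^i \cap T_0^c$ the preliminary bound already gives $\twonorm{\beta^i_{\drop \cap T_0^c}} \leq \twonorm{\beta^i_{T_0^c}} \leq \lambda \sqrt{s^i_0} \leq d_0 \lambda \sqrt{s^i_0}$, using $d_0 \geq 1$ (which holds under the stated choice of $d_0$). On $\drop^i \cap T_0$, since $h^i = \beta^i_{\init} - \beta^i_{T_0}$ restricts on $T_0$ to $\beta^i_{\init} - \beta^i$, the triangle inequality gives $\twonorm{\beta^i_{\drop \cap T_0}} \leq \twonorm{h^i_{\drop \cap T_0}} + \twonorm{\beta^i_{\init, \drop \cap T_0}}$; the first term is at most $\twonorm{h^i_{T_{01}}} \leq D_0 d_0 \lambda\sqrt{s^i_0}$, while in the second each dropped entry obeys $\abs{\beta^i_{j,\init}} < t_0 = D_4 d_0 \lambda$ over at most $|T_0| = s^i_0$ coordinates, giving $\twonorm{\beta^i_{\init, \drop \cap T_0}} \leq D_4 d_0 \lambda \sqrt{s^i_0}$. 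Hence $\twonorm{\beta^i_{\drop \cap T_0}} \leq (D_0 + D_4)d_0\lambda\sqrt{s^i_0}$, and combining the two squared pieces yields $\twonorm{\beta^i_{\drop}} \leq d_0\lambda\sqrt{s^i_0}\sqrt{1 + (D_0+D_4)^2}$, as claimed.

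I expect the only genuinely delicate point to be the preliminary bound: one must verify that $\sigma_{V_i} \leq 1$, so that the threshold scale $\lambda$ rather than $\lambda\sigma_{V_i}$ is the correct one, and that indices outside $T_0$ really do fall into the regime where $\min((\beta^i_j)^2, \lambda^2\sigma_{V_i}^2) = (\beta^i_j)^2$, which is exactly the content of the ordering~\eqref{eq::beta-order} and the tail bound~\eqref{eq::tail-s0}. Everything else is bookkeeping with the oracle inequalities of Theorem~\ref{thm:RE-oracle-all} and the elementary observation that restriction to a sub-index-set never increases a Euclidean norm.
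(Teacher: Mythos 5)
Your proposal is correct and follows essentially the same route as the paper's own proof: the same $T_0/T_0^c$ split with a Markov-type count against $\norm{\beta^i_{\init,T_0^c}}_1$ for \eqref{eq::ideal-size}, and the same triangle-inequality decomposition of $\beta^i_{\drop}$ into $\drop\cap T_0$ (controlled by $\twonorm{h_{T_{01}}}$ and the threshold $t_0\sqrt{s_0^i}$) and $\drop\cap T_0^c$ (controlled by the definition of $s_0^i$ via \eqref{eq::tail-s0}). The only difference is that you make explicit two facts the paper leaves implicit, namely $\sigma_{V_i}=\theta_{0,ii}^{-1/2}\leq 1$ and $d_0\geq 1$, which are indeed needed to absorb the $T_0^c$ term into the stated constant.
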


We now show Corollary~\ref{cor:drop-norm}, which
proves Proposition~\ref{prop:bias} and the first statement of 
Theorem~\ref{thm:main}.
Recall $\Theta_{0}  = \Sigma_0^{-1}$.
Let $\Theta_{0,\drop}$ denote the submatrix of $\Theta_0$
indexed by $\drop$ as in~\eqref{eq::all-drop} with all other positions 
set to be 0. Let $E_0$ be the true edge set.
\begin{corollary}
\label{cor:drop-norm}
Suppose all conditions in Lemma~\ref{lemma:threshold-RE} hold.
Then on event $\C_a \cap \X$,
for $\tilde{\Theta}_0$ as in \eqref{eq::new-posi} and
 $E$ as in~\eqref{eq::non-drop}, we have for $S_{0,n}$ as in~\eqref{eq::omni-s0}
and $\Theta_0 =(\theta_{0,ij})$
\ben
\label{eq::on-edges}
 |E| & \leq & 2(1+{D_1}/{D_4}) S_{0,n} 
\; \text{ where } \;  |E \setminus E_0| \leq {2 D_1}/{D_4} S_{0,n}  \\
\nonumber
\fnorm{\Theta_{0,\drop}} 
& := &  \fnorm{\tilde{\Theta}_0  -\Theta_0} \\
\label{eq::bias}
& \leq & 
\sqrt{\min\{S_{0,n}  (\max_{i=1,...p} \theta_{0,ii}^2), s_0 \fnorm{\diag(\Theta_0)}^2\}}
\sqrt{(1+  (D_0 + D_4)^2)} d_0  \lambda \\
\nonumber
& := & \sqrt{S_{0,n}  \left(1 + (D_0 + D_4)^2  \right)} C_{\diag} d_0 \lambda
\een
where
$C^2_{\diag} :=
\min\{\max_{i=1,...p} \theta_{0,ii}^2, (s_0/S_{0,n} ) \fnorm{\diag(\Theta_0)}^2 \}$,
and $D_0, D_1$  are understood to be the same constants as in
Theorem~\ref{thm:RE-oracle-all}.
Clearly, for $D_4 \geq D_1$, we have~\eqref{eq::thm-E-bounds}.
\end{corollary}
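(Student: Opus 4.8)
The plan is to derive both the cardinality bounds in~\eqref{eq::on-edges} and the Frobenius bound in~\eqref{eq::bias} directly from the per-regression conclusions of Lemma~\ref{lemma:threshold-RE}, using the \textit{OR} rule~\eqref{eq::est-edge} to pass from node-wise quantities to the global edge set $E$ and the global bias $\Theta_{0,\drop}$.

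First I would handle the edge counts. Since $(i,j) \in \drop$ requires $j \in \drop^i$ \emph{and} $i \in \drop^j$, an ordered pair lies in $E$ (i.e.\ is \emph{not} dropped) only if $j \in \sel^i$ or $i \in \sel^j$. A union bound over this ``OR'' therefore gives
\[
|E| \;\le\; |\{(i,j): j \in \sel^i\}| + |\{(i,j): i \in \sel^j\}| \;=\; 2\sum_{i=1}^p |\sel^i|,
\]
and substituting $|\sel^i| \le s^i_0(1 + D_1/D_4)$ from~\eqref{eq::ideal-size} together with $\sum_i s^i_0 = S_{0,n}$ yields $|E| \le 2(1+D_1/D_4)S_{0,n}$. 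For the false positives, note that $(i,j)\in E_0$ forces both $\beta^i_j \ne 0$ and $\beta^j_i \ne 0$, so any $(i,j)\in E\setminus E_0$ contributing through $j\in\sel^i$ must in fact have $j\in \sel^i\setminus S^i$. The same union bound then bounds $|E\setminus E_0|$ by $2\sum_i |\sel^i\setminus S^i|$; since $|S^i|=s^i$, the bound $|\sel^i\cup S^i|\le s^i+(D_1/D_4)s^i_0$ gives $|\sel^i\setminus S^i|\le (D_1/D_4)s^i_0$, whence $|E\setminus E_0|\le (2D_1/D_4)S_{0,n}$.

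Next I would treat the Frobenius norm. The matrix $\Theta_{0,\drop}=\tilde\Theta_0-\Theta_0$ vanishes on the diagonal and on $E_0\cap E$, and it vanishes on $E\setminus E_0$ because $\theta_{0,ij}=0$ there; its only nonzero entries are $-\theta_{0,ij}$ on $E_0\setminus E$. Crucially, $(i,j)\in E_0\setminus E$ implies $(i,j)\in\drop$ and hence $j\in\drop^i$, so the off-diagonal support is contained in $\bigcup_i \{(i,j): j\in\drop^i\}$. Using the identity $\theta_{0,ij}=-\beta^i_j\,\theta_{0,ii}$ from~\eqref{regr-betatheta}, this gives
\[
\fnorm{\Theta_{0,\drop}}^2 \;\le\; \sum_{i=1}^p \sum_{j\in\drop^i}\theta_{0,ij}^2 \;=\; \sum_{i=1}^p \theta_{0,ii}^2\,\twonorm{\beta^i_{\drop}}^2 \;\le\; d_0^2\lambda^2\bigl(1+(D_0+D_4)^2\bigr)\sum_{i=1}^p \theta_{0,ii}^2\, s^i_0,
\]
by~\eqref{eq::drop-norm-single}. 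I would then bound $\sum_i \theta_{0,ii}^2 s^i_0$ in two ways: pulling out $\max_i\theta_{0,ii}^2$ leaves $\sum_i s^i_0=S_{0,n}$, while pulling out $s_0=\max_i s^i_0$ leaves $\sum_i\theta_{0,ii}^2=\fnorm{\diag(\Theta_0)}^2$. Taking the smaller of the two and rewriting the minimum as $S_{0,n}C_{\diag}^2$ with $C_{\diag}$ as in~\eqref{eq::global-cons} delivers~\eqref{eq::bias}. The final claim $|E|\le 4S_{0,n}$, $|E\setminus E_0|\le 2S_{0,n}$ is then immediate upon choosing $D_4\ge D_1$.

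The main obstacle, and the only genuinely non-routine step, is the passage from the \emph{symmetric} matrix bias to the \emph{asymmetric} per-regression bounds: one must observe that the \textit{OR} rule forces membership in $\drop$ to require $j\in\drop^i$ \emph{and} $i\in\drop^j$, so that $E_0\setminus E$ is contained in the single-direction dropped set $\{(i,j): j\in\drop^i\}$, allowing each squared entry $\theta_{0,ij}^2$ to be charged to $\twonorm{\beta^i_{\drop}}^2$ without any double-counting spoiling the constant. Everything else---the union bound for the edge counts and the two-way splitting of $\sum_i\theta_{0,ii}^2 s^i_0$---is bookkeeping.
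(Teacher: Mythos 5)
Your proposal is correct and follows essentially the same route as the paper's proof: the OR-rule union bound $|E|\le 2\sum_i|\sel^i|$ together with \eqref{eq::ideal-size} and \eqref{eq::extra-sel} for the edge counts, and the identity $\theta_{0,ij}=-\beta^i_j\theta_{0,ii}$ to reduce $\fnorm{\Theta_{0,\drop}}^2$ to $\sum_i\theta_{0,ii}^2\twonorm{\beta^i_\drop}^2$, followed by \eqref{eq::drop-norm-single} and the two-way splitting of $\sum_i\theta_{0,ii}^2 s^i_0$. You merely spell out in more detail the containment of the support of $\Theta_{0,\drop}$ in $\{(i,j): j\in\drop^i\}$, which the paper leaves implicit.
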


\begin{proof}
By the OR rule in~\eqref{eq::est-edge},   
we could select at most $2 |\sel_i|$ edges. We have by~\eqref{eq::ideal-size}
\bens 
\size{E} & \leq & \sum_{i=1,...p} 2 (1 + {D_1}/{D_4}) s^{i}_0 
 = 2 \left (1 +  {D_1}/{D_4} \right) S_{0,n} ,
\eens
where $({2D_1}/{D_4}) S_{0,n}$ is an upper bound on $\size{E \setminus E_0}$
by~\eqref{eq::extra-sel}.
Thus
\bens
\nonumber
\fnorm{\Theta_{0,\drop}}^2 
& \leq & \sum_{i=1}^p \theta_{0,ii}^2 \twonorm{\beta^{i}_\drop}^2 
\leq
(1+  (D_0 + D_4)^2) d_0^2  \lambda^2  \sum_{i=1}^p \theta_{0,ii}^2 s^i_0 \\
& \leq & \min\{S_{0,n} (\max_{i=1,...p} \theta_{0,ii}^2), 
s_0 \fnorm{\diag(\Theta_0)}^2 \}(1+  (D_0 + D_4)^2) d_0^2  \lambda^2
\eens
\end{proof}

\begin{remark}
Note that if $s_0$ is small, then the second term in $C_{\diag}$ will 
provide a tighter bound. 
\end{remark}

\begin{proofof}{Lemma~\ref{lemma:threshold-RE}}
Let $T_0 := T_0^i$ denote the $s^i_0$ largest coefficients of $\beta^i$ in 
absolute values. We have
\ben
\label{eq::extra-sel}
|\sel^i \cap T_0^c| \leq 
\norm{\beta^i_{\init, T_0^c}}_1 \inv{f_0 \lambda} \leq 
 D_1 d_0 s^i_0 /(D_4 d_0) \leq 
 D_1 s^i_0 /D_4
\een
by~\eqref{eq::extraone}, where $D_1$ is understood to be the same constant that appears 
in~\eqref{eq::extraone}. Thus we have
\bens
\abs{\sel^i} = |\sel^i \cap T_0^c| + |\sel^i \cap T_0| 
\leq s^i_0 (1+ {D_1}/{D_4}).
\eens
Now the second inequality in~\eqref{eq::ideal-size} clearly 
holds given~\eqref{eq::extra-sel} and the following: 
$$|\sel^{i} \cup S^i| \leq |S^i| + |\sel^{i} \cap (S^i)^c|   
\leq s^i + |\sel^{i} \cap (T_0^i)^c|.$$
We now bound $\twonorm{\beta^i_\drop}^2$ following essentially
the arguments as in~\cite{Zhou09th}. We have
\bens
\twonorm{\beta^i_\drop}^2 
& = &  \twonorm{\beta^i_{T_0 \cap \drop} }^2 + 
\twonorm{\beta^i_{T_0^c \cap \drop}}^2,  
\eens
where for the second term, we have
$\twonorm{\beta^i_{T_0^c \cap \drop}}^2 \leq
\twonorm{\beta^i_{T_0^c}}^2  \leq s^i_0 \lambda^2 \sigma^2_{V_i}$
by definition of $s^i_0$ as in~\eqref{eq::define-s0} and~\eqref{eq::tail-s0};
For the first term, we have by the triangle inequality and \eqref{eq::extraone},
\bens
\twonorm{\beta^i_{T_0 \cap \drop} }
& \leq & 
\twonorm{(\beta^i - \beta^i_{\init})_{T_0 \cap \drop}}
+ \twonorm{(\beta^i_{\init})_{T_0 \cap \drop}} \\
& \leq & 
\twonorm{(\beta^i - \beta^i_{\init})_{T_0}} 
+ t_0 \sqrt{\size{T_0 \cap \drop}}  \leq 
\twonorm{h_{T_0}} + t_0 \sqrt{s^i_0} \\
& \leq & 
D_0 d_0 \lambda \sqrt{s^i_0} + D_4 d_0 \lambda \sqrt{s^i_0} 
\leq  (D_0 + D_4) d_0 \lambda \sqrt{s^i_0}.
\eens
\end{proofof} 
\section{Bounds on MLE refitting}
\label{sec:append-frob-missing}
Recall the maximum likelihood estimate
$\hat \Theta_n$ minimizes over all 
$\Theta \in \Set_n$ the empirical risk:
\beq
\label{eq::refit}
\hat \Theta_n(E)  = 
\arg \min_{\Theta \in \Set_n}  \hat{R}_n(\Theta) := 
\arg \min_{\Theta \in \PDcone \cap \SubE} 
\big\{ {\rm tr}(\Theta\hat{\Gamma}_n) - \log |\Theta| \big\}
\eeq
which gives the ``best'' refitted sparse estimator given 
a sparse subset of edges $E$ that 
we obtain from the nodewise regressions  and thresholding.
We note that the estimator~\eqref{eq::refit} remains to be a convex 
optimization problem, as the constraint set is  the intersection the positive 
definite cone $\PDcone$ and the linear subspace $\SubE$.
Implicitly, by using $\hat\Gamma_n$ rather than $\hat{S}_n$ in~\eqref{eq::refit},
we force the diagonal entries in $(\hat \Theta_n(E))^{-1}$ to be identically $1$.
It is not hard to see that  the estimator~\eqref{eq::refit} is equivalent to
~\eqref{eq::est-mle}, after we replace $\hat{S}_n$ with $\hat\Gamma_n$.
%
\begin{theorem}
\label{thm::frob-missing}
Consider data generating random variables as in expression (\ref{data}) 
and assume that $(A1)$,~\eqref{eq::non-singular}, and
~\eqref{eq::non-singular-2} hold. Suppose $\Sigma_{0, ii} =1$ for all $i$.
Let $\event$ be some event such that $\prob{\event} \geq 1- d/p^2$ for 
a small constant $d$. Let $S_{0,n}$ be as defined in~\eqref{eq::omni-s0};
Suppose on event $\event$:
\begin{enumerate}
\item
We obtain an edge set $E$ 
such that its size $|E| = \lin(S_{0,n})$ is a linear function in $S_{0,n}$.
\item
And for $\tilde{\Theta}_0$ as in \eqref{eq::new-posi} and
for some constant $C_{\bias}$ to be specified, we have
\beq
\label{eq::bias-thm}
\fnorm{\Theta_{0,\drop}}  :=  \fnorm{\tilde{\Theta}_0  -\Theta_0}
\leq C_{\bias} \sqrt{2 S_{0,n}  \log (p) /n} < \ul{c}/32.
\eeq
\end{enumerate}
Let $\hat\Theta_n(E)$ be as defined in~\eqref{eq::refit}.
Suppose the sample size satisfies for $C_3 \geq 4 \sqrt{5/3}$,
\ben
\label{eq::sample-frob}
n > \frac{106}{ \underline{k}^2}
\left(4C_3 +\frac{32}{31 \ul{c}^2} \right)^2
\max\left\{2 |E|\log \max(n,p), \; C^2_{\bias} 2 S_{0,n}  \log p \right\}.
\een
Then on event $\event \cap \X_0$, we have for 
$M = ({9}/(2 \underline{k}^2)) \cdot \left(4C_3 +{32}/(31 \ul{c}^2) \right)$
\begin{equation}
\label{eq::frob-con}
\fnorm{\hat \Theta_n(E) -\Theta_0} \leq 
(M + 1) \max\left\{\sqrt{{2|E|\log\max(n,p)}/{n}}, \; 
C_{\bias} \sqrt{{2 S_{0,n}  \log (p)}/{n}} \right\}.
\end{equation}
\end{theorem}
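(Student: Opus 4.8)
The plan is to run the standard convex-analysis localization argument for constrained $\log\det$ minimization, as in~\cite{RBLZ08}, but anchored at the restricted target $\tilde\Theta_0$ from~\eqref{eq::new-posi} rather than at $\Theta_0$. Note first that $\tilde\Theta_0 \in \SubE$ by construction, and that the bias bound~\eqref{eq::bias-thm} together with~\eqref{eq::non-singular} keeps it positive definite (since $\varphi_{\min}(\tilde\Theta_0) \geq \varphi_{\min}(\Theta_0) - \fnorm{\Theta_{0,\drop}} \geq \ul{c} - \ul{c}/32 > 0$), so $\tilde\Theta_0 \in \Set_n$. Writing $\Delta = \Theta - \tilde\Theta_0 \in \SubE$, I would study
\[
G(\Delta) := \hat R_n(\tilde\Theta_0 + \Delta) - \hat R_n(\tilde\Theta_0),
\]
which is convex, satisfies $G(0) = 0$, and is minimized over $\Set_n$ at $\hat\Delta := \hat\Theta_n(E) - \tilde\Theta_0$, so $G(\hat\Delta) \leq 0$. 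The crux is to show that $G > 0$ everywhere on the sphere $\{\Delta \in \SubE : \fnorm{\Delta} = M r_n\}$, where $r_n$ denotes the right-hand radius of~\eqref{eq::frob-con}; convexity then forces the minimizer to satisfy $\fnorm{\hat\Delta} \leq M r_n$, and the theorem follows from
\[
\fnorm{\hat\Theta_n(E) - \Theta_0} \leq \fnorm{\hat\Delta} + \fnorm{\Theta_{0,\drop}} \leq (M+1) r_n,
\]
using that $\fnorm{\Theta_{0,\drop}} \leq r_n$ by~\eqref{eq::bias-thm}.

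Next I would Taylor-expand $-\log|\cdot|$ about $\tilde\Theta_0$ with integral remainder to obtain
\[
G(\Delta) = \tr\big[(\hat\Gamma_n - \tilde\Theta_0^{-1})\Delta\big] + \int_0^1 (1-v)\,\tr\big[(\tilde\Theta_0 + v\Delta)^{-1}\Delta(\tilde\Theta_0 + v\Delta)^{-1}\Delta\big]\,dv.
\]
For the linear part I would split $\hat\Gamma_n - \tilde\Theta_0^{-1} = (\hat\Gamma_n - \Sigma_0) + (\Sigma_0 - \tilde\Theta_0^{-1})$. Since $\hat\Gamma_n$ and $\Sigma_0$ both have unit diagonal, the diagonal of $\hat\Gamma_n - \Sigma_0$ vanishes, so in $\tr[(\hat\Gamma_n - \Sigma_0)\Delta]$ only the off-diagonal entries indexed by $E$ survive (as $\Delta \in \SubE$); on $\X_0$ each such entry of the sample correlation matrix deviates from $\sigma_{0,ij}$ by at most a constant multiple of $\sqrt{\log\max(n,p)/n}$ --- the normalization by $\hat\sigma_i \hat\sigma_j$ supplying the inflation that surfaces as the factor $4C_3$ --- so Cauchy--Schwarz bounds this piece by a constant times $\sqrt{2|E|\log\max(n,p)/n}\,\fnorm{\Delta}$. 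For the second piece I would use the resolvent identity $\Sigma_0 - \tilde\Theta_0^{-1} = \Theta_0^{-1}(\tilde\Theta_0 - \Theta_0)\tilde\Theta_0^{-1}$ with the eigenvalue bounds to get $\fnorm{\Sigma_0 - \tilde\Theta_0^{-1}} \leq \varphi_{\max}(\Sigma_0)\varphi_{\max}(\tilde\Theta_0^{-1})\fnorm{\Theta_{0,\drop}} \lesssim C_{\bias}\sqrt{2 S_{0,n}\log p/n}$. Altogether the linear term is dominated in absolute value by $r_n\fnorm{\Delta}$ up to the constants collected in $M$.

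The quadratic (remainder) term needs a matching lower bound, and this is where the real work lies. Each integrand equals $\fnorm{A^{-1/2}\Delta A^{-1/2}}^2 \geq \fnorm{\Delta}^2/\varphi_{\max}(A)^2$ with $A = \tilde\Theta_0 + v\Delta \succ 0$, so everything reduces to controlling $\varphi_{\max}(A)$ uniformly over $v \in [0,1]$ on the sphere. Using $\varphi_{\max}(\Theta_0) \leq 1/\ul{k}$, the smallness $\fnorm{\Theta_{0,\drop}} < \ul{c}/32$ from~\eqref{eq::bias-thm}, and the smallness of $M r_n$ guaranteed by the sample-size condition~\eqref{eq::sample-frob}, I would show $\varphi_{\max}(A)$ stays within a constant factor of $1/\ul{k}$; since $\int_0^1(1-v)\,dv = \tfrac12$, the quadratic term is then bounded below by $\tfrac{c_q}{2}\fnorm{\Delta}^2$ with $c_q$ of order $\ul{k}^2$. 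Combining, on the sphere $G(\Delta) \geq M r_n^2(\tfrac{c_q}{2}M - 1)$, which is strictly positive once $M > 2/c_q$. The main obstacle --- and the reason for the precise form of~\eqref{eq::sample-frob} --- is the simultaneous bookkeeping: I must verify that the explicit $M = (9/(2\ul{k}^2))(4C_3 + 32/(31\ul{c}^2))$ clears the threshold $2/c_q$ while $M r_n$ stays small enough for the eigenvalue control of $A$ (hence positive-definiteness along the whole segment) to remain valid, which is exactly what the lower bound on $n$ in~\eqref{eq::sample-frob} is engineered to deliver.
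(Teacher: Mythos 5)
Your proposal is correct and follows essentially the same route as the paper's proof: anchoring the convex localization argument at $\tilde\Theta_0$, Taylor-expanding $-\log|\cdot|$ with integral remainder, splitting the linear term into the sampling error $\hat\Gamma_n-\Sigma_0$ (controlled on $\X_0$ via the off-diagonal support in $E$) and the bias $\Sigma_0-\tilde\Theta_0^{-1}$, lower-bounding the quadratic remainder through $\varphi_{\max}(\tilde\Theta_0+v\Delta)$, and finishing with the triangle inequality against $\fnorm{\Theta_{0,\drop}}\leq r_n$. The only cosmetic difference is that the paper packages the "positive on the sphere implies positive outside" step as a separate proposition, which you invoke implicitly via convexity.
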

We note that although Theorem~\ref{thm::frob-missing} is meant for
proving Theorem~\ref{thm:main}, we state it as an independent result;
For example, one can indeed take $E$ from Corollary~\ref{cor:drop-norm},
where we have $|E| \leq c S_{0,n}$ for some constant $c$ for 
$D_4 \asymp D_1$.
In view of~\eqref{eq::bias}, we aim to recover $\tilde{\Theta}_0$
by $\hat{\Theta}_n(E)$ as defined in~\eqref{eq::refit}.
In Section~\ref{sec::frob-missing-proof}, 
we will focus in Theorem~\ref{thm::frob-missing}
on bounding for $W$ suitably chosen,
$$\fnorm{\hat{\Theta}_n(E) - \tilde{\Theta}_0} = 
O_P\left(W\sqrt{{S_{0,n} \log \max(n,p)}/{n}}\right).$$
By the triangle inequality, we conclude that 
\begin{equation*}
\label{eq::frob-missing}
\fnorm{\hat \Theta_n(E) -\Theta_0} \leq 
\fnorm{\hat{\Theta}_n(E) - \tilde{\Theta}_0} + 
\fnorm{\tilde{\Theta}_0  -\Theta_0} =
 O_P\left(W \sqrt{{S_{0,n}\log (n)}/{n}}\right) \ .
\end{equation*}
We now state bounds for the convergence rate on 
Frobenius norm of the covariance matrix and for KL divergence.
We note that constants have not been optimized.
Proofs of Theorem~\ref{thm::frob-missing-cov} 
and~\ref{thm::frob-missing-risk} appear in 
Section~\ref{sec::frob-missing-cov-proof} 
and~\ref{sec::frob-missing-risk-proof} respectively.

\begin{theorem}
\label{thm::frob-missing-cov}
Suppose all conditions, events, and bounds on $|E|$ and 
$\fnorm{\Theta_{0, \drop}}$ in Theorem~\ref{thm::frob-missing}
hold. Let $\hat\Theta_n(E)$ be as defined in~\eqref{eq::refit}.
Suppose the sample size satisfies for $C_3 \geq 4 \sqrt{5/3}$ and
$C_{\bias}, M$ as defined in Theorem~\ref{thm::frob-missing}
\ben
\label{eq::sample-frob-second}
n > \frac{106}{\ul{c}^2 \underline{k}^4}
\left(4 C_3 +\frac{32}{31 \ul{c}^2} \right)^2
\max\left\{2|E|\log \max(p,n), \; C^2_{\bias} 2 S_{0,n}  \log p \right\}.
\een
Then on event $\event \cap \X_0$, we have
$\varphi_{\min}(\hat\Theta_n(E)) >\ul{c}/2 >0$ and for 
$\hat\Sigma_n(E) = (\hat\Theta_n(E))^{-1}$,
\begin{equation}
\label{eq::frob-con-cov}
\fnorm{\hat \Sigma_n(E) - \Sigma_0} \leq 
\frac{2 (M + 1)}{\ul{c}^2}
\max\left\{\sqrt{\frac{2|E|\log \max(n,p)}{n}}, \; 
C_{\bias} \sqrt{\frac{2 S_{0,n}  \log (p)}{n}} \right\}.
\end{equation}
\end{theorem}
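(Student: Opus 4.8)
The plan is to reduce the covariance bound entirely to the already-established precision bound from Theorem~\ref{thm::frob-missing}, via the standard resolvent identity for matrix inverses, and to spend the real effort only on controlling the smallest eigenvalue of $\hat\Theta_n(E)$. First I would observe that the sample-size hypothesis~\eqref{eq::sample-frob-second} is strictly stronger than~\eqref{eq::sample-frob}, since it carries an extra factor $1/(\ul{c}^2\ul{k}^2) \geq 1$ (recall $\ul{c},\ul{k}\leq 1$ by~\eqref{eq::non-singular-2}). Hence all hypotheses of Theorem~\ref{thm::frob-missing} are met, and on the event $\event \cap \X_0$ we may invoke its conclusion~\eqref{eq::frob-con}, namely $\fnorm{\hat\Theta_n(E)-\Theta_0} \leq (M+1)\max\{\sqrt{2|E|\log\max(n,p)/n},\, C_{\bias}\sqrt{2S_{0,n}\log(p)/n}\}$.

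The crux is the claim $\varphi_{\min}(\hat\Theta_n(E)) > \ul{c}/2$. For this I would use Weyl's inequality together with the fact that $\varphi_{\min}(\Theta_0) \geq \ul{c}$ (immediate from~\eqref{eq::non-singular}, since $\varphi_{\max}(\Sigma_0)=1/\varphi_{\min}(\Theta_0)\leq 1/\ul{c}$), giving
\begin{equation*}
\varphi_{\min}(\hat\Theta_n(E)) \;\geq\; \varphi_{\min}(\Theta_0) - \twonorm{\hat\Theta_n(E)-\Theta_0} \;\geq\; \ul{c} - \fnorm{\hat\Theta_n(E)-\Theta_0},
\end{equation*}
where I use $\twonorm{\cdot}\leq\fnorm{\cdot}$. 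It then suffices to show $\fnorm{\hat\Theta_n(E)-\Theta_0} < \ul{c}/2$. This is exactly where the strengthened~\eqref{eq::sample-frob-second} is consumed: writing $A := 4C_3 + 32/(31\ul{c}^2)$ so that $M = 9A/(2\ul{k}^2)$, the sample-size bound forces $\max\{\ldots\} < \ul{c}\,\ul{k}^2/(\sqrt{106}\,A)$, and substituting into~\eqref{eq::frob-con} gives $(M+1)\max\{\ldots\} \leq \tfrac{9}{2\sqrt{106}}\ul{c} + \tfrac{\ul{k}^2}{\sqrt{106}\,A}\ul{c}$, which is below $\ul{c}/2$ because $9/(2\sqrt{106}) < 1/2$ and the second term is negligible (here $\ul{k}^2\leq 1$ and $A \geq 16\sqrt{5/3}$). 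I expect the bookkeeping of these constants to be the main obstacle, but it is purely arithmetic.

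Finally I would combine the two ingredients. From the resolvent identity $\hat\Sigma_n - \Sigma_0 = \hat\Theta_n^{-1} - \Theta_0^{-1} = \hat\Sigma_n(\Theta_0 - \hat\Theta_n)\Sigma_0$ and the submultiplicativity $\fnorm{BCD}\leq \twonorm{B}\,\fnorm{C}\,\twonorm{D}$, I obtain
\begin{equation*}
\fnorm{\hat\Sigma_n(E)-\Sigma_0} \;\leq\; \twonorm{\hat\Sigma_n(E)}\,\fnorm{\hat\Theta_n(E)-\Theta_0}\,\twonorm{\Sigma_0}.
\end{equation*}
Now $\twonorm{\Sigma_0} = \varphi_{\max}(\Sigma_0) \leq 1/\ul{c}$ and $\twonorm{\hat\Sigma_n(E)} = 1/\varphi_{\min}(\hat\Theta_n(E)) < 2/\ul{c}$ by the eigenvalue bound just proved, so the two operator-norm factors contribute $2/\ul{c}^2$. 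Plugging in~\eqref{eq::frob-con} for $\fnorm{\hat\Theta_n(E)-\Theta_0}$ yields $\fnorm{\hat\Sigma_n(E)-\Sigma_0} \leq \tfrac{2(M+1)}{\ul{c}^2}\max\{\sqrt{2|E|\log\max(n,p)/n},\, C_{\bias}\sqrt{2S_{0,n}\log(p)/n}\}$, which is precisely~\eqref{eq::frob-con-cov}.
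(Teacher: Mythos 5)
Your proposal is correct and follows essentially the same route as the paper: apply Theorem~\ref{thm::frob-missing} (whose sample-size condition is implied by~\eqref{eq::sample-frob-second}), use the strengthened sample bound to force $(M+1)r_n<\ul{c}/2$ and hence $\varphi_{\min}(\hat\Theta_n(E))>\ul{c}/2$, and then convert via $\fnorm{\hat\Sigma_n-\Sigma_0}\leq \fnorm{\hat\Theta_n-\Theta_0}/\bigl(\varphi_{\min}(\hat\Theta_n)\varphi_{\min}(\Theta_0)\bigr)$, which is exactly your resolvent-identity step. The constant bookkeeping you outline matches the paper's (the paper phrases it as $Mr_n<7\ul{c}/16$ with $M>7$, which is the same arithmetic).
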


\begin{theorem}
\label{thm::frob-missing-risk}
Suppose all conditions, events, and bounds on $|E|$ and 
$\fnorm{\Theta_{0, \drop}} := \fnorm{\tilde\Theta_0 - \Theta_0}$
in Theorem~\ref{thm::frob-missing} hold.
Let $\hat\Theta_n(E)$ be as defined in~\eqref{eq::refit}.
Suppose the sample size satisfies~\eqref{eq::sample-frob}
for $C_3 \geq 4 \sqrt{5/3}$ and $C_{\bias}, M$ as defined in 
Theorem~\ref{thm::frob-missing}.
Then on event $\event \cap \X_0$, we have for 
$R(\hat \Theta_n(E)) - R(\Theta_0) \geq 0$,
\begin{equation}
\label{eq::frob-risk}
R(\hat \Theta_n(E)) - R(\Theta_0) \leq M (C_3 + 1/8)
\max\left\{{2 |E|\log \max(n,p)}/{n}, \; 
C^2_{\bias} {2 S_{0,n}  \log (p)}/{n} \right\}.
\end{equation}
\end{theorem}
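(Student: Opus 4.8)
The plan is to upgrade the Frobenius bound of Theorem~\ref{thm::frob-missing} into the risk bound by exploiting the optimality of the constrained MLE, rather than merely Taylor‑expanding the risk. Recall that $R(\hat\Theta_n) - R(\Theta_0) = 2 D_{\text{KL}}(\Sigma_0 \| \hat\Sigma_n) \geq 0$ is the Bregman divergence of $-\log|\cdot|$ anchored at $\Theta_0$, where $\nabla R(\Theta_0) = \Sigma_0 - \Theta_0^{-1} = 0$. A naive second‑order expansion would only give a bound of order $\varphi_{\min}^{-2}\fnorm{\hat\Theta_n-\Theta_0}^2 = O\big((M+1)^2 r^2\big)$, writing $r := \max\{\sqrt{2|E|\log\max(n,p)/n},\, C_{\bias}\sqrt{2S_{0,n}\log p/n}\}$. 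Since the target constant is \emph{linear} in $M$, the leading quadratic self‑term must be cancelled, and the way to do this is to use that $\hat\Theta_n$ minimizes $\hat{R}_n$ over $\Set_n$.

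Concretely, I would pivot at $\tilde\Theta_0$ from~\eqref{eq::new-posi}, which lies in $\Set_n$: its off‑diagonal support is $E_0\cap E\subseteq E$, and the bias bound~\eqref{eq::bias-thm} combined with $\varphi_{\min}(\Theta_0)\geq\ul c$ (from~\eqref{eq::non-singular}) forces $\tilde\Theta_0\succ0$. Because $R(\Theta) - \hat{R}_n(\Theta) = \tr\big(\Theta(\Sigma_0 - \hat\Gamma_n)\big)$ is linear in $\Theta$, I obtain the exact identity
\[
R(\hat\Theta_n) - R(\tilde\Theta_0) = \big(\hat{R}_n(\hat\Theta_n) - \hat{R}_n(\tilde\Theta_0)\big) + \tr\big((\hat\Theta_n - \tilde\Theta_0)(\Sigma_0 - \hat\Gamma_n)\big).
\]
The first parenthesis is $\leq 0$ by optimality of $\hat\Theta_n$ together with $\tilde\Theta_0\in\Set_n$; this is precisely the cancellation of the dominant term, leaving only the cross term.

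To control the cross term I would use three structural facts: $\Sigma_0 - \hat\Gamma_n$ has identically zero diagonal (both diagonals equal one), the off‑diagonal support of $\hat\Theta_n - \tilde\Theta_0$ is contained in $E$, and on $\X_0$ (Lemma~\ref{lemma:gaussian-covars}) the off‑diagonal sample‑correlation deviations satisfy $\max_{i\neq j}|(\hat\Gamma_n-\Sigma_0)_{ij}| \lesssim C_3\sqrt{\log\max(n,p)/n}$. Cauchy--Schwarz over the $2|E|$ off‑diagonal coordinates in $E$ then yields
\[
\tr\big((\hat\Theta_n - \tilde\Theta_0)(\Sigma_0 - \hat\Gamma_n)\big) \leq \max_{i\neq j}|(\hat\Gamma_n-\Sigma_0)_{ij}|\,\sqrt{2|E|}\,\fnorm{\hat\Theta_n-\tilde\Theta_0} \lesssim C_3\, r\,\fnorm{\hat\Theta_n - \tilde\Theta_0}.
\]
Here I would feed in the intermediate estimate $\fnorm{\hat\Theta_n - \tilde\Theta_0}\leq M r$ that the proof of Theorem~\ref{thm::frob-missing} establishes \emph{before} the triangle inequality (the ``$+1$'' in $(M+1)$ being the bias contribution $\fnorm{\tilde\Theta_0-\Theta_0}$), producing a term of order $M C_3 r^2$. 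Finally I would add back $R(\tilde\Theta_0)-R(\Theta_0)$, a pure bias term controlled by the curvature estimate $\tfrac12\varphi_{\min}^{-2}\fnorm{\tilde\Theta_0-\Theta_0}^2 \leq \tfrac{2}{\ul c^2}C^2_{\bias}\,2S_{0,n}\log p/n = O(r^2)$, using $\varphi_{\min}\geq\ul c/2$ along the segment joining $\tilde\Theta_0$ and $\Theta_0$. Summing the two contributions and collecting constants gives the claimed $M(C_3+1/8)\,r^2$.

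The main obstacle is the fast‑rate cancellation itself: the linear‑in‑$M$ constant is attainable only because optimality annihilates the quadratic self‑term, leaving the single cross term $\tr((\hat\Theta_n-\tilde\Theta_0)(\Sigma_0-\hat\Gamma_n))$, and the identity must be anchored at $\tilde\Theta_0\in\Set_n$ (not at $\Theta_0\notin\Set_n$) for optimality to apply. A secondary delicate point is transferring the concentration bound of $\X_0$, stated for the sample covariance $\hat S_n$, to the sample correlation $\hat\Gamma_n$ that enters the estimator: the diagonal renormalization $\hat\sigma_i^{-1}\hat\sigma_j^{-1}$ introduces correction terms governed by $|\hat\sigma_i^2-1|<C_3\sqrt{\log\max(n,p)/n}$, and it is exactly this bookkeeping that produces the additive $1/8$ and pins down the precise constant multiplying $r^2$.
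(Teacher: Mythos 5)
Your proposal is correct and follows essentially the same route as the paper: the risk is split at $\tilde\Theta_0\in\Set_n$, the optimality $\hat{R}_n(\hat\Theta_n)\leq\hat{R}_n(\tilde\Theta_0)$ cancels the quadratic self-term leaving only ${\rm tr}\bigl(\hat\Delta(\Sigma_0-\hat\Gamma_n)\bigr)$, which is bounded by Cauchy--Schwarz over the $2|E|$ off-diagonal coordinates together with $\fnorm{\hat\Delta}\leq Mr_n$ (this is the paper's Lemma~\ref{lemma:risk-II}), while the bias term $R(\tilde\Theta_0)-R(\Theta_0)$ is handled by the same Taylor-with-integral-remainder curvature estimate (Lemma~\ref{lemma:risk-I}). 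One small correction: the additive $1/8$ comes from that bias term ($R(\tilde\Theta_0)-R(\Theta_0)\leq Mr_n^2/8$ via $M\geq \tfrac{9}{2}(4C_3+32/(31\ul{c}^2))$), not from the covariance-to-correlation renormalization, which only affects the constant in the deviation bound $\delta_n$.
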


\subsection{Proof of Theorem~\ref{thm:main}}
\label{sec:proof-main}
Clearly the sample requirement as in~\eqref{eq::sample-size}
is satisfied for some $\theta >0$ 
that is appropriately chosen, given~\eqref{eq::sample-frob}.
In view of Corollary~\ref{cor:drop-norm}, we have on 
$\event := \X \cap \C_a$: for $C_{\diag}$ as in~\eqref{eq::global-cons}
\ben
\nonumber
|E| & \leq & 2(1+\frac{D_1}{D_4})S_{0,n}  \leq 4 S_{0,n} \; \; \text{for $D_4 \geq D_1$ and }  \\
\nonumber
\fnorm{\Theta_{0,\drop}} & := &
  \fnorm{\tilde{\Theta}_0  -\Theta_0}  \leq
\label{eq::bias-local}
C_{\bias} \sqrt{2 S_{0,n}  \log( p) /n} \leq \ul{c}/32 \; \text{ where }  \\
\nonumber
C_{\bias}^2 & :=  &  
\min\left\{\max_{i=1,...p} \theta_{0,ii}^2, \frac{s_0}{S_{0,n} } \fnorm{\diag(\Theta_0)}^2
\right\} d^2_0 (1 + (D_0 + D_4)^2)  \\
& = & C^2_{\diag} d^2_0 (1 + (D_0 + D_4)^2)
\een
Clearly the last inequality in~\eqref{eq::bias-thm} hold so long as
$n > {32^2 C^2_{\bias} 2 S_{0,n}  \log (p) }/{\ul{c}^2},$
which holds given~\eqref{eq::sample-frob}.
Plugging in $|E|$ in~\eqref{eq::frob-con}, we have on $\event \cap \X_0$,
\begin{equation*}
\fnorm{\hat \Theta_n(E) -\Theta_0} \leq 
(M +1) \max\left\{\sqrt{\frac{ 4(1+{D_1}/{D_4})S_{0,n} )\log  \max(n,p)}{n}}, \; 
C_{\bias}  \sqrt{\frac{2 S_{0,n}  \log p}{n}} \right\}
\end{equation*}
Now if we take
$D_4 \geq D_1$, then we have~\eqref{eq::thm-E-bounds} on event $\event$;
and moreover on $\event \cap \X_0$,
\bens
\fnorm{\hat \Theta_n(E) -\Theta_0}
& \leq & 
(M +1) \max\left\{\sqrt{{8 S_{0,n} \log \max(n,p)}/{n}}, \; 
C_{\bias} \sqrt{{2 S_{0,n}  \log (p)}/{n}} \right\} \\
& \leq & 
W \sqrt{{S_{0,n}\log \max(n,p)}/{n}}
\eens
where $W \leq \sqrt{2} (M +1) \max\{C_{\diag} d_0 \sqrt{1 + (D_0 + D_4)^2}, 2\}$.
Similarly, we get the bound on 
$\fnorm{\hat \Sigma_n -\Sigma_0}$ with Theorem~\ref{thm::frob-missing-cov},
and the bound on risk following Theorem~\ref{thm::frob-missing-risk}.
Thus all statements in Theorem~\ref{thm:main} hold. \qed

\begin{remark}
Suppose event $\event \cap \X_0$ holds.
Now suppose that we take $D_4 = 1$, that is, if we take
the threshold to be exactly the penalty parameter $\lambda_n$:
$$t_0 =  d_0 \lambda := \lambda_n.$$ 
Then we have on event $\event$ by~\eqref{eq::on-edges}
$|E| \leq 2(1+D_1) S_{0,n} $ and $|E \setminus E_0| \leq 2 D_1 S_{0,n}$ and 
on event on $\event \cap \X_0$,  for 
$C'_{\bias} := C_{\diag} d_0 \sqrt{1 + (D_0 + 1)^2}$
\begin{equation*}
\fnorm{\hat \Theta_n(E) -\Theta_0} \leq 
M \max\left\{\sqrt{\frac{4(1+ D_1) S_{0,n}\log \max(n,p)}{n}}, \; 
C'_{\bias} \sqrt{\frac{2 S_{0,n}  \log p}{n}} \right\}
\end{equation*}
It is not hard to see that we achieve essential the same rate 
as stated in Theorem~\ref{thm:main}, with perhaps slightly more edges included in $E$.
\end{remark} 

\subsection{Proof of Theorem~\ref{thm::frob-missing}}
\label{sec::frob-missing-proof}
\begin{proofof2}
Suppose event $\event$ holds throughout this proof.
We first obtain the bound on spectrum of $\tilde{\Theta}_0$:
It is clear that by~\eqref{eq::non-singular} 
and~\eqref{eq::bias-thm}, we have on $\event$,
\ben
\label{eq::pdc}
\varphi_{\min}(\tilde\Theta_0) 
& \geq & 
\varphi_{\min}(\Theta_0) -  \twonorm{\tilde{\Theta}_0  -\Theta_0}
 \geq
\varphi_{\min}(\Theta_0) -  \fnorm{\Theta_{0,\drop}}  >  31 \underline{c}/32, \\
\label{eq::eigenmax}
\varphi_{\max}(\tilde\Theta_0) 
& < & \varphi_{\max}(\Theta_0) + \twonorm{\tilde\Theta_0 - \Theta_0} 
\leq  
\varphi_{\max}(\Theta_0) +  \fnorm{\Theta_{0,\drop}}  <
\frac{\ul{c}}{32} + \inv{\ul{k}}.
\een
Throughout this proof, we let $\Sigma_0 = (\sigma_{0, ij}) := \Theta_0^{-1}$.
In view of~\eqref{eq::pdc}, define $\tilde\Sigma_0 := (\tilde \Theta_0)^{-1}$.
We use $\hat{\Theta}_n := \hat{\Theta}_n(E)$ as a shorthand.

Given $\tilde\Theta_0 \in \PDcone \cap \SubE$ as guaranteed 
in~\eqref{eq::pdc}, let us define a new  convex set:
$$U_n(\tilde\Theta_0) :=  (\PDcone \cap \SubE) - \tilde\Theta_0 = 
\{ B - \tilde\Theta_0 | B \in \PDcone \cap \SubE \} \subset \SubE$$
which is a translation of the original convex set $\PDcone \cap \SubE$.
Let $\ul{0}$ be a matrix with all entries being zero. 
Thus it is clear that $U_n(\tilde\Theta_0) \ni \ul{0}$ given that 
$\tilde\Theta_0 \in \PDcone \cap \SubE$.
Define for $\hat{R}_n$ as in expression~\eqref{eq::refit}
\bens
\label{eq::convex-func}
\tilde Q (\Theta) &  := & \hat{R}_n(\Theta) -
\hat{R}_n(\tilde\Theta_0)  =  \tr (\Theta \hat{\Gamma}_n) - \log |\Theta| - 
\tr (\tilde \Theta_0 \hat{\Gamma}_n) + \log |\tilde\Theta_0| \\
& =  & 
\tr \left(( \Theta- \tilde \Theta_0) (\hat{\Gamma}_n- \tilde \Sigma_0)\right)-
(\log |\Theta| - \log |\tilde \Theta_0|)
+ \tr\left((\Theta- \tilde \Theta_0) \tilde \Sigma_0 \right).
\eens 
For an  appropriately chosen $r_n$ and a large enough $M > 0$, let 
\ben
\TT_n & = & \{ \Delta \in U_n(\tilde \Theta_0), \fnorm{\Delta} = M r_n\}, 
\; \text{ and } \\
\Pi_n & = & \{ \Delta \in U_n(\tilde \Theta_0), \fnorm{\Delta} < M r_n\}.
\een
It is clear that both $\Pi_n$ and $\TT_n \cup \Pi_n$ are convex.
It is also clear that  $\ul{0} \in \Pi_n$.
Throughout this section, we let
\beq
\label{eq::sphere}
r_n = 
\max\left\{\sqrt{\frac{2 |E| \log \max(n,p)}{n}},
C_{\bias} \sqrt{ \frac{2 S_{0,n}  \log p}{n}} \right\}.
\eeq
Define for $\Delta \in U_n(\tilde\Theta_0)$,
\ben
\tilde{G}(\Delta) :=  \tilde{Q}(\tilde \Theta_0 + \Delta) = 
\tr (\Delta (\hat{\Gamma}_n- \tilde \Sigma_0))-
(\log |\tilde \Theta_0 + \Delta| - \log |\tilde \Theta_0|)
+ \tr (\Delta \tilde \Sigma_0 )
\een
It is clear that $\tilde{G}(\Delta)$ is a convex function 
on $U_n(\tilde\Theta_0)$ and $\tilde{G}(\ul{0}) = \tilde{Q}(\tilde{\Theta}_0) = 0$.

Now, $\Hat{\Theta}_n$ minimizes $\tilde{Q}(\Theta)$, or equivalently 
$\hat\Delta = \hat{\Theta}_n - \tilde\Theta_0$ minimizes $\tilde{G}(\Delta)$.
Hence  by definition,
$$\tilde{G}(\hat\Delta) \leq \tilde{G}(\ul{0}) = 0$$
Note that $\TT_n$ is non-empty, while clearly $\ul{0} \in \Pi_n$.
Indeed, consider
$B_{\epsilon} := (1 + \epsilon) \tilde\Theta_0$, where  $\epsilon > 0$;
it is clear that $B_{\epsilon} - \tilde\Theta_0 \in \PDcone \cap \SubE$ and 
$\fnorm{B_{\epsilon} - \tilde\Theta_0} = |\epsilon| \fnorm{\tilde\Theta_0} = M r_n$ for $|\epsilon| = M r_n/\fnorm{\tilde\Theta_0}$.
Note also if $\Delta \in \TT_n$, then $\Delta_{ij} = 0 \forall (i,j : i \neq j) \notin E$;
Thus we have $\Delta \in \SubE$ and 
\beq
\label{eq::edge-count-missing}
\norm{\Delta}_0 = \norm{\diag(\Delta)}_0 + \norm{\offdiag(\Delta)}_0 
\leq p + 2 |E| \; \; \text{where } |E| = \lin(S_{0,n}).
\eeq
We now show the following two propositions.
Proposition~\ref{prop:posi-def} follows from standard results.
\begin{proposition}
\label{prop:posi-def}
Let $B$ be a $p \times p$ matrix.
If $B \succ 0$ and  $B + D \succ 0$, 
then  $B + v D \succ 0$ for all $v \in [0, 1]$.
\end{proposition}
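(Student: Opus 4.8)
The plan is to exploit the convexity of the positive definite cone $\PDcone$, which follows directly from the characterization of positive definiteness through quadratic forms. The key observation is the algebraic identity
$$B + vD = (1-v)\,B + v\,(B+D),$$
which exhibits $B + vD$ as a convex combination of the two matrices $B$ and $B+D$ whenever $v \in [0,1]$, since the weights $1-v$ and $v$ are then nonnegative and sum to one.

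First I would fix an arbitrary nonzero vector $x \in \R^p$ and evaluate the associated quadratic form. Using the identity above together with linearity,
$$x^T (B + vD)\, x = (1-v)\, x^T B x + v\, x^T (B+D)\, x.$$
Both $x^T B x > 0$ and $x^T (B+D)\, x > 0$ hold by the hypotheses $B \succ 0$ and $B + D \succ 0$. For $v \in (0,1)$ the coefficients $1-v$ and $v$ are strictly positive, so the right-hand side is a strictly positive combination of strictly positive numbers and is therefore positive; at the endpoints $v = 0$ and $v = 1$ the expression reduces to $x^T B x$ and $x^T (B+D)\, x$ respectively, each positive by hypothesis. Since $x \neq 0$ was arbitrary, this yields $B + vD \succ 0$ for every $v \in [0,1]$.

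There is essentially no technical obstacle here: the statement is simply a restatement of the fact that $\PDcone$ is convex, and the proof reduces to a single quadratic-form computation. The only point deserving a word of care is making sure the two endpoints are covered, but these are handed to us directly by the two hypotheses rather than by the convexity argument itself. I would also note that symmetry of $B$ and $D$ plays no role: the argument uses only the quadratic-form definition of positive definiteness, so the conclusion is insensitive to whether the matrices are assumed symmetric.
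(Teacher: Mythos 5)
Your proof is correct and complete: the identity $B+vD=(1-v)B+v(B+D)$ together with the quadratic-form characterization is exactly the ``standard result'' (convexity of the positive definite cone) that the paper invokes for this proposition without writing out a proof. Nothing is missing, and your remark that symmetry is not needed is accurate.
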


\begin{proposition}
\label{prop:posi-def-interval}
Under~\eqref{eq::non-singular}, we have for all $\Delta \in \TT_n$
such that  $\fnorm{\Delta} = M r_n$ for $r_n$ as in \eqref{eq::sphere},
$\tilde\Theta_0 + v \Delta \succ 0, \forall v \in$ an open 
interval $I \supset [0, 1]$ on event $\event$.
\end{proposition}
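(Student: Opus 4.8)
The plan is to first establish that $\tilde\Theta_0 + v\Delta \succ 0$ on the closed interval $[0,1]$ using Proposition~\ref{prop:posi-def}, and then to upgrade this to a strictly larger open interval by exploiting the openness of the positive definite cone $\PDcone$. To set up the first step I would identify the two endpoints. At $v=0$ we have $\tilde\Theta_0 + 0\cdot\Delta = \tilde\Theta_0$, which is positive definite on $\event$ by~\eqref{eq::pdc} (there we have $\varphi_{\min}(\tilde\Theta_0) > 31\ul{c}/32 > 0$). At $v=1$, since $\Delta \in \TT_n \subseteq U_n(\tilde\Theta_0)$, the definition $U_n(\tilde\Theta_0) = (\PDcone \cap \SubE) - \tilde\Theta_0$ supplies some $B \in \PDcone \cap \SubE$ with $\Delta = B - \tilde\Theta_0$, so that $\tilde\Theta_0 + \Delta = B \succ 0$. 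Applying Proposition~\ref{prop:posi-def} with base matrix $\tilde\Theta_0$ and direction $\Delta$ then yields $\tilde\Theta_0 + v\Delta \succ 0$ for every $v \in [0,1]$.

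To obtain the open interval, I would consider the affine map $\phi(v) = \tilde\Theta_0 + v\Delta$ from $\R$ into the space of symmetric matrices; note that $\tilde\Theta_0$ and $\Delta = B - \tilde\Theta_0$ are both symmetric, so $\phi(v)$ is symmetric for all $v$. Set $I := \phi^{-1}(\PDcone) = \{v \in \R : \tilde\Theta_0 + v\Delta \succ 0\}$. Because $\PDcone$ is an open convex cone and $\phi$ is continuous and affine, $I$ is open (preimage of an open set under a continuous map) and convex (preimage of a convex set under an affine map); an open convex subset of $\R$ is precisely an open interval. The previous paragraph shows $\{0,1\} \subseteq I$, whence by convexity $[0,1] \subseteq I$.

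The only point requiring care, and the crux of the argument, is why $I$ strictly contains $[0,1]$ rather than merely equalling it. This is exactly where openness enters: since $I$ is open and contains both $0$ and $1$, it contains a neighborhood of each, so the open interval $I = (a,b)$ satisfies $a < 0$ and $b > 1$; hence $I \supset [0,1]$ in the strong sense of containing it in its interior, which is what the statement asserts. Equivalently, one may argue directly via continuity: the map $v \mapsto \varphi_{\min}(\tilde\Theta_0 + v\Delta)$ is continuous and strictly positive on the compact set $[0,1]$, hence bounded below there by some $\delta > 0$, and therefore stays positive on a slightly enlarged open interval. No genuinely hard estimate is involved; the real work was done earlier, since the bound~\eqref{eq::pdc} on $\event$ is what makes the endpoint $v=0$ available, while the membership $\Delta \in U_n(\tilde\Theta_0)$ supplies the endpoint $v=1$ for free.
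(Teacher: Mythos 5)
Your proof is correct and follows essentially the same route as the paper's: both arguments rest on Proposition~\ref{prop:posi-def} combined with a small perturbation step, using $\tilde\Theta_0 \succ 0$ on $\event$ from~\eqref{eq::pdc} and $\tilde\Theta_0 + \Delta \succ 0$ from $\Delta \in U_n(\tilde\Theta_0)$. The only cosmetic difference is the order of operations: the paper first pushes the endpoints out to $-\ve$ and $1+\ve$ via Weyl's eigenvalue inequality and then invokes Proposition~\ref{prop:posi-def}, whereas you first fill in $[0,1]$ and then enlarge to an open interval by openness of the cone $\PDcone$.
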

\begin{proof}
In view of Proposition~\ref{prop:posi-def},
it is sufficient to show that 
$\tilde\Theta_0 + (1 + \ve) \Delta, \tilde\Theta_0 - \ve \Delta \succ 0$ 
for some $\ve > 0$.
Indeed, by definition of $\Delta \in \TT_n$, we have
$\varphi_{\min} (\tilde\Theta_0 + \Delta)  \succ 0$ on event $\event$; 
thus 
\bens
\varphi_{\min} (\tilde\Theta_0 + (1 + \ve) \Delta) 
& \geq & 
\varphi_{\min} (\tilde\Theta_0 + \Delta) - \ve \twonorm{\Delta}
> 0 \\
\text {and }
\varphi_{\min} (\tilde\Theta_0 - \ve \Delta)  
& \geq &
\varphi_{\min} (\tilde \Theta_0) - \ve \twonorm{\Delta} >
31 \ul{c}/32 - \ve \twonorm{\Delta} >0
\eens
for $\ve >0$ that is sufficiently small.
\end{proof}
Thus we have that $\log|\tilde\Theta_0 + v \Delta|$ is infinitely 
differentiable on the open interval $I \supset [0, 1]$ of $v$. 
This allows us to use the Taylor's 
formula with integral remainder to obtain the following:
\begin{lemma}
\label{lemma:sphere-pos-bound}
On event $\event \cap \X_0$,
$\tilde{G}(\Delta) > 0$ for all $\Delta \in \TT_n$.
\end{lemma}

\begin{proof}
Let us use $\tilde{A}$ as a shorthand for
$$\mvec{\Delta}^T \left( \int^1_0(1-v)
(\tilde\Theta_0 + v \Delta)^{-1} \otimes (\tilde\Theta_0 + v \Delta)^{-1}dv
\right) \mvec{\Delta},$$
where $\otimes$ is the Kronecker product (if
$W= (w_{ij})_{m \times n}$, $P=(b_{k\ell})_{p \times q}$,
then $W \otimes P = (w_{ij}P)_{m p \times nq}$),
and $\mvec{\Delta} \in \R^{p^2}$ is $\Delta_{p \times p}$ 
vectorized. Now, the Taylor expansion gives  
for all $\Delta \in \TT_n$,
\begin{eqnarray*}
\log|\tilde\Theta_0 + \Delta| - \log|\tilde\Theta_0| & = &
\frac{d}{dv}\log|\tilde\Theta_0 + v\Delta||_{v=0} \Delta + 
\int_0^1(1-v) \frac{d^2}{dv^2}  \log|\tilde\Theta_0 + v \Delta| dv \\
& = & {\rm tr}(\tilde\Sigma_0 \Delta) -\tilde{A}.
\end{eqnarray*}
Hence for all $\Delta \in \TT_n$,
\begin{eqnarray}
\label{eq::G2-missing}
\tilde{G}(\Delta)  =  
\tilde{A} +\tr \left( \Delta (\hat{\Gamma}_n - \tilde \Sigma_0) \right)
 =  \tilde{A} +\tr \left( \Delta (\hat{\Gamma}_n - \Sigma_0) \right) 
- \tr \left( \Delta (\tilde \Sigma_0 - \Sigma_0) \right)
\end{eqnarray}
where we first bound $\tr( \Delta (\tilde \Sigma_0 - \Sigma_0) )$ as follows:
by~\eqref{eq::bias-thm} and~\eqref{eq::pdc}, we have on event $\event$
\begin{eqnarray}
\nonumber
\abs{\tr (\Delta (\tilde{\Sigma}_0 - \Sigma_0))}
& = &
\abs{\ip{\Delta, (\tilde{\Sigma}_0 - \Sigma_0)}} 
 \leq \fnorm{\Delta} \fnorm{\tilde{\Sigma}_0 - \Sigma_0} \\ \nonumber
& \leq &
 \fnorm{\Delta} \frac{\fnorm{\Theta_{0,\drop}}}
{ \varphi_{\min}(\tilde{\Theta}_0) \varphi_{\min}(\Theta_0)} \\ 
\label{eq::trace-term}
& < &
\fnorm{\Delta} 
\frac{32 C_{\bias} \sqrt{2 S_{0,n}  \log p/n}}{31 \ul{c}^2}
\leq \fnorm{\Delta}  \frac{32 r_n }{31 \ul{c}^2}.
\end{eqnarray}
Now, conditioned on event $\X_0$, by~\eqref{eq::gamma-devi-bound}
and~\eqref{eq::sample-frob}
$$\max_{j,k}|\hat{\Gamma}_{n,jk} - \sigma_{0,jk}| \leq 4 C_3 \sqrt{\log \max(n,p)/n} 
=: \delta_n$$
and thus on event $\event \cap X_0$, we have 
$ \abs{\tr \big(\Delta  (\hat{\Gamma}_n -\Sigma_0)\big)}
\le \delta_n \abs{\offdiag(\Delta)}_1$, where
$\abs{\offdiag(\Delta)}_1 \leq 
\sqrt{\norm{\offdiag(\Delta)}_0} \fnorm{\offdiag(\Delta)}
\leq \sqrt{2|E|} \fnorm{\Delta}$, 
and 
\begin{eqnarray}  
\label{eq::trace-term-add}
\tr \left( \Delta (\hat{\Gamma}_n -\Sigma_0) \right)
& \geq & -  4 C_3 \sqrt{\log \max(n,p)/n} \sqrt{2 |E|} \fnorm{\Delta}  \geq -4 C_3 r_n  \fnorm{\Delta}.
\end{eqnarray}
Finally, we bound  $\tilde{A}$.
First we note that for $\Delta \in \TT_n $, we have on event $\event$,
\ben
\label{eq::first-M-bound}
\twonorm{\Delta} \leq \fnorm{\Delta} = M r_n < \frac{7}{16 \ul{k}},
\een
given~\eqref{eq::sample-frob}:
$n >  (\frac{16}{7} \cdot \frac{9}{2 \ul{k}})^2 
\left(4C_3 +\frac{32}{31 \ul{c}^2} \right)^2
\max\left\{(2 |E| )\log (n), \; C^2_{\bias} 2 S_{0,n}  \log p \right\}$.
Now we have by~\eqref{eq::eigenmax} and~\eqref{eq::non-singular-2}
following~\cite{RBLZ08} (see Page 502, proof of Theorem~1 therein):
on event $\event$,
\ben
\nonumber
 \tilde{A} & \geq  & 
\fnorm{\Delta}^2/\left(2\left(\varphi_{\max}(\tilde{\Theta}_0) 
+ \twonorm{\Delta}\right)^2\right)  \\
\label{eq::G-first-term}
& \geq & 
\fnorm{\Delta}^2/ \left(2 ( \frac{1}{\ul{k}}
 + \frac{\ul{c}}{32} + \frac{7}{16\ul{k}} )^2\right)
>  \fnorm{\Delta}^2 \frac{2\ul{k}^2}{9} 
\een
Now on event $\event \cap \X_0$,
for all $\Delta \in \TT_n$, we have 
by~\eqref{eq::G2-missing},\eqref{eq::G-first-term},
~\eqref{eq::trace-term-add}, 
and~\eqref{eq::trace-term}, 
\begin{eqnarray*}
\nonumber
\tilde{G}(\Delta) & > & 
 \fnorm{\Delta}^2 \frac{2\ul{k}^2}{9} 
- 4 C_3 r_n \fnorm{\Delta} 
- \fnorm{\Delta}  \frac{32 r_n }{31 \ul{c}^2} \\
&= &
\fnorm{\Delta}^2\left(\frac{2\underline{k}^2}{9} 
-  \inv{\fnorm{\Delta}}
\left(4C_3 r_n +   \frac{32 r_n }{31 \ul{c}^2}  \right) \right) \\
& = & 
\fnorm{\Delta}^2\left(\frac{2 \underline{k}^2}{9}
- \frac{1}{ M}\left(4C_3 + \frac{32}{31 \ul{c}^2} \right) \right)
\end{eqnarray*}
hence we have $\tilde{G}(\Delta) > 0$ for $M$ large enough, in particular
$M = (9 /{(2 \underline{k}^2)}) \left(4C_3 +{32}/{(31 \ul{c}^2)} \right)$
suffices.
\end{proof}
We next state Proposition~\ref{prop:outside-missing}, which follows
exactly that of Claim 12 of~\cite{ZLW08}.
\begin{proposition}
\label{prop:outside-missing}
Suppose event $\event$ holds.
If $\tilde{G}(\Delta) > 0, \forall \Delta \in \TT_n$,
then $\tilde{G}(\Delta) > 0$ for all $\Delta$ in
$$\W_n = \{\Delta: \Delta \in U_n(\tilde\Theta_0), \fnorm{\Delta} > M r_n\}$$
for $r_n$  as in~\eqref{eq::sphere};
Hence if $\tilde{G}(\Delta) > 0$  for all $\Delta \in \TT_n$,
then $\tilde{G}(\Delta) > 0$ for all $\Delta \in \TT_n \cup \W_n$.
\end{proposition}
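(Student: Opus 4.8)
The plan is to exploit the convexity of $\tilde{G}$ together with the two facts already established above: $\tilde{G}(\ul{0}) = 0$, and $\ul{0}$ lies strictly inside the Frobenius ball of radius $M r_n$. First I would record that $U_n(\tilde\Theta_0)$ is convex, being a translate of the convex set $\PDcone \cap \SubE$, and that it contains $\ul{0}$; consequently, for any $\Delta \in U_n(\tilde\Theta_0)$ the entire segment $\{t\Delta : t \in [0,1]\}$ remains inside $U_n(\tilde\Theta_0)$, which is precisely the domain on which $\tilde{G}$ is defined and convex.

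Next I would fix an arbitrary $\Delta \in \W_n$, so that $\fnorm{\Delta} > M r_n$, and rescale it onto the sphere. Set $t = M r_n / \fnorm{\Delta}$, which lies strictly in $(0,1)$ precisely because $\fnorm{\Delta} > M r_n$ on $\W_n$, and put $\Delta^* := t\Delta$. By the previous paragraph $\Delta^* \in U_n(\tilde\Theta_0)$, while $\fnorm{\Delta^*} = t\,\fnorm{\Delta} = M r_n$, so in fact $\Delta^* \in \TT_n$. The hypothesis of the proposition then gives $\tilde{G}(\Delta^*) > 0$. Writing $\Delta^* = t\Delta + (1-t)\ul{0}$ and invoking convexity of $\tilde{G}$ along this segment,
\[
\tilde{G}(\Delta^*) \;\leq\; t\,\tilde{G}(\Delta) + (1-t)\,\tilde{G}(\ul{0}) \;=\; t\,\tilde{G}(\Delta),
\]
where the final equality uses $\tilde{G}(\ul{0}) = 0$. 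Combining $t\,\tilde{G}(\Delta) \geq \tilde{G}(\Delta^*) > 0$ with $t > 0$ forces $\tilde{G}(\Delta) > 0$. Since $\Delta \in \W_n$ was arbitrary, this establishes $\tilde{G} > 0$ on all of $\W_n$, and the concluding ``Hence'' statement follows at once by taking the union with $\TT_n$.

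There is no genuine obstacle in this argument; it is the standard ``ray to the boundary sphere'' convexity trick. The only points that require a moment of care are that $t$ lies \emph{strictly} in $(0,1)$ (ensured by the strict inequality $\fnorm{\Delta} > M r_n$ defining $\W_n$) and that the rescaled point $\Delta^*$ genuinely lands on $\TT_n$ rather than merely somewhere in $U_n(\tilde\Theta_0)$ — both immediate from the choice of $t$ and the convexity of $U_n(\tilde\Theta_0)$. The conceptual payoff, once this proposition is combined with Lemma~\ref{lemma:sphere-pos-bound}, is that the minimizer $\hat\Delta$ of $\tilde{G}$ cannot lie on or outside $\TT_n$: since $\tilde{G}(\hat\Delta) \leq \tilde{G}(\ul{0}) = 0$ while $\tilde{G} > 0$ on $\TT_n \cup \W_n$, we must have $\fnorm{\hat\Delta} < M r_n$, which is exactly the Frobenius bound claimed in~\eqref{eq::frob-con}.
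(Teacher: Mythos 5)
Your proof is correct and is exactly the standard ray-to-the-sphere convexity argument that the paper invokes by deferring to Claim 12 of \cite{ZLW08}: rescale $\Delta \in \W_n$ by $t = M r_n/\fnorm{\Delta} \in (0,1)$ onto $\TT_n$, use convexity of $U_n(\tilde\Theta_0)$ and of $\tilde{G}$ together with $\tilde{G}(\ul{0})=0$ to get $0 < \tilde{G}(t\Delta) \leq t\,\tilde{G}(\Delta)$, and conclude. No gaps; nothing further is needed.
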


Note that for $\hat\Theta_n \in \PDcone \cap \SubE$, we have
$\hat\Delta  = \hat\Theta_n - \tilde\Theta_0 \in U_n(\tilde\Theta_0)$.
By Proposition~\ref{prop:outside-missing} and the fact that 
$\tilde{G}(\hat\Delta) \leq \tilde{G}(\ul{0}) =0$ on event $\event$,
we have the following: on event $\event$, 
if $\tilde{G}(\Delta) > 0, \forall \Delta \in \TT_n$ then 
$\|\hat\Delta\|_F < M r_n$, given that 
$\hat\Delta \in  U_n(\tilde\Theta_0) \setminus (\TT_n \cup \W_n)$.
Therefore
\begin{eqnarray*}
\label{eq::end-of-proof}
\prob{\| \hat\Delta \|_F  \geq M r_n} & \leq &
\prob{\event^c} + \prob{\event} \cdot \prob{\| \hat\Delta \|_F  \geq M r_n | \event} \\
& = &
\prob{\event^c} + \prob{\event} \cdot (1 - \prob{\| \hat\Delta \|_F < M r_n | \event}) \\
&   \leq &
\prob{\event^c} +  \prob{\event} \cdot
(1 - \prob{\tilde{G}(\Delta) > 0, \forall \Delta \in \TT_n | \event}) \\
& \leq & 
\prob{\event^c} + \prob{\event} \cdot (1 - \prob{\X_0 | \event}) \\
& = & 
\prob{\event^c} + \prob{\X_0^c \cap \event}  
 \leq \prob{\event^c} + \prob{\X_0^c} \\
& \leq &
\frac{c}{p^2} +\inv{\max(n,p)^2} \leq \frac{c+1}{p^2}.
\end{eqnarray*}
We thus establish that the theorem holds. 
\end{proofof2}

\subsection{Frobenius norm for the covariance matrix}
\label{sec::frob-missing-cov-proof}
We use the bound on $\fnorm{\hat \Theta_n(E) - \Theta_0}$ as 
developed in Theorem~\ref{thm::frob-missing}; in addition,
we strengthen the bound on $M r_n$ in~\eqref{eq::first-M-bound} 
in~\eqref{eq::second-M-bound}.
Before we proceed, we note the following bound
on bias of $(\tilde{\Theta}_0)^{-1}$.
\begin{remark}
\label{eq::Sigma-drop}
Clearly we have on event $\event$, by~\eqref{eq::trace-term}
\begin{eqnarray}
\fnorm{(\tilde{\Theta}_0)^{-1} - \Sigma_0}
& \leq &
\frac{\fnorm{\Theta_{0,\drop}}}
{ \varphi_{\min}(\tilde{\Theta}_0) \varphi_{\min}(\Theta_0)} 
\leq 
\frac{32 C_{\bias} \sqrt{2 S_{0,n}  \log p/n}}{31 \ul{c}^2}
\end{eqnarray}
\end{remark}

\begin{proofof}{\textnormal{Theorem~\ref{thm::frob-missing-cov}}}
Suppose event $\event \cap \X_0$ holds.
Now suppose
$$n >  (\frac{16} {7 \ul{c}} \cdot \frac{9}{2 \ul{k}^2})^2 
\left(C_3 +\frac{32}{31 \ul{c}^2} \right)^2
\max\left\{2|E|\log \max(n,p), \; C^2_{\bias} 2 S_{0,n}  \log p \right\}$$
which clearly holds given~\eqref{eq::sample-frob-second}.
Then in addition to the bound in~\eqref{eq::first-M-bound}, 
on event $\event \cap \X_0$, we have
\ben
\label{eq::second-M-bound}
M r_n < 7 \ul{c}/{16},
\een
for $r_n$ as in~\eqref{eq::sphere}.
Then, by Theorem~\ref{thm::frob-missing}, for the same 
$M$ as therein, on event $\event \cap \X_0$, we have 
$$\fnorm{\hat \Theta_n(E) - \Theta_0} \leq 
(M + 1) \max\left\{\sqrt{{2 |E|\log \max(n,p)}/{n}}, \; 
C_{\bias} \sqrt{{2 S_{0,n}  \log (p)}/{n}} \right\}$$ given that 
sample bound in~\eqref{eq::sample-frob} is clearly satisfied.
We now proceed to bound $\fnorm{\hat{\Sigma}_n - \Sigma_0}$ 
given~\eqref{eq::frob-con}.
First note that by~\eqref{eq::second-M-bound},
we have on event $\event \cap \X_0$ for $M > 7$
\bens
\varphi_{\min}(\hat{\Theta}_n(E))
& \geq &  \varphi_{\min}(\Theta_0) - \twonorm{\hat \Theta_n - \Theta_0}
\geq  \varphi_{\min}(\Theta_0) - \fnorm{\hat \Theta_n - \Theta_0} \\
& \geq  & \ul{c} - (M + 1) r_n > \ul{c}/2.
\eens
Now clearly on event $\event \cap \X_0$,~\eqref{eq::frob-con-cov}
holds by~\eqref{eq::frob-con} and
\begin{eqnarray*}
\fnorm{\hat{\Sigma}_n(E) - \Sigma_0} 
& \leq &
\frac{\fnorm{\hat \Theta_n(E) - \Theta_0}}
{\varphi_{\min}(\hat{\Theta}_n(E)) \varphi_{\min}(\Theta_0)}
< \frac{2}{\ul{c}^2} \fnorm{\hat \Theta_n(E) - \Theta_0}
\end{eqnarray*}
\end{proofof}

\subsection{Risk consistency}
\label{sec::frob-missing-risk-proof}
We now derive the bound on risk consistency.
Before proving Theorem~\ref{thm::frob-missing-risk},
we first state two lemmas given the following decomposition
of our loss in terms of the risk as defined in~\eqref{eq::future-risk}: 
\ben
\label{eq::risk-decomp}
0 \leq R(\hat \Theta_n(E)) - R(\Theta_0) = 
(R(\hat \Theta_n(E)) - R(\tilde\Theta_0)) +
(R(\tilde\Theta_0)  - R(\Theta_0) )
\een
where clearly $R(\hat \Theta_n(E)) \geq R(\Theta_0)$ by definition.
It is clear that $\tilde\Theta_0 \in \Set_n$ for $\Set_n$ as defined
in~\eqref{eq::magic-set}, and thus 
$\hat{R}_n(\tilde\Theta_0) \geq \hat{R}_n(\hat\Theta_n(E))$ 
by definition of 
$\hat\Theta_n(E) \; = \; \arg\min_{\Theta \in S_n} \hat{R}_n(\Theta)$.

We now bound  the two terms on the RHS of~\eqref{eq::risk-decomp},
where clearly $R(\tilde\Theta_0) \geq R(\Theta_0)$.
\begin{lemma}
\label{lemma:risk-I}
On event $\event$, we have for $C_{\bias}, \Theta_0, \tilde\Theta_0$ 
as in Theorem~\ref{thm::frob-missing},
\bens
0 \leq R(\tilde\Theta_0)  - R(\Theta_0) 
 \leq 
(32/(31 \ul{c}))^2 C^2_{\bias} \frac{2 S_{0,n}  \log p}{2 n} 
\leq  (32/(31 \ul{c}))^2 \cdot {r_n^2}/{2} \leq M r_n^2/8
\eens
for $r_n$ as in~\eqref{eq::sphere}, where the last inequality holds
given that $M \geq 9/2 (4 C_3 + 32/(31 \ul{c}^2))$.
\end{lemma}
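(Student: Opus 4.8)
The plan is to exploit that $\Theta_0$ is the \emph{unconstrained} minimizer of $R$, so that $R(\tilde\Theta_0)-R(\Theta_0)$ is a purely second-order quantity, and then to bound the relevant Hessian along the segment joining $\Theta_0$ to $\tilde\Theta_0$. Concretely, write $\Delta := \Theta_{0,\drop} = \tilde\Theta_0 - \Theta_0$ and expand $v\mapsto R(\Theta_0+v\Delta)$ by Taylor's formula with integral remainder, exactly as in the proof of Lemma~\ref{lemma:sphere-pos-bound}. Since $\nabla R(\Theta) = \Sigma_0 - \Theta^{-1}$ and $\Theta_0^{-1}=\Sigma_0$, the linear term vanishes: $\frac{d}{dv}R(\Theta_0+v\Delta)|_{v=0} = \tr(\Delta\Sigma_0) - \tr(\Sigma_0\Delta) = 0$. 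The second derivative is $\frac{d^2}{dv^2}R(\Theta_0+v\Delta) = \mvec(\Delta)^T\big[(\Theta_0+v\Delta)^{-1}\otimes(\Theta_0+v\Delta)^{-1}\big]\mvec(\Delta) \ge 0$, which is the only term remaining, so
\[
R(\tilde\Theta_0)-R(\Theta_0) = \int_0^1 (1-v)\,\mvec(\Delta)^T\big[(\Theta_0+v\Delta)^{-1}\otimes(\Theta_0+v\Delta)^{-1}\big]\mvec(\Delta)\,dv.
\]
Nonnegativity of the integrand immediately gives $R(\tilde\Theta_0)\ge R(\Theta_0)$, the left inequality.

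Next I would bound the integrand from above using $\mvec(\Delta)^T[A\otimes A]\mvec(\Delta) \le \varphi_{\max}(A\otimes A)\fnorm{\Delta}^2 = \varphi_{\max}(A)^2 \fnorm{\Delta}^2$ for symmetric positive definite $A = (\Theta_0+v\Delta)^{-1}$, so that $\varphi_{\max}(A)^2 = \varphi_{\min}(\Theta_0+v\Delta)^{-2}$. The key point is a uniform spectral lower bound along the segment: since $\Theta_0 + v\Delta = (1-v)\Theta_0 + v\tilde\Theta_0$ is a convex combination, Weyl's inequality gives $\varphi_{\min}(\Theta_0+v\Delta) \ge (1-v)\varphi_{\min}(\Theta_0) + v\,\varphi_{\min}(\tilde\Theta_0)$ for $v\in[0,1]$. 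By~\eqref{eq::non-singular} we have $\varphi_{\min}(\Theta_0)\ge\ul{c}$, and by~\eqref{eq::pdc} we have $\varphi_{\min}(\tilde\Theta_0) > 31\ul{c}/32$; hence $\varphi_{\min}(\Theta_0+v\Delta) \ge 31\ul{c}/32$ uniformly on $[0,1]$ (on event $\event$). Evaluating $\int_0^1(1-v)\,dv = 1/2$ then yields
\[
R(\tilde\Theta_0)-R(\Theta_0) \le \tfrac{1}{2}\Big(\tfrac{32}{31\ul{c}}\Big)^2 \fnorm{\Delta}^2 .
\]

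Finally I would substitute the bias bound~\eqref{eq::bias-thm}, $\fnorm{\Delta}=\fnorm{\Theta_{0,\drop}} \le C_{\bias}\sqrt{2S_{0,n}\log p/n}$, to get the first displayed bound $\big(\tfrac{32}{31\ul{c}}\big)^2 C^2_{\bias}\,\tfrac{2S_{0,n}\log p}{2n}$. The second inequality follows because $r_n$ in~\eqref{eq::sphere} is a maximum that dominates $C_{\bias}\sqrt{2S_{0,n}\log p/n}$, so $C^2_{\bias}\,\tfrac{2S_{0,n}\log p}{n} \le r_n^2$. The last inequality $\big(\tfrac{32}{31\ul{c}}\big)^2 r_n^2/2 \le M r_n^2/8$ amounts to $M \ge 4\big(\tfrac{32}{31\ul{c}}\big)^2 = 4096/(961\ul{c}^2) \approx 4.26/\ul{c}^2$, which is implied by the hypothesis $M \ge \tfrac{9}{2}\big(4C_3 + \tfrac{32}{31\ul{c}^2}\big)$ since already $\tfrac{9}{2}\cdot\tfrac{32}{31\ul{c}^2} = 144/(31\ul{c}^2) \approx 4.65/\ul{c}^2$ exceeds this threshold (using $\ul{c}\le 1$). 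I expect the only genuinely delicate step to be the uniform lower bound on $\varphi_{\min}(\Theta_0+v\Delta)$ across the whole segment; everything else is routine once the gradient is seen to vanish at $\Theta_0$, which reduces the problem to a clean Hessian estimate reusing the machinery already set up for Lemma~\ref{lemma:sphere-pos-bound}.
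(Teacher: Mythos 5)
Your proposal is correct and follows essentially the same route as the paper's proof: Taylor expansion with integral remainder reducing $R(\tilde\Theta_0)-R(\Theta_0)$ to the quadratic form $\tilde B$, bounding its largest eigenvalue by $\sup_{v\in[0,1]}\varphi_{\min}^{-2}(\Theta_0+v\Delta)$ with $\int_0^1(1-v)\,dv=1/2$, and then plugging in the bias bound and the definition of $r_n$. The only (immaterial) difference is that you obtain the uniform bound $\varphi_{\min}(\Theta_0+v\Delta)\geq 31\ul{c}/32$ via concavity of $\varphi_{\min}$ over the convex combination $(1-v)\Theta_0+v\tilde\Theta_0$, whereas the paper uses the perturbation bound $\varphi_{\min}(\Theta_0)-\twonorm{\Delta_0}$ directly; both yield the same constant.
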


\begin{lemma}
\label{lemma:risk-II}
Under $\event \cap \X_0$, we have for 
 $r_n$ as in~\eqref{eq::sphere}
and $M, C_3$ as in Theorem~\ref{thm::frob-missing}
\bens
R(\hat\Theta_n(E)) - R(\tilde\Theta_0) \leq M C_3 r_n^2.
\eens
\end{lemma}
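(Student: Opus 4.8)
The plan is to exploit that both $\hat\Theta_n(E)$ and $\tilde\Theta_0$ lie in the constraint set $\Set_n = \PDcone \cap \SubE$, so that the empirical minimality of $\hat\Theta_n(E)$ can be traded against the population gap. Writing $\hat\Delta = \hat\Theta_n(E) - \tilde\Theta_0 \in \SubE$, I would first split
\[
R(\hat\Theta_n(E)) - R(\tilde\Theta_0) = \big(R(\hat\Theta_n(E)) - \hat R_n(\hat\Theta_n(E))\big) + \big(\hat R_n(\hat\Theta_n(E)) - \hat R_n(\tilde\Theta_0)\big) + \big(\hat R_n(\tilde\Theta_0) - R(\tilde\Theta_0)\big).
\]
Since $\tilde\Theta_0 \in \Set_n$ and $\hat\Theta_n(E)$ minimizes $\hat R_n$ over $\Set_n$, the middle bracket is $\le 0$. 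In each of the remaining brackets the $-\log|\cdot|$ terms of $R$ and $\hat R_n$ cancel, because $R(\Theta) - \hat R_n(\Theta) = \tr(\Theta(\Sigma_0 - \hat\Gamma_n))$ depends on $\Theta$ only through the trace term. Hence
\[
R(\hat\Theta_n(E)) - R(\tilde\Theta_0) \le \tr\big(\hat\Delta(\Sigma_0 - \hat\Gamma_n)\big).
\]

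Next I would bound this single trace on the event $\X_0$. The key simplification is that both $\Sigma_0$ and $\hat\Gamma_n$ have unit diagonal (the former by the assumption $\Sigma_{0,ii}=1$, the latter by construction of the sample correlation matrix), so $\Sigma_0 - \hat\Gamma_n$ has a vanishing diagonal; moreover $\hat\Delta \in \SubE$ annihilates every off-diagonal entry outside $E$. Thus only the $\le 2|E|$ off-diagonal positions indexed by $E$ survive, and I can estimate
\[
\big|\tr\big(\hat\Delta(\Sigma_0 - \hat\Gamma_n)\big)\big| \le \Big(\max_{j,k}|\hat\Gamma_{n,jk} - \sigma_{0,jk}|\Big)\,\abs{\offdiag(\hat\Delta)}_1 \le 4C_3\sqrt{\tfrac{\log\max(n,p)}{n}}\,\sqrt{2|E|}\,\fnorm{\hat\Delta},
\]
using the correlation deviation bound $\max_{j,k}|\hat\Gamma_{n,jk}-\sigma_{0,jk}| \le 4C_3\sqrt{\log\max(n,p)/n}$ established on $\X_0$ in the proof of Theorem~\ref{thm::frob-missing}, together with $\abs{\offdiag(\hat\Delta)}_1 \le \sqrt{2|E|}\,\fnorm{\hat\Delta}$. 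By the definition of $r_n$ in~\eqref{eq::sphere} one has $\sqrt{2|E|\log\max(n,p)/n} \le r_n$, so the right-hand side is at most $4C_3 r_n\,\fnorm{\hat\Delta}$.

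Finally I would invoke the Frobenius bound already proved for Theorem~\ref{thm::frob-missing}: on $\event \cap \X_0$ one has $\fnorm{\hat\Delta} = \fnorm{\hat\Theta_n(E) - \tilde\Theta_0} < M r_n$. Substituting gives $R(\hat\Theta_n(E)) - R(\tilde\Theta_0) \le 4 C_3 M r_n^2$, which is of the stated order $M C_3 r_n^2$. I expect no serious obstacle: the two genuinely new observations are the cancellation of the log-determinants (which reduces the whole risk gap to a single linear functional of $\hat\Delta$) and the zero-diagonal trick (which confines that functional to the $E$-supported entries and hence to $\sqrt{2|E|}\,\fnorm{\hat\Delta}$). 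Everything else---the correlation concentration on $\X_0$ and the control $\fnorm{\hat\Delta} < M r_n$---is imported directly from Theorem~\ref{thm::frob-missing}, so the only delicate point is the bookkeeping of the absolute constant so that, once combined with Lemma~\ref{lemma:risk-I} through the decomposition~\eqref{eq::risk-decomp}, the total matches the $M(C_3+1/8)r_n^2$ bound of Theorem~\ref{thm::frob-missing-risk}.
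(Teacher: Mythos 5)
Your proposal is correct and follows essentially the same route as the paper's proof: the same decomposition with the middle bracket killed by empirical minimality over $\Set_n$, the cancellation of the log-determinants reducing everything to $\tr(\hat\Delta(\Sigma_0-\hat\Gamma_n))$, the restriction to the off-diagonal entries in $E$, and the import of $\max_{j,k}|\hat\Gamma_{n,jk}-\sigma_{0,jk}|\le 4C_3\sqrt{\log\max(n,p)/n}$ and $\fnorm{\hat\Delta}\le Mr_n$ from Theorem~\ref{thm::frob-missing}. Your bookkeeping yielding $4C_3Mr_n^2$ is in fact more careful than the paper's displayed chain, which silently drops the factor $4$ from $\delta_n$; this affects only the absolute constant and not the order of the bound.
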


\begin{proofof}{\textnormal{Theorem~\ref{thm::frob-missing-risk}}}
We have on $\event \cap \X_0$, for $r_n$ is as in~\eqref{eq::sphere}
\bens
R(\hat \Theta_n(E)) - R(\Theta_0)  =
(R(\hat \Theta_n(E)) - R(\tilde\Theta_0)) +
(R(\tilde\Theta_0)  - R(\Theta_0) )
\leq M r_n^2 (C_3 + 1/8)
\eens
as desired, using Lemma~\ref{lemma:risk-I} and~\ref{lemma:risk-II}.
\end{proofof}

\begin{proofof}{\textnormal{Lemma~\ref{lemma:risk-I}}}
For simplicity, we use $\Delta_0$ as a shorthand for the rest of our proof:
$$\Delta_0 :=  \Theta_{0, \drop} = \tilde{\Theta}_0 - \Theta_0.$$
We use $\tilde{B}$ as a shorthand for
$$\mvec{\Delta_0}^T \left( \int^1_0(1-v)
(\Theta_0 + v \Delta_0)^{-1} \otimes (\Theta_0 + v \Delta_0)^{-1}dv
\right) \mvec{\Delta_0},$$
where $\otimes$ is the Kronecker product.
First, we have for $\tilde\Theta_0, \Theta_0 \succ 0$
\bens
R(\tilde\Theta_0)  - R(\Theta_0)  
& = & 
{\rm tr}(\tilde\Theta_0 \Sigma_0) - \log |\tilde\Theta_0| - 
{\rm tr}(\Theta_0 \Sigma_0) + \log |\Theta_0| \\
& = & 
{\rm tr}((\tilde\Theta_0 - \Theta_0) \Sigma_0)
- \left(\log |\tilde\Theta_0| -  \log |\Theta_0| \right)  := \tilde{B} \geq 0
\eens
where $\tilde{B} = 0$ holds when $\fnorm{\Delta_0} = 0$, and 
in the last equation, we bound the difference between two
$\log |\cdot|$ terms
using the Taylor's formula with integral remainder 
following that in proof of Theorem~\ref{thm::frob-missing};
Indeed, it is clear that on $\event$, we have
$$\Theta_0 + v  \Delta_0 \succ 0 \; \text{ for } \; v \in (-1, 2) \supset [0, 1]$$
given that $\varphi_{\min}(\Theta_0) \geq \ul{c}$ and 
$\twonorm{\Delta_0} \leq \fnorm{\Delta_0} \leq \underline{c}/32$ 
by~\eqref{eq::bias-thm}.
Thus $\log |\Theta_0 + v \Delta_0|$ is infinitely differentiable on the
open interval $I \supset [0, 1]$ of $v$. 
Now, the Taylor expansion gives
\begin{eqnarray*}
\log|\Theta_0 + \Delta_0| - \log|\Theta_0| & = &
\frac{d}{dv}\log|\Theta_0 + v\Delta_0||_{v=0} \Delta_0 + 
\int_0^1(1-v) \frac{d^2}{dv^2}  \log|\Theta_0 + v \Delta_0| dv \\
& = & {\rm tr}(\Sigma_0 \Delta_0) - \tilde{B}.
\end{eqnarray*}
We now obtain an upper bound on $\tilde{B} \geq 0$.
Clearly, we have on event $\event$,
Lemma~\ref{lemma:risk-I} holds given that  
$$\tilde{B} \leq \fnorm{\Delta_0}^2 \cdot
\varphi_{\max}\left( \int^1_0(1-v)
(\Theta_0 + v \Delta_0)^{-1} \otimes (\Theta_0 + v \Delta_0)^{-1}dv\right)$$
where $\fnorm{\Delta_0}^2 \leq C_{\bias}^2 2 S_{0,n} \log (p)/n$ and
\bens
\lefteqn{
\varphi_{\max}\left( \int^1_0(1-v)
(\Theta_0 + v \Delta_0)^{-1} \otimes (\Theta_0 + v \Delta_0)^{-1}dv\right)} \\
& \leq & 
\int^1_0(1-v) \varphi_{\max}^2(\Theta_0 + v \Delta_0)^{-1} dv 
\leq
\sup_{v \in [0, 1]} \varphi_{\max}^2(\Theta_0 + v \Delta_0)^{-1} \int^1_0 (1-v) dv \\
& = &
\half \sup_{v \in [0, 1]} \inv{\varphi_{\min}^2(\Theta_0 + v \Delta_0) } 
= \inv{2 \inf_{v \in [0, 1]} \varphi_{\min}^2(\Theta_0 + v \Delta_0) }\\
& \leq &
\inv{2 \left(\varphi_{\min}(\Theta_0) - \twonorm{\Delta_0}\right)^2}
 \leq \inv{2 \left(31\ul{c}/32\right)^2}
\eens
where clearly for all ${v \in [0, 1]}$, we have
$\varphi_{\min}^2(\Theta_0 + v \Delta_0) \geq 
\left(\varphi_{\min}(\Theta_0) - \twonorm{\Delta_0}\right)^2
\geq \left(31\ul{c}/32\right)^2$, given 
$\varphi_{\min}(\Theta_0) \geq \ul{c}$ and  
$\twonorm{\Delta_0} \leq \fnorm{\Theta_{0,\drop}}
\leq \underline{c}/32$ by~\eqref{eq::bias-thm}.
\end{proofof}

\begin{proofof}{\textnormal{Lemma~\ref{lemma:risk-II}}}
Suppose $R(\hat\Theta_n(E)) - R(\tilde\Theta_0) <0$, then we are done.

Otherwise, assume $R(\hat\Theta_n(E)) - R(\tilde\Theta_0) \geq 0$ throughout 
the rest of the proof. Define
$$\hat\Delta := \hat\Theta_n(E) - \tilde\Theta_0,$$
which by Theorem~\ref{thm::frob-missing}, we have
on event $\event \cap \X_0$, and for $M$ as defined therein, 
$$\fnorm{\hat\Delta} := \fnorm{\hat\Theta_n(E) - \tilde\Theta_0}
\leq M r_n.$$
We have  by definition
$\hat{R}_n(\hat\Theta_n(E)) \leq \hat{R}_n(\tilde\Theta_0)$, and hence
\bens
0 \leq  R(\hat\Theta_n(E)) - R(\tilde\Theta_0)
& = & 
R(\hat\Theta_n(E)) - \hat{R}_n(\hat\Theta_n(E)) + \hat{R}_n(\hat\Theta_n(E))
 - R(\tilde\Theta_0) \\
& \leq & 
R(\hat\Theta_n(E)) - \hat{R}_n(\hat\Theta_n(E)) 
+ \hat{R}_n(\tilde\Theta_0) -  R(\tilde\Theta_0) \\
& = &
{\rm tr}(\hat\Theta_n(E)(\Sigma_0 - \hat{\Gamma}_n)) 
-{\rm tr}(\tilde\Theta_0(\Sigma_0 - \hat{\Gamma}_n))  \\
& = & 
{\rm tr}((\hat\Theta_n(E) -\tilde\Theta_0)(\Sigma_0 - \hat{\Gamma}_n)) = 
{\rm tr}(\hat\Delta(\Sigma_0 - \hat{\Gamma}_n))
\eens
Now, conditioned on event $\event \cap \X_0$,
following the same arguments around~\eqref{eq::trace-term-add},
we have 
\begin{eqnarray*}  
\abs{\tr \left(\hat \Delta (\hat{S}_n -\Sigma_0) \right)}
& \leq & \delta_n\abs{\offdiag(\hat\Delta)}_1
\leq  \delta_n \sqrt{2|E|}  \fnorm{\offdiag(\hat\Delta)} \\
& \leq & 
M r_n C_3 \sqrt{2 |E|\log \max(n,p) /n} \leq M C_3 r_n^2
\end{eqnarray*}
where $\norm{\offdiag(\hat\Delta)}_0 \leq 2 |E|$ by definition,
and $r_n$ is as defined in~\eqref{eq::sphere}.
\end{proofof}

\section{Proof of Theorem~\ref{thm::frob-missing-omega}}
\label{sec:append-MLE-refit-general}
We first bound $\prob{\X_0}$ in 
Lemma~\ref{lemma:gaussian-covars-general}, which follows exactly that of 
Lemma~\ref{lemma:gaussian-covars} as the covariance matrix $\Psi_0$ 
for variables $X_1/\sigma_1, \ldots, X_p/\sigma_p$ satisfy the condition that 
$\Psi_{0,ii} = 1, \forall i \in \{1, \ldots, p\}$.
\begin{lemma}
\label{lemma:gaussian-covars-general} 
For $p < e^{n/4 C_3^2}$, where $C_3 > 4 \sqrt{5/3}$,
we have for $X_0$ as defined in~\eqref{eq::good-random-design-orig}
\begin{eqnarray*}
\prob{\X_0} \ge 1 - 1/\max \{n,p\}^2.
\end{eqnarray*}
\end{lemma}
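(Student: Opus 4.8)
The plan is to obtain this as a direct corollary of Lemma~\ref{lemma:gaussian-covars} by rescaling each coordinate to unit variance, so that the general covariance $\Sigma_0$ is reduced to the correlation matrix $\Psi_0$, which has unit diagonal and therefore meets the hypothesis of the earlier lemma verbatim. First I would introduce the standardized columns $\tilde{X}_j := X_j/\sigma_j$ for $j = 1, \ldots, p$, where $\sigma_j = \sigma_{0,jj}^{1/2}$. Each $\tilde{X}_j$ is then a standard normal vector, the covariance matrix of $(\tilde{X}_1, \ldots, \tilde{X}_p)$ is exactly the correlation matrix $\Psi_0 = (\rho_{jk})$ with $\Psi_{0,jj} = 1$, and its off-diagonal entries obey $|\rho_{jk}| \leq 1$.

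Next I would rewrite the entries of $\Delta = X^T X/n - \Sigma_0$ in the standardized coordinates. Since $\ip{X_j, X_k} = \sigma_j \sigma_k \ip{\tilde{X}_j, \tilde{X}_k}$ and $\sigma_{0,jk} = \sigma_j \sigma_k \rho_{jk}$, one gets the clean factorization $\Delta_{jk} = \sigma_j \sigma_k \tilde{\Delta}_{jk}$, where $\tilde{\Delta}_{jk} := \frac1n \ip{\tilde{X}_j, \tilde{X}_k} - \rho_{jk}$ is precisely the deviation quantity controlled in the proof of Lemma~\ref{lemma:gaussian-covars} for unit-variance variables. Dividing through by $\sigma_j\sigma_k > 0$, the event $\X_0$ in~\eqref{eq::good-random-design-orig} is therefore \emph{identical} to the scale-free event $\max_{j,k}|\tilde{\Delta}_{jk}| < C_3 \sqrt{\log\max\{p,n\}/n}$ written in the standardized design.

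Finally I would invoke Lemma~\ref{lemma:gaussian-covars} applied to $\tilde{X}_1, \ldots, \tilde{X}_p$. Its hypotheses hold by construction: the marginal variances equal one, and $p < e^{n/4C_3^2}$ with $C_3 > 4\sqrt{5/3}$ is assumed. Concretely, the off-diagonal entries $\tilde{\Delta}_{jk}$ are bounded by the ZLW08 tail inequality~\eqref{eq::boxcar-non-diag} (applicable since $|\rho_{jk}| \leq 1$) and the diagonal entries by the $\chi^2_n$ bound~\eqref{eq::chi-dev}; a union bound over the $p(p-1)/2$ off-diagonal and $p$ diagonal entries, exactly as in Lemma~\ref{lemma:gaussian-covars}, yields $\prob{\X_0} \geq 1 - 1/\max\{n,p\}^2$, with $p < e^{n/4C_3^2}$ ensuring the threshold stays below $1/2$.

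The content here is genuinely that of Lemma~\ref{lemma:gaussian-covars}; the only thing to check is the bookkeeping of the rescaling, namely that the per-entry factor $\sigma_j\sigma_k$ appearing in the definition of $\X_0$ cancels the same factor produced by standardizing the columns (so the reduced event is scale-free), and that the standardized correlation coefficients satisfy $|\rho_{jk}| \leq 1$, which is what makes the exponent constants $3/20$ and $3/16$ in~\eqref{eq::boxcar-non-diag} and~\eqref{eq::chi-dev} carry over without change. Both are immediate, so the main obstacle is purely notational rather than substantive.
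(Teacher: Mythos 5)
Your proposal is correct and is essentially the paper's own argument: the paper disposes of this lemma in one line by observing that it ``follows exactly that of Lemma~\ref{lemma:gaussian-covars} as the covariance matrix $\Psi_0$ for variables $X_1/\sigma_1, \ldots, X_p/\sigma_p$ satisfies $\Psi_{0,ii}=1$ for all $i$,'' which is precisely your standardization-and-cancellation bookkeeping. Your write-up just makes explicit the factorization $\Delta_{jk}=\sigma_j\sigma_k\tilde\Delta_{jk}$ and the resulting cancellation against the $\sigma_j\sigma_k$ in the definition of $\X_0$, which the paper leaves implicit.
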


On event $\X_0$, the following holds
for $\tau = C_3 \sqrt{\frac{\log\max\{p,n\}}{n}} <1/2$, where we assume
$p < e^{n/4 C_3^2}$,
\ben
\label{eq::new-X0}
\forall i, \; \; \size{\frac{\twonorm{X_i}^2}{\sigma_i^2 n} - 1} 
 & \leq &  \tau \\
\label{eq::new-X0-2}
\forall i \not = j, \; \; \size{\inv{n}\ip{X_i/\sigma_i, X_j/\sigma_j} - \rho_{0, ij}}
& \leq & \tau.
\een
Let us first derive the large deviation bound for 
$\size{\hat{\Gamma}_{n, ij} - \rho_{0, ij}}$.
First note that on event $\X_0$
$\sqrt{1- \tau} \leq \twonorm{X_i}/(\sigma_i \sqrt{n})  \leq \sqrt{1 + \tau}$
and for all $i\not= j$ 
\ben
\nonumber
\lefteqn{\size{\hat{\Gamma}_{n, ij} - \rho_{0, ij}} = 
\size{\frac{\hat{S}_{n,ij}}{\hat{\sigma}_i \hat{\sigma}_j}
 -\rho_{0, ij}} \; := \;
\size{\hat{\rho}_{ij}  -\rho_{0, ij}}} \\ \nonumber
 & = & 
\size{\frac{\inv{n}\ip{X_i/\sigma_i, X_j/\sigma_j} - \rho_{0, ij}}{(\twonorm{X_i}/(\sigma_i\sqrt{n}))\cdot( \twonorm{X_j}/(\sigma_j\sqrt{n}))}
 + \frac{\rho_{0, ij}}
{(\twonorm{X_i}/(\sigma_i\sqrt{n}))\cdot( \twonorm{X_j}/(\sigma_j\sqrt{n}))} 
 - \rho_{0, ij}} \\ \nonumber
 & \leq & 
\size{\frac{\inv{n}\ip{X_i/\sigma_i, X_j/\sigma_j} - \rho_{0, ij}}{(\twonorm{X_i}/(\sigma_i\sqrt{n}))\cdot( \twonorm{X_j}/(\sigma_j\sqrt{n}))}}
 +\size{ \frac{\rho_{0, ij}}
{(\twonorm{X_i}/(\sigma_i\sqrt{n}))\cdot( \twonorm{X_j}/(\sigma_j\sqrt{n}))} 
 - \rho_{0, ij}} \\ 
\label{eq::gamma-devi-bound}
& \leq &
\frac{\tau}{1-\tau} + |\rho_{0, ij}|\size{\inv{1-\tau} -1} \leq \frac{2\tau}{1-\tau}
< 4 \tau.
\een

\begin{proofof}{Theorem~\ref{thm::frob-missing-omega}}
For $\tilde{\Theta}_0$ as in \eqref{eq::new-posi}, we define
\bens
\tilde{\Omega}_0 & = & W \tilde{\Theta}_0 W = W (\diag(\Theta_0)) W + W \Theta_{0, E_0 \cap E} W \\
& = & \diag(W \Theta_0 W) + W \Theta_{0, E_0 \cap E} W = 
\diag(\Omega_0) + \Omega_{0, E_0 \cap E}
\eens
where $W = \diag(\Sigma_0)^{1/2}$.
Then clearly $\tilde{\Omega}_0 \in \Set_n$ as $\tilde{\Theta}_0 \in \Set_n$.
We first bound $\fnorm{\Theta_{0,\drop}}$ as follows.
\bens
\fnorm{\Theta_{0,\drop}} 
& \leq &
C_{\bias} \sqrt{2 S_{0,n}  \log (p) /n} < 
\frac{\underline{k}}
{\sqrt{144} \sigma_{\max}^2
\left(4C_3 +\frac{13}{12 \ul{c}^2 \sigma_{\min}^2} \right) }\\
& \leq &
\frac{ \underline{k}\ul{c}^2 \sigma_{\min}^2}
{(48 \ul{c}^2 \sigma_{\min}^2 C_3 + 13)\sigma_{\max}^2}
\leq 
\min\left\{\frac{\underline{k}}
{48 C_3\sigma_{\max}^2},
\frac{\ul{c} \sigma_{\min}^2} {13 \sigma_{\max}^2} \right\}
\leq 
\frac{\ul{c}} {13 \sigma_{\max}^2} 
\eens
\silent{
Thus
\bens
\label{eq::bias-thm-omega}
\fnorm{\Omega_{0,\drop}}  & := &  
 \fnorm{\tilde{\Omega}_0  -\Omega_0}  = 
\fnorm{W (\tilde{\Theta}_0 - \Theta_0) W} \\
& \leq & \max_{i} W_i^2  \fnorm{\Theta_{0,\drop}} \\
& \leq & \sigma_{\max}^2 C_{\bias} \sqrt{2 S_{0,n}  \log (p) /n} <
\ul{c} \sigma_{\min}^2/13.
\eens
}
Suppose event $\event$ holds throughout this proof.
We first obtain the bound on spectrum of $\tilde{\Theta}_0$:
It is clear that by~\eqref{eq::non-singular} 
and~\eqref{eq::bias-thm-omega}, we have on $\event$,
\ben
\label{eq::pdc-omega}
\varphi_{\min}(\tilde\Theta_0) 
& \geq & 
\varphi_{\min}(\Theta_0) -  \twonorm{\tilde{\Theta}_0  -\Theta_0}
 \geq
\varphi_{\min}(\Theta_0) -  \fnorm{\Theta_{0,\drop}}  >  \frac{12 \underline{c}}{13}, \\
\label{eq::eigenmax-omega}
\varphi_{\max}(\tilde\Theta_0) 
& < & \varphi_{\max}(\Theta_0) + \twonorm{\tilde\Theta_0 - \Theta_0} 
\leq  
\varphi_{\max}(\Theta_0) +  \fnorm{\Theta_{0,\drop}}  <
\frac{\ul{c}}{13  \sigma_{\max}^2} + \inv{\ul{k}}.
\een
Throughout this proof, we let $\Sigma_0 = (\sigma_{0, ij}) := \Theta_0^{-1}$.
In view of~\eqref{eq::pdc-omega}, define $\tilde\Sigma_0 := (\tilde \Theta_0)^{-1}$.
Then
\beq
\tilde\Omega_0^{-1} = W^{-1} (\tilde{\Theta}_0)^{-1} W^{-1} 
= W^{-1} \tilde{\Sigma}_0 W^{-1} :=
\tilde\Psi_0
\eeq
We use $\hat{\Omega}_n := \hat{\Omega}_n(E)$ as a shorthand.
Thus we have for $\tilde{\Omega}_0  = W \tilde{\Theta}_0 W$,
\ben
\nonumber
\varphi_{\max}(\tilde\Omega_0) & \leq &  \varphi_{\max}(W) \varphi_{\max}(\tilde\Theta_0)  \varphi_{\max}(W)
\leq \frac{\sigma_{\max}^2}{\ul{k}} + \frac{\ul{c}}{13} \\
\varphi_{\min}(\tilde\Omega_0)
\nonumber
& = & \inv{\varphi_{\max}(\tilde\Psi_0)} =
\inv{\varphi_{\max}(W^{-1} \tilde\Sigma_0 W^{-1})} 
= \inv{\varphi_{\max}(W^{-1})^2 \varphi_{\max}(\tilde\Sigma_0)}
\\
\label{eq::pdc-omega-2}
&= & \frac{\varphi_{\min}(W)^2}{\varphi_{\max}(\tilde\Sigma_0)} = 
\varphi_{\min}(W)^2 \varphi_{\min}(\tilde\Theta_0) 
\geq \sigma_{\min}^2 \frac{12 \ul{c}}{13}
\een
Given $\tilde\Omega_0 \in \PDcone \cap \SubE$ as guaranteed 
in~\eqref{eq::pdc-omega-2}, let us define a new  convex set:
$$U_n(\tilde\Omega_0) :=  (\PDcone \cap \SubE) - \tilde\Omega_0 = 
\{ B - \tilde\Omega_0 | B \in \PDcone \cap \SubE \} \subset \SubE$$
which is a translation of the original convex set $\PDcone \cap \SubE$.
Let $\ul{0}$ be a matrix with all entries being zero. 
Thus it is clear that $U_n(\tilde\Omega_0) \ni \ul{0}$ given that 
$\tilde\Omega_0 \in \PDcone \cap \SubE$.
Define for $\hat{R}_n$ as in expression~\eqref{eq::sample-risk-omega},
\bens
\label{eq::convex-func}
\tilde Q (\Omega) &  := & \hat{R}_n(\Omega) -
\hat{R}_n(\tilde\Omega_0)  =  \tr (\Omega \hat{\Gamma}_n) - \log |\Omega| - 
\tr (\tilde \Omega_0 \hat{\Gamma}_n) + \log |\tilde\Omega_0| \\
& =  & 
\tr \left(( \Omega- \tilde \Omega_0) (\hat{\Gamma}_n- \tilde \Psi_0)\right)-
(\log |\Omega| - \log |\tilde \Omega_0|)
+ \tr\left((\Omega- \tilde \Omega_0) \tilde \Psi_0 \right).
\eens 
For an  appropriately chosen $r_n$ and a large enough $M > 0$, let 
\ben
\TT_n & = & \{ \Delta \in U_n(\tilde \Omega_0), \fnorm{\Delta} = M r_n\}, 
\; \text{ and } \\
\Pi_n & = & \{ \Delta \in U_n(\tilde \Omega_0), \fnorm{\Delta} < M r_n\}.
\een
It is clear that both $\Pi_n$ and $\TT_n \cup \Pi_n$ are convex.
It is also clear that  $\ul{0} \in \Pi_n$.
Define for $\Delta \in U_n(\tilde\Omega_0)$,
\ben
\tilde{G}(\Delta) :=  \tilde{Q}(\tilde \Omega_0 + \Delta) = 
\tr (\Delta (\hat{\Gamma}_n- \tilde \Psi_0))-
(\log |\tilde \Omega_0 + \Delta| - \log |\tilde \Omega_0|)
+ \tr (\Delta \tilde \Psi_0 )
\een
It is clear that $\tilde{G}(\Delta)$ is a convex function 
on $U_n(\tilde\Omega_0)$ and $\tilde{G}(\ul{0}) = \tilde{Q}(\tilde{\Omega}_0) = 0$.

Now, $\Hat{\Omega}_n$ minimizes $\tilde{Q}(\Omega)$, or equivalently 
$\hat\Delta = \hat{\Omega}_n - \tilde\Omega_0$ minimizes $\tilde{G}(\Delta)$.
Hence  by definition,
$$\tilde{G}(\hat\Delta) \leq \tilde{G}(\ul{0}) = 0$$
Note that $\TT_n$ is non-empty, while clearly $\ul{0} \in \Pi_n$.
Indeed, consider
$B_{\epsilon} := (1 + \epsilon) \tilde\Omega_0$, where  $\epsilon > 0$;
it is clear that $B_{\epsilon} - \tilde\Omega_0 \in \PDcone \cap \SubE$ and 
$\fnorm{B_{\epsilon} - \tilde\Omega_0} = |\epsilon| \fnorm{\tilde\Omega_0} = M r_n$ for $|\epsilon| = M r_n/\fnorm{\tilde\Omega_0}$.
Note also if $\Delta \in \TT_n$, then $\Delta_{ij} = 0 \forall (i,j : i \neq j) \notin E$;
Thus we have $\Delta \in \SubE$ and 
\beq
\label{eq::edge-count-missing}
\norm{\Delta}_0 = \norm{\diag(\Delta)}_0 + \norm{\offdiag(\Delta)}_0 
\leq p + 2 |E| \; \; \text{where } |E| = \lin(S_{0,n}).
\eeq
We now show the following proposition.
\begin{proposition}
\label{prop:posi-def-interval}
Under~\eqref{eq::non-singular}, we have for all $\Delta \in \TT_n$
such that  $\fnorm{\Delta} = M r_n$ for $r_n$ as in \eqref{eq::sphere},
$\tilde\Omega_0 + v \Delta \succ 0, \forall v \in$ an open 
interval $I \supset [0, 1]$ on event $\event$.
\end{proposition}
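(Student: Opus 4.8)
The plan is to mirror the proof of the earlier Proposition~\ref{prop:posi-def-interval} (stated for $\tilde\Theta_0$ in Section~\ref{sec::frob-missing-proof}) essentially verbatim, replacing $\tilde\Theta_0$ by $\tilde\Omega_0$ and using the spectral lower bound \eqref{eq::pdc-omega-2} in place of \eqref{eq::pdc}. First I would invoke Proposition~\ref{prop:posi-def}, which reduces the claim to checking positive definiteness at two endpoints lying slightly outside $[0,1]$: it suffices to produce a single $\ve > 0$ with $\tilde\Omega_0 - \ve\Delta \succ 0$ and $\tilde\Omega_0 + (1+\ve)\Delta \succ 0$. Indeed, applying Proposition~\ref{prop:posi-def} with $B = \tilde\Omega_0 - \ve\Delta$ and $D = (1+2\ve)\Delta$ then yields $\tilde\Omega_0 + w\Delta \succ 0$ for every $w$ in the open interval $I = (-\ve, 1+\ve) \supset [0,1]$, which is exactly the assertion.

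For the two endpoint checks I would use Weyl's eigenvalue perturbation inequality. At the upper endpoint, note that $\Delta \in \TT_n \subset U_n(\tilde\Omega_0)$ means $\tilde\Omega_0 + \Delta \in \PDcone \cap \SubE$, so $\varphi_{\min}(\tilde\Omega_0 + \Delta) > 0$ on $\event$, and hence
\[
\varphi_{\min}\bigl(\tilde\Omega_0 + (1+\ve)\Delta\bigr) \geq \varphi_{\min}(\tilde\Omega_0 + \Delta) - \ve\twonorm{\Delta} > 0
\]
for all sufficiently small $\ve$. At the lower endpoint, the bound \eqref{eq::pdc-omega-2} gives $\varphi_{\min}(\tilde\Omega_0) \geq \sigma_{\min}^2 \cdot 12\ul{c}/13 > 0$, so
\[
\varphi_{\min}(\tilde\Omega_0 - \ve\Delta) \geq \varphi_{\min}(\tilde\Omega_0) - \ve\twonorm{\Delta} \geq \frac{12 \sigma_{\min}^2 \ul{c}}{13} - \ve\twonorm{\Delta} > 0,
\]
again for $\ve$ small. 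Since $\twonorm{\Delta} \leq \fnorm{\Delta} = M r_n < \infty$ is a finite constant on $\TT_n$ (with $r_n$ as in \eqref{eq::sphere}), a common choice of $\ve$ serves both bounds, and the argument closes.

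There is essentially no serious obstacle here; the statement is a routine convexity/perturbation fact, and the real content was already discharged upstream. The only point requiring care is to invoke the correct strict lower bound $\varphi_{\min}(\tilde\Omega_0) > 0$ from \eqref{eq::pdc-omega-2} --- which itself rests on the bias control $\fnorm{\Theta_{0,\drop}} \leq C_{\bias}\sqrt{2 S_{0,n}\log(p)/n}$ in \eqref{eq::bias-thm-omega}, transported to the correlation scale via $\tilde\Omega_0 = W\tilde\Theta_0 W$ --- rather than the bound $31\ul{c}/32$ that appears in the $\tilde\Theta_0$ version. With that substitution, the two proofs coincide line for line.
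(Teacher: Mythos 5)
Your proof is correct and follows the paper's own argument essentially line for line: reduce via Proposition~\ref{prop:posi-def} to the two endpoints $\tilde\Omega_0 - \ve\Delta$ and $\tilde\Omega_0 + (1+\ve)\Delta$, then use Weyl's inequality together with $\varphi_{\min}(\tilde\Omega_0 + \Delta) > 0$ (from $\Delta \in \TT_n \subset U_n(\tilde\Omega_0)$) and the lower bound $\varphi_{\min}(\tilde\Omega_0) \geq 12\sigma_{\min}^2\ul{c}/13$ from \eqref{eq::pdc-omega-2}. The only difference is that you spell out the affine reparametrization behind the application of Proposition~\ref{prop:posi-def}, which the paper leaves implicit.
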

\begin{proof}
In view of Proposition~\ref{prop:posi-def},
it is sufficient to show that 
$\tilde\Omega_0 + (1 + \ve) \Delta, \tilde\Omega_0 - \ve \Delta \succ 0$ 
for some $\ve > 0$.
Indeed, by definition of $\Delta \in \TT_n$, we have
$\varphi_{\min} (\tilde\Omega_0 + \Delta)  \succ 0$ on event $\event$; 
thus 
\bens
\varphi_{\min} (\tilde\Omega_0 + (1 + \ve) \Delta) 
& \geq & 
\varphi_{\min} (\tilde\Omega_0 + \Delta) - \ve \twonorm{\Delta}
> 0 \\
\text {and }
\varphi_{\min} (\tilde\Omega_0 - \ve \Delta)  
& \geq &
\varphi_{\min} (\tilde \Omega_0) - \ve \twonorm{\Delta} >
12  \sigma_{\min}^2  \ul{c}/13 - \ve \twonorm{\Delta} >0
\eens
for $\ve >0$ that is sufficiently small.
\end{proof}
Thus we have that $\log|\tilde\Omega_0 + v \Delta|$ is infinitely 
differentiable on the open interval $I \supset [0, 1]$ of $v$. 
This allows us to use the Taylor's 
formula with integral remainder to obtain the following:
\begin{lemma}
\label{lemma:sphere-pos-bound}
On event $\event \cap \X_0$,
$\tilde{G}(\Delta) > 0$ for all $\Delta \in \TT_n$.
\end{lemma}

\begin{proof}
Let us use $\tilde{A}$ as a shorthand for
$$\mvec{\Delta}^T \left( \int^1_0(1-v)
(\tilde\Omega_0 + v \Delta)^{-1} \otimes (\tilde\Omega_0 + v \Delta)^{-1}dv
\right) \mvec{\Delta},$$
where $\otimes$ is the Kronecker product (if
$W= (w_{ij})_{m \times n}$, $P=(b_{k\ell})_{p \times q}$,
then $W \otimes P = (w_{ij}P)_{m p \times nq}$),
and $\mvec{\Delta} \in \R^{p^2}$ is $\Delta_{p \times p}$ 
vectorized. Now, the Taylor expansion gives  
for all $\Delta \in \TT_n$,
\begin{eqnarray*}
\log|\tilde\Omega_0 + \Delta| - \log|\tilde\Omega_0| & = &
\frac{d}{dv}\log|\tilde\Omega_0 + v\Delta||_{v=0} \Delta + 
\int_0^1(1-v) \frac{d^2}{dv^2}  \log|\tilde\Omega_0 + v \Delta| dv \\
& = & {\rm tr}(\tilde\Psi_0 \Delta) -\tilde{A}.
\end{eqnarray*}
Hence for all $\Delta \in \TT_n$,
\begin{eqnarray}
\label{eq::G2-missing-omega}
\tilde{G}(\Delta)  =  
\tilde{A} +\tr \left( \Delta (\hat{\Gamma}_n - \tilde \Psi_0) \right)
 =  \tilde{A} +\tr \left( \Delta (\hat{\Gamma}_n - \Psi_0) \right) 
- \tr \left( \Delta (\tilde \Psi_0 - \Psi_0) \right)
\end{eqnarray}
where we first bound $\tr( \Delta (\tilde \Psi_0 - \Psi_0) )$ as follows:
by~\eqref{eq::bias-thm-omega} and~\eqref{eq::pdc}, we have on event $\event$
\begin{eqnarray}
\nonumber
\abs{\tr (\Delta (\tilde{\Psi}_0 - \Psi_0))}
& = &
\abs{\ip{\Delta, (\tilde{\Psi}_0 - \Psi_0)}} 
 \leq \fnorm{\Delta} \fnorm{\tilde{\Psi}_0 - \Psi_0} \\
\label{eq::trace-term-omega}
& \leq & 
\fnorm{\Delta}  \frac{13 r_n }{12 \sigma_{\min}^2  \ul{c}^2}
\end{eqnarray}
where we bound $\fnorm{\tilde{\Psi}_0  -\Psi_0}$ as follows:
\bens
\label{eq::bias-thm-psi}
\fnorm{\tilde{\Psi}_0  -\Psi_0} & = &  
\fnorm{W^{-1} (\tilde{\Sigma}_0 - \Sigma_0) W^{-1}}
\leq \max_{i} W_i^{-2}  \fnorm{\tilde{\Sigma}_0 - \Sigma_0} \\ 
& \leq & \inv{\sigma_{\min}^2}
\frac{\fnorm{\Theta_{0,\drop}}}{\varphi_{\min}(\tilde\Theta_0) \varphi_{\min}(\Theta_0) }\\
& \leq &  
\frac{C_{\bias} \sqrt{2 S_{0,n}  \log p/n}}{12 \sigma_{\min}^2 \ul{c}^2/13} 
\leq \frac{13 r_n }{12 \sigma_{\min}^2  \ul{c}^2}
\eens

Now, conditioned on event $\X_0$, by~\eqref{eq::gamma-devi-bound}
$$\max_{j,k}|\hat{\Gamma}_{n,jk} - \rho_{0,jk}| \leq 4 C_3 \sqrt{\log \max(n,p)/n} 
=: \delta_n$$
and thus on event $\event \cap X_0$, we have 
$ \abs{\tr \big(\Delta  (\hat{\Gamma}_n -\Psi_0)\big)}
\le \delta_n \abs{\offdiag(\Delta)}_1$, where
$\abs{\offdiag(\Delta)}_1 \leq 
\sqrt{\norm{\offdiag(\Delta)}_0} \fnorm{\offdiag(\Delta)}
\leq \sqrt{2|E|} \fnorm{\Delta}$, 
and 
\begin{eqnarray}  
\label{eq::trace-term-add-omega}
\tr \left( \Delta (\hat{\Gamma}_n -\Psi_0) \right)
& \geq & -  4 C_3 \sqrt{\log \max(n,p)/n} \sqrt{2 |E|} \fnorm{\Delta}  \geq -4 C_3 r_n  \fnorm{\Delta}.
\end{eqnarray}
Finally, we bound  $\tilde{A}$.
First we note that for $\Delta \in \TT_n $, we have on event $\event$,
\ben
\label{eq::first-M-bound-omega}
\twonorm{\Delta} \leq \fnorm{\Delta} = M r_n < \frac{3 \sigma_{\max}^2}{8 \ul{k}},
\een
given~\eqref{eq::sample-frob-omega}:
$n >  (\frac{8}{3} \cdot \frac{9}{2 \ul{k}})^2 \sigma_{\max}^4 
\left(4C_3 +\frac{13}{12 \sigma_{\min}^2 \ul{c}^2} \right)^2
\max\left\{2 |E| )\log \max(n,p), C^2_{\bias} 2 S_{0,n}  \log p \right\}$.
Now we have by~\eqref{eq::eigenmax-omega} and~\eqref{eq::non-singular-2}
following~\cite{RBLZ08} (see Page 502, proof of Theorem~1 therein):
on event $\event$,
\ben
\nonumber
 \tilde{A} & \geq  & 
\fnorm{\Delta}^2/\left(2\left(\varphi_{\max}(\tilde{\Omega}_0) 
+ \twonorm{\Delta}\right)^2\right)  \\
\label{eq::G-first-term-omega}
& > & 
\fnorm{\Delta}^2/ \left(2 \sigma_{\max}^4 \left(\frac{1}{\ul{k}}
 + \frac{\ul{c}}{13} + \frac{3}{8\ul{k}} \right)^2\right)
>  \fnorm{\Delta}^2 \frac{2\ul{k}^2}{9 \sigma_{\max}^4} 
\een
Now on event $\event \cap \X_0$,
for all $\Delta \in \TT_n$, we have 
by~\eqref{eq::G2-missing-omega},\eqref{eq::G-first-term-omega},
~\eqref{eq::trace-term-add-omega}, 
and~\eqref{eq::trace-term-omega}, 
\begin{eqnarray*}
\nonumber
\tilde{G}(\Delta) & > & 
 \fnorm{\Delta}^2 \frac{2\ul{k}^2}{9\sigma_{\max}^4} 
- 4 C_3 r_n \fnorm{\Delta} 
- \fnorm{\Delta}  \frac{13 r_n }{12 \sigma_{\min}^2  \ul{c}^2} \\
&= &
\fnorm{\Delta}^2\left(\frac{2\underline{k}^2}{9\sigma_{\max}^4} 
-  \inv{\fnorm{\Delta}}
\left(4C_3 r_n +   \frac{13 r_n }{12 \sigma_{\min}^2  \ul{c}^2}  \right) \right) \\
& = & 
\fnorm{\Delta}^2\left(\frac{2 \underline{k}^2}{9\sigma_{\max}^4}
- \frac{1}{ M}\left(4C_3 + \frac{13}{12 \sigma_{\min}^2 \ul{c}^2} \right) \right)
\end{eqnarray*}
hence we have $\tilde{G}(\Delta) > 0$ for $M$ large enough, in particular
$M = (9\sigma_{\max}^4 /{(2 \underline{k}^2)}) \left(4C_3 +{13}/{(12 \sigma_{\min}^2 \ul{c}^2)} \right)$
suffices.
\end{proof}
The rest of the proof follows that of Theorem~\ref{thm::frob-missing}, see  
Proposition~\ref{prop:outside-missing} and the bounds which follow.
We thus establish that the theorem holds. 
\end{proofof}

\section{Oracle inequalities for the Lasso}
\label{sec:app-lasso-bounds}
In this section, we consider
recovering $\beta \in \R^p$ in the following linear model:
\beq
\label{eq::linear-model}
Y = X \beta + \epsilon,
\eeq
where $X$ follows~\eqref{data} and 
$\e \sim N(0, \sigma^2 I_n)$.
Recall given $\lambda_n$, the Lasso estimator for $\beta \in \R^p$
is defined as:
\begin{eqnarray}
\label{eq::origin} \; \; 
\hat \beta = \arg\min_{\beta} \frac{1}{2n}\|Y-X\beta\|_2^2 + 
\lambda_n \|\beta\|_1,
\end{eqnarray}
which corresponds to the regression function in~\eqref{eq::lasso} by 
letting $Y := X_i$ and $X := X_{\cdot \setminus i}$
where  $X_{\cdot \setminus i}$ denotes columns of $X$ without $i$.
Define $s_0$ as the smallest integer such that
\ben
\label{eq::define-s0-local}
\sum_{i=1}^p \min(\beta_i^2, \lambda^2 \sigma^2) \leq 
s_0 \lambda^2 \sigma^2, \text{ where } \; \lambda = \sqrt{ 2 \log p/n}.
\een
For $X \in \F(\theta)$ as defined
in~\eqref{eq::good-random-design-diag}, define
\ben
\label{eq::low-noise}
{\T_a} = 
\biggl \{\e: \norm{\frac{X^T \e}{n}}_{\infty} \leq (1+ \theta) \basepen,
\; \text{ where } \;  X \in \F(\theta), \text{ for } 0< \theta < 1\biggr \},
\een
where $\basepen = \sigma \sqrt{1 + a} \sqrt{(2\log p)/n}$, where
$a \geq 0$. We have (cf. Lemma~\ref{lemma:gaussian-noise})
\ben
\label{eq::prob-T}
\; \; \; \;
\prob{\T_a}  \geq 
1 - (\sqrt{\pi \log p} p^a)^{-1};
\een
In fact, for such a bound to hold, we only need 
$\frac{\twonorm{X_j}}{\sqrt{n} } \leq 1 + \theta, \forall j$ 
to hold in $\F(\theta)$.

We now state Theorem~\ref{thm:RE-oracle}, which may be of independent 
interests as the bounds on $\ell_2$ and $\ell_1$ loss for the 
Lasso estimator are stated with respect to the {\em actual} sparsity
$s_0$ rather than $s = |\supp(\beta)|$ as in~\citet[Theorem 7.2]{BRT09}.
The proof is omitted as on event $\T_a \cap \X$, 
it follows exactly that of~\citet[Theorem 5.1]{Zhou10} for a 
deterministic design matrix $X$ which satisfies the RE condition, 
with some suitable  adjustments on the constants.
\begin{theorem}\textnormal{\bf (Oracle inequalities of the Lasso)~\cite{Zhou10}}
\label{thm:RE-oracle}
Let $Y = X \beta + \e$, for 
$\e$ being i.i.d. $N(0, \sigma^2)$ and
let $X$ follow~\eqref{data}.
Let $s_0$ be as in~\eqref{eq::define-s0-local} and $T_0$
denote locations of the $s_0$ largest coefficients of $\beta$ in absolute 
values.
Suppose that $RE(s_0, 4, \Sigma_0)$ holds with $K(s_0, 4, \Sigma_0)$ 
and $\rho_{\min}(s) >0$.
Fix some $1> \theta > 0$.
Let $\beta_{\init}$ be an optimal solution to~\eqref{eq::origin} with 
\ben
\label{eq::single-penalty}
\lambda_{n} = d_0 \lambda \sigma \geq 2 (1 + \theta) \basepen
\een
where $a \geq 1$ and $d_0 \geq 2 (1 + \theta) \sqrt{1 + a}$.
Let $h = \beta_{\init} - \beta_{T_0}$. 
Define
$$\X := \RE(\theta) \cap \F(\theta) \cap \M(\theta).$$
Suppose that  $n$ satisfies~\eqref{eq::sample-size}.
Then on $\T_a \cap \X$, we have
\bens
\twonorm{\beta_{\init} - \beta}
& \leq & \lambda_n  \sqrt{s_0} \sqrt{2D_0^2 + 2D_1^2 + 2}
:=  \lambda \sigma  \sqrt{s_0} d_0 \sqrt{2D_0^2 + 2D_1^2 + 2}, \\
\norm{h_{T_0^c}}_1 & \leq & D_1 \lambda_n s_0 :=
 D_1 d_0 \lambda \sigma s_0,
\eens
where $D_0$ and $D_1$ are defined in~\eqref{eq::D0-define} 
and~\eqref{eq::D1-define} respectively, and 
$\prob{\X \cap \T_a} 
\geq 1- 3\exp(-\bar{c} \theta^2 n/\alpha^4) -  (\sqrt{\pi \log p} p^a)^{-1}$.
\end{theorem}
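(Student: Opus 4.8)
The plan is to carry out, on the event $\T_a \cap \X$, the standard restricted-eigenvalue analysis of the Lasso for a \emph{deterministic} design, but comparing $\beta_{\init}$ not to $\beta$ itself but to its best $s_0$-sparse truncation $\beta_{T_0}$, so that the resulting bound is governed by the essential sparsity $s_0$ rather than by $\size{\supp(\beta)}$. The event $\X = \RE(\theta) \cap \F(\theta) \cap \M(\theta)$ is used only through three deterministic consequences for $X$: the RE constant bound $K(s_0,4,X) \le K(s_0,4,\Sigma_0)/(1-\theta)$ on $\RE(\theta)$; the column normalization $\twonorm{X_j}/\sqrt{n} \le 1+\theta$ on $\F(\theta)$; and the sparse eigenvalue bound $\Lambda_{\max}(m) \le (1+\theta)^2\rho_{\max}(m)$ for $m \le \max(s,(k_0+1)s_0)$ on $\M(\theta)$. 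The event $\T_a$ supplies the noise bound $\norm{X^T\e/n}_\infty \le (1+\theta)\basepen \le \lambda_n/2$, where the last inequality is exactly the hypothesis $\lambda_n \ge 2(1+\theta)\basepen$. The probability statement is then immediate: Theorem~\ref{thm:corn} gives $\prob{\X} \ge 1-3\exp(-\bar c\,\theta^2 n/\alpha^4)$ and~\eqref{eq::prob-T} gives $\prob{\T_a} \ge 1-(\sqrt{\pi\log p}\,p^a)^{-1}$, so a union bound yields the claimed lower bound on $\prob{\X\cap\T_a}$.

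Writing $h = \beta_{\init}-\beta_{T_0}$ and $\beta_{T_0^c} := \beta-\beta_{T_0}$ (supported off $T_0$), I would first derive the basic inequality from optimality of $\beta_{\init}$ in~\eqref{eq::origin}, comparing its objective with that of $\beta_{T_0}$. Substituting $Y = X\beta+\e$ and expanding the residuals, and using that $\beta_{T_0}$ is supported on $T_0$ so that $\norm{\beta_{T_0}}_1-\norm{\beta_{\init}}_1 \le \norm{h_{T_0}}_1-\norm{h_{T_0^c}}_1$, gives
\[
\frac{1}{2n}\twonorm{Xh}^2 + \lambda_n\norm{h_{T_0^c}}_1 \le \frac{1}{n}\abs{\ip{\e, Xh}} + \frac{1}{n}\abs{\ip{X\beta_{T_0^c}, Xh}} + \lambda_n\norm{h_{T_0}}_1 .
\]
On $\T_a$, $\frac{1}{n}\abs{\ip{\e,Xh}} \le \frac{\lambda_n}{2}\norm{h}_1$. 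The approximation term is the crux. By the definition of $s_0$ in~\eqref{eq::define-s0-local}, coordinates outside $T_0$ have magnitude $\le \lambda\sigma$ and $\twonorm{\beta_{T_0^c}}^2 \le \sum_j\min(\beta_j^2,\lambda^2\sigma^2) \le s_0\lambda^2\sigma^2$, so $\twonorm{\beta_{T_0^c}} \le \sqrt{s_0}\,\lambda\sigma$. Since $\supp(\beta_{T_0^c}) \subseteq \supp(\beta)$ has size $\le s$, $\beta_{T_0^c}$ is $s$-sparse, so on $\M(\theta)$ one has $\twonorm{X\beta_{T_0^c}}/\sqrt{n} \le (1+\theta)\sqrt{\rho_{\max}(s)}\,\sqrt{s_0}\,\lambda\sigma$; Cauchy--Schwarz with $2ab \le \tfrac12 a^2 + 2b^2$ then gives $\frac{1}{n}\abs{\ip{X\beta_{T_0^c},Xh}} \le \frac{1}{4n}\twonorm{Xh}^2 + (1+\theta)^2\rho_{\max}(s)\,s_0\lambda^2\sigma^2$. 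This is exactly why $\rho_{\max}(s)$ appears in the prescribed $d_0$.

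Collecting terms (the $\tfrac14\twonorm{Xh}^2/n$ survives and $\tfrac{\lambda_n}{2}\norm{h_{T_0^c}}_1$ cancels) leaves
\[
\frac{1}{4n}\twonorm{Xh}^2 + \frac{\lambda_n}{2}\norm{h_{T_0^c}}_1 \le \frac{3\lambda_n}{2}\norm{h_{T_0}}_1 + (1+\theta)^2\rho_{\max}(s)\,s_0\lambda^2\sigma^2 .
\]
Dropping the quadratic term and using the choice $\lambda_n = d_0\lambda\sigma$ with $d_0 \gtrsim \sqrt{\rho_{\max}(s)\rho_{\max}(3s_0)}$, a short two-case argument (either $\norm{h_{T_0}}_1$ dominates the bias, or $\twonorm{h}$ is already directly of the target size) establishes the cone constraint $\norm{h_{T_0^c}}_1 \le 4\norm{h_{T_0}}_1$ needed by $RE(s_0,4,X)$. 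Applying $RE(s_0,4,X)$ on $\RE(\theta)$ gives $\norm{h_{T_0}}_2 \le \norm{h_{T_{01}}}_2 \le \tfrac{K(s_0,4,\Sigma_0)}{1-\theta}\twonorm{Xh}/\sqrt{n}$; feeding this and $\norm{h_{T_0}}_1 \le \sqrt{s_0}\norm{h_{T_0}}_2$ back into the displayed inequality yields a quadratic in $u:=\twonorm{Xh}/\sqrt{n}$ of the form $u^2 \lesssim K\lambda_n\sqrt{s_0}\,u + \rho_{\max}(s)s_0\lambda^2\sigma^2$, whose solution is $u \lesssim K\lambda_n\sqrt{s_0}$. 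This gives $\norm{h_{T_{01}}}_2 \le D_0 d_0\lambda\sigma\sqrt{s_0}$ and $\norm{h_{T_0^c}}_1 \le 4\sqrt{s_0}\norm{h_{T_0}}_2 \le D_1 d_0\lambda\sigma\, s_0$ with $D_0,D_1 \asymp K(s_0,4,\Sigma_0)^2/(1-\theta)^2$ as in~\eqref{eq::D0-define}--\eqref{eq::D1-define}. For the $\ell_2$ bound I write $\beta_{\init}-\beta = h-\beta_{T_0^c}$, use the shifting estimate $\norm{h_{T_{01}^c}}_2 \le \norm{h_{T_0^c}}_1/\sqrt{s_0} \le D_1 d_0\lambda\sigma\sqrt{s_0}$ (from sorting the coordinates of $h$ outside $T_0$ into blocks of size $s_0$), and add the tail $\twonorm{\beta_{T_0^c}} \le \sqrt{s_0}\lambda\sigma$; grouping the three contributions produces $\twonorm{\beta_{\init}-\beta} \le \lambda\sigma\sqrt{s_0}\,d_0\sqrt{2D_0^2+2D_1^2+2}$, where the ``$+2$'' is precisely the truncation term.

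The main obstacle is the bias term $\ip{X\beta_{T_0^c},Xh}$: because $\beta$ is only essentially (not exactly) sparse, comparing against $\beta_{T_0}$ introduces this cross term, and one must show it is simultaneously small enough not to break the $k_0=4$ cone condition and of the same order $s_0\lambda^2\sigma^2$ as the target rate. This is handled exactly by the definition of $s_0$ together with the $s$-sparse eigenvalue bound $\rho_{\max}(s)$ on $\M(\theta)$, which is the reason both the sample-size requirement~\eqref{eq::sample-size} and the constant $d_0$ carry a $\rho_{\max}(s)$ factor; everything else is the deterministic-design Lasso argument of~\cite{Zhou10} with the constants adjusted by the $(1-\theta)$ and $(1+\theta)$ factors coming from $\X$.
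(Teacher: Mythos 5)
Your proposal is correct and follows essentially the same route as the paper's own argument (deferred to \citet[Theorem 5.1]{Zhou10}): compare the Lasso objective at $\beta_{\init}$ with that at the truncation $\beta_{T_0}$, control the noise term on $\T_a$ and the truncation bias through the $s$-sparse eigenvalue on $\M(\theta)$, split into the case where the bias dominates and the case where the cone condition $\norm{h_{T_0^c}}_1\le 4\norm{h_{T_0}}_1$ holds so that $RE(s_0,4,X)$ applies, and assemble the $\ell_2$ bound from $h_{T_{01}}$, the shifted blocks $h_{T_{01}^c}$, and the tail $\beta_{T_0^c}$; your absorption of the cross term $\ip{X\beta_{T_0^c},Xh}/n$ by Cauchy--Schwarz is only a cosmetic variant of the paper's keeping $\twonorm{X\hat\beta-X\beta}^2/n$ on the left and $\twonorm{X\beta-X\beta_{T_0}}^2/n$ on the right. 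The one point your sketch glosses over is the bias-dominant case, where the cone condition is unavailable and $\twonorm{h_{T_{01}}}$ must be bounded not via $RE$ but via the Cand\`es--Tao type inequality $\twonorm{h_{T_{01}}}\le \twonorm{Xh}/\bigl(\sqrt{n\,\Lambda_{\min}(2s_0)}\bigr)+\theta_{s_0,2s_0}\norm{h_{T_0^c}}_1/\bigl(\sqrt{s_0}\,\Lambda_{\min}(2s_0)\bigr)$, using the sparse eigenvalues controlled on $\M(\theta)$ together with the bound $\twonorm{Xh}/\sqrt{n}\le 3\twonorm{X\beta-X\beta_{T_0}}/\sqrt{n}$.
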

Let $T_1$ denote the $s_0$ largest positions of $h$  in absolute values 
outside of $T_0$; Let $T_{01} := T_0 \cup T_1$. 
The proof of Theorem~\ref{thm:RE-oracle} yields the following bounds on
$\X \cap \T_a$:
$\twonorm{h_{T_{01}}} \leq D_0 d_0 \lambda \sigma \sqrt{s_0}$ where
\ben
\label{eq::D0-define}
\lefteqn{D_0 = \max \left\{\frac{D}{d_0}, \
2 \sqrt{2} (1+ \theta) 
\frac{K(s_0, 4, \Sigma_0) \sqrt{\rho_{\max}(s- s_0)}}{(1-\theta) d_0} 
+ \frac{3 \sqrt{2}  K^2(s_0, 4, \Sigma_0)}{(1-\theta)^2}\right\}} \\ \nonumber
& & \text{ where }  D =\frac{3 (1+\theta) \sqrt{\rho_{\max}(s - s_0)}}
{(1- \theta) \sqrt{\rho_{\min}(2s_0)}}
+ \frac{2 (1+\theta)^4 \rho_{\max}(3s_0)  \rho_{\max}(s - s_0)}
{d_0 (1-\theta)^2 \rho_{\min}(2s_0)}, \;
\een
and
\ben
D_1 =
\label{eq::D1-define} 
\max\left\{
\frac{4 (1+ \theta)^2 \rho_{\max}(s- s_0)}{d_0^2}
, \left(\frac{(1+ \theta) \sqrt{\rho_{\max}(s- s_0)}}{d_0} + 
\frac{3 K(s_0, 4, \Sigma_0)}{2(1- \theta)} \right)^2 \right\}.
\een
We note that implicit in these constants, we have used 
the concentration bounds for $\Lambda_{\max}(3s_0)$,
$\Lambda_{\max}(s-s_0)$ and $\Lambda_{\min}(2s_0)$ as derived in 
Theorem~\ref{thm:corn}, given that~\eqref{eq::phi-max-bound} holds for 
$m \leq \max(s, (k_0 + 1) s_0)$, where we take $k_0 > 3$. 
In general, these maximum sparse eigenvalues as defined above will
increase with $s_0$ and $s$; Taking this issue into consideration, 
we fix for $c_0 \geq 4 \sqrt{2}$, $\lambda_n = d_0 \lambda \sigma$ where
$$d_0 = c_0 (1+ \theta)^2 \sqrt{\rho_{\max}(s-s_0) \rho_{\max}(3s_0)} \geq 2 (1+ \theta) \sqrt{1 + a},$$
where the second inequality holds for $a = 7$ as desired, given 
$\rho_{\max}(3s_0), \rho_{\max}(s-s_0) \geq 1$.

Thus we have
for  $\rho_{\max}(3s_0) \geq \rho_{\max}(2s_0) \geq \rho_{\min}(2s_0)$
\bens
D/d_0 & \leq & 
\frac{3}
{c_0 (1+\theta) (1- \theta)  \sqrt{\rho_{\max}(3s_0)} \sqrt{\rho_{\min}(2s_0)}}
+ \frac{2}{c_0^2 (1-\theta)^2 \rho_{\min}(2s_0)} \\
& \leq & 
\frac{3 \sqrt{\rho_{\min}(2s_0)}}
{c_0 (1- \theta)^2 \sqrt{\rho_{\max}(3s_0)} \rho_{\min}(2s_0)}
+ \frac{2}{c_0^2 (1-\theta)^2 \rho_{\min}(2s_0)} \\
& \leq & 
\frac{2 (3 c_0 + 2)K^2(s_0, 4, \Sigma_0) }{c_0^2 (1- \theta)^2}
\leq 
\frac{7 \sqrt{2} K^2(s_0, 4, \Sigma_0)}{8 (1-\theta)^2}
\eens
which holds given that  $\rho_{\max}(3s_0) \geq 1$, and
$1 \leq \inv{\sqrt{\rho_{\min}(2s_0)}} \leq \sqrt{2}K(s_0, k_0, \Sigma_0)$,
and thus $\inv{K^2(s_0, k_0, \Sigma_0)} \leq 2$
as shown in Lemma~\ref{lemma:sparse-lower-eigen};
Hence
\bens
D_0 & \leq &
\max \left\{D/d_0, \frac{(4 + 3 \sqrt{2} c_0) 
\sqrt{\rho_{\max}(s-s_0) \rho_{\max}(3s_0)}
(1+\theta)^2 K^2(s_0, 4, \Sigma_0)} {d_0 (1-\theta)^2}
 \right\}, \\
& \leq &
 \frac{7 K^2(s_0, 4, \Sigma_0) }{\sqrt{2} (1-\theta)^2}  
< \frac{5 K^2(s_0, 4, \Sigma_0) }{(1-\theta)^2}  \; \text{ and } \\
D_1 
& \leq & 
\left(\frac{6}{4(1- \theta)} + \inv{4}\right)^2 K^2(s_0, 4, \Sigma_0)
\leq 
\frac{49 K^2(s_0, 4, \Sigma_0) }{16 (1- \theta)^2},
\eens
where for both $D_1$, we have used the fact that 
\bens
\frac{2 (1+\theta)^2 \rho_{\max}(s-s_0)}{d_0^2} & = & 
\frac{2}{c_0^2 (1+\theta)^2 \rho_{\max}(3s_0)}  \leq 
\frac{2}{c_0^2 (1+\theta)^2 \rho_{\min}(2s_0)}  \\
& \leq & 
\frac{4 K^2(s_0, 4, \Sigma_0)}{c_0^2 (1+\theta)^2} \leq 
\frac{K^2(s_0, 4, \Sigma_0)}{8}.
\eens
\section{Misc bounds}
\begin{lemma}
\label{lemma:gaussian-noise}
For fixed design $X$ with $\max_j \|X_j\|_2 \le (1 + \theta) \sqrt{n}$, 
where $0< \theta < 1$,
we have for $\T_a$ as defined in~\eqref{eq::low-noise},  where $a >0$, 
$\prob{\T^c_a} \leq (\sqrt{\pi \log p} p^a)^{-1}.$
\end{lemma}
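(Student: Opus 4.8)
The plan is to exploit the Gaussianity of $\e$ conditional on the fixed design $X$, reducing the event $\T_a^c$ to a union of $p$ one-dimensional Gaussian tail events. First I would observe that for each fixed $j$, the scalar $X_j^T \e / n$ is a mean-zero Gaussian with variance $\sigma^2 \twonorm{X_j}^2 / n^2$, and the design hypothesis $\twonorm{X_j} \le (1+\theta)\sqrt{n}$ bounds this variance by $\sigma^2 (1+\theta)^2 / n$. Standardizing, the threshold $(1+\theta)\basepen = (1+\theta)\sigma\sqrt{1+a}\sqrt{2\log p/n}$ corresponds, after dividing by the standard deviation $\sigma(1+\theta)/\sqrt{n}$, to the value $t := \sqrt{2(1+a)\log p}$ on the standard normal scale (note that the $(1+\theta)$ factors cancel, so the bound uses only the upper bound on the column norm, as remarked after the definition of $\T_a$).

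Next I would apply the sharp Gaussian (Mills ratio) tail inequality $\prob{Z > t} \le \phi(t)/t = \frac{1}{t\sqrt{2\pi}} e^{-t^2/2}$ for $Z \sim N(0,1)$ and $t > 0$, in its two-sided form $\prob{|Z| > t} \le \frac{2}{t\sqrt{2\pi}} e^{-t^2/2}$. With $t = \sqrt{2(1+a)\log p}$ one has $t^2/2 = (1+a)\log p$, hence $e^{-t^2/2} = p^{-(1+a)}$, while $t\sqrt{2\pi} = 2\sqrt{\pi(1+a)\log p}$. Therefore $\prob{|X_j^T\e/n| > (1+\theta)\basepen} \le \frac{1}{\sqrt{\pi(1+a)\log p}}\, p^{-(1+a)}$. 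A union bound over $j = 1, \ldots, p$ then gives $\prob{\T_a^c} \le \frac{1}{\sqrt{\pi(1+a)\log p}}\, p^{-a}$, and since $a > 0$ forces $\sqrt{1+a} \ge 1$, this is at most $(\sqrt{\pi\log p}\, p^a)^{-1}$, as claimed.

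The computation is essentially routine; the only point requiring care is to track the constant precisely enough to recover the exact $\sqrt{\pi\log p}$ prefactor rather than a looser one. The key cancellation is that the factor $2$ from the two-sided tail exactly offsets the factor $2$ in $t\sqrt{2\pi} = 2\sqrt{\pi(1+a)\log p}$; using a cruder bound such as $\prob{|Z|>t}\le e^{-t^2/2}$ would lose the $\sqrt{\log p}$ in the denominator and fail to match the stated estimate. No deeper obstacle arises, since conditioning on the deterministic $X$ makes the coordinates $X_j^T\e/n$ exactly Gaussian and the column-norm hypothesis supplies the only bound needed on their variances.
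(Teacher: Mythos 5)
Your proof is correct and follows essentially the same route as the paper's: each coordinate $X_j^T\e/n$ is a centered Gaussian whose variance is controlled by the column-norm hypothesis, a Mills-ratio tail bound with $t=\sqrt{2(1+a)\log p}$ gives the per-coordinate probability $\bigl(\sqrt{\pi(1+a)\log p}\,p^{1+a}\bigr)^{-1}$, and a union bound over the $p$ coordinates plus $\sqrt{1+a}\ge 1$ yields the stated estimate. Your tracking of the constants (the cancellation of the $(1+\theta)$ factors and of the factor $2$ against $t\sqrt{2\pi}$) matches the paper's computation exactly.
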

\begin{proof}
Define random variables: $Y_j = \inv{n} \sum_{i=1}^n \e_i X_{i,j}.$
Note that $\max_{1 \le j \le p} |Y_j| = \|X^T \epsilon/n\|_{\infty}$. 
We have $\expct{(Y_j)} = 0$ and
$\var{(Y_j)} = \twonorm{X_j}^2\sigma^2/n^2 \leq (1 + \theta) \sigma^2/n$. 
Let $c_1 = 1+ \theta$.
Obviously, $Y_j$ has its tail probability dominated by that of
$Z \sim N(0, \frac{c_1^2 \sigma^2}{n})$:
\begin{eqnarray*}
\prob{|Y_j| \geq t} \leq  \prob{|Z| \geq t} \leq 
\frac{2 c_1 \sigma}{\sqrt{2 \pi n} t} \exp\left(\frac{-n t^2}{2 c_1^2
    \sigma_{\e}^2}\right).
\end{eqnarray*}
We can now apply the union bound to obtain:
\bens
\prob{\max_{1 \leq j \leq p} |Y_j| \geq t } & \leq & 
p \frac{c_1 \sigma}{\sqrt{n} t} \exp\left(\frac{-n t^2}{2 c_1^2
    \sigma^2}\right) \\
&=& 
\exp\left(-\left(\frac{n t^2}{2 c_1^2 \sigma^2} + 
\log \frac{t \sqrt{\pi n}}{\sqrt{2} c_1 \sigma} - \log p\right) \right).
\eens
By choosing $t = c_1 \sigma \sqrt{1+a}\sqrt{2 \log p/n}$, 
the right-hand side is bounded by $(\sqrt{\pi \log p} p^a)^{-1}$ for $a \geq 0$.
\end{proof}

\begin{lemma}
\label{lemma:sparse-lower-eigen}(\cite{Zhou10sub})
Suppose that $RE(s_0, k_0, \Sigma_0)$ holds for $k_0 >0$, then
for $m = (k_0 +1)s_0$,
\bens
\sqrt{\rho_{\min}(m)}
& \geq &  \inv{\sqrt{2+k_0^2}K(s_0, k_0, \Sigma_0)}; \; \text{ and clearly}  \\
\text{if } \; \Sigma_{0,ii} = 1, \forall i, \; \; \text{then} \; 
1 \geq 
\sqrt{\rho_{\min}(2s_0)}
& \geq & \inv{\sqrt{2}K(s_0, k_0, \Sigma_0)} \; \; \text{ for }\; k_0 \geq 1.
\eens
\end{lemma}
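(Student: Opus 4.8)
The plan is to derive the bound directly from the restricted eigenvalue condition~\eqref{eq::admissible-random} by showing that any $m$-sparse vector, after we keep its $s_0$ largest coordinates, lies in the RE cone. First I would fix an arbitrary nonzero $m$-sparse vector $t$ with $m=(k_0+1)s_0$, let $J$ index its $s_0$ largest entries in absolute value, and decompose $t = t_J + t_{J^c}$. Since $\supp(t)$ has at most $(k_0+1)s_0$ elements and $J$ absorbs $s_0$ of them, $t_{J^c}$ has at most $k_0 s_0$ nonzero coordinates, each bounded by $\min_{i\in J}|t_i| \le \inv{s_0}\norm{t_J}_1$ (the minimum is below the average); summing gives the cone inequality $\norm{t_{J^c}}_1 \le k_0 s_0\cdot\inv{s_0}\norm{t_J}_1 = k_0\norm{t_J}_1$. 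Thus $t$ satisfies the constraint in~\eqref{eq::admissible-random} for this particular $J$, and the RE condition yields $\twonorm{\Sigma_0^{1/2}t} \ge \twonorm{t_J}/K(s_0,k_0,\Sigma_0)$.

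Next I would convert the denominator $\twonorm{t_J}$ into $\twonorm{t}$. Using $\min_{i\in J}|t_i| \le \twonorm{t_J}/\sqrt{s_0}$ together with the fact that $t_{J^c}$ has at most $k_0 s_0$ nonzero entries, each of magnitude at most $\min_{i\in J}|t_i|$, I get $\twonorm{t_{J^c}}^2 \le k_0 s_0\cdot\inv{s_0}\twonorm{t_J}^2 = k_0\twonorm{t_J}^2$, whence $\twonorm{t}^2 = \twonorm{t_J}^2 + \twonorm{t_{J^c}}^2 \le (1+k_0)\twonorm{t_J}^2$. Combining this with the previous step gives $\twonorm{\Sigma_0^{1/2}t}/\twonorm{t} \ge \inv{\sqrt{1+k_0}\,K(s_0,k_0,\Sigma_0)}$, and since $1+k_0 \le 2+k_0^2$ for every $k_0$, this is at least $\inv{\sqrt{2+k_0^2}\,K(s_0,k_0,\Sigma_0)}$. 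Taking the infimum over all nonzero $m$-sparse $t$ proves the first displayed inequality.

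For the second statement I would specialize the sharper bound $\inv{\sqrt{1+k_0}\,K}$ to $k_0=1$, where $m=2s_0$, obtaining $\sqrt{\rho_{\min}(2s_0)} \ge \inv{\sqrt{2}\,K(s_0,1,\Sigma_0)}$. Observing that the RE cone $\{\norm{\upsilon_{J^c}}_1 \le k_0\norm{\upsilon_J}_1\}$ enlarges as $k_0$ grows, the quantity $K(s_0,k_0,\Sigma_0)$ is nondecreasing in $k_0$, so $K(s_0,1,\Sigma_0) \le K(s_0,k_0,\Sigma_0)$ for every $k_0\ge 1$, giving $\sqrt{\rho_{\min}(2s_0)} \ge \inv{\sqrt{2}\,K(s_0,k_0,\Sigma_0)}$. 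The matching upper bound $\sqrt{\rho_{\min}(2s_0)}\le 1$ is immediate upon testing a single standard basis vector $e_i$, which is $2s_0$-sparse, since $\twonorm{\Sigma_0^{1/2}e_i}^2 = e_i^{T}\Sigma_0 e_i = \Sigma_{0,ii}=1$.

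The main obstacle is the localization step: verifying both the $\ell_1$ cone inequality and the $\ell_2$ block bound $\twonorm{t_{J^c}}^2 \le k_0\twonorm{t_J}^2$ from the ordering of the coordinates of $t$. Beyond these standard estimates, the only subtlety is that the stated first-part constant $\sqrt{2+k_0^2}$ is looser than the $\sqrt{1+k_0}$ the argument actually produces; I would therefore prove the sharper $\sqrt{1+k_0}$ bound and note both that it implies the displayed inequality and that, at $k_0=1$, it supplies exactly the $\sqrt{2}$ needed for the second statement.
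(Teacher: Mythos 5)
Your argument is correct. The paper itself gives no proof of this lemma --- it is imported verbatim from the cited reference --- so there is nothing internal to compare against line by line; but your derivation is a valid, self-contained proof from the definition~\eqref{eq::admissible-random}. The localization step is sound: for an $m$-sparse $t$ with $m=(k_0+1)s_0$ and $J$ the $s_0$ largest coordinates, every entry of $t_{J^c}$ is dominated by $\min_{i\in J}|t_i|$, which is at most both $\norm{t_J}_1/s_0$ (giving the cone membership $\norm{t_{J^c}}_1\le k_0\norm{t_J}_1$) and $\twonorm{t_J}/\sqrt{s_0}$ (giving $\twonorm{t_{J^c}}^2\le k_0\twonorm{t_J}^2$, since $t_{J^c}$ has at most $k_0s_0$ nonzeros); the degenerate case $|\supp(t)|<s_0$ is trivial because then $t_{J^c}=0$. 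The monotonicity of $K(s_0,k_0,\Sigma_0)$ in $k_0$ and the test vector $e_i$ for the upper bound are both fine. Your constant $\sqrt{1+k_0}$ is genuinely sharper than the stated $\sqrt{2+k_0^2}$; the latter is what one gets from the more standard Cand\`es--Tao-style decomposition $\twonorm{t}^2\le\twonorm{t_J}^2+\twonorm{t_{T_1}}^2+\norm{t_{J^c}}_1^2/s_0\le(2+k_0^2)\twonorm{t_J}^2$, where $T_1$ collects the next $s_0$ largest coordinates --- that route generalizes to vectors that are not exactly sparse, which is why it is the one usually quoted, but for genuinely $m$-sparse $t$ your max-entry bound is both simpler and tighter, and at $k_0=1$ both routes give the same $\sqrt{2}$ needed for the second display.
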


\end{appendix}

\bibliography{jmlrZ}
\end{document}